\date{}
\newtheorem{theorem}{Theorem}[section]
\newtheorem{assumption}{Property}
\newtheorem{proposition}[theorem]{Proposition}
\newtheorem{corollary}[theorem]{Corollary}
\newtheorem{remark}[theorem]{Remark}
\newtheorem{lemma}[theorem]{Lemma}
\newcommand{\ma}[2]{\overline{m}^{#2}\left(#1\right)}
\newcommand{\mb}[2]{\overline{m}_{#1}\left(#2\right)}
\newcommand{\ya}[2]{\overline{y}^{#2}\left(#1\right)}
\newcommand{\yb}[2]{\overline{y}_{#1}\left(#2\right)}
\newcommand{\mmed}[3]{\iota_{#1}^{#2}\left(#3 \right)}
\newcommand{\med}[1]{\iota\left(#1\right)}
\newcommand{\ALGO}{\texttt{SoHLoB}}
\newcommand{\ALGOS}{\texttt{MultSoHLoB}}
\title{Optimal level set estimation for non-parametric tournament and crowdsourcing problems}
\author[1]{Maximilian Graf}
\author[1]{Alexandra Carpentier}
\author[2]{Nicolas Verzelen}
\affil[1]{\textit{\scriptsize{Institut für Mathematik, Universität Potsdam, Potsdam, Germany.}}}
\affil[2]{\textit{\scriptsize{INRAE, MISTEA, Univ. Montpellier, Montpellier, France}} }
\begin{document}

\maketitle 
\begin{abstract}
    Motivated by crowdsourcing, we consider a problem where we  partially observe the correctness of the answers of $n$ experts on $d$ questions. In this paper, we assume that both the experts and the questions can be ordered, namely that the matrix $M$ containing the probability that expert $i$ answers correctly to question $j$ is bi-isotonic up to a permutation of it rows and columns. When $n=d$, this also encompasses the strongly stochastic transitive (SST) model from the tournament literature.  Here, we focus on the relevant problem of deciphering small entries of $M$ from large entries of $M$, which is key in crowdsourcing for efficient allocation of workers to questions. More precisely, we aim at recovering a (or several) level set $p$ of the matrix up to a precision $h$, namely recovering resp.~the sets of positions $(i,j)$ in $M$ such that $M_{ij}>p+h$ and $M_{i,j}<p-h$. We consider, as a loss measure, the number of misclassified entries. As our main result,  we construct an efficient polynomial-time algorithm that turns out to be minimax optimal for this classification problem. This heavily contrasts with existing literature in the SST model where, for the stronger reconstruction loss, statistical-computational gaps have been conjectured. More generally, this shades light on the nature of statistical-computational gaps for permutations models.
\end{abstract}

\section{Introduction}

Ranking problems have spurred a lot of interest both in the statistics and machine learning communities. Applications of these problems include a variety of things ranging from tournament problems~\cite{buhlmann1963pairwise}, pairwise comparisons~\cite{furnkranz2010preference}, to crowdlabeling problems~\cite{raykar2011ranking}.

In tournament problems with $n$ players, we observe results of games between players and the general objective is to gain knowledge on the ranking between the players. More formally, this amounts to have noisy partial observations from an unknown matrix $M\in [0,1]^{n\times n}$ where $M_{ij}$ corresponds to the probability that player $i$ beats player $j$. In crowdsourcing problems, $n$ experts (or workers) are faced to $d$ types of questions (or tasks).  Here, the $n\times d$ matrix $M$ encodes the fact that $M_{ij}$ is the probability that expert $i$ correctly answers  the question $j$. Based on noisy observations of the matrix $M$, the objective is also to rank the experts or/and the questions. In this manuscript, we consider both crowdsourcing and tournament data, the tournament problem being a specific case where $n=d$ and $M_{ij}=1-M_{ji}$ (skew symmetry).

Earlier models for tournament problems are parametric in nature. Among others, 
the Bradley-Luce-Terry model \cite{bradley1952rank,luce2012individual} has prompted a lot of works, even recently~\cite{hunter2004mm,gao2023uncertainty,bong2022generalized}. In particular, computationally efficient and statistically optimal parameter estimation methods have been introduced. Other parametric models such as the noisy sorting\footnote{this model is precisely defined in Section~\ref{sec:Discussion}} are also well understood~\cite{braverman2008noisy,mao2018minimax}. However, it has been observed that these simple parametric models are often unrealistic~\cite{mclaughlin1965stochastic,ballinger1997decisions} and do not tend to fit the data well.  This has lead to a recent line of literature in tournament and crowdsourcing where strong parametric assumptions are replaced by shape-constrained non-parametric assumptions  on the matrix $M$~\cite{shah2015estimation,shah2016stochastically,shah2019feeling,shah2020permutation, mao2020towards,mao2018breaking,liu2020better,flammarion2019optimal,bengs2021preference,saad2023active}. Arguably, the most popular model in this field are the strong stochastically transitive (SST)~\cite{shah2016stochastically} model for tournament and the bi-isotonic model for crowdsourcing problems. The SST model presumes that the square matrix $M$ is, up to a common permutation $\pi$ of the rows and of the columns, bi-isotonic, that is $M_{\pi^{(-1)}(i)\pi^{(-1)}(j)}\geq \max(M_{\pi^{(-1)}(i+1)\pi^{(-1)}(j)}, M_{\pi^{(-1)}(i)\pi^{(-1)}(j+1)})$.
From a modeling perspective, this corresponds to assuming that, if player $i$ is better than $j$, then it has a larger probability than player $j$ of beating a third player $k$. 
Similarly, the bi-isotonic models in crowdsourcing data, subsumes that, up to a permutation $\pi$ of the rows and $\eta$ of the columns, the matrix $M$ is bi-isotonic. 

Most works in the recent literature~\cite{shah2015estimation,shah2016stochastically,shah2019feeling,shah2020permutation, mao2020towards,mao2018breaking,liu2020better,pilliat2022optimal} have focused on estimating the matrix $M$ in Frobenius distance. As recalled in Section~\ref{sec:preliminaries}, obtaining a good estimator of $M$ mostly boils down to estimating the permutation $\pi$ (and $\eta$ for crowdsourcing problems) with respect to some $l_2$ type distance. Although optimal rates for estimation of $M$ have been pinpointed in the earlier paper of Shah et al.~\cite{shah2016stochastically}, there remains a large gap between these optimal rates and the best known performances of polynomial time algorithms. This has led to  conjecture the existence of a statistical-computational gap~\cite{mao2020towards, liu2020better}.

\subsection{Localizing large entries of $M$}

 In this manuscript, we move aside from the problem of estimating the matrix $M$ in Frobenius distance to the related problem of deciphering small entries of $M$ from large entries of $M$.  Given $M\in [0,1]^{n\times d}$, some threshold $p\in [0,1]$ and some tolerance $h\in [0,1]$, we define the classification matrix $R^*_{p,h}$ by $(R^*_{p,h})_{ij}= 1$ if $M_{ij}\geq p+h$, $(R^*_{p,h})_{ij}= 0$ if $M_{ij}\leq p+h$, and $(R^*_{p,h})_{ij}= \texttt{NA}$ otherwise. In tournament problems, the matrix $R^*_{p,h}$ encodes the set of games $(i,j)$ such that the probability $M_{ij}$ that $i$ beats $j$ either exceeds $p+h$ or are below $p-h$. In crowdsourcing problems, $R^*_{p,h}$ encodes the sets of experts/question such that the probability of obtaining the right answer is above (resp. below) $p+h$ (resp. $p-h$). For instance, in tournament problems, finding $R^*_{p,h}$ is relevant for betting purposes. In crowdlabeling applications, the knowledge of $R^*_{p,h}$ is important to assign experts/workers to given tasks/questions. Our main objective in this paper is to recover the matrix $R^*_{p,h}$ from noisy and partial observations of $M$. In the sequel, we refer to this problem as the classification problem $R^*_{p,h}$.

Given an estimator $\hat{R}_{p,h}$ of $R^*_{p,h}$ we quantify its error by 
\begin{equation}\label{eq:loss:reconstruction}
L_{0,1,\texttt{NA}}[\hat{R}_{p,h}]= \sum_{(i,j): (R^*_{p,h})_{ij}\in \{0,1\}} |R^*_{p,h}- \hat{R}_{p,h}|\enspace .
\end{equation}
This loss simply counts the number of classification error among the entries which are outside the region of tolerance. Estimating the classification matrix $R^*_{p,h}$ is, in some sense, a weaker problem than estimating the full matrix $M$ in Frobenius norms. Indeed, given an estimator $\hat{M}$ we can define the plug-in estimator $R_{p}(\hat{M})$ by $[R_{p}(\hat{M})]= 1_{\hat{M}_{ij}\geq p}$. Then, one can easily deduce that 
\begin{equation}\label{eq:upper_Frobenius}
L_{0,1,\texttt{NA}}[ R_{p}(\hat{M})]\leq \frac{1}{4h^2}\|\hat{M}- M\|_F^2 \ . 
\end{equation}
If $M$ is an SST matrix, it is well known that the optimal rate of convergence for $\|\hat{M}- M\|_F^2$ with full observations is of the order of $n$~\cite{shah2015estimation}, so that from an information-theoretical point of view, it is possible to classify the large entries, that is to estimate $R^*_{p,h}$, with a loss of the order of $n/h^2$. However, as explained above, no polynomial time-estimator of $M$ achieves the $n$ rate in Frobenius norm and, as of today, the best achievable rate is of the order of $n^{7/6}$~\cite{liu2020better,pilliat2024optimal}. The above observations raise the two following important questions:  
\begin{enumerate}
    \item[(a)] In crowdsourcing and tournaments problems, what is the optimal error for classification? In particular, for tournament problems with full observation, is this optimal rate of the order of $n/h^2$ as suggested by~\eqref{eq:upper_Frobenius}? 
    \item[(b)] Is there a polynomial time algorithm achieving this rate? 
\end{enumerate}

\subsection{Our contribution}

In this manuscript, we answer by the affirmative to both questions by characterizing the optimal rate for estimating the classification $R^*_{p,h}$ and by introducing a new computationally efficient and statistically optimal estimator. In contrast, relying on available state-of-the-art polynomial time ranking estimators such as those in~\cite{pilliat2022optimal,liu2020better} would lead to a larger loss by a multiplicative factor at $(n\vee d)^{1/6}$.

As a notable consequence of our results, we establish the absence of a computational gap for reconstruction $M$ in Frobenius distance when the matrix $M$ is restricted to take a finite number of values. This entails that the conjectured computation gap for SST matrix estimation, if it exists, only arises for multi-scale matrices $M$.

From a technical perspective, we introduce a novel procedure to estimate the permutations $\pi$ (and possibly $\eta$). The general idea is to iteratively localize the level set at height $p$ of the matrix $M$. Intuitively, at each step, we consider groups $E$ of rows that remain to be compared and ranked, and based on our partial knowledge of the ranking, we select an envelope $Q$ of columns, that is a small set $Q$, such that, for the rows $E$, the threshold $p$ is achieved on the set $Q$. Then, by looking at the noisy observations of $M$ on $E\times Q$, we are able to gain additional knowledge on the mutual ordering of the rows in $E$. While the idea of iteratively refining a partial ordering on the rows by further localizing the columns of interests is not new~\cite{liu2020better,pilliat2024optimal}, there are some important differences both in the algorithm and its see analysis. See Section~\ref{sec:Discussion}  for further discussion.

\subsection{Organization and notation}

In Section~\ref{sec:preliminaries}, we formally introduce the observation model, and we also reduce the problem of estimating $R^*_{p,h}$ to that of estimating the permutations $\pi$ and $\eta$ with respect to a suitable loss $\mathcal{L}_{p,h}$. In Section~\ref{sec:main_results}, we describe our main results, whereas Section~\ref{Sec:Algos} and Section~\ref{sec:estimation_R} are dedicated to the description of our polynomial time estimator procedure. Finally, we discuss our results and further compare them to the literature in Section~\ref{sec:Discussion}. All the proofs are postponed to the appendix.

In the sequel, we write $[n]$ for the set $\{1,\ldots n\}$. We write $\mathcal{S}_n$ for the set of permutations of $[n]$. Besides, $\mathrm{id}_{[n]}$ stands for the identity permutation. For a permutation $\pi$ on $[n]$, we say $i$ is below $j$ if $\pi(i)< \pi(j)$. Since our work is motivated by crowdsourcing problems, expert $i$ henceforth refers to the $i$-th question of $M$ and question $j$ to the $j$-th column of $M$. Given $\sigma>0$, a mean-zero random variable is said to be $\sigma^2$-subGaussian if it satisfies $\mathbb{E}[\exp(tW)]\leq\exp(t^2\sigma^2)/2$ for $t\in \mathbb{R}$. We write $\mathrm{SG}(\sigma^2)$ for the class of centered $\sigma^2$-subGaussian distributions.  Given a matrix $M$, we write $\|M\|_F$ for its Frobenius norm. For two quantities $x$ and $y$, $x\vee y$ and $x\wedge y$ respectively refer to $\max(x,y)$ and $\min(x,y)$. We write $\lfloor\cdot\rfloor$ and $\lceil\cdot \rceil$ for the lower and upper integer parts. Also, $\log_a(x)$ refers to $\log(x)/\log(a)$.
Finally, $x \lesssim y$, means that there exists a numerical constant $c$ such that $x\leq cy$.

\section{Preliminaries}\label{sec:preliminaries}
\subsection{Problem formulation}\label{Def:Model}

We consider, without loss of generality, the general crowdsourcing setting where $M\in [0,1]^{n\times d}$, the tournament setting being a specific case where $n=d$ and we have skew-symmetry. In the sequel, we write $\mathbb{C}_{\mathrm{Biso}}(\mathrm{id}_{[n]},\mathrm{id}_{[d]})\subset[0,1]^{n\times d}$ the collection of bi-isotonic matrices, that is matrices satisfying the inequality $M_{ij}\geq \max(M_{i+1j}, M_{ij+1})$. Given some unknown permutation $\pi\in \mathcal{S}_n$ and $\eta\in \mathcal{S}_d$, we assume henceforth that the matrix $M_{\pi^{-1}, \eta^{-1}}$ defined by $(M_{\pi^{-1}, \eta^{-1}})_{ij}=M_{\pi^{-1}(i) \eta^{-1}(j)}$ is bi-isotonic. For fixed $\pi$ and $\eta$, we write $\mathbb{C}_{\mathrm{Biso}}(\pi,\eta)$ for the corresponding collection of matrices. Finally, 
we write $\mathbb{C}_{\mathrm{Biso}}:= \cup_{\pi,\eta}\mathbb{C}_{\mathrm{Biso}}(\pi,\eta)$ for the collection of bi-isotonic matrices up to two permutation. 
Similarly, we define $\mathbb{C}'_{\mathrm{Biso}}(\pi)$ the set of matrices $M$ such that, up to the permutation $\pi$, $M$ is non-increasing on each row and non-decreasing on each column. Equipped with this definition, we can define the collection of strongly stochastic transitive (SST) matrices as 
$\mathbb{C}_{\mathrm{SST}}= \bigcup \mathbb{C}'_{\mathrm{Biso}}(\pi)\cap \{M: M+M^T = 1\}$.

As usual in the literature --e.g.~\cite{mao2020towards}--, we use the Poissonization trick to model the partial observations on the matrix $M$. Given some $\lambda_0>0$, which is henceforth referred as the sampling effort, and $ N=\lambda_0 nd$, we have $N' \sim Poi(N)$ observations of the form
\begin{equation}\label{eq:observation}
		Y'_t = M_{i_tj_t} + W_t'\enspace ,
\end{equation}
for $t \in \{1, \ldots, N'\}$, where the sequences $I = (i_t)_t, J = (j_t)_t$ are independent and uniformly distributed on~$[n]$ and $[d]$, and where $( W_t')_t$ is an independent $\sigma^2$-subGaussian noise for $\sigma >0$. The data in this model is therefore of the form $(N',I,J,Y')$, where $Y' = (Y'_t)_t$. In particular, the observation model~\eqref{eq:observation} allows for binary observations where  $Y'_t$ is a Bernoulli random variable with parameter $M_{i_tj_t}$ in which case we have $\sigma^2=1/4$.

With this observation scheme, each entry $(i,j)$ of $M$ is observed in a noisy way in expectation $\lambda_0$ times. If $\lambda_0$ is much smaller than $1$, $\lambda_0$ is to be interpreted as the probability of observing any given entry and we do not have any observation on most entries, making the ranking task more challenging. The assumption that the total sample size $N'$ is distributed as a Poisson random variable is questionable, but as usual for this type of problems, this can be leveraged quite easily. We mostly keep it here for the sake of simplicity.

Recall that, for $p$, $h\in [0,1]$,  we refer to $p$ as the \textit{threshold} and $h$ as the  \textit{tolerance}. In the sequel, given some observations $(N',I,J,Y')$ sampled from~\eqref{eq:observation}, our objective is to infer the classification matrix $R^*_{p,h}$.

\subsection{Permutation loss and reduction}

For estimating $R^*_{p,h}$, the main challenge is to suitably estimate the ranking, that is the permutations $\pi$ and $\eta$ such that $M\in \mathbb{C}_{\mathrm{Biso}}(\pi,\eta)$. For any estimators $\hat{\pi}$ and $\hat{\eta}$, we define the loss $ \mathcal{L}_{p,h}(\hat \pi,\hat \eta )$ as
\begin{align}
    \mathcal{L}_{p,h}(\hat \pi,\hat \eta )\coloneqq& \left|\left\{(i,j)\in [n]\times [d]:\ M_{\pi^{-1}(i)\eta^{-1}(j)}\leq p-h,\ M_{\hat\pi^{-1}(i)\hat \eta^{-1}(j)}\geq p+h\right\}\right| \nonumber\\
	&+\left|\left\{(i,j)\in [n]\times [d]:\ M_{\pi^{-1}(i)\eta^{-1}(j)}\geq p+h,\ M_{\hat\pi^{-1}(i)\hat \eta^{-1}(j)}\leq p-h\right\}\right|\enspace . \label{eq:loss}
\end{align}
Note that this loss depends on the true value of the matrix $M$. It counts the number of times that entries smaller than $p-h$ get confused with entries larger than $p+h$, if we sort $M$ by $\hat{\pi}$ and $\hat{\eta}$ instead of the oracle permutations $\pi$ and $\eta$.

 In previous works where the focus was to estimate $M$ in Frobenius norm, the overall challenge was to  estimate the permutations $\pi$ and $\eta$~\cite{mao2020towards,liu2020better} with respect to  the stronger loss $\mathcal{L}_{F}$ defined by 
\begin{equation}\label{eq:definition:LF}
\mathcal{L}_{F}(\hat \pi,\hat \eta ) := \|M_{\hat \pi^{-1}(.), \hat \eta^{-1}(.)} - M_{\pi^{-1}(.), \eta^{-1}(.)}\|_F^2\enspace , 
\end{equation}
which measures how close the matrix re-ordered by the estimated permutations is to the perfectly re-ordered matrix. Obviously, we have $\mathcal{L}_{p,h}(\hat \pi,\hat \eta )\leq (4h)^{-2}
\mathcal{L}_{F}((\pi,\eta),(\hat \pi,\hat \eta ))$, so that it suffices to bound the latter to control the former loss. However, as alluded in the introduction, we are able in the next section to craft polynomial time algorithms whose performances with respect to $\mathcal{L}_{p,h}$ are much better than what is suggested by the previous bound.

\section{Main results}\label{sec:main_results}

Our main contribution is the construction of a polynomial time  estimator of the permutation $(\hat\pi, \hat{\eta})$, that turns out to be optimal in the minimax sense with respect to loss $\mathcal{L}_{p,h}$.  In turn, this allows us to easily derive an optimal estimator classification matrix $R^*_{p,h}$.  To ease the reading, we mainly state risk bounds in this section and we postpone the definition of our procedures to the next two sections.

\subsection{Minimax lower bound}

We first state a minimax lower bound both for the permutation estimation problem with the loss $\mathcal{L}_{p,h}$ and for the classification matrix $R^*_{p,h}$, with respect to the loss $L_{0,1,\texttt{NA}}$.  For the purpose of the following lower bound, we write $\mathbb{E}_M$ for the expectation with respect to the data $(N',I,J,Y')$ sampled with a given $\lambda_0$ and $M$, and where the noise in~\eqref{eq:observation} is normally distributed with variance $\sigma^2$. 

\begin{theorem}\label{Thm:LowerBound}
There exist universal constants $c,c',c''>0$, such that the following holds for any  $\sigma>0$, $\lambda_0$, $p\in [0,1]$, and $h\in (0,\min(p,1-p))$, and $n$, $d$ such that $n\vee d\geq 2$.  If $\lambda_0 h^2\leq c\sigma^2$, then
    \begin{align*}
    \inf_{\hat{R}_{p,h}} \quad    \sup_{M\in \mathbb{C}_{\mathrm{Biso}}} \mathbb{E}_{M}\left[L_{0,1,\texttt{NA}}[\hat{R}_{p,h}]\right]\geq \Big[c' \frac{\sigma^2 }{\lambda_0h^2}(n\vee   d)\Big] \wedge (nd)\enspace  .\\
        \inf_{\hat{\pi},\hat{\eta}}   \quad  \sup_{ M\in \mathbb{C}_{\mathrm{Biso}}} \mathbb{E}_{M}\left[\mathcal{L}_{p,h}(\hat{\pi},\hat\eta)\right]\geq \Big[c'' \frac{\sigma^2 }{\lambda_0h^2}(n\vee  d)\Big] \wedge (nd)\enspace  .
    \end{align*}
\end{theorem}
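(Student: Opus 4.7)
My plan is to apply Assouad's lemma to a family of bi-isotonic matrices with only two row-profiles, parametrized by which rows carry which profile. Assume without loss of generality $n\geq d$: the transpose preserves $\mathbb{C}_{\mathrm{Biso}}$ and exchanges the roles of $\pi$ and $\eta$, so the symmetric case follows immediately. Set $T:=\sigma^{2}/(\lambda_{0}h^{2})$ (the hypothesis makes $T\geq 16$ for $c$ small enough), $T^{*}:=\min(d,\lfloor T/16\rfloor)\geq 1$, and $K:=\lfloor n/2\rfloor\geq 1$ (using $n\vee d\geq 2$).

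Construction. Let $p_{A}\in\{p-h,p+h\}^{d}$ satisfy $p_{A}(j)=p+h$ for $j\leq T^{*}$ and $p_{A}(j)=p-h$ otherwise, and let $p_{B}\equiv p-h$; both lie in $[0,1]^{d}$ since $h<\min(p,1-p)$. For $\omega\in\{0,1\}^{K}$, let $M^{\omega}$ be the $n\times d$ matrix whose row $k$ (for $k\in[K]$) carries profile $p_{A}$ if $\omega_{k}=0$ and $p_{B}$ if $\omega_{k}=1$, with row $K+k$ carrying the opposite profile (and the possible residual row set to $p_{B}$). Since $p_{A}\geq p_{B}$ pointwise, any permutation $\pi^{\omega}$ routing the $K$ rows with profile $p_{A}$ to positions $1,\ldots,K$ and the rest to $K+1,\ldots,n$ reorders $M^{\omega}$ bi-isotonically, so $M^{\omega}\in\mathbb{C}_{\mathrm{Biso}}$ for every $\omega$.

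Two key ingredients. (i) For neighbours $\omega,\omega'$ differing only in coordinate $k$, the matrices $M^{\omega},M^{\omega'}$ coincide outside rows $k$ and $K+k$, on which the profiles are swapped, and differ by $2h$ on $T^{*}$ columns per row, so $\|M^{\omega}-M^{\omega'}\|_{F}^{2}=8T^{*}h^{2}$ and, under Gaussian noise and Poissonised sampling,
\[
\mathrm{KL}\bigl(P_{M^{\omega}}\,\|\,P_{M^{\omega'}}\bigr)=\frac{\lambda_{0}}{2\sigma^{2}}\|M^{\omega}-M^{\omega'}\|_{F}^{2}=\frac{4T^{*}}{T}\leq\frac{1}{4}.
\]
By Pinsker, the TV distance between neighbours is at most $1/(2\sqrt{2})$. (ii) From any estimator $\hat\pi$ I read off a class-assignment $\hat\omega\in\{0,1\}^{K}$, recording whether $\hat\pi$ sends row $k$ to an $A$- or $B$-position. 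A direct entry-by-entry check gives $\mathcal{L}_{p,h}(\hat\pi,\hat\eta)\geq 2T^{*}\,d_{H}(\hat\omega,\omega)$, since each incorrect bit displaces two rows whose profiles disagree on classification at $T^{*}$ columns. An analogous majority-vote decoding from any $\hat R_{p,h}$ produces $\tilde\omega$ with $L_{0,1,\texttt{NA}}[\hat R_{p,h}]\geq T^{*}\,d_{H}(\tilde\omega,\omega)$ (no $\texttt{NA}$ entries arise since all values are $\pm h$ away from $p$).

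Combining (i) and (ii) via Assouad's lemma (Yu's version) yields, for both losses, a worst-case expected loss of order $K\cdot T^{*}\asymp n\min(d,T)\asymp\min\bigl((n\vee d)\sigma^{2}/(\lambda_{0}h^{2}),\,nd\bigr)$, which is the claimed bound. I expect the main obstacle to be the verification that every $M^{\omega}$ indeed belongs to $\mathbb{C}_{\mathrm{Biso}}$ together with the per-bit loss bound against the true $\pi^{\omega}$; both hinge on the fact that the profiles $p_{A},p_{B}$ are totally comparable, so arbitrary patterns of $A$'s and $B$'s admit a bi-isotonic reordering and the Assouad coordinates decouple cleanly. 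The remaining steps (precise Assouad constants, transposition to handle $d\geq n$, and boundary regimes in $T$) are routine bookkeeping.
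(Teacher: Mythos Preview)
Your proposal is correct and follows essentially the same construction as the paper—a family of two-valued matrices indexed by a bit vector recording which rows carry the high profile—but you invoke a different information-theoretic tool. The paper applies Fano's inequality after extracting, via the Varshamov–Gilbert bound, a packing of the constrained hypercube $\{v\in\{0,1\}^n:\sum_i v_i=n/2\}$ with pairwise Hamming separation $n/4$ and log-cardinality $\gtrsim n$; it then bounds the worst-case KL over the packing and reads off the rate. You instead apply Assouad's lemma directly to the full hypercube $\{0,1\}^{K}$, controlling only neighbour KL (equivalently TV) and summing the per-bit contributions. Both methods yield the same rate up to constants; Assouad avoids the packing step and is arguably more transparent here, while Fano is what the authors need anyway for the small-$n$ regime (they switch to Le~Cam for $n<35$, which you handle uniformly since Assouad degrades gracefully to a two-point bound when $K=1$). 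One minor slip: your per-bit separation for $\mathcal{L}_{p,h}$ is $T^{*}$, not $2T^{*}$, because a wrong bit $\hat\omega_k\neq\omega_k$ only guarantees that row $k$ (not necessarily row $K+k$) is sent to the wrong half by $\hat\pi$; the computation $\mathcal{L}_{p,h}(\hat\pi,\hat\eta)\geq\mathcal{R}_{p,h}(\hat\pi)=T^{*}|A\triangle\hat A|\geq T^{*}\,d_H(\hat\omega,\omega)$ gives the honest constant. Similarly, majority-vote decoding for $L_{0,1,\texttt{NA}}$ yields $T^{*}/2$ per wrong bit. Neither affects the order of the bound.
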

The condition $\lambda_0 h^2 \leq c\sigma^2$ is really mild, because in the most relevant setting, we have $\lambda_0<1$ (less than $1$ observation in expectation per entry) and $\sigma$ of the order of a constant, as for Bernoulli observations. In the regime  where $\lambda_0< 1$, Mao et al.~\cite{mao2020towards} introduced exponential-time least-square type estimators $\hat{M}$ and $(\tilde{\pi},\tilde{\eta})$ that satisfy 
\begin{align*}
    \mathbb{E}[\|\hat{M}-M\|_F^2]\leq  \left(c\sigma^2 \log^{3/2}(nd)\frac{n\vee d}{\lambda_0}\right)\wedge nd\ ,\\
\mathbb{E}[\|M_{\tilde{\pi}^{-1},\tilde{\eta}^{-1}}-M_{\pi^{-1},\eta^{-1}}\|_F]\leq \left(c' \sigma^2 \log^{3/2}(nd)\frac{n\vee d}{\lambda_0}\right)\wedge nd\enspace , 
\end{align*}
for some universal constants $c,c'>0$. We have explained in the previous subsection how we can deduce bounds with respect to   the losses $L_{0,1,\texttt{NA}}$ and $\mathcal{L}_{p,h}$ from the above equations. This implies that Theorem~\ref{Thm:LowerBound} is tight, up to polylogarithmic terms, and characterizes the minimax risk for reconstructing the classification matrix.

\subsection{Permutation and classification matrix estimation}

In order to estimate $\pi$ and $\eta$, we introduce in the next section a polynomial time estimator $\hat{\pi}_S$, which depends on $p$, $h$ and $\sigma^2$ and a tuning parameter $\delta\in (0,1)$, which corresponds to a probability of error.

\begin{theorem}\label{Thm:ErrorBound}
    There exist two universal constants $c, c'>0$ such that the following holds for any $\delta>0$, $\sigma>0$, $\lambda_0\in (0,\log(nd)]$, $p\in[0,1]$, $h\in [0,1]$, $\pi \in \mathcal{S}_n$, $\eta \in \mathcal{S}_d$, $M\in \mathbb{C}_{\mathrm{Biso}}(\pi,\eta)$. With probability higher than $1-\delta$, the estimator $(\hat\pi_S,\hat\eta_S)$ --defined in~\eqref{eq:definition_hat_pi}-- with tuning parameter $\delta$ satisfies
    \begin{align*}
           \mathcal{L}_{p,h}(\hat\pi_S,\hat\eta_S)\leq{} c(\sigma^2\vee 1) \log^{5/2}(nd/\delta)\frac{n\vee d}{\lambda_0 h^2}\enspace ,
    \end{align*}
     If we fix $\delta = 1/(nd)$, we also have
    \begin{align}\label{eq:upper_risk}
        \mathbb{E}\left[\mathcal{L}_{p,h}(\hat\pi_S,\hat\eta_S)\right]\leq\left(c'(\sigma^2\vee 1)\log^{5/2}(nd)\frac{n\vee d}{\lambda_0 h^2}\right)\wedge (nd)\enspace .
    \end{align}
\end{theorem}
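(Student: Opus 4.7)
The plan is to analyze the iterative procedure that defines $(\hat\pi_S,\hat\eta_S)$ level by level, control what happens on a high-probability concentration event, and use a trivial bound on the complementary event to pass to expectation.

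First I would define a good event $\mathcal{A}_\delta$ on which all empirical averages of the form $|E|^{-1}|Q|^{-1}\sum_{(t:\, i_t\in E,\, j_t\in Q)} Y'_t$ concentrate around their expected value $|E|^{-1}|Q|^{-1}\sum_{(i,j)\in E\times Q} M_{ij}$ at the sub-Gaussian rate $\sqrt{(\sigma^2\vee 1)\log(nd/\delta)/(\lambda_0|E||Q|)}$, uniformly over all pairs $(E,Q)$ of rectangles considered by the algorithm. The Poissonization trick makes the per-cell counts independent Poisson variables, and a union bound over the polynomially many dyadic rectangles that the algorithm ever inspects keeps the probability of $\mathcal{A}_\delta^c$ below $\delta$; this is where the first $\log(nd/\delta)$ factor appears.

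Next, working deterministically on $\mathcal{A}_\delta$, I would analyze the hierarchical blocks maintained by the algorithm. At each "level", the current ranking partitions rows into blocks $E$ of some dyadic size $s$; for each block, the procedure selects an envelope $Q$ of columns on which $M$ restricted to $E$ is known (from previous steps) to straddle the threshold $p$. The key deterministic lemma I would prove is the following: if rows $i,i'\in E$ get swapped by $\hat\pi_S$ and their entries in $Q$ differ by more than $2h$ on a large fraction of columns, then the empirical row-sum difference on $E\times Q$ must exceed the concentration bound of Step 1; so, under $\mathcal{A}_\delta$, any pair contributing to $\mathcal{L}_{p,h}$ must satisfy the reverse inequality, which forces $|Q|$ to be small and bounds the number of such pairs per block by $O((\sigma^2\vee 1)\log(nd/\delta)(n\vee d)/(\lambda_0 h^2 s))$. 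Multiplying by at most $n/s$ blocks yields a per-level bound of order $(\sigma^2\vee 1)\log(nd/\delta)(n\vee d)/(\lambda_0 h^2)$, and summing over the $O(\log(nd))$ dyadic levels contributes another logarithmic factor. The envelope-selection analysis needs to be carried out symmetrically to cover both $\hat\pi_S$ and $\hat\eta_S$, which explains the $(n\vee d)$ appearing rather than $n+d$. Combining concentration and block losses gives the stated $\log^{5/2}(nd/\delta)$ factor; one of the $\log^{1/2}$ powers typically comes from the slack in setting envelope widths so as to ensure the empirical test is informative.

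Finally, for the expectation bound \eqref{eq:upper_risk}, I would set $\delta=1/(nd)$ in the high-probability statement, and observe that $\mathcal{L}_{p,h}(\hat\pi_S,\hat\eta_S)\leq nd$ deterministically since it counts pairs in $[n]\times [d]$. Splitting $\mathbb{E}[\mathcal{L}_{p,h}]$ over $\mathcal{A}_{1/(nd)}$ and its complement yields the high-probability bound plus an additive $nd\cdot 1/(nd)=1$ term, which is absorbed by the constant; taking the minimum with $nd$ gives the stated bound.

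The main obstacle is the deterministic block analysis of Step 2: one must carefully track how the envelope $Q$ built from the previous level's estimate already satisfies the level-crossing property for every row of the current block, and show that a mistake of the algorithm in separating two rows forces the empirical gap on $E\times Q$ to exceed its deterministic version, which in turn forces either $|E||Q|$ to be small or the number of $h$-separated pairs to be controlled. Making the envelope size large enough for the signal to dominate noise, while small enough that the union bound over all envelopes only costs polylog factors, is the crux of the argument and is what ultimately delivers a bound matching the minimax lower bound of Theorem~\ref{Thm:LowerBound} up to polylogs.
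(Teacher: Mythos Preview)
Your outline captures the high-level structure (good concentration event, level-by-level hierarchical analysis, trivial bound on the complement), but the deterministic lemma you propose in Step~2 is where the argument breaks down, and this is precisely the technical heart of the paper.

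Your key lemma says: if two rows $i,i'$ in a block $E$ are swapped by $\hat\pi_S$, then on $\mathcal{A}_\delta$ their entries in the envelope $Q$ can differ by more than $2h$ only on a small \emph{fraction} of $Q$, because otherwise the row-sum gap on $Q$ would exceed the noise level. The problem is that a small fraction of a large $Q$ can still be many columns in absolute terms, and the loss $\mathcal{L}_{p,h}$ counts absolute numbers of misclassified entries. Concretely, two rows may differ on $k\asymp \sqrt{|Q|/(\lambda_0 h^2)}$ columns while having indistinguishable row-sums on $Q$; nothing in your argument prevents \emph{every} pair in the block from contributing $k$ errors, which after summing over $|E|$ rows and $\log(nd)$ levels gives a bound much worse than $(n\vee d)/(\lambda_0 h^2)$. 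Your arithmetic ``per-block $O((n\vee d)/(\lambda_0 h^2 s))$ times $n/s$ blocks $=O((n\vee d)/(\lambda_0 h^2))$'' does not add up either: the product is $O(n(n\vee d)/(\lambda_0 h^2 s^2))$, which is level-dependent and blows up at coarse scales.

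The paper's resolution is that the algorithm does \emph{not} compare rows only on the full envelope $Q$: the routine \texttt{ScanAndUpdate} builds, for every anchor $j\in Q$ and every width $c$, nested subsets $Q'\subseteq Q$ selected via column averages, and compares rows on \emph{each} of these. The crucial deterministic result (Lemma~\ref{Lemma:Blocks}) is a volumetric argument showing that if a row $i$ lies in the indecisive set $\overline{P}$ \emph{for every such $Q'$ simultaneously}, then the total error $\tilde{\mathcal{R}}_{p,h,\overline{P},[d]}$ is bounded by $O(\rho(|E|\vee|Q|)/(\lambda_1 h^2))$. This is not obtainable from a single row-sum comparison on $Q$; one must exploit that the family of $Q'$ is rich enough that at least one of them localizes the region where $i$ differs significantly from the median. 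The paper explicitly flags this as its main innovation (see Section~\ref{ss:choiceQp}), and your proposal does not supply a substitute for it. Lemmas~\ref{Lemma:TauNotInO} and~\ref{Lemma:TauInP}, which track how errors in the indecisive sets $P$ propagate across levels of the tree, are the second missing ingredient; without them one cannot reduce the global loss to the per-trisection bound of Lemma~\ref{Lemma:Blocks}.
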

In the above theorem, we are only assuming that the sampling effort $\lambda_0\leq  \log(nd)$ so that there are in expectation less than $\log(nd)$ observations per entry. As the sparse case $\lambda_0<1$ is arguably the most relevant, this is not really restrictive. In fact, we would need to use a variant of our procedure to better handle the case of very large sampling effort ($\lambda_0\geq \log(nd)$); we restricted ourselves to the sparser case for the sake of conciseness. 

Comparing~\eqref{eq:upper_risk} with Theorem~\ref{Thm:LowerBound}, we observe that the estimator $(\hat{\pi},\hat{\eta})$ is, up to a polylog factor, minimax optimal for any $n$, $d$, $\lambda_0\leq \log(nd)$ and for any noise level $\sigma$ bounded away from zero. 
Also, Theorem~\ref{Thm:ErrorBound} handles the case of sparse observations, the convergence rates being optimal and non-trivial for $\lambda_0$ as small $\mathrm{Polylog}(nd)/[h^2 (n\wedge d)]$, which corresponds to the challenging situation where  there are a logarithmic  number of observations on each row (resp. column) if $n\geq d$ (resp. $n\leq d$).

We have stated the previous theorem for a single threshold/tolerance $(p,h)$, but in fact, we can construct our estimator for a collections $(\underline{p},\underline{h})=(p_1,h_1),\ldots, (p_m,h_m)$.
\begin{corollary}\label{cor:error_bound:permutation:multiple}
	There exists an universal constant $c>0$ such that the following holds for any $\delta>0$, $\sigma>0$, $\lambda_0\in (0,\log(nd)]$, $(\underline{p},\underline{h})=(p_1,h_1),\ldots, (p_m,h_m)$,  $M\in \mathbb{C}_{\mathrm{Biso}}$. With probability higher than $1-\delta$, the estimator $(\hat{\pi}_S,\hat{\eta}_{S})$ defined in~\eqref{eq:definition_hat_pi} with $(\underline{p},\underline{h})=(p_1,h_1),\ldots, (p_m,h_m)$ and $\delta/m$
	satisfies, simultaneously for all $l=1,\ldots, m$.
		\begin{align*}
			   \mathcal{L}_{p_l,h_l}(\hat\pi_{S},\hat\eta_{S})\leq &\left[c(\sigma^2\vee 1) \log^{5/2}\left(\frac{nd m}{\delta}\right)\frac{n\vee d}{\lambda_0 h_l^2}\enspace \right] \wedge (nd) \enspace . 
		\end{align*}
	\end{corollary}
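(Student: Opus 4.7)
The proposal is that Corollary~\ref{cor:error_bound:permutation:multiple} follows from Theorem~\ref{Thm:ErrorBound} by a standard union-bound argument, once one observes that the algorithm producing $(\hat\pi_S,\hat\eta_S)$ in~\eqref{eq:definition_hat_pi} is built so that its behavior, restricted to any single pair $(p_l,h_l)$ in the input list, is exactly the behavior analyzed in Theorem~\ref{Thm:ErrorBound} for that pair at confidence level $\delta/m$. Consequently, the high-probability guarantee obtained for a single threshold/tolerance pair can be applied pair-by-pair and then aggregated.

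Concretely, the plan is as follows. First, fix an index $l \in \{1,\ldots,m\}$, and instantiate Theorem~\ref{Thm:ErrorBound} with threshold $p_l$, tolerance $h_l$ and confidence parameter $\delta/m$ in place of $\delta$. This produces an event $\mathcal{E}_l$ with $\mathbb{P}(\mathcal{E}_l) \geq 1 - \delta/m$ on which
\begin{align*}
\mathcal{L}_{p_l,h_l}(\hat\pi_S,\hat\eta_S) \leq c(\sigma^2\vee 1)\,\log^{5/2}\!\left(\frac{ndm}{\delta}\right)\,\frac{n\vee d}{\lambda_0 h_l^2},
\end{align*}
where the logarithmic factor is $\log^{5/2}(nd/(\delta/m)) = \log^{5/2}(ndm/\delta)$. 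Second, I would set $\mathcal{E} := \bigcap_{l=1}^{m} \mathcal{E}_l$ and use a union bound to get $\mathbb{P}(\mathcal{E}) \geq 1 - m\cdot (\delta/m) = 1 - \delta$. On $\mathcal{E}$, all $m$ inequalities hold simultaneously. Third, since $\mathcal{L}_{p_l,h_l}(\hat\pi_S,\hat\eta_S)$ counts entries in a subset of $[n]\times [d]$, we have the deterministic bound $\mathcal{L}_{p_l,h_l}(\hat\pi_S,\hat\eta_S) \leq nd$; combining this with the stochastic bound yields the $\wedge (nd)$ in the stated inequality.

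The one substantive point to verify, and the place where the proof is more than bookkeeping, is that running $(\hat\pi_S,\hat\eta_S)$ on the joint list $(\underline{p},\underline{h})$ with confidence $\delta/m$ does not entangle the $m$ thresholds in a way that breaks the single-threshold analysis. This is the main obstacle: one must inspect the definition in~\eqref{eq:definition_hat_pi} (and the envelope/comparison subroutines used in the proof of Theorem~\ref{Thm:ErrorBound}) and check that, for each fixed $l$, the statistical tests and deviation inequalities invoked in the single-threshold proof either apply verbatim, or are replaced by the same tests calibrated at level $\delta/m$. Once this compatibility is in place, the remainder of the argument is simply a substitution of $\delta$ by $\delta/m$ in the constants and logarithms, followed by the union bound sketched above.
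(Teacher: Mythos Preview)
Your high-level plan (substitute $\delta\to\delta/m$ and union-bound) matches the paper, and you correctly flag the one non-trivial point: checking that the multi-threshold algorithm does not ``entangle'' the analyses. However, your claim that the algorithm's behavior ``restricted to any single pair $(p_l,h_l)$ is exactly the behavior analyzed in Theorem~\ref{Thm:ErrorBound}'' is not literally true, and this is where the paper does more than you indicate.

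In \ALGO{} with input $(\underline p,\underline h)$, the comparison graph $G$ and the trisections $(O,P,I)$ are built from \emph{all} thresholds jointly (the loop in Lines~\ref{line:beginning_comparison}--\ref{line:end_comparison} of Algorithm~\ref{Algo:Tree} runs \texttt{ScanAndUpdate} for every $l$ before calling \texttt{GraphTrisect}). So the sorting tree produced with $m$ thresholds is genuinely different from the one produced with $(p_l,h_l)$ alone, and you cannot invoke Theorem~\ref{Thm:ErrorBound} as a black box on the multi-threshold estimator. The paper instead re-runs the full proof of Theorem~\ref{Thm:ErrorBoundRows}: it observes that the richer graph $G$ is only \emph{more} informative (every edge is still correct, so Lemmas~\ref{Lemma:Graph}, \ref{Lemma:PropertiesTrisection}, \ref{Lemma:TrisectionEmpiricalMedian} go through unchanged), then checks that the per-threshold objects $Q^{k-1}_{t,l}$, $\mathcal{Q}^{k-1}_{t,l}$ and the per-threshold version $\overline P_l(E_t^{k-1})$ of $\overline P$ still satisfy the analogues of Theorem~\ref{ThmItem:Envelope} and Corollary~\ref{CorItem:ErrorBound}. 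Only after this verification does the union bound over the $m$ families of concentration events give probability $\geq 1-m\delta/2$, whence the substitution $\delta\to\delta/m$ yields the corollary. Your sketch is therefore the right shape, but the ``compatibility check'' you defer is the actual content of the proof, and it hinges on the monotonicity observation about $G$ rather than on the two analyses being identical.
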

	In comparison to the single choice of threshold, tolerance $(p,h)$ (Theorem~\ref{Thm:ErrorBound}), we only pay a mild logarithmic price with respect to the number $m$ of thresholds. For $(p,h)$ and $(p',h')$ such that $[p-h,p+h]\subset [p'-h',p+h']$ and for any $(\pi',\eta')$ we have, by definition, $\mathcal{L}_{p',h'}(\pi',\eta')\leq \mathcal{L}_{p,h}(\pi',\eta')$. This allows us to get a simultaneous control over all losses $\mathcal{L}_{p,h}$ as explained in the following remark.

	\begin{remark}\label{remark:multiple:ph}
 By building a regular grid of thresholds $p$ of width $1/(nd)$, and for each threshold, a dyadic grid $(1/2, 1/4, 1/8,\ldots, 1/2^{\lceil \log_2(nd)\rceil})$ of tolerance, we deduce from the above corollary that $(\hat{\pi}_S,\hat{\eta}_{S})$ defined in~\eqref{eq:definition_hat_pi} with this choice of $(\underline{p},\underline{h})$ and $\delta/(nd\log_2(nd))$, satisfies,  with probability higher than $1-\delta$,
	\begin{align*}
			   \mathcal{L}_{p,h}(\hat\pi_{S},\hat\eta_{S})\leq &\left[c'(\sigma^2\vee 1) \log^{5/2}\left(\frac{nd }{\delta}\right)\frac{n\vee d}{\lambda_0 h^2}\enspace \right] \wedge (nd) \enspace ,
	\end{align*}
	simultaneously for all $(p,h)\in [0,1]$, where $c'>0$ is a universal constant. 
	\end{remark}

As a consequence, the above estimator of the permutations turns out to be simultaneously optimal over all thresholds and tolerances. Finally, we describe in Section~\ref{sec:estimation_R} how to deduce a polynomial time estimator of the level set  $R^*_{p,h}$.

\begin{theorem}\label{Thm:ErrorBound_classificatoin}
    There exists a universal constant $c>0$ such that the following holds for any $\sigma>0$, $\lambda_0\in [0,\log(nd)]$, $p\in[0,1]$, $h\in [0,1]$, and $M\in \mathbb{C}_{\mathrm{Biso}}$. The estimator $\hat{R}_{p,h}$  defined in~\eqref{eq:estimator:R_ph:Fina} satisfies
    \begin{align*}
\mathbb{E}\left[L_{0,1,\texttt{NA}}[ \hat{R}_{p,h}]\right]\leq{}c(\sigma^2\vee 1) \log^{7/2}(nd)\frac{n\vee d}{\lambda_0 h^2}\enspace ,
    \end{align*}
\end{theorem}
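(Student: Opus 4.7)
My plan is to build the estimator $\hat R_{p,h}$ in two stages: first estimate the sorting permutations, then classify each entry by local averaging in sorted coordinates and thresholding at $p$. For the first stage, I would invoke Remark~\ref{remark:multiple:ph} with $\delta=1/(nd)$, obtaining permutations $(\hat\pi_S,\hat\eta_S)$ that satisfy, simultaneously over a dyadic grid of tolerances $\{2^{-k}\}_{k\le\lceil\log_2(nd)\rceil}$ and a grid of thresholds with width $1/(nd)$, the bound $\mathcal{L}_{p',h'}(\hat\pi_S,\hat\eta_S)\lesssim (\sigma^2\vee 1)\log^{5/2}(nd)(n\vee d)/(\lambda_0 (h')^2)\wedge nd$ with probability at least $1-1/(nd)$; the failure event then contributes at most $1$ in expectation to the final loss. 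Sample splitting (or an independent Poissonized draw) makes the second-stage data independent of $(\hat\pi_S,\hat\eta_S)$ up to a factor $2$ in $\lambda_0$.

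For each position $(i,j)$, I would form a local average $\bar Y(i,j)$ of the fresh observations whose sorted location lies in a rectangular block $B(i,j)$ centered at $(\hat\pi_S(i),\hat\eta_S(j))$, of area of order $\sigma^2\log(nd)/(\lambda_0 h^2)$ so that the standard deviation of $\bar Y(i,j)$ is of order $h$, and set $\hat R_{p,h}(i,j)=\mathbf{1}\{\bar Y(i,j)\ge p\}$. Writing $L_{0,1,\texttt{NA}}[\hat R_{p,h}]\le T_{\mathrm{est}}+T_{\mathrm{perm}}$, the estimation term $T_{\mathrm{est}}$ is the error incurred assuming perfect sorting; by bi-isotonicity only $O((n\vee d)/\mathrm{side}(B))$ blocks intersect the staircase $\{|M-p|\le h\}$ where classification can actually fail, and a Chernoff/union bound yields $T_{\mathrm{est}}\lesssim (\sigma^2\vee 1)(n\vee d)\log(nd)/(\lambda_0 h^2)$. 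The permutation term $T_{\mathrm{perm}}$ collects entries mislabeled because of sorting errors inside their block, and is handled by a dyadic decomposition over intruder scales $h_k=2^{-k}h$: at scale $k$, the number of intruders is controlled by $\mathcal{L}_{p,h_k}(\hat\pi_S,\hat\eta_S)\lesssim \log^{5/2}(nd)(n\vee d)/(\lambda_0 h_k^2)$ from the first stage, and a Bernstein-type argument bounds the damage each intruder scale inflicts on the relevant block means. Summing over $k\le\log_2(nd)$ yields the additional logarithm responsible for the $\log^{7/2}(nd)$ factor.

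The main obstacle is the analysis of $T_{\mathrm{perm}}$: when $\hat\pi_S\neq\pi$, the block $B(i,j)$ may pick up entries from remote regions of $M$, biasing the block mean away from the target $M_{ij}$ by more than $h$ and producing systematic misclassifications. The key is that the intruders live on the matching scales tracked by Remark~\ref{remark:multiple:ph}, so that their density per block can be controlled jointly across all dyadic tolerances, which is why simultaneous control of $\{\mathcal{L}_{p,2^{-k}h}\}_k$ from the first stage is essential. Showing that only a small fraction of blocks carry an abnormally large intruder density is the main technical content and is precisely what bounds $T_{\mathrm{perm}}$ by the target rate.
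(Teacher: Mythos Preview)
Your two-stage structure---estimate $(\hat\pi_S,\hat\eta_S)$ with simultaneous control of several $\mathcal{L}_{p',h'}$ losses on one subsample, then threshold block averages computed from an independent subsample---matches the paper (Proposition~\ref{prp:reconstruction} combined with Corollary~\ref{cor:error_bound:permutation:multiple}), and your handling of $T_{\mathrm{est}}$ by counting boundary blocks is correct.

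The gap is in the dyadic decomposition of $T_{\mathrm{perm}}$. You slice over \emph{shrinking} tolerances $h_k=2^{-k}h$ and invoke $\mathcal{L}_{p,h_k}\lesssim(\sigma^2\vee 1)\log^{5/2}(nd)\,(n\vee d)/(\lambda_0 h_k^2)$. That bound equals $4^{k}$ times the target rate, so the sum over $k\le\log_2(nd)$ blows up polynomially rather than producing a single extra logarithm; no per-scale Bernstein weighting can rescue this, because the count you are controlling genuinely grows like $4^{k}$. The point is that intruders whose misplaced value lies in $(p-h,p+h)$ are essentially harmless---each shifts the block mean by $O(h)$ and is absorbed into the boundary analysis. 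The dangerous intruders are those whose misplaced value sits \emph{far} from $p$, in a band $[p-h2^{s},p-h2^{s-1})$ (or its mirror) for $s=1,\dots,\lceil\log_2(1/h)\rceil$. The paper decomposes over these bands: the number of band-$s$ intruders is bounded by $\mathcal{L}_{p\mp h2^{s-2},\,h2^{s-2}}$, a loss at the \emph{larger} tolerance $h2^{s-2}$ and at a \emph{shifted} threshold, which by Corollary~\ref{cor:error_bound:permutation:multiple} decays like $4^{-s}$. Weighted by the per-entry bias $\sim h2^{s}$, band $s$ contributes $\lesssim 2^{-s}$ times the base rate and the sum is geometric. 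Hence the decomposition must run over scales $h/2,h,2h,\ldots$ with shifted thresholds $p\pm h2^{s-2}$, not over $h,h/2,h/4,\ldots$ at the fixed threshold $p$; this is exactly the list $(\underline{p},\underline{h})$ that appears in Proposition~\ref{prp:reconstruction} and is fed into Corollary~\ref{cor:error_bound:permutation:multiple}.
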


In light of Theorem~\ref{Thm:LowerBound} the polynomial time classification estimator $\hat{R}_{p,h}$ is, up to polylogarithmic terms, minimax optimal for the bi-isotonic  classification problem.

    \section{Description of the ranking algorithm}\label{Sec:Algos}

The procedure that we present in this section is aiming at reconstructing a specific level set of the matrix. A fundamental step in it is to estimate a ranking of the experts/questions. This is done in Algorithm $\ALGO${}, which is our main algorithmic contribution. Then based on the permutations outputted by $\ALGO${}, we re-order the matrix and perform inference of the level set of the reordered matrix. In what follows, we first present the intuition behind $\ALGO${}, then describe in details the procedure itself.

\subsection{Intuition behind $\ALGO${}}\label{sec:intution}

$\ALGO${} outputs an estimator of the true permutation of the experts. The algorithm itself is involved and makes recursive calls to several routines. We describe here the intuition behind this algorithm and behind the optimal error bound $\sigma^2 (n\vee d)/ (h^2\lambda_0)$. 
For the purpose of this subsection, we assume that the matrix $M$ only takes two values, namely $\{p-h,p+h\}$. Also, we focus on the specific problem of estimating the permutation $\pi$ of the set $[n]$ of experts. The intuitions presented here will however remain valid in the general setting. From a broad perspective, Algorithm $\ALGO${} recursively refines  an ordered partition of the experts, and transforms it in the end into a permutation. This refinement is done by a subroutine which, given a set $E\subset [n]$ of the partition, splits it into an ordered partition. 

\begin{figure}[h]
	\centering
	\subfloat[Some set of questions $E\subset \lbrack n\rbrack$ and the corresponding set of questions of interest $Q^*(E)$, defined in Equation~\eqref{eq:definition:Q*}.\label{Fig:Q^*}]{
		\includegraphics{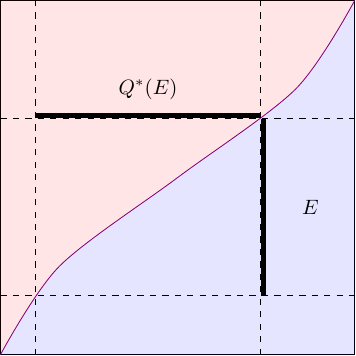}}
 \hspace{1cm}
	\subfloat[After several halving steps, we end up with an ordered partition $(E_i)_{i=1,2,3,4}$ of $\lbrack n \rbrack$ such that the corresponding questions of interest $(Q^*(E_i))_{i=1,2,3,4}$ are small.\label{Fig:Partition}]{
	\includegraphics{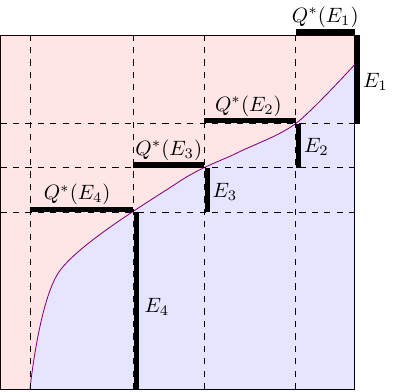}} 
\caption{Illustration of two bi-isotonic matrices $M\in\mathbb{C_{\mathrm{Biso}}}(\mathrm{id}_{\lbrack n\rbrack},\mathrm{id}_{\lbrack d\rbrack})$ so that $\pi=\mathrm{id}_{\lbrack n\rbrack}$ and $\eta= \mathrm{id}_{\lbrack d\rbrack}$. These matrices take two values $p+h$ (red) and $p-h$ (blue).}
\end{figure}

\paragraph{Active set $Q^*(E)$ of questions for a set $E$ of experts.}
For partitioning the set $E$ of experts, one needs to compare the corresponding rows of the matrix $M$ on suitable questions. In our case, the only relevant subset of questions to compare experts in $E$ is
\begin{align}\label{eq:definition:Q*}
    Q^*(E)\coloneqq \{j\in[d]:\ \max_{i\in E}M_{ij}\geq p+h,\ \min_{i\in E}M_{ij}\leq p-h\}\enspace ,
\end{align}
i.e.~the questions where the experts within $E$ differ around the threshold $p$ - see Figure~\ref{Fig:Q^*}. In the absence of noise, one would like to split $E$, using $Q^*(E)$, into two parts, one containing the best half of the experts, and the other containing the worst half. Recursively adding this bisection to the ordered partition, one would ultimately end up with the true permutation $\pi$. Note however that since we observe noisy versions of $M$, we will only be able to estimate - albeit imperfectly - $Q^*(E)$ and subsequently split  $E$ if both $E$ and $Q^*(E)$ are large enough compared to the size of $h$, namely if
\begin{equation}\label{eq:E_Q*E_condition}
|Q^*(E)| \gtrsim  \frac{\sigma^2}{h^2\lambda_0}\enspace .
\end{equation}
Indeed, if $|Q^*(E)|\lesssim \frac{\sigma^2}{h^2\lambda_0}$, then it is impossible to compare any two experts $i$ and $i'$ in $E$, because corresponding rows of $M$ differ by $2h$ on at most  $|Q^*(E)|$ entries. Hence, the signal differs by $2h$ on at most $\sigma^2/h^2$ observations, which makes it impossible to compare $i$ and $i'$ when the noise level is $\sigma$.

\paragraph{Intuition behind the rate $\sigma^2 (n\vee d)/(\lambda_0 h^2)$.}
Consider an idealized situation, where,  for each $E$, an oracle gives us $Q^*(E)$ and, if $Q^*(E)$ satisfies~\eqref{eq:definition:Q*}, the oracle also provides us a perfect bisection of $E$ into two sets $O$ and $I$
such that, for any $i\in O$ and $i'\in I$, we have $\pi(i)< \pi(i')$. 
This idealized procedure would end up in providing an ordered partition of the experts $(E_1,\ldots, E_H)$ such that 
\begin{itemize}
    \item the ordered sets  $E_l$ and $E_{l+1}$ are such that, for $i\in E_l$ and $i'\in E_{l+1}$, we have $\pi(i)< \pi(i')$.
    \item each set $E_l$ has the property that either $|E_l|=1$ or $|Q^*(E_l)| \lesssim \sigma^2/( h^{2}\lambda_0)$.
\end{itemize}
See Figure~\ref{Fig:Partition}. The estimated permutation $\hat{\pi}$ is then taken as being any partition which is consistent with the ordered partition. Note that for a such permutation $\pi'$, the only entries on which it might induce an error with respect to $\mathcal{L}_{p,h}(\hat{\pi}, \eta)$ are those contained in the union of $E_i \times Q^*(E_i)$. Hence, the error $\mathcal{L}_{p,h}(\hat{\pi}, \eta)$ is bounded by
$$\sum_i |E_i| |Q^*(E_i)| \lesssim \sigma^2 \frac{n}{h^2\lambda_0}\enspace .$$
By symmetry, we would get a bound of the order of the order $\sigma^2 \frac{d}{h^2\lambda_0}$ for $\eta$.  Note that this matches the minimax error bound, see Theorem~\ref{Thm:LowerBound}, as well as the error bound of our procedure, see Corollary~\ref{cor:error_bound:permutation:multiple}.

The above description streamlines the structure of Algorithm $\ALGO${}, as well as its analysis. However, since there is noise, the algorithm as well as its analysis are much more complicated, in the three following directions.
\begin{itemize}
    \item We need to estimate $Q^*(E)$ based on noisy data. This will lead to errors on the set of questions we use for comparing experts in $E$.
    \item Even if we were equipped with oracle knowledge of $Q^*(E)$,  we would need to select relevant subsets of  $Q^*(E)$ in order to compare a given pair of experts in $E$. Indeed, two given experts in $E$ may not significantly differ on all the questions $Q^*(E)$, but just on a small subset of it.
     \item Even if we were equipped with an oracle knowledge on which questions are most useful for comparing two given experts in $E$: we would still end up performing a noisy comparison between these two experts. We will therefore not be able to perfectly divide $E$ in two ordered sets, and need to take into account these mistakes in the final partition.
\end{itemize}
Dealing with each of these problems presents a challenge of its own, which we describe in more details in the full description of the algorithm below. We emphasize in particular the second challenge in Subsection~\ref{ss:choiceQp}, which is the one that demanded the most innovative algorithmic and analytical innovations, and is a main highlight of this paper.

\subsection{Definition of the estimators and preliminaries}\label{sec:definition}

In this section, we introduce our new ranking procedure  \ALGO{}  and we describe the estimators $\hat{\pi}_S$  and $\hat{\eta}_S$. Let
\begin{equation}
k_* = 3\lceil \log_2(n\vee d))\rceil \ . 
\end{equation}
The procedure \ALGO{}
takes as inputs as sequence $(p_1,h_1),\ldots, (p_m,h_m)$ of thresholds and tolerance, some quantity $\delta\in (0,1)$ which corresponds to some probability of error, and $k_*$ noisy observations of the matrix $M$. For that purpose, we use a simple subsampling strategy to build these $k_*$ matrices from the sample.

\paragraph{Subsampling and definition of $\hat{\pi}_S$ and $\hat{\eta}_S$.} Recall that we have at our disposal  the data $(N',I,J,Y')$  which stands for a  noisy sample of entries of the matrix $M$  (see Section~\ref{Def:Model}). Using the Poissonization trick, we easily define $k_*$ estimators of $M$ as follows. For $t=1,\dots,N'$, we first sample independently a random variable $U_t$ uniformly in $[k_*]$.  Then, for each $i\in [n]$, $j\in [d]$, $k\in [k_*]$, we define
\begin{equation}\label{eq:definition_N_{ij}_k}
 N_{ij}^{(k)}\coloneqq|\{t\leq N':\ I_t=i,\ J_t=j,\ U_t=k\}|\ ,
\end{equation}
as the number of observation in the $k$-th sample that fall within $(i,j)$. Then, the observation matrix $Y^{(k)}\in \mathbb{R}^{n\times d}$ is simply defined, for any $(i,j)$,  by 
    \begin{align}\label{eq:definition_Y_ijk}
        Y_{ij}^{(k)}&\coloneqq 
            \frac{1}{N_{ij}^{(k)}}\sum_{t=1}^{N'}\mathbbm{1}\{I_t=i,\ J_t=j,\ U_t=k\}\cdot Y'_{t}\enspace , 
    \end{align}
with the convention $0/0=0$. By the Poissonization trick, the matrices $Y^{(k)}$ are i.i.d. and the $N_{ij}^{(k)}$ are distributed as independent Poisson random variables with parameters $\lambda_0^- = \frac{\lambda_0}{3\lceil \log_2(n\vee d)\rceil}$. Besides, if $N_{ij}^{(k)}=0$, then $Y_{ij}^{(k)}=0$. Conditionally to $N_{ij}^{(k)}>0$, we have 
    \begin{align}\label{Eq:Model}
        Y_{ij}^{(k)}=
 M_{ij}+W_{ij}^{(k)}\enspace , 
 \end{align}
where the $ W_{ij}^{(k)}$ are i.i.d.~in $k$, independent in $i,j$, and $\sigma^2$-subGaussian. The probability that $N_{ij}^{(k)}>0$ is  equal to $\lambda_1= 1-e^{-\lambda_0^{-}}$.

Then, the ranking estimator $\hat{\pi}_{S}:=\hat{\pi}_{S}(N',I,J,Y';\delta)$ is defined by
\begin{align}\label{eq:definition_hat_pi}
\hat{\pi}_{S}(N',I,J,Y';\delta) &:= \ALGO \big[(Y^{(1)}, \ldots, Y^{(k_*)});p,h,\lambda_0,\delta\big]\enspace . 
\end{align}
To order the columns of $M$, we simply apply the same procedure to the noisy  observation of $M^{T}$. This leads us to
$\hat{\eta}_{S}(N',I,J,Y') := \ALGO\big[(Y^{(1)T}, \ldots, Y^{(k_*)T});p,h,\lambda_0,\delta\big]$.

\paragraph{Preliminaries.} Given the parameters $p$, $h$, $\sigma^2$, and $\delta$, we define the quantities $\rho$ and $\gamma$.
\begin{align}\label{eq:def:rho}
\rho&:=(1\vee \sigma)e\sqrt{8\log\left(24 nd(n\vee d)^{1/2}\lceil\log_2(n\vee d)\rceil/\delta\right)}\ ; \\ \gamma &:=2\log\left(24 nd(n\vee d)^{1/2}\lceil \log_2(n\vee d)\rceil/\delta\right)/{\lambda_1e^2}\enspace . \label{eq:def:gamma}
\end{align}
where $e=\exp(1)$. Here, we have $\rho\asymp (1\vee \sigma)\log^{1/2}(nd/\delta)$ and $\gamma\asymp \log(nd/\delta)/\lambda_1$. In the sequel, we say that $\mathcal{E}=(E_1,\ldots, E_r)$ form an \emph{ordered partition} of $[n]$ if the collection $(E_1, \ldots, E_r)$ of  subsets is a partition of the set $[n]$ of rows. We say that a permutation $\pi$ is compatible with $\mathcal{E}$, if for any $i\in E_r$ and $j\in E_s$ with $r<s$, we have $\pi(i) < \pi(j)$. 
Given an ordered partition $\mathcal{E}$,  $\texttt{Permutation}(\mathcal{E})$ stands for any permutation $\pi$ (possibly random) that is compatible with $\mathcal{E}$. Note that constructing such $\pi$ from a given $\mathcal{E}$ can be done in linear time.

\subsection{The hierarchical sorting tree}\label{Sec:Tree}

From a broad perspective, the general structure of $\ALGO{}$ (in Algorithm~\ref{Algo:Tree}) is that of iteratively building a hierarchical sorting tree. We start with the trivial ordered partition $\mathcal{E}=([n])$. In the first step, we build in Line~\ref{Line:TreeTrisect} a Trisection  of $[n]$ into three sets $(O,P,I)$ where $O\subset[n]$ (resp. $I$) is, with high-probability, made of rows $i$, such $\pi(i)$ is below (resp. above) the median of $[n]$, that is  $\pi(i)\leq n/2$ (resp. $\pi(i)\geq n/2$) and $P$ contains the remaining rows for which we cannot draw any significant decision. This leads us to $\mathcal{E}=(O,P,I)$. Then, at the later steps, we recursively construct the ordered partition. Together with $\mathcal{E}$, we maintain a vector $v\in \{0,1\}^{|\mathcal{E}|}$ where $v_k=0$ encodes that the set $E_k$ is not to be refined anymore. This is either the case because $E_k$ arises from a set $P$ of indecisive rows in previous subsection, or because $|E_k|$ is small compared $\log(nd)/(h^2\lambda_1)$ 
--see Line~\ref{Line:TreeTrisect2} in Algorithm~\ref{Algo:Tree} or the explanation below~\eqref{eq:E_Q*E_condition}-- or if the sets $Q$ of questions associated to $E$ are too small for $(p_l,h_l)$ --see Line~\ref{line_update_u} and the explanations below~\eqref{eq:E_Q*E_condition}. See Figure~\ref{Fig:Tree} in Appendix~\ref{Sec:AnalysisTree} for a visual representation of the hierarchical sorting tree. 
The algorithm stop after less than $\lceil \log_2(n)\rceil$ iterations when none of the sets are to be refined. At the end (Line~\ref{line:final}), we simply compute any permutation $\pi$ which is compatible with the final ordered partition $\mathcal{E}$ (see Section~\ref{sec:proof:analysis:permutation} for a formal definition). Using hierarchical sorting trees has already been proposed in~\cite{liu2020better,pilliat2022optimal} and, more generally, recursive approaches for ranking problems have already been put forward in~\cite{mao2020towards}.

\begin{algorithm}[ht]
	\caption{\ALGO{}\label{Algo:Tree}}
	\begin{algorithmic}[1]
		\Require samples $\left(Y^{(k)}\right)_{k=1,\dots 3\lceil \log_2(n)\rceil}$, $p=(p_1,\dots,p_m)$, $h=(h_1,\dots,h_m)$, $\lambda_0$, $\delta$.
		\Ensure permutation $\pi$ of $[n]$ and directed graph $G$
		\State initialize $G\gets \mathbf{0}_{n\times n}$, $k\gets 0$, $\mathcal{E}\gets ([n])$, $v\gets (\mathbbm{1}\{\lambda_1n> 4\rho^2/h^2\})$
		\While{$v\neq \mathbf 0$}
		\State $k\gets k+1$, $\tilde{\mathcal{E}} \gets  ()$, $\tilde v \gets ()$ 
		\For{$t$ from $1$ to $|\mathcal{E}|$}
		\State$E\gets t^{\text{th}}$ set in $\mathcal{E}$
		\If{$v_t=0$} \Comment{means that the algorithm decided in an earlier step not to refine $E$}
		\State $\tilde{\mathcal{E}} \gets (\tilde{\mathcal{E}}, E)$, $\tilde v\gets (\tilde v,0)$ 
		\Else
        \State $u\gets 0$
        \For{$l$ from $1$ to $m$}\label{line:beginning_comparison}
		\State $Q\gets$ \texttt{Envelope}($t,\mathcal{E},v,Y^{(3k-2)},p_l,h_l,\lambda_0,\delta$) \Comment{estimates questions of interest w.r.t.  $E$}\label{line:envelope}
		\If{$\lambda_1|Q|> 4\rho^2/h^2$} \Comment{$Q$ large enough, further refinement of $E$ possible}
		\State $u\gets 1$
        \State \texttt{ScanAndUpdate}($Y^{(3k-1)},Y^{(3k)},E,Q, G,\lambda_0,\delta$) \label{line:scanandupdate}
        \Comment{update of $G$}
		\EndIf
		\EndFor \label{line:end_comparison}
        \If{$u=0$}\Comment{all envelopes are small, no further improvement needed}\label{line_update_u}
		\State $\tilde{\mathcal E}\gets (\tilde{\mathcal E},E) $, $\tilde v\gets (\tilde v, 0)$ 
        \Else
        \State
        $(O,P,I)\gets$\texttt{GraphTrisect}$(E,G)$ \label{Line:TreeTrisect}
        \State $\tilde {\mathcal{E}} \gets(\tilde{\mathcal{E}}, O,P,I)$,  $ \tilde v\gets (\tilde v, \mathbbm{1}\{\lambda_1|O|> 4\rho^2/h^2\},0,\mathbbm{1}\{\lambda_1|I|> 4\rho^2/h^2)\}$ \Comment{no further refinement of $P$ or $I$ if those are small enough}\label{Line:TreeTrisect2}
        \EndIf
        \EndIf
        \EndFor
        \State $\mathcal{E}\gets \tilde{\mathcal{E}}$, $v\gets \tilde v$
		\EndWhile
        \State Return (\texttt{Permutation}($\mathcal{E}$))\label{line:final}
	\end{algorithmic}
\end{algorithm}

The procedure \ALGO{} mostly differs from those earlier works~\cite{liu2020better,mao2020towards,mao2018minimax} in three ways. First, aside from the hierarchical sorting tree, \ALGO{} records a comparison graph $G\in \{0,1\}^{n\times n}$.  The procedure initializes with $G=\mathbf{0}_{n\times n}$. Within the subroutine \texttt{ScanAndUpdate}, we update the comparison $G_{ii'}$ to $1$, if we find statistical evidence that $\pi(i)< \pi(i')$. In the main algorithm, the steps in Lines~\ref{line:beginning_comparison}--\ref{line:end_comparison} amounts to find as many statistically significant comparisons as possible within $E$. Then, in Line~\ref{Line:TreeTrisect}, we trisect $E$ into $(O,P,I)$ using the graph $G$ --see Algorithm~\ref{Algo:Trisect}. If, according to $G$, the rank of $i$ is below (resp. above) at least half of the experts $i'$ in $E$ --and is therefore below (resp. above) the median of $E$-- we put $i$  in $O$ (resp. $I$). Then remaining experts are put in the indecisive set $P$. Hence, the statistical efficiency of the procedure mostly depends on the way we construct $G$.
  For that, there are two main ingredients. First, the subroutine \texttt{Envelope} (Line~\ref{line:envelope}) that selects a set $Q$ of questions which, with high probability, contains $Q_{p_l,h_l}^*(E)$, which is defined in~\eqref{eq:definition:Q*}. As explained in Subsection~\ref{sec:intution}, this set is instrumental towards comparing experts in $E$. Second, the subroutine \texttt{ScanAndUpdate} compares experts based on partial row sums on subsets $Q$. This subroutine differs from the idealized examples in Section~\ref{sec:intution} in that, it does not simply compare the row sums restricted to $Q$ on the set $E$, but also selects many relevant subsets of $Q$. These two subroutines are described in the next subsections.

\begin{algorithm}
    \caption{\texttt{GraphTrisect}}\label{Algo:Trisect}
    \begin{algorithmic}
        \Require $E\subseteq [n]$, graph $G\in\{0,1\}^{n\times n}$
        \Ensure trisection $O,P,I$ of $E$
        \State $O\gets \{i\in E:\ \sum_{i'\in E}G_{ii'}>|E|/2\}$ \Comment{$\pi(i)<\pi(i')$ detected for the majority of $i'\in E$}
		\State $I\gets \{i\in E:\ \sum_{i'\in E}G_{i'i}>|E|/2\}$ \Comment{$\pi(i)>\pi(i')$ detected for the majority of $i'\in E$}
		\State $P\gets E\setminus (O\cup I)$ 
    \end{algorithmic}
\end{algorithm}

In the proof of Theorem~\ref{Thm:ErrorBound} --more specifically in that of Theorem~\ref{Thm:ErrorBoundRows}-- we will show that, on an event of probability higher than $1-\delta/2$, at all the steps of the algorithms, the envelope set $Q$ will contain the oracle envelopes $Q_{p_l,h_l}^*(E)$ and that a trisection $(O,P,I)$ computed by \texttt{GraphTrisect} will be such that, for any $i$ in $O$ and $i'\in I$ we have $\pi(i)< \pi(i')$. However, it does not preclude some expert $i'$ in $P$ to have a true rank  below some of $O$ or above some of $I$. In summary, the estimated permutation $\hat{\pi}_S$ makes errors because in the final ordered permutation $\mathcal{E}=(E_1,\ldots, E_r)$, (i) within any set $E_s$, the $E_s$'s are ordered arbitrarily and (ii) because, for some $(i,i')$ with $i\in E_s$ and $i'\in E_{s'}$ with $s<s'$ we have $\pi(i')< \pi(i)$. This second problem arises because of the sets $P$ in the trisection steps. The cornerstone of the proof amounts to proving the loss of arising from these errors is small.

\subsection{The \texttt{ScanAndUpdate}{}  routine}\label{Sec:Trisection}

As indicated above, the purpose of the \texttt{ScanAndUpdate} routine is to update the comparison graph $G$ on the experts in $E$. \texttt{ScanAndUpdate} takes as a parameter a subset $Q$ of questions --that has been computed using the \texttt{Envelope} routine-- and builds many possibly relevant subsets $Q'$ of questions (to be discussed below). Given such a  subset $Q'$ of questions, and some matrix $\tilde{Y}^{(b)}$ sampled according to~\eqref{Eq:Model}, it calls the routine \texttt{UpdateGraph} (Algorithm~\ref{Algo:Graph}) that  compares in Line~\ref{Line:GraphThreshold}, for any $i$, $i'\in E$, the partial row sums of $\tilde{Y}^{(b)}$ restricted to the columns in $Q'$. Since $\tilde{Y}^{(b)}$ is close to $M$ and since $M\in \mathbb{C}_{\mathrm{biso}}$, when the partial row sum of $i$ is significantly larger than that of $i'$, we can conclude that $\pi(i)< \pi(i')$ and we 
update  the comparison graph to $G_{ii'}=1$. The thresholds $2\rho\sqrt{\lambda_1/Q'}$ in Line~\ref{Line:GraphThreshold} has been chosen in such a way that we draw any false conclusion with extremely small probability. \texttt{ScanAndUpdate} performs all possible $|E|(|E|-1)$ comparisons. Note that the partial row sums are computed only if $Q'$ is large enough (Condition in Line \ref{Line:GraphSize}), otherwise we may not have enough observations on $\tilde{Y}^{(b)}$ to draw statistically significant conclusions.

Let us now further discuss the construction of all the sets $Q'$ in Lines~\ref{Line:TrisectInBetween},~\ref{Line:TrisectSmaller}, and \ref{Line:TrisectLarger}. We do not simply use $Q$ to compare experts in $E$, as $Q$ - which estimates $Q^*(E)$ - aims at selecting any questions on which two experts in $E$ disagree. But in order to compare a given expert $i\in E$ with other experts in $E$, it is important to refine $Q$ and adapt it to questions that are most relevant to $i$. The choice of sets $Q'$ reflect this - see Subsection~\ref{ss:choiceQp} for a more in depth explanation. 

For each $j\in Q$, the algorithm uses partial columns sums of $\tilde Y^{(a)}$ with respect to $E$ to construct sets $Q'\subseteq Q$ that contain $j'\in Q$ for which we can detect based on data that either $\eta(j')<\eta(j)$ or $\eta(j')>\eta(j)$; or for which such a detection based on these partial columns sums is not possible.  In our construction, we restrict the deviation of the partial columns sums by different choices of a parameter $c$. This gives us an implicit control of the size of the $Q'$.

\begin{algorithm}
	\caption{\texttt{ScanAndUpdate}\label{algo:scanupdate}}
	\begin{algorithmic}[1]
		\Require $E\subseteq [n]$, $Q\subseteq [d]$, $\tilde Y^{(a)}, \tilde Y^{(b)}\in \mathbb{R}^{n\times d}$, graph $G\in\{0,1\}^{n\times d}$, $\lambda_0$, $\delta$
		\Ensure Update of $G$
		\For{$j\in Q$}
		\State $Q'\gets\{j'\in Q: \ |\frac{1}{|E|}\sum_{i\in E}\tilde Y^{(a)}_{ij}-\tilde Y^{(a)}_{ij'}|\leq 2\rho\sqrt{\lambda_1/|E|}\}$\label{Line:TrisectInBetween} \Comment{questions $j'$ with $\eta(j')$ close to $\eta(j)$}
		\State \texttt{UpdateGraph}($G$, $E$, $Q'$, $\tilde Y^{(b)},\lambda_0,\delta$)
		\For{$c=2,3,\dots, \lceil\sqrt{\lambda_1|E|}/(2\rho)+1\rceil$} \Comment{try different margins}
		\State $Q'\gets\{j'\in Q:\ 2\rho\sqrt{\lambda_1/|E|}\leq \frac{1}{|E|}\sum_{i\in E}\Big[\tilde Y_{ij'}^{(a)}-\tilde Y_{ij}^{(a)}\Big]\leq 2c\rho\sqrt{\lambda_1/|E|}\}$\Comment{questions $j'$ with $\eta(j')<\eta(j)$ within the given margin} \label{Line:TrisectSmaller}
		\State \texttt{UpdateGraph}($G$, $E$, $Q'$, $\tilde Y^{(b)},\lambda_0,\delta$)
		\State $Q'\gets\{j'\in Q:\ 2\rho\sqrt{\lambda_1/|E|}\leq \frac{1}{|E|}\sum_{i\in E}\Big[\tilde Y_{ij}^{(a)}-\tilde Y_{ij'}^{(a)}\Big]\leq 2c\rho\sqrt{\lambda_1/|E|}\}$\Comment{questions $j'$ with $\eta(j')>\eta(j)$ within the given margin}\label{Line:TrisectLarger}
		\State\texttt{UpdateGraph}($G$, $E$, $Q'$, $\tilde Y^{(b)}$, $\lambda_0$, $\delta$)
		\EndFor
		\EndFor
	\end{algorithmic}
\end{algorithm}

\begin{algorithm}[ht]
		\caption{\texttt{UpdateGraph}}\label{Algo:Graph}
	\begin{algorithmic}[1]
		\Require graph $G\in\{0,1\}^{n\times n}$, $E\subseteq [n]$, $Q'\subseteq [d]$, $\tilde Y^{(b)}\in \mathbb{R}^{n\times d}$, $\lambda_0$, $\delta$
		\Ensure update of $G$
				\If{$|Q'|>\gamma$}\label{Line:GraphSize} \Comment{required for detection based on partial row sums}
		\For{$i,i'\in E$} 
		\If{$\frac{1}{|Q'|}\sum_{j\in Q'}\Big[\tilde Y^{(b)}_{ij}-\tilde Y^{(b)}_{i'j} \Big]>2\rho\sqrt{\lambda_1/|Q'|}$} \label{Line:GraphThreshold}\Comment{criterion for detecting $\pi(i)<\pi(i')$}
		\State $G_{ii'}\gets 1$
		\EndIf
		\EndFor
		\EndIf
	\end{algorithmic}
\end{algorithm}

 \subsection{Selecting relevant questions with \texttt{Envelope}}\label{Sec:Questions}

We emphasized in Subsection~\ref{sec:intution} the importance of reducing the set of questions from $[d]$ to $Q_{p,h}^*(E)$ defined in~\eqref{eq:definition:Q*}. 
The purpose of $\texttt{Envelope}$ is to build a set $Q$, whose size is as small as possible, and that contains $Q_{p,h}^{*}(E)$ with high probability. We can decompose $Q_{p,h}^*(E)= \overline{Q}_{p,h}^*(E)\cap \underline{Q}_{p,h}^*(E)$. 
\[
	\underline{Q}_{p,h}^*(E)=\{j\in[d]:\ \max_{i\in E}M_{ij}\geq p+h\}\ , \quad \quad \overline{Q}_{p,h}^*(E)=\{j\in[d]:\ \min_{i\in E}M_{ij}\leq p-h\}\enspace . 
\]
In Algorithm~\ref{Algo:Envelope}, the goal is to build a superset $\overline{Q}$ of $\overline{Q}_{p,h}^*(E_s)$ and a superset $\underline{Q}$ of $\underline{Q}_{p,h}^*(E_s)$. We focus on the latter. In Line~\ref{line:def:s}, we define $\underline{s}$ as largest index $t<s$ such that $v_t=1$. The reason why we take $\underline{s}$ and note simply $s-1$ is that, with high probability\footnote{see Appendix~\ref{Sec:AnalysisQuestions} for precise statements and proofs}, (i) the collection $|E_{\underline{s}}|$ is large enough  so that partial column averages on $E_{\underline{s}}$ are statistically significant and (ii) it holds that all $i'$ in $E_{\underline{s}}$ are below those in $E_{s}$ --note that this is not necessarily the case for $E_{s-1}$. Under (ii), for all $j\in [d]$, $\max_{i'\in E_{\underline{s}}}M_{ij} \leq \min_{i\in E_S}M_{ij}$ so that $\underline{Q}_{p,h}^*(E)\subset \{j\in[d]:\ \frac{1}{|E_{\underline{s}}|}\sum_{i\in E_{\underline{s}}}M_{ij}\leq p-h\}$. In Line~\ref{eq:definition:underlineQ}, we build $\underline{Q}$ which, with high probability,  is a superset of the latter by accounting for the noise of the observations. Finally, the routine \texttt{Envelope} returns the intersection of $\underline{Q}$ and $\overline{Q}$.

\begin{algorithm}[H]
\caption{\texttt{Envelope}}\label{Algo:Envelope}
\begin{algorithmic}[1]
	\Require{index $s\in[r]$, $\mathcal{E}=(E_1,E_2,\dots,E_r)$, $v\in\{0,1\}^r$, $\tilde Y\in\mathbb{R}^{n\times d}$, $p$, $h$, $\lambda_0$, $\delta$}
	\Ensure{$Q\subseteq [d]$}
	\If{$s=\min\{t=1,2,\dots, r:\ v_t=1\}$} 
	\State $\underline Q \gets [d]$ \Comment{use all questions as ``left envelope''}
	\Else 
	\State \label{Line:EnvelopeUnderS}$\underline s \gets \max\{t<s:\ v_t=1\}$  \label{line:def:s}
	\State\label{eq:definition:underlineQ} $\underline Q= \left\{j\in[d]:\ \frac{1}{|E_{\underline{s}}|}\sum_{i\in E_{\underline{s}}}\tilde Y_{ij}\geq \lambda_1(p+h)-\rho\sqrt{\lambda_1/|E_{\underline s}|}\right\}$ \Comment{use partial column sums on $E_{\underline s}$ to detect questions uniformly $\geq p+h$ on $E_{\underline s}$, detected questions $j$ as ``left envelope''}
	\EndIf
	\If{$s=\max\{t=1,2,\dots, r:\ v_t=1\}$}
	\State $\overline Q \gets [d]$ \Comment{use all questions as ``right envelope''}
	\Else 
	\State \label{Line:EnvelopeOverS}$\overline s \gets \max\{t>s:\ v_t=1\}$
	\State $\overline Q= \left\{j\in[d]:\ \frac{1}{|E_{\overline{s}}|}\sum_{i\in E_{\overline{s}}}\tilde Y_{ij}\leq \lambda_1(p-h)+\rho\sqrt{\lambda_1/|E_{\underline s}|}\right\}$ \Comment{use partial column sums on $E_{\overline s}$ to detect questions uniformly $\leq p-h$ on $E_{\overline s}$, detected questions $j$ as ``right envelope''}
	\EndIf
	\State $Q\gets \underline{Q}\cap\overline{Q}$
\end{algorithmic}
\end{algorithm}

\paragraph{Computational complexity of \ALGO{}.}  With a suitable implementation that keeps in memory values of partial row sums, the total computational complexity is (up to polylogarithmic terms) of the order of $m nd(d+n)$, where we recall that $m$ in the number threshold/tolerance under consideration. Thus, \ALGO{} is less than quadratic with respect to the size of the data set.

\section{Classification matrix estimation}\label{sec:estimation_R}

In this section, we explain how to deduce an estimator $\hat{R}_{p,h}$ of the 
classification matrix $R^*_{p,h}$ from some permutation estimator.

Given suitable estimators $\hat{\pi}$ and $\hat{\eta}$, and an independent sample $(N',I,J,Y')$ it is easy to build an estimator $\hat{R}_{p,h}$ of $R^*_{p,h}$ using a block averaging strategy. Define the window sizes
\begin{equation}\label{eq:k_h}
k_h:= \Big\lceil(\sigma\vee 1)\sqrt{\frac{512\log(nd)n}{d\lambda_0 h^2}} \Big\rceil \ , \quad \quad l_h:= \Big\lceil(\sigma\vee 1)\sqrt{\frac{512\log(nd)d}{n\lambda_0 h^2}} \Big\rceil \enspace . 
\end{equation}
If $k_h\geq n$ or $l_h\geq d$, we simply define $\hat{R}_{p,h}$ as the constant matrix equal to $1$.
Else, we define $\mathcal{B}$ as the regular grid of $[n]\times [d]$ with blocks of size $k_h\times l_h$. More specifically, for all $(r,s) \in [(\lfloor n/k_h\rfloor)\vee 1]\times [(\lfloor n/l_h\rfloor)\vee 1]$, we define the blocks $B_{r,s}:= [(r-1)k_h+1, rk_h]\times [(s-1)l_h+1, sl_h]$, except for the blocks such that $r=\lfloor n/k_h\rfloor$ or $s=\lfloor d/l_h\rfloor$ for which the blocks go up to $n$ or $d$. 

Then, the block constant matrix $\overline{Y}^{\hat{\pi},\hat{\eta}}_{\mathcal{B}}$ is defined by first ordering the data according to $\hat{\pi}$ and $\hat{\eta}$ and averaging over all blocks in $\mathcal{B}$. More precisely, for any block $B\in \mathcal{B}$ and any $(i,j)\in B$, we have
\begin{align*}
[\overline{Y}^{\hat{\pi},\hat{\eta}}_{\mathcal{B}}]_{ij} := \frac{1}{N_B^{\hat{\pi},\hat{\eta}}}\sum_{t=1}^{N'}\sum_{(k,l)\in B}\mathbbm{1}\{((I_t,J_t)=(\hat{\pi}^{-1}(k),\hat{\eta}^{-1}(l))\}Y'_t\enspace , 
\end{align*}
with the convention 0/0=0 and where $N_B^{\hat{\pi},\hat{\eta}}:= \sum_{t=1}^{N'}\sum_{(k,l)\in B}\mathbbm{1}\{((I_t,J_t)=(\hat{\pi}^{-1}(k),\hat{\eta}^{-1}(l))\}$ is the number of observations in $B$.  Finally, we define the estimator  $\hat{R}_{p,h}$ by 
\begin{equation}\label{eq:estimator:R_ph}
[\hat{R}_{p,h}]_{ij} = \mathbbm{1}\{[\overline{Y}^{\hat{\pi},\hat{\eta}}_{\mathcal{B}}]_{\hat{\pi}(i),\hat{\eta}(j)} \geq p \}
\enspace . 
\end{equation}

\begin{proposition}\label{prp:reconstruction}
For any $\pi$, $\eta$, any matrix  $M\in \mathbb{C}_{\mathrm{Biso}}(\pi,\eta)$, and any estimator $(\hat{\pi},\hat{\eta})$ independent of the sample $(N',I,J,Y')$ of intensity $\lambda_0$, we have
\begin{align}\label{eq:proposition:reconstruction}
\mathbb{E}\left[L_{0,1,\texttt{NA}}[ \hat{R}_{p,h}]\Big| \hat{\pi}, \hat{\eta}\right]&  \leq c\Big[ \sum_{s=1}^{\lceil \log_2(1/h)\rceil}\sum_{t\in \{-1,1\}}2^s\mathcal{L}_{p+ t h2^{s-2},h2^{s-2}}(\hat{\pi},\hat{\eta})  + \sum_{s=2}^{3}\sum_{t\in \{-1,1\}} \mathcal{L}_{p+3th2^{-s},h2^{-s}}(\hat{\pi},\hat{\eta})\notag\\ &  + 1+  (\sigma^2\vee 1) \frac{n\vee d}{\lambda_0 h^2}\log^2(nd)\Big]\wedge nd \enspace . \noindent 
\end{align}
\end{proposition}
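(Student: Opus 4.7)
I would condition on $(\hat\pi, \hat\eta)$ (valid by independence of the sample from the permutations) and decompose the expected loss into an observation-noise term, an oracle block-averaging term, and a permutation-mismatch term. Write $N_{kl} := M_{\hat\pi^{-1}(k), \hat\eta^{-1}(l)}$ and $N^*_{kl} := M_{\pi^{-1}(k), \eta^{-1}(l)}$, so that $N^*$ is bi-isotonic; let $\bar N_B$ and $\bar N^*_B$ denote their averages over $B \in \mathcal{B}$.

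First I would establish noise control. By the choice of $k_h, l_h$ in~\eqref{eq:k_h}, each block $B$ contains on average $\lambda_0 k_h l_h \gtrsim (\sigma^2 \vee 1)\log(nd)/h^2$ observations. A Chernoff bound on the Poissonized sample sizes followed by a subGaussian concentration for the block averages yields, simultaneously over all blocks and with probability at least $1 - (nd)^{-2}$,
\[
\big| [\overline{Y}^{\hat\pi, \hat\eta}_{\mathcal{B}}]_{(k,l)} - \bar N_B \big| \leq h/8 \quad \text{for the block } B \ni (k,l).
\]
The contribution of the complementary event is $O(1)$ in expectation and is absorbed into the ``$+1$'' of~\eqref{eq:proposition:reconstruction}. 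On this noise-good event, a Type A error at $(i,j)$ (meaning $M_{ij} \geq p+h$ and $[\hat R_{p,h}]_{ij} = 0$) requires $\bar N_{B} < p + h/8$ where $B$ contains $(\hat\pi(i), \hat\eta(j))$, and symmetrically for Type B errors.

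Second, for each bad block $B$ I would dichotomize based on the oracle block average $\bar N^*_B$. In the \emph{oracle case} $\bar N^*_B < p + h/4$, the block lies near the staircase boundary of the bi-isotonic level set $\{\bar N^*_\cdot \geq p + h/4\}$: since $N^*$ is bi-isotonic, its block average is bi-isotonic in block coordinates, and a classical level-set counting argument yields at most $\lceil n/k_h \rceil + \lceil d/l_h \rceil$ such boundary blocks, each contributing at most $k_h l_h$ entries. Iterating the argument at the dyadic scales $p \pm h\cdot 2^s/4$ for $s \geq 0$ absorbs entries with $M_{ij}$ at various sizes and adds a $\log(nd)$ factor, producing the oracle term $(\sigma^2 \vee 1)(n \vee d) \log^2(nd)/(\lambda_0 h^2)$ in~\eqref{eq:proposition:reconstruction}.

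Third, in the \emph{permutation case} $\bar N^*_B \geq p + h/4$ for a Type A bad block, we have $\bar N^*_B - \bar N_B \geq h/8$, hence $\sum_{(k,l) \in B}(N^*_{kl} - N_{kl}) \geq h|B|/8$. A dyadic layer-cake decomposition of this sum shows that at some scale $s \geq 1$ there are at least $\asymp |B|/2^s$ positions $(k,l) \in B$ with $N^*_{kl} - N_{kl} \geq h \cdot 2^{s-2}$; each such position contributes to $\mathcal{L}_{p + th \cdot 2^{s-2}, h \cdot 2^{s-2}}(\hat\pi,\hat\eta)$ for some $t \in \{-1,+1\}$ chosen according to the side of $p$ on which $N_{kl}$ sits. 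Since each bad block needs only $\asymp |B|/2^s$ contributing positions but causes up to $|B|$ classification errors, the total error contribution at scale $s$ is at most $2^s \sum_t \mathcal{L}_{p+th \cdot 2^{s-2}, h \cdot 2^{s-2}}(\hat\pi, \hat\eta)$, which matches the main dyadic sum of~\eqref{eq:proposition:reconstruction}. The finer-scale corrections $\sum_{s=2,3}\sum_t \mathcal{L}_{p + 3th/2^s, h/2^s}(\hat\pi, \hat\eta)$ handle the boundary regime where $N_{kl}$ or $N^*_{kl}$ lies very close to $p$ so that the main decomposition misses entries strictly within the $h$-margin. The symmetric analysis for Type B errors produces the matching terms.

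The main technical obstacle lies in the third step: translating a block-level averaged inequality $\sum (N^* - N) \gtrsim h|B|$ into entry-level losses whose thresholds are aligned with the fixed $p$ rather than the (arbitrary) midpoints of the pairs $(N_{kl}, N^*_{kl})$. The multi-scale bookkeeping --- in particular the weight $2^s$, which reflects the trade-off between the number of ``contributing'' swap positions per bad block and the number of bad blocks themselves --- constitutes the technical core of the argument.
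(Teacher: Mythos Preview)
Your overall architecture is right --- noise control, a boundary/oracle term, and a permutation term --- and it matches the paper's proof. But the heart of your third step has a real gap.

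You propose to layer-cake the \emph{difference} $N^*_{kl}-N_{kl}$ and then claim that any position with $N^*_{kl}-N_{kl}\ge h\,2^{s-2}$ contributes to $\mathcal{L}_{p+th\,2^{s-2},\,h\,2^{s-2}}$ for some $t\in\{-1,+1\}$. That implication is false: the loss $\mathcal{L}_{q,h'}$ requires $N^*_{kl}$ and $N_{kl}$ to straddle the threshold $q$, i.e.\ one of them to be $\ge q+h'$ and the other $\le q-h'$. A large gap $N^*_{kl}-N_{kl}$ is perfectly compatible with both values lying well above $p$ (take $N^*_{kl}=0.9$, $N_{kl}=0.6$, $p=0.1$), in which case the position contributes to no $\mathcal{L}$ of the stated form. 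Your ``finer-scale corrections'' do not repair this, because the obstruction is not that values sit close to $p$; it is that they may both sit far from $p$ on the same side.

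The fix --- and this is exactly what the paper does --- is to slice by the \emph{value of $N_{kl}$} rather than by the difference. On blocks $B$ fully inside the oracle level set $V_1=\{(k,l):N^*_{kl}\ge p+h/2\}$, every position already satisfies $N^*_{kl}\ge p+h/2$; what varies is $N_{kl}$. Define
\[
W_0=\{(k,l)\in V_1:\ N_{kl}\in(p-h,\,p+h/4]\},\qquad
W_s=\{(k,l)\in V_1:\ N_{kl}\in[p-h2^{s},\,p-h2^{s-1})\}\ (s\ge 1).
\]
Then $|W_s|\le\mathcal{L}_{p-h2^{s-2},\,h2^{s-2}}$ for $s\ge 1$ and $|W_0|\le\mathcal{L}_{p+3h/8,\,h/8}$ directly from the definitions, because $N^*_{kl}\ge p+h/2$ and $N_{kl}$ lies below the relevant threshold. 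Moreover, on a bad block $B\subset V_1$ the deficit $\bar N_B<p+h/8$ forces $\sum_{s\ge 0}2^{s}\,|W_s\cap B|\gtrsim |B|$ (this is the correct layer-cake, now in terms of $N_{kl}$), and summing over bad blocks yields $\sum_{B\text{ bad}}|B|\lesssim\sum_{s}2^{s}|W_s|$, which is exactly the dyadic sum of $\mathcal{L}$'s you want. The paper carries this out with observation counts $N^{\hat\pi,\hat\eta}_{W_s\cap B}$ rather than cardinalities, but after the Poisson concentration you already invoke, the two are interchangeable.

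A secondary imprecision: your ``oracle case'' $\bar N^*_B<p+h/4$ cannot occur for blocks fully inside $V_1$, so your dichotomy is slightly misplaced. The paper instead splits off (i) entries of $U_0$ lying outside $V_1$, bounded by $\mathcal{L}_{p+3h/4,h/4}$, and (ii) entries of $V_1$ in blocks not fully contained in $V_1$, bounded by the staircase-boundary count $|V'_1|\lesssim k_hl_h(n/k_h+d/l_h)$. What remains are blocks $B\in\mathcal{B}_0$ wholly inside $V_1$, for which only the permutation analysis above is needed.
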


In light of the above result, it suffices to split the data into two independent samples and use the first subsamples to estimate $\pi$ and $\eta$ in such a way that a polynomial number of  ranking losses of the form $\mathcal{L}_{p,h}$ are small and use the small sample to control estimate the matrix $M$ to have a small classification risk. More specifically, we divide the full sample $(N',I,J,Y')$ into two subsamples of intensity $\lambda_0/2$. We use the first subsample to build the permutations $(\hat{\pi}_S,\hat{\eta}_{S})$ defined in~\eqref{eq:definition_hat_pi} with the vector $(\underline{p},\underline{h})$ of (thresholds, tolerance) arising in~\eqref{eq:proposition:reconstruction}, and $\delta= (nd)^{-2}$. Then, we use the second subsample to build the block constant matrix $\overline{Y}^{\hat{\pi}_S,\hat{\eta}_{S}}_{\mathcal{B}}$. Thus, we obtain the final estimator
\begin{equation}\label{eq:estimator:R_ph:Fina}
	[\tilde{R}_{p,h}]_{ij} = \mathbbm{1}\{[\overline{Y}^{\hat{\pi}_S,\hat{\eta}_{S}}_{\mathcal{B}}]_{\hat{\pi}_S^{-1}(i),\hat{\eta}_{S}^{-1}(j)} \geq p \}
	\enspace . 
	\end{equation}
In Section~\ref{sec:main_results}, we have stated that this simple estimator is minimax optimal.

    \section{Extensions and discussion}\label{sec:Discussion}

\subsection{Connection to the noisy-sorting literature}

The \emph{noisy sorting} model~\cite{braverman2008noisy} is a specific case of SST tournament model. In its most specific form, it subsumes the existence of permutation $\pi$ of $[n]$ such that $M_{ij}= 1/2+h$ if $\pi(i)< \pi(j)$ and $M_{ij}= 1/2-h$ if $\pi(i)> \pi(j)$, the diagonal of $M$ being equal to $1/2$. We write $\mathbb{C}_{Noisy,h}$ for the collection of such matrices with a given $h>0$. In its most general form, it is assumed that the matrix $M$ satisfies $M_{ij}\geq 1/2+h$ if $\pi(i)< \pi(j)$ and $M_{ij}\leq 1/2-h$ if $\pi(i)> \pi(j)$ and write $\mathbb{C}'_{Noisy,h}$ for the collection of such matrices. 

With these models, authors generally express the quality of their ranking procedure using a distance on the space of the permutations such as  the Kendall tau distance defined by 
\begin{align*}
    d_{KT}(\pi',\pi)=\sum_{i=1}^n\sum_{j=1}^n\mathbbm{1}\{\pi(i)>\pi(j),\ \pi'(i)<\pi'(j)\} \ , 
\end{align*}
which, is some way, measures the level of disagreement when comparing any two players with respect to $\pi$ and $\pi'$. Interestingly, one readily checks that, for any matrix $M\in \mathbb{C}'_{Noisy,h}$,
\begin{equation}\label{eq:equivalence_loss}
\mathcal{L}_{1/2,h}(\pi',\pi')= 2d_{KT}(\pi',\pi)  \ ,
\end{equation}
where we recall that $\mathcal{L}_{p,h}$ is defined in~\eqref{eq:loss}. 
Generalizing the work of Braverman and Mossel~\cite{braverman2008noisy}, Mao et al.~\cite{mao2018minimax} have shown that the minimax optimal KT error over $\mathbb{C}'_{Noisy,h}$ is of the order of $n/(\lambda_0 h^2)$.  
Conversely, they  introduced a simple multi-sorting estimator $\hat{\pi}_{MWR}$ which, up to polylogarithmic terms, achieves the optimal rate, but only in the regime where $h$ is bounded away from zero and for restricted noisy matrices. More specifically, their estimator achieves
\[
\sup_{M\in \mathbb{C}_{Noisy,h}}\mathbb{E}\left[d_{KT}(\hat{\pi}_{MWR},\pi)\right]\leq c \frac{n}{\lambda_0}\wedge n^2\ , \text{ for }h\in [c',1/2)\enspace ,
\]
where $c$ and $c'$ are universal constants. Braverman and Mossel~\cite{braverman2008noisy} --see also Shah et al.~\cite{shah2016stochastically}-- have focused on the full observation model which roughly corresponds to $\lambda_0=1$ in our context. Relying on the disagreement-minimizing permutation $\hat{\pi}_{FAS}$, which can be efficiently computed, they manage to handle the generalized noisy sorting model, but still with a margin $h$ which is bounded away from zero. More specifically, they obtain 
\[
\sup_{M\in \mathbb{C}'_{Noisy,h}}\mathbb{E}\left[d_{KT}(\hat{\pi}_{FAS},\pi)\right]\leq c_h n\log(n) \text{ for }h\in (0,1/2)\enspace ,
\]
where the constant $c_h$ depends in a non-explicit way on the value of $h$. Besides, the computational complexity of their algorithm is exponential with respect to $1/h$.

In light of~\eqref{eq:equivalence_loss}, we can build\footnote{As explained earlier, a SST matrix $M$ is bi-isotonic up to the permutations $\pi$ and $\pi^{-}$ and  where $\pi^{-}$ is defined by $\pi^{-}(i)=n+1-\pi(i)$.} an estimator $\hat{\pi}_{S}$ from \ALGO{} 
that achieves
\begin{align*}
\sup_{M\in \mathbb{C}'_{Noisy,h}} \mathbb{E}[d_{KT}(\tilde{\pi}_S,\hat\pi)]\leq c' \log^{3/2}(n) \frac{n}{\lambda_0 h^2}\wedge n^2\enspace , 
\end{align*}
according to Theorem~\ref{Thm:ErrorBound}. Here, the constant $c'$ is universal. Hence,  our procedure, is up to our knowledge, the first one to achieve, in polynomial time, 
the optimal convergence rate with respect to the margin $h$ and the sampling rate $\lambda_0$, this on the whole class of generalized sorting models. We thereby answer an open question in~\cite{mao2018minimax}. Although the construction of $\hat{\pi}_S$ requires the knowledge of $h$, is also possible to be adaptive with respect to an unknown value $h$ by relying on multiple tolerance in \ALGO{}
--see Remark~\ref{remark:multiple:ph} for a more detailed discussion.

We would like to emphasize that the noisy sorting problem with margin $h$ has unique features which make it simpler than other bi-isotonic models. In particular, recovering the permutation $\pi$ is equivalent to recovering the level set $\{(i,j), M_{ij}>1/2\}$, this level being (up to the permutation $\pi$) completely triangular. It is much simpler to recover such a level set because of all the symmetries in the model: a procedure based on trisections like ours would not need to pay attention to potential asymmetries and imbalances around the level set $p$, which make in particular the choice of questions based on which one trisects much more complicated - see Subsection~\ref{ss:choiceQp} for a discussion on this.

In our work, we have introduced a polynomial time estimator, for recovering any level set $(p,h)$ in both the tournament and crowdsourcing model. In particular, our procedure does not require symmetries - as in the classical noisy sorting model - nor the existence of a margin in the matrix $M$ as for noisy sorting problems.

\subsection{Discussion of the choices of the set $Q'$ of questions in \texttt{ScanAndUpdate}}\label{ss:choiceQp}

We now further discuss the \texttt{ScanAndUpdate} routine, as the algorithmic idea behind as well as its analysis is one of the most innovative one in comparison to the available literature. 
Recall that we construct iteratively a partition $\tilde{\mathcal E}$ of the experts $[n]$, and we update iteratively a graph $G$, which both encode order information on the experts. In order to do that, we work at each step on all sets $E\in \tilde{\mathcal E}$, and we construct several sets of questions $Q'$ that we use in order to compare the experts on these questions - see mostly \texttt{ScanAndUpdate} in Algorithm~\ref{algo:scanupdate}, and also \texttt{Envelope} in Algorithm~\ref{Algo:Envelope} for the creation of the envelope set $Q$ based on which the $Q'$ are constructed. Note first that, for any $E\in \tilde{\mathcal E}$,  a natural oracle choice of the set $Q'$ would be to consider $Q^*(E)$, which contains precisely questions on which experts in $E$ display substantial performance gaps around the threshold $p$, see Equation~\eqref{eq:definition:Q*}. However, two distinct challenges occur:
\begin{itemize}
    \item(i) first, $Q^{*}(E)$ is not available and needs to be estimated, and 
    \item(ii) second, while $Q^*(E)$ is the set of questions that is most effective for distinguishing the best expert in $E$ from the worst expert in $E$, many questions inside it might be irrelevant for distinguishing between two given experts $i,i'$ of $E$, e.g.~if they lie more toward the median of $E$ - see below for a detailed discussion and an illustration of this important point. So that in some situations there exist subsets $Q_0$ of $Q^*(E)$ that are much more informative to distinguish between experts $i$ and $i'$.
\end{itemize}

\begin{figure}[H]
	\centering
	\subfloat[We estimate $Q^*(E)$ using some experts $i'< i\ \forall i\in E$ and some experts $i'>i\ \forall i\in E$ (depicted as red  and blue rows, resp., $E$ corresponds to the orrange rows and $Q^*(E)$ to the gray columns). In Section~\ref{Sec:Questions}, we explain why the corresponding dashed areas must contain $Q^*(E)$ which yields $Q$ as intersection to be the output of the \texttt{Envelope} procedure. \label{Subfig:Envelopes}]{
		\includegraphics{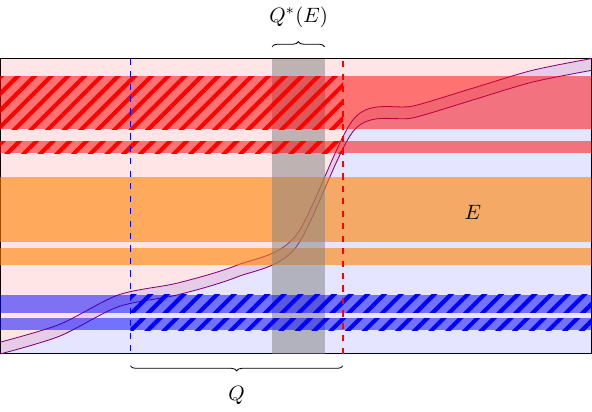}}
	
	\subfloat[We explain in Section~\ref{Sec:Trisection}, how, for each $j$, nested intervals left and right of $j$ are constructed as candidate sets $Q'$. These intervals are depicted in gray with different shades.\label{Subfig:Scan}]{
		\includegraphics{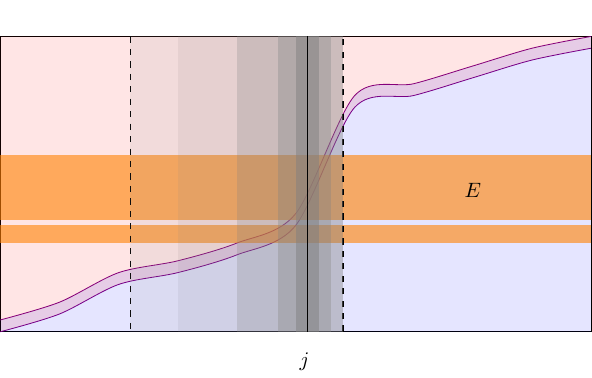}}
\caption{Illustration of some bi-isotonic matrix $M\in \mathbb{C}_{\mathrm{Biso}}(\mathrm{id}_{\lbrack n\rbrack},\mathrm{id}_{\lbrack d\rbrack})$ so that $\pi=\mathrm{id}_{\lbrack n\rbrack}$ and $\eta= \mathrm{id}_{\lbrack d\rbrack}$. The two purple curves divide the matrices into three areas: values, that are at least $p+h$ (light red background); values, that are at most $p-h$ (light blue background); and values that are between $p+h$ and $p-h$ (violet background). We construct sets $Q'$, first using \texttt{Envelope} (\ref{Subfig:Envelopes}) and then in \texttt{ScanAndUpdate} (\ref{Subfig:Scan}). \label{Fig:Overview}}
\end{figure}

To handle this, we have to estimate a collection of proxies $Q'$ of $Q_0$ in such a way that: 
\begin{itemize}
\vspace{-0.5cm}    \item[(a)] at least one proxy  $Q'$ contains a "significant" number of elements of $Q_0$, while its cardinality should not be too large (compared to $|Q_0|$),\vspace{-0.2cm}
    \item[(b)] the number of elements in this collection of proxies is not too large (i.e.~polynomial with $n,d$).
\end{itemize}\vspace{-0.5cm}
The purpose of Property (a) is that comparing experts based on such $Q'$ will be as effective as a comparison based on $Q_0$. Properties (b) also enforces that the computational complexity of the procedure remains controlled.

We would like to emphasize that in classical noisy sorting - tournament setting where $n=d$, where $p=1/2$ and where the matrix $M$ takes values in $\{1/2-h, 1/2+h\}$ around its diagonal -  we have $Q^*(E)=E$ for any set $E$ of experts, by symmetry. So that the problem of estimating $Q^*(E)$ does not exist in classical noisy sorting.  Besides, in this noisy sorting model, computing averages based on $Q^*(E)$, is mostly relevant for comparisons of experts in $E$, because of  symmetries mentioned earlier. In our setting however - i.e.~either in a tournament setting where we possibly consider a different threshold than $p=1/2$, or in a ranking setting - we do not have the same symmetries as in noisy sorting. So that $Q^*(E)$ cannot be straightforwardly constructed based on $E$ anymore and some subsets $Q_0$ of $Q^*(E)$ are perhaps more informative.

\textit{In our setting, our main algorithmic and conceptual innovation - compared to noisy sorting~\cite{mao2018minimax} and other ranking algorithms~\cite{mao2020towards,liu2019minimax,pilliat2022optimal,pilliat2024optimal} - is the construction of the sets $Q'$ and their analysis, which turned out to be surprisingly challenging.} We already explained the intuition behind \texttt{Envelope} in Section~\ref{Sec:Questions}.
This is however insufficient in very asymmetric situations. Indeed, even in a noiseless case where our estimation of the questions in $Q$ would be perfect, one might have that only a small subset of $Q$ obtained \texttt{Envelope} is actually composing $Q^*(E)$ - see Figure~\ref{Subfig:Envelopes} and even worse, only a small subset of $Q^*(E)$ is more informative. For this reason, we go one step further in the refinement of our question sets and construct several subsets $Q'$ of $Q$ which are candidate proxies for $Q^*(E)$ - see \texttt{ScanAndUpdate} in Algorithm~\ref{algo:scanupdate}. The idea is to consider each question $j\in Q$. Then, based on the averages of the observations on $E$, we either select the subset of question $j'$ whose averages are close to that of $j$ - Line~\ref{Line:TrisectInBetween}  - or questions $j'$ whose averages belong to some nested collection of bins below that of $j$ - Line~\ref{Line:TrisectSmaller} - or  questions $j'$ whose averages belong to some bins above that of $j$ - Line~\ref{Line:TrisectLarger}. The size of these bins has been chosen in such a way that the error due to the noise is small compared to the respective size of these bins.  See Figure~\ref{Subfig:Scan} for an example. In this way, we target the actual set of questions where there are most variations among experts within $E$, ensuring that we are not impacted by the case where this set  is actually smaller than $Q$ - see Figure~\ref{Subfig:Envelopes}. In the theoretical analysis of our algorithm, we prove that whenever the variation within $E$ is large enough, one of these sets $Q'$ will be a relevant set of questions, namely that it will be of small enough size, yet contain sufficiently many questions where sufficiently many of the experts disagree so that it can be used to effectively discriminate between the experts. This is done by carefully crafted volumetric arguments.

\subsection{Two (or several) values problem}\label{ss:2va}

To simplify the discussion, we assume in this subsection that $n\geq d$. Let us first focus on a specific, yet emblematic case, where the matrix $M$ only takes two values, say $p-h$ and $p+h$ - note though that here we do not necessarily consider the noisy sorting setting. In that setting, our estimators $\hat{\pi}_S$ and $\hat{\eta}_S$~\eqref{eq:definition_hat_pi} satisfy the optimal risk bound from Theorem~\ref{Thm:ErrorBound}
\[
\mathbb{E}\left[\mathcal{L}_{p,h}(\hat{\pi}_S,\hat{\eta}_S)\right]\lesssim \log^{5/2}(n)(\sigma^2\vee 1)\frac{n}{\lambda_0h^2 } \enspace . 
\]
This bound interprets as the fact that, on average, the estimated permutation $\hat{\pi}_S$ wrongly ranks rows $i$ and $j$ only if those differ by less $1/(\lambda_0 h^2)$ entries. Regarding the matrix $M$, in this setting,  it is equivalent, up to a factor $h^2$, to reconstruct the matrix $M$ in square Frobenius norm and to estimate the level set $R^{*}_{p,h}$. Indeed, upon defining $\hat{M}= p-h \mathbf{1}_{n\times d} + 2h\tilde{R}_{p,h}$ where $\tilde{R}_{p,h}$ is defined in Equation~\ref{eq:estimator:R_ph}, we deduce from Theorem~\ref{Thm:ErrorBound_classificatoin} that 
\begin{equation}\label{eq:risk:two_values}
\mathbb{E}\left[\|\hat{M}-M\|_F^2\right]\lesssim \log^{7/2}(n)(\sigma^2\vee 1)\frac{n}{\lambda_0}\enspace . 
\end{equation}
We know e.g. from~\cite{mao2020towards} that this Frobenius risk bound is, up to logarithmic term,  rate-optimal.

\medskip

In fact, the above results easily extend to the case where the matrix $M$ takes $K\geq 2$ values. 
Now assume that the square matrix $M$ only takes $K$ values $(p_1,\ldots, p_K)$ with $p_1< p_2 < \ldots p_K$. Using \ALGOS, we can construct $(\hat{\pi}_S,\hat{\eta}_{S})$ and using block-constant estimation of the matrix in the spirit of Equation~\ref{eq:estimator:R_ph:Fina} we can build an estimator $\hat{M}$ achieving
\begin{equation}\label{eq:risk:k_values}
\mathbb{E}\left[\|\hat{M}-M\|_F^2\right]\lesssim \log^{7/2}(n) K (\sigma^2\vee 1)\frac{n}{\lambda_0}\enspace ,
\end{equation}
which only differs from~\eqref{eq:risk:k_values} by the multiplicative factor $K$. In summary, our polynomial time procedure, achieves the optimal reconstruction risk when the number of values is considered as constant.

\subsection{Narrowing the computational barrier for bi-isotonic problems}\label{ss:compbar}

For a sake of clarity, we focus in this subsection on the square case where $n=d$, $\lambda_0=1$, and $\sigma\leq 1$. As explained in the introduction, it is known~\cite{shah2015estimation,shah2016stochastically,mao2018minimax} that the minimax risk for estimating $M$ in Frobenius norm is of the order $n$, i.e.
\[
cn \leq \inf_{\hat{M}} \sup_{M\in \mathbb{C}_{\mathrm{biso}}}\mathbb{E}_{M}[\|\hat{M}-M\|_F^2]\leq c'\log^2(n)n \enspace .
\]
However,  the best available polynomial time procedure only achieves, up to polylog terms, the risk $n^{7/6}$~\cite{liu2020better,pilliat2024optimal} despite a long line of research on this topic~\cite{shah2016stochastically,pananjady2022isotonic,shah2015estimation}, which leads to the conjecture of a computational barrier for this problem~\cite{mao2020towards}. As already pointed out in~\cite{mao2020towards}, the problem of estimating $M$ mostly boils down to that of estimating the permutation $\pi$ and $\eta$. More specifically, if we have at hand estimators $\hat{\pi}$ and $\hat{\eta}$, we can simply estimate $M$ using the least-square estimator on the space of bi-isotonic matrices based on a permuted matrix of observations $Y_{\hat{\pi}^{-1},\hat{\eta}^{-1}}$ - where we do sample splitting and consider a sample $Y$ independent of $\hat{\pi}$ and $\hat{\eta}$. Then, it is proved in~\cite{mao2020towards} that 
\[
\mathbb{E}[\|\hat{M}-M\|_F^2]\leq \mathbb{E}[\|M_{\hat{\pi}^{-1},\hat{\eta}^{-1}}-M_{\pi^{-1},\eta^{-1}}\|_F^2] + cn\log^2(n) \enspace .
\]
A similar result holds if we restrict our attention to SST matrices. As a consequence, the existence of a computational barrier for Frobenius estimation of bi-isotonic matrices up to permutations is equivalent to the existence of the barrier for the problem of permutation estimation - namely estimation of $\pi$ and $\eta$ - with respect to the Frobenius-type losses $\mathcal{L}_F$ defined in~\eqref{eq:definition:LF}. 

In this work and more specifically in Remark~\ref{remark:multiple:ph}, we have introduced a polynomial time-estimator $(\hat{\pi}_S,\hat{\eta}_{S})$ achieving, for all $(p,h)\in (0,1)$, the risk bound 
\begin{equation}\label{eq:upper_risk_multiple}
\mathbb{E}\left[\mathcal{L}_{p,h}(\hat{\pi}_S,\hat{\eta}_S)\right]\leq c'\log^{5/2}(n) \frac{n}{h^2} \ , 
\end{equation}
thereby showing the absence of computation-information gap for simultaneous inference with respect to all the weaker losses $\mathcal{L}_{p,h}$. This has an important consequence. As underlined in Subsection~\ref{ss:2va}, the statistical-computational gap disappears when the matrix $M$ only takes a (small) finite  number of values. This implies that the conjectured computation barrier for estimation of SST and, up to permutations, bi-isotonic matrices can only arise for multi-scale matrices taking a polynomial number of different values. A reason for the difference between the general setting, and with the finite number of values for setting, is as follows. In general, one should not consider the reconstruction of each level set separately - as we do in our procedure \ALGO{}. This is sufficient if $M$ takes a small number of different values. In general, one should aggregate information contained in several relevant level sets. In order to obtain the optimal rate of reconstruction. Doing this is very challenging, and we believe that it is not possible in polynomial time, leading back to the conjectured statistical-computational gap in the general case. 

\section*{Acknowledgements}

The work of N. Verzelen has been partially supported by grant ANR-21-CE23-0035 (ASCAI,ANR). The work of A. Carpentier is partially supported by the Deutsche Forschungsgemeinschaft (DFG)- Project-ID 318763901 - SFB1294 "Data Assimilation", Project A03. The work of A.~Carpentier and M.~Graf is also partially supported by the DFG on the Forschungsgruppe FOR5381 "Mathematical Statistics in the Information Age - Statistical Efficiency and Computational Tractability", Project TP 02 (Project-ID 460867398), by the Agence Nationale de la Recherche (ANR) and the DFG on the French-German PRCI ANR-DFG ASCAI CA1488/4-1 "Aktive und Batch-Segmentierung, Clustering und Seriation: Grundlagen der KI" (Project-ID 490860858).
    
    \printbibliography
    \newpage
    \appendix 
    \section{Reduction to rows and columns sorting}\label{Sec:RowsAndCols}

Recall that our goal is the estimation of unknown permutations $\pi\in\mathcal S_n$ and $\eta\in \mathcal{S}_n$. For estimators $\hat{\pi}$ and $\hat{\eta}$, we have defined a loss function $\mathcal{L}_{p,h}(\hat\pi,\hat\eta)$ which depends on $\pi$, $\eta$ and the unknown matrix $M\in \mathbb{C}_{\mathrm{Biso}}(\pi,\eta)$. Let us define analog losses that consider only consider row and column permutations respectively, namely
\begin{align*}
	\mathcal{R}_{p,h}(\hat \pi)\coloneqq& \left|\left\{(i,j)\in [n]\times [d]:\ M_{\pi^{-1}(i)j}\leq p-h,\ M_{\hat\pi^{-1}(i)j}\geq p+h\right\}\right|\\
	&+\left|\left\{(i,j)\in [n]\times [d]:\ M_{\pi^{-1}(i)j}\geq p+h,\ M_{\hat\pi^{-1}(i)j}\leq p-h\right\}\right|\enspace ,\\ \intertext{and}
     \mathcal{C}_{p,h}(\hat \eta )\coloneqq& \left|\left\{(i,j)\in [n]\times [d]:\ M_{i\eta^{-1}(j)}\leq p-h,\ M_{i\hat \eta^{-1}(j)}\geq p+h\right\}\right|\\
	&+\left|\left\{(i,j)\in [n]\times [d]:\ M_{i\eta^{-1}(j)}\geq p+h,\ M_{i\hat \eta^{-1}(j)}\leq p-h\right\}\right|\enspace.
\end{align*}

The following lemma, proved in section~\ref{sec:proof:lemma:reduction}, states that the global loss $\mathcal{L}$ can be bounded by shifts of the losses $\mathcal{R}, \mathcal{C}$ for respectively $\hat \pi$ and $\hat \eta$.
\begin{lemma}\label{Lemma:LossBound}
    It holds
    \begin{align*}
        \mathcal{L}_{p,2h}(\hat{\pi},\hat{\eta})\leq 2\left(\mathcal{R}_{p-h,h}(\hat{\pi})+\mathcal{C}_{p+h,h}(\hat{\eta})\right) \wedge 2\left(\mathcal{C}_{p-h,h}(\hat{\eta})+\mathcal{R}_{p+h,h}(\hat{\pi})\right)
    \end{align*}
    and 
    \begin{align*}
        \mathcal{L}_{p,h}(\hat{\pi},\hat{\eta})\geq \mathcal{R}_{p,h}(\hat{\pi})\vee \mathcal{C}_{p,h}(\hat{\eta})\enspace .
    \end{align*}
\end{lemma}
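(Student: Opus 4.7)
My plan is to establish the sharper inequality $\mathcal{L}_{p,2h}(\hat\pi,\hat\eta)\leq \mathcal{R}_{p-h,h}(\hat\pi)+\mathcal{C}_{p+h,h}(\hat\eta)$ (so the factor $2$ in the lemma is a slack) by inserting, for each pair $(i,j)$ counted by $\mathcal{L}_{p,2h}$, an intermediate matrix entry that differs from $M_{\pi^{-1}(i)\eta^{-1}(j)}$ and $M_{\hat{\pi}^{-1}(i)\hat{\eta}^{-1}(j)}$ by exactly one of the two permutations. For the first piece of $\mathcal{L}_{p,2h}$, where $M_{\pi^{-1}(i)\eta^{-1}(j)}\leq p-2h$ and $M_{\hat{\pi}^{-1}(i)\hat{\eta}^{-1}(j)}\geq p+2h$, I would use $b\coloneqq M_{\hat{\pi}^{-1}(i)\eta^{-1}(j)}$: if $b\geq p$, the pair $(i,\eta^{-1}(j))$ is in the first piece of $\mathcal{R}_{p-h,h}(\hat\pi)$; if $b\leq p$, the pair $(\hat{\pi}^{-1}(i),j)$ is in the first piece of $\mathcal{C}_{p+h,h}(\hat\eta)$. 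For the second piece of $\mathcal{L}_{p,2h}$ I would switch to the alternative intermediate $b'\coloneqq M_{\pi^{-1}(i)\hat{\eta}^{-1}(j)}$ and perform the symmetric case analysis: $b'\geq p$ sends $(i,\hat{\eta}^{-1}(j))$ into the second piece of $\mathcal{R}_{p-h,h}(\hat\pi)$, and $b'\leq p$ sends $(\pi^{-1}(i),j)$ into the second piece of $\mathcal{C}_{p+h,h}(\hat\eta)$. Since $\eta^{-1}$, $\hat{\eta}^{-1}$ and $\hat{\pi}^{-1}$ are bijections, these embeddings are injective, yielding the claimed inequality. The second upper bound $2(\mathcal{C}_{p-h,h}(\hat\eta)+\mathcal{R}_{p+h,h}(\hat\pi))$ follows by swapping the intermediates used on the two pieces.

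\textbf{Lower bound.} I would pass to the bi-isotonic re-ordering $N\coloneqq M_{\pi^{-1},\eta^{-1}}$ and set $\sigma\coloneqq \pi\circ\hat{\pi}^{-1}$, $\rho\coloneqq \eta\circ\hat{\eta}^{-1}$. A direct change of variables gives
\[
\mathcal{L}_{p,h}(\hat\pi,\hat\eta)=\sum_{i,j}\Bigl(X_{ij}Y_{\sigma(i),\rho(j)}+Y_{ij}X_{\sigma(i),\rho(j)}\Bigr),\qquad \mathcal{R}_{p,h}(\hat\pi)=\sum_{i,j}\Bigl(X_{ij}Y_{\sigma(i),j}+Y_{ij}X_{\sigma(i),j}\Bigr),
\]
with $X_{ij}\coloneqq\mathbbm{1}\{N_{ij}\leq p-h\}$ and $Y_{ij}\coloneqq\mathbbm{1}\{N_{ij}\geq p+h\}$. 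By bi-isotonicity of $N$, for each fixed $i$ the sequence $j\mapsto X_{ij}$ is non-decreasing while $j\mapsto Y_{\sigma(i),j}$ is non-increasing, and similarly $j\mapsto Y_{ij}$ is non-increasing while $j\mapsto X_{\sigma(i),j}$ is non-decreasing. Since each of these pairs of sequences is oppositely monotone, the rearrangement inequality yields $\sum_j X_{ij}Y_{\sigma(i),\rho(j)}\geq \sum_j X_{ij}Y_{\sigma(i),j}$ and $\sum_j Y_{ij}X_{\sigma(i),\rho(j)}\geq \sum_j Y_{ij}X_{\sigma(i),j}$ for every permutation $\rho$. Summing over $i$ delivers $\mathcal{L}_{p,h}(\hat\pi,\hat\eta)\geq\mathcal{R}_{p,h}(\hat\pi)$, and the twin bound $\mathcal{L}_{p,h}(\hat\pi,\hat\eta)\geq\mathcal{C}_{p,h}(\hat\eta)$ follows by exchanging the roles of rows and columns.

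\textbf{Main obstacle.} The delicate part is the upper bound: a naive argument using a single intermediate (say $b$ for both pieces of $\mathcal{L}_{p,2h}$) fails, because on the second piece the intermediate can lie in the ``forbidden band'' $(p-2h,p+2h)$, in which case neither $\mathcal{R}_{p-h,h}$ nor $\mathcal{C}_{p+h,h}$ captures the pair directly. The key observation is that swapping the intermediate ($b$ on the first piece, $b'$ on the second) cleanly partitions the contributions of $\mathcal{L}$ into disjoint injective embeddings into $\mathcal{R}$ and $\mathcal{C}$. The lower bound, by contrast, is essentially mechanical once one recognizes that bi-isotonicity forces $X$ and $Y$ to be oppositely ordered along each row, placing us exactly in the setting of the rearrangement inequality.
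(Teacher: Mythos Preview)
Your proof is correct, and both halves take a somewhat different route from the paper.

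\textbf{Upper bound.} The paper uses the same intermediate $M_{\hat{\pi}^{-1}(i)\eta^{-1}(j)}$ on \emph{both} pieces of $\mathcal{L}_{p,2h}$, obtaining $|A|\leq \mathcal{R}_{p-h,h}+\mathcal{C}_{p+h,h}$ and $|B|\leq \mathcal{R}_{p-h,h}+\mathcal{C}_{p+h,h}$ separately, and then adds; this is where the factor $2$ comes from. Your refinement --- using $b=M_{\hat{\pi}^{-1}(i)\eta^{-1}(j)}$ on piece $A$ but $b'=M_{\pi^{-1}(i)\hat{\eta}^{-1}(j)}$ on piece $B$ --- routes $A$ into the \emph{first} halves of $\mathcal{R}_{p-h,h}$ and $\mathcal{C}_{p+h,h}$ and $B$ into their \emph{second} halves, so nothing is double-counted and the factor $2$ disappears. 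This is a genuine sharpening (irrelevant for the paper's purposes, but clean).

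\textbf{Lower bound.} The paper argues directly: fixing $i$, it locates the boundary indices $j_l,j_r$ of the relevant level sets along the row and then does a pigeonhole count using the bijectivity of $\eta\circ\hat{\eta}^{-1}$. Your reformulation in terms of the $0$--$1$ matrices $X,Y$ and the rearrangement inequality is a more conceptual version of the same phenomenon: bi-isotonicity makes $X_{i,\cdot}$ and $Y_{\sigma(i),\cdot}$ oppositely monotone, so the identity permutation minimizes $\sum_j X_{ij}Y_{\sigma(i),\rho(j)}$. This is slicker and makes the role of monotonicity more transparent; the paper's argument is more hands-on but ultimately encodes the same inequality.
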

In what follows, we will therefore mostly focus on the estimation of $\pi$ through $\hat{\pi}$ and the loss $\mathcal{R}_{p,h}(\hat{\pi})$ - which by symmetry will allow us to bound $\mathcal{C}_{p,h}(\hat{\eta})$ and then $\mathcal{L}_{p,h}(\hat{\pi},\hat\eta)$.

\begin{theorem}\label{Thm:ErrorBoundRows}
Let $\pi\in\mathcal{S}_n$, $\eta \in \mathcal{S}_d$, $M\in \mathcal{C}_{\mathrm{Biso}}(\pi,\eta)$ and $(N',I,J,Y')$ an observation that satisfies Definition~\ref{Def:Model} under some distribution $\mathbb{P}$ with subGaussian constant $\sigma^2$. Given $p,h\in[0,1]$, there exists a universal constant $c>0$ such that for the permutation estimator $\hat \pi_S$ defined in~\eqref{eq:definition_hat_pi} holds with probability at least $1-\delta/2$
\begin{align*}
		\mathcal{R}_{p,h}(\hat \pi_S)\leq c(\sigma^2\vee 1) \log(nd/\delta)^{5/2}\frac{n\vee d}{\lambda_0 h^2}\wedge nd\enspace .\end{align*}
\end{theorem}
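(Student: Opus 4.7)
The plan is to make the heuristic in Section~\ref{sec:intution} rigorous by defining a single high-probability event on which every concentration used inside $\ALGO$ holds, and then arguing combinatorially on how the resulting ordered partition can fail the oracle ordering. First I would fix a good event $\Omega$ of probability at least $1-\delta/2$ on which:
\begin{enumerate}[label=(\roman*)]
\item for every $E$ and every $Q'\subseteq [d]$ that can appear as an argument of $\texttt{UpdateGraph}$, and for every $i,i'\in E$, the partial row-sum $\tfrac{1}{|Q'|}\sum_{j\in Q'}[\tilde Y^{(b)}_{ij}-\tilde Y^{(b)}_{i'j}]$ is within $2\rho\sqrt{\lambda_1/|Q'|}$ of $\tfrac{\lambda_1}{|Q'|}\sum_{j\in Q'}(M_{ij}-M_{i'j})$;
\item analogous column-sum concentration holds on every $E_{\underline s}$ and $E_{\overline s}$ that $\texttt{Envelope}$ may use;
\item the pairwise averages in lines~\ref{Line:TrisectInBetween},~\ref{Line:TrisectSmaller},~\ref{Line:TrisectLarger} of $\texttt{ScanAndUpdate}$ concentrate around their expectations.
\end{enumerate}
The calibration of $\rho$ and $\gamma$ in~\eqref{eq:def:rho}--\eqref{eq:def:gamma}, combined with a union bound over the (polynomially many in $n,d$) choices of $(E,Q',i,i',k)$ that can ever be formed along $k_*=3\lceil\log_2(n\vee d)\rceil$ recursion depths, gives $\mathbb{P}(\Omega)\geq 1-\delta/2$ by standard subGaussian tail bounds.

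Next, on $\Omega$, I would prove by induction on the recursion depth $k$ the following correctness invariant for the current partition $\mathcal{E}^{(k)}=(E_1,\dots,E_r)$ together with its flag vector $v$: whenever $v_t=1$ the block $E_t$ is an interval with respect to $\pi$, i.e.\ $\pi(E_t)$ is an integer interval, and moreover $E_t$ correctly lies between its neighbours. The key observations are that on $\Omega$, $\texttt{UpdateGraph}$ never adds a false edge (threshold $2\rho\sqrt{\lambda_1/|Q'|}$ is chosen exactly for this) and that $\texttt{Envelope}$ returns a set $Q$ containing the oracle envelope $Q^*_{p,h}(E_t)$. Consequently, the trisection $(O,P,I)$ of $\texttt{GraphTrisect}$ is valid in the sense that every $i\in O$ satisfies $\pi(i)<\pi(i')$ for a strict majority of $i'\in E_t$, and symmetrically for $I$. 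This preserves the invariant for the newly created $O,I$ blocks; the block $P$ is flagged $v=0$ and set aside forever.

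Third, I would bound $\mathcal{R}_{p,h}(\hat\pi_S)$ by breaking contributions along the hierarchical tree. For any final leaf $E$ produced by the algorithm, one of the following must hold: either $\lambda_1|E|\le 4\rho^2/h^2$ (a small leaf), or at every threshold the envelope $Q$ returned by $\texttt{Envelope}$ was rejected in line~\ref{line_update_u}, i.e.\ $\lambda_1|Q|\leq 4\rho^2/h^2$, in which case on $\Omega$ one has $|Q^*_{p,h}(E)|\leq |Q|\leq 4\rho^2/(\lambda_1 h^2)$, or $E$ originates from an indecisive $P$-block generated at some earlier depth. In the first two cases, any re-ordering of $E$ contributes at most $|E|\cdot|Q^*_{p,h}(E)|\leq 4|E|\rho^2/(\lambda_1 h^2)$ to $\mathcal{R}_{p,h}$. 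Summing over the disjoint leaves at a given depth yields $\leq 4n\rho^2/(\lambda_1 h^2)$, and summing over the $\lceil\log_2 n\rceil$ depths gives a first contribution of order $n\rho^2/(\lambda_1 h^2)\log(n)$. For the indecisive $P$-blocks generated at each trisection, the crucial point, and the main technical obstacle of the argument, is to charge their contribution back to the envelopes. This is where the careful construction of the candidate sets $Q'$ in $\texttt{ScanAndUpdate}$ (the nested bins indexed by $c$ around each pivot question $j$) must be exploited: a volumetric argument will show that if a pair $i,i'\in E$ satisfies $|\{j\in Q^*_{p,h}(E):M_{ij}\neq M_{i'j}\}|\gtrsim \rho^2/(\lambda_1 h^2)$, then at least one of the constructed $Q'$ contains enough of that disagreement set for $\texttt{UpdateGraph}$ to detect the comparison, so such pairs never end up jointly in a surviving $P$. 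Hence each $P$-block also has $|Q^*_{p,h}(P)|\lesssim \rho^2/(\lambda_1 h^2)$, and contributes the same order.

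Putting the pieces together, on $\Omega$ the total loss is bounded by $\lesssim \log(n)\cdot n\cdot \rho^2/(\lambda_1 h^2)$. Plugging in $\rho^2\asymp(\sigma^2\vee 1)\log(nd/\delta)$ and $\lambda_1\asymp \lambda_0/\log(nd)$ yields the rate $(\sigma^2\vee 1)\log^{5/2}(nd/\delta)(n\vee d)/(\lambda_0 h^2)$, with the trivial bound $nd$ coming from the fact that $\mathcal{R}_{p,h}\leq nd$ always. The truly delicate step will be the volumetric argument in the third paragraph: showing that the grid of sets $Q'$ is rich enough that any genuinely separable pair is actually separated, so that the leftover $P$-blocks contribute only through their envelopes and not through arbitrary mis-rankings inside them.
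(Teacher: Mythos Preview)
Your high-level outline matches the paper, but the proof has a genuine gap in the third paragraph that cannot be patched without substantial extra work.

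The invariant you try to maintain---``whenever $v_t=1$ the block $E_t$ is an interval with respect to $\pi$''---is \emph{false}. After a trisection $(O,P,I)$ of a block $E$, Lemma~\ref{Lemma:PropertiesTrisection} only guarantees $\pi(i_O)<\pi(\bar i)<\pi(i_I)$ for all $i_O\in O$, $i_I\in I$; it does \emph{not} say that $\pi(O)$ is an interval. An element $i\in P$ with $\pi(i)<\pi(\bar i)$ may well be sandwiched between two elements of $O$ in the $\pi$-ordering, so $O$ need not be an interval, and the invariant fails already at depth~$1$. Consequently your accounting ``any re-ordering of $E$ contributes at most $|E|\cdot|Q^*_{p,h}(E)|$'' is wrong: for $i\in E$ the comparison point $\hat i:=\pi^{-1}(\hat\pi(i))$ can lie outside $\tilde E$ (the $\pi$-completion of $E$), and then $R_{p,h}(i,\hat i)$ is not controlled by $|Q^*_{p,h}(E)|$ at all.

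This spillover is exactly what the paper spends most of its effort on. Rather than bounding $|Q^*_{p,h}(P)|$, the paper introduces the enlarged sets $\overline P(E_t^{k-1})\supseteq \tilde P(E_t^{k-1})\cap E_t^{k-1}$ and proves two chasing lemmas (Lemmas~\ref{Lemma:TauNotInO} and~\ref{Lemma:TauInP}): whenever $i$ lies in some $O$- or $P$-leaf but $\hat i$ escapes the corresponding $\tilde O$ (or whenever $i\in P$), one can find an earlier ancestor $E_u^l$ such that \emph{both} $i$ and $\hat i$ belong to $\overline P(E_u^l)$. The total loss is then bounded not by $\sum|P|\cdot|Q^*(P)|$ but by the refined quantities $\tilde{\mathcal R}_{p,h,\overline P(E_u^{l-1}),[d]}$, which Lemma~\ref{Lemma:Blocks} controls by $\rho(|E_u^{l-1}|\vee|Q_u^{l-1}|)/(\lambda_1 h^2)$ via the volumetric argument you allude to. Your sketch collapses these two distinct mechanisms---the ``charge spillover back to an ancestor's $\overline P$'' step and the ``bound $\tilde{\mathcal R}$ on $\overline P$'' step---into a single claim about $|Q^*_{p,h}(P)|$ that is neither stated correctly nor sufficient.
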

The main challenge in this paper will be to establish this theorem and this is the purpose of the two next subsections --see in particular Section~\ref{sec:proof:analysis:permutation}. Before this, we show how Theorem~\ref{Thm:ErrorBound} is easily obtained from this last result.

\begin{proof}[Proof of Theorem~\ref{Thm:ErrorBound}]
    Consider $\hat{\pi}_S$ --defined in~\eqref{eq:definition_hat_pi}-- for the threshold $p-h/2$, tolerance $h/2$ and tuning parameter $\delta/2$. By exchanging the roles of rows and columns, we also consider $\hat \eta_S$ for the threshold $p+h/2$, tolerance $h/2$ and tuning parameter $\delta/2$. By Theorem~\ref{Thm:ErrorBoundRows} we know for some universal constant $c>0$ that 
    \begin{align*}
        \mathcal{R}_{p-h/2,h/2}(\hat \pi_S)\leq c(\sigma^2\vee 1) \log(nd/\delta)^{5/2}\frac{n\vee d}{\lambda_0 h^2}\wedge nd\enspace ,\\
        \mathcal{C}_{p+h/2,h/2}(\hat \eta_S)\leq c(\sigma^2\vee 1) \log(nd/\delta)^{5/2}\frac{n\vee d}{\lambda_0 h^2}\wedge nd\enspace ,
    \end{align*}
    both with probability at least $1-\delta/2$. 
    Together with the first part of Lemma~\ref{Lemma:LossBound}, a union bound yields 
  \begin{align*}
      \mathcal{L}_{p,h}(\hat\pi_S,\hat\eta_S)&\leq 2 \mathcal{R}_{p-h/2,h/2}(\hat\pi_S)+2\mathcal{C}_{p+h/2,h/2}(\hat\eta_S)\\
      &\leq 4c(\sigma^2\vee 1) \log(nd/\delta)^{5/2}\frac{n\vee d}{\lambda_0 h^2}\wedge nd\enspace ,
  \end{align*} 
  with probability at least $1-\delta$.
\end{proof}

\section{General Analysis of \ALGO{} (Proof of Theorem~\ref{Thm:ErrorBoundRows})}\label{Sec:AnalysisAlgos}

In this section, we will give an overview of the analysis of each of the building blocks of the proof of Theorem~\ref{Thm:ErrorBoundRows}, where the row loss $\mathcal{R}_{p,h}$ is considered for the output $\hat \pi_S$ of \ALGO{} with only one threshold $p$ and tolerance parameter $h$ as input. Respective proofs are postponed to later sections of this appendix. 

In Section~\ref{sec:proof:trisect:valid}, we show, that with high probability, all the trisection of a set $E$ into three sets $(O,P,I)$ satisfy suitable properties with respect to the ordering. Then, Section~\ref{sec:proof:trisection}, which is the keystone part of our proof, controls the permutation error which is dues to indecisive sets $P$ --in fact we look with a superset $\overline{P}$ of $P$. Section~\ref{sec:proof:enveloppe} is dedicated to the analysis of the \texttt{Envelope} routine. In Section~\ref{sec:proof:tree}, we gather all these results to study the properties of the sorting tree. Finally, in Section~\ref{sec:proof:analysis:permutation}, we control the error $\mathcal{R}_{p,h}(\hat{\pi}_S)$ of the ordering and thereby we prove Theorem~\ref{Thm:ErrorBoundRows}

\subsection{Validity of the trisection procedure}\label{sec:proof:trisect:valid} We start by analyzing the procedures \texttt{GraphTrisect}, \texttt{UpdateGraph} and \texttt{ScanAndUpdate} (Algorithms~{\ref{Algo:Trisect}, \ref{algo:scanupdate} and \ref{Algo:Graph}}) for a generic input first. More precisely,
we assume here that these procedures are fed with data  $\tilde{Y}^{(a)},\tilde{Y}^{(b)}$, set $E$ of experts, set $Q$ of questions and comparison graph $G$ that satisfy the following property.
\renewcommand\theassumption{\arabic{assumption} for $\tilde Y^{(a)}$, $\tilde Y^{(b)}$, $E$, $Q$, and $G$}
\begin{assumption}
 \renewcommand\theassumption{\arabic{assumption}}
 \addtocounter{assumption}{-1}
 \refstepcounter{assumption}
	\phantom{a} \vspace{-0.5cm}
	\begin{itemize}
 		\item $\tilde Y^{(a)}$ and $\tilde Y^{(b)}$ are independent and distributed as described in Equation~\eqref{Eq:Model},
		\item $Q^*(E)= \{j\in[d]:\ \max_{i\in E}M_{ij}\geq p+h,\ \min_{i\in E}M_{ij}\leq p-h\}\subseteq Q$,
		\item $\lambda_1(|E|\wedge|Q|)\geq 4\rho^2/h^2$,
		\item For all $i,i'\in E$, $G_{ii'}=1\Rightarrow \pi(i)<\pi(i')$.
	\end{itemize}
  \label{Assum1}
 \end{assumption}

We will later show that, outside an event of small probability, this property is satisfied each time, we apply \texttt{GraphTrisect}, \texttt{UpdateGraph} or \texttt{ScanAndUpdate}. The last point of Property~\ref{Assum1} states that $G$ only contains correct information  about $\pi$. 

The next lemma states that, when \texttt{ScanAndUpdate} is fed with $\tilde Y^{(a)}$, $\tilde Y^{(b)}$, $E$, $Q$, and $G$ that satisfy Property~\ref{Assum1}, then the updated graph $\tilde{G}$ still satisfies the last point of Property~\ref{Assum1}.

\begin{lemma}\label{Lemma:Graph}
	Under Property~\ref{Assum1} for $\tilde Y^{(a)}$, $\tilde Y^{(b)}$, $E$, $Q$ and $G$, on an event of probability at least $1-\frac{|Q|\delta}{12\lceil \log_2(n\vee d)\rceil d}$, the updated graph $\tilde{G}$ obtained from~\texttt{ScanAndUpdate} satisfies
	\begin{align*}
		\tilde G_{ii'}=1\Rightarrow \pi(i)<\pi(i') 
	\end{align*}
	for all $i,i'\in E$ on the updated graph $\tilde G$. 
\end{lemma}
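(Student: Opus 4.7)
The plan is to reduce the statement to a decoupling step followed by a scalar concentration estimate. The only place where $\tilde G$ can acquire a new edge is Line~\ref{Line:GraphThreshold} in \texttt{UpdateGraph}, so it suffices to control the probability that any call to that line wrongly flags a pair $(i,i')\in E\times E$ with $\pi(i)>\pi(i')$ (the reverse case is automatic). A crucial structural observation is that each candidate subset $Q'$ constructed in Lines~\ref{Line:TrisectInBetween},~\ref{Line:TrisectSmaller} and~\ref{Line:TrisectLarger} of \texttt{ScanAndUpdate} is a measurable function of $\tilde Y^{(a)}$; by Property~\ref{Assum1} this sample is independent of $\tilde Y^{(b)}$. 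I will therefore condition on $\tilde Y^{(a)}$, treating every $Q'$ as fixed, and apply a tail bound to the centered partial row sums of $\tilde Y^{(b)}$ over $Q'$.

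For such a fixed $Q'\subseteq Q$ with $|Q'|>\gamma$ and a pair $(i,i')$ with $\pi(i)>\pi(i')$, bi-isotonicity of $M\in\mathbb{C}_{\mathrm{Biso}}(\pi,\eta)$ yields $M_{ij}\leq M_{i'j}$ for all $j$, so $\mathbb{E}[\tilde Y^{(b)}_{ij}-\tilde Y^{(b)}_{i'j}]=\lambda_1(M_{ij}-M_{i'j})\leq 0$. Writing each increment as
\[
Z_j=(B_{ij}-\lambda_1)M_{ij}+B_{ij}W_{ij}-(B_{i'j}-\lambda_1)M_{i'j}-B_{i'j}W_{i'j},
\]
with $B_{ij}=\mathbbm{1}\{N^{(b)}_{ij}>0\}\sim\mathrm{Bern}(\lambda_1)$ and $W_{ij}\in\mathrm{SG}(\sigma^2)$, the $Z_j$ are independent and centered, and split into a bounded term (handled by Hoeffding) and a Bernoulli--sub-Gaussian product (handled by the Bennett-type estimate $\log\mathbb{E}[\exp(tBW)]\leq\lambda_1(e^{t^2\sigma^2/2}-1)$). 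Combining the two, I expect a sub-Gaussian/sub-exponential tail of the form
\[
\mathbb{P}\!\left(\sum_{j\in Q'}(\tilde Y^{(b)}_{ij}-\tilde Y^{(b)}_{i'j})>2\rho\sqrt{\lambda_1|Q'|}\right)\leq \exp\!\left(-c\,\rho^2/(1\vee\sigma^2)\right),
\]
the size constraint $|Q'|>\gamma$ being exactly what keeps the deviation scale $2\rho\sqrt{\lambda_1/|Q'|}$ inside the sub-Gaussian regime (the Bernstein crossover being encoded in the definition~\eqref{eq:def:gamma} of $\gamma$). With $\rho$ chosen as in~\eqref{eq:def:rho}, the right-hand side is bounded by $\delta$ divided by a suitable polynomial in $n,d,\lceil\log_2(n\vee d)\rceil$.

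It then remains to union bound. For each anchor $j\in Q$, \texttt{ScanAndUpdate} creates at most $1+2\lceil \sqrt{\lambda_1|E|}/(2\rho)+1\rceil\lesssim \sqrt{\lambda_1(n\vee d)}/\rho$ candidate sets $Q'$, and on each such $Q'$ the routine \texttt{UpdateGraph} tests at most $|E|^2\leq(n\vee d)^2$ ordered pairs. Multiplying the per-event probability from the previous paragraph by this count produces exactly the bound $|Q|\delta/(12\lceil\log_2(n\vee d)\rceil d)$ claimed in the lemma. The main obstacle will be the concentration inequality itself: because $\tilde Y^{(b)}_{ij}$ is a product of a Bernoulli with a shifted sub-Gaussian, it is neither uniformly bounded nor cleanly sub-Gaussian, so the proof has to decompose into the Hoeffding-bounded component and the Bernstein-type component and then verify that the calibrated values of $\rho$ and $\gamma$, together with the cut-off $|Q'|>\gamma$ in Line~\ref{Line:GraphSize}, force the target deviation threshold to lie inside the sub-Gaussian regime, so that a clean polylogarithmic tail bound applies.
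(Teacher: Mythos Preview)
Your proposal is correct and follows essentially the same route as the paper: condition on $\tilde Y^{(a)}$ so that every candidate $Q'$ is deterministic, apply a Bernstein-type concentration bound (the paper's Lemma~\ref{Lemma:ConcentrationSE}, which is precisely your Bernoulli/sub-Gaussian decomposition) to partial row sums of $\tilde Y^{(b)}$, and union bound over all $Q'$ and experts. The one organisational difference is that the paper does not concentrate the \emph{differences} $\yb{i}{Q'}-\yb{i'}{Q'}$ directly. Instead it first defines a single event $\xi$ (Lemma~\ref{Lemma:GoodEvent}) on which every \emph{individual} row average satisfies $|\yb{i}{Q'}-\lambda_1\mb{i}{Q'}|\leq\rho\sqrt{\lambda_1/|Q'|}$; on $\xi$, bi-isotonicity plus a one-line triangle inequality gives the conclusion. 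This buys two things: the union bound runs over $|E|\cdot|\mathcal{Q}|$ events rather than $|E|^2\cdot|\mathcal{Q}|$ pairs (so your claim that the count ``produces exactly the bound'' is slightly optimistic---the extra factor of $|E|$ is absorbed only because the constant $8$ in $\rho^2$ leaves slack), and, more importantly, the same event $\xi$ is reused verbatim in the proofs of Lemmas~\ref{Lemma:PropertiesTrisection} and~\ref{Lemma:Blocks}, so the probability $|Q|\delta/(12\lceil\log_2(n\vee d)\rceil d)$ is paid once for all three results rather than separately.
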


With the updated graph $\tilde G$, \texttt{GraphTrisect} produces what we call \emph{trisection} $(O,P,I)$ of $E$ in the following manner:
\begin{align}
    O&\coloneqq \{i\in E:\ \sum_{i'\in E} \tilde G_{ii'}>|E|/2\}  \ ;\quad \quad  
    I\coloneqq \{i\in E:\ \sum_{i'\in E} \tilde G_{i'i}>|E|/2\} \ ; \quad \label{Eq:DefTrisection} 
    P\coloneqq E\setminus (O\cup I)\enspace . 
\end{align}
We further define $\overline i$ --the median of $E$ with respect to $\pi$-- by the property
\begin{align}
	|\{i'\in E:\ \pi(i')\leq \pi(\overline i)\}|=\lceil |E|/2\rceil\enspace , \label{Eq:DefMedian}
\end{align}
and prove the following intuitive statement about the trisection of $E$.

\begin{lemma}\label{Lemma:PropertiesTrisection}	Under Property~\ref{Assum1} for $\tilde Y^{(a)}$, $\tilde Y^{(b)}$, $E$, $Q$ and $G$, it holds with a probability of at least $1-\frac{|Q|\delta}{12\lceil \log_2(n\vee d)\rceil d}$ that
	\begin{align}
		\pi(i_O)<\pi(\overline{i})<\pi(i_I) \hspace{.5cm} \forall i_O\in O, \ i_I\in I\enspace . \label{Eq:RelationTrisectionMedian}
	\end{align}
\end{lemma}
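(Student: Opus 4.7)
The plan is to reduce the statement to a deterministic combinatorial count by first conditioning on the favorable event provided by Lemma~\ref{Lemma:Graph}. Under Property~\ref{Assum1}, that lemma ensures that, on an event of probability at least $1-\frac{|Q|\delta}{12\lceil \log_2(n\vee d)\rceil d}$, the graph $\tilde G$ produced by \texttt{ScanAndUpdate} is consistent with $\pi$, meaning $\tilde G_{ii'}=1\Rightarrow \pi(i)<\pi(i')$ for every $i,i'\in E$. Once we work on this event, the probabilistic content is entirely handled, and it remains to compare the rank of each element of $O$ (respectively $I$) to the rank of the median $\overline i$.

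For $i_O\in O$, I would unwind the definition of $O$ in~\eqref{Eq:DefTrisection}: strictly more than $|E|/2$ indices $i'\in E$ satisfy $\tilde G_{i_Oi'}=1$, hence by consistency of $\tilde G$,
\[
|\{i'\in E:\ \pi(i')>\pi(i_O)\}|>|E|/2 \ .
\]
I would then argue by contradiction: if $\pi(i_O)\geq \pi(\overline i)$, then $\{i'\in E:\ \pi(i')>\pi(i_O)\}\subseteq \{i'\in E:\ \pi(i')>\pi(\overline i)\}$, and the latter set has cardinality exactly $|E|-\lceil|E|/2\rceil=\lfloor|E|/2\rfloor\leq |E|/2$, contradicting the strict inequality above. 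Therefore $\pi(i_O)<\pi(\overline i)$. The argument for $i_I\in I$ is symmetric: on the same event, strictly more than $|E|/2$ indices $i'\in E$ satisfy $\pi(i')<\pi(i_I)$; if we had $\pi(i_I)\leq \pi(\overline i)$, this set would be contained in $\{i'\in E:\ \pi(i')<\pi(\overline i)\}$, which has at most $\lceil|E|/2\rceil-1\leq |E|/2$ elements, again a contradiction. Combining both halves yields the chain $\pi(i_O)<\pi(\overline i)<\pi(i_I)$.

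I do not expect any real obstacle here, because Lemma~\ref{Lemma:Graph} has absorbed all the probabilistic work: what remains is a short bookkeeping argument. The only point deserving a little care is the parity of $|E|$ when writing $\lfloor|E|/2\rfloor\leq |E|/2$ and $\lceil|E|/2\rceil-1\leq |E|/2$, both of which hold for any positive integer $|E|$, so the conclusion survives uniformly in the size of $E$. The probability bound in the statement is then simply inherited verbatim from Lemma~\ref{Lemma:Graph}, so no additional union bound or loss is incurred.
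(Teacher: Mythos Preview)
Your proposal is correct and mirrors the paper's own proof almost exactly: both invoke Lemma~\ref{Lemma:Graph} to obtain the consistency of $\tilde G$ with $\pi$ on the stated high-probability event, and then perform the same counting argument comparing $|\{i'\in E:\pi(i')>\pi(i_O)\}|$ (respectively the $I$-side count) to $\lfloor |E|/2\rfloor$ via the definition of the median $\overline i$. The paper phrases it as a contrapositive ($\pi(\overline i)\leq\pi(i)\Rightarrow i\notin O$) while you phrase it as a contradiction, but the content is identical.
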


 Inequality~\eqref{Eq:RelationTrisectionMedian} shows, that $(O,P,I)$ is a partition of $E$ with $\overline i\in P$, all the elements $i_O$ of $O$ satisfying $\pi(i_O)<\pi(\overline{i})$, and all elements $i_I$ of $I$ satisfying $\pi(i_I)>\pi(\overline{i})$. This implies that elements of $O$ are well ordered in comparison to $I$.

 \subsection{An error bound for the trisection\label{sec:proof:trisection} } When we trisect a set $E$ into $(O,P,I)$, the sets $O$ and $I$ are well separated in the sense of Lemma~\ref{Lemma:PropertiesTrisection}. $P$ can be seen as a set of experts that are indistinguishable from $\overline i$, and it is possible that $i\in P$ exists, such that either $\pi(i)<\pi(i_O)$ for some $i_O\in O$ or $\pi(i)>\pi(i_I)$ for some $i_I\in I$. This overlap is an important error  source, since we want to build a sorting tree based on trisections. We describe here briefly how we control this error and postpone the proofs of these results to Section~\ref{Sec:TrisectErrorBound}.

 \texttt{ScanAndUpdate} not only compares experts by considering partial row sums with respect to $Q$, but also by looking at suitable subsets $Q'\subseteq Q$. We construct them in Section~\ref{Sec:AnalysisTrisection}, more precisely in \eqref{Eq:DefLeftRight}. The collection of the sets $Q'$ on that we compare experts in $E$ during \texttt{ScanAndUpdate} is defined as $\mathcal{Q}$. Instead of the unobservable median $\overline i$ defined in \eqref{Eq:DefMedian}, let us consider some empirical counterparts: For $Q'\in \mathcal Q$  we define the empirical median $\med{Q'}$ via the property
	\begin{align*}
	|\{i\in E:\ \yb{i}{Q'}> \yb{\med{Q'}}{Q'}\}|< |E|/2 \text{ and } |\{i\in E:\ \yb{i}{Q'}< \yb{\med{Q'}}{Q'}\}|\leq |E|/2\enspace .
\end{align*}
Note that for the definition of the empirical median $\med{Q'}$, multiple choices might be possible. In that case, one can choose an arbitrary expert with the defining properties without effecting any proofs.

Our aim is to control the error inside an extension of $P$ (see Lemma~\ref{Cor:SubsetOfTilde}), namely
\begin{align}
		\overline{P}\coloneqq \left\{i\in E:\ |\yb{i}{Q'}-\yb{\med{Q'}}{Q'}|\leq8\rho\sqrt{\lambda_1/|Q'|}\ \forall Q'\in\mathcal{Q} \right\}\enspace . \label{Eq:DefConservativeTrisection}
\end{align}
For doing that, we will consider a slightly different error metric. For given $p,h>0$, $E'\subseteq [n]$ and $Q'\subseteq [d]$, it relates to the maximal number of entries in $E'\times Q'$ above $p+h$ that can be confused with entries below $p-h$ by exchanging experts in $E'$ and is given as
\begin{align}\label{eq:definition:loss:Rtilde}
    \tilde{\mathcal{R}}_{p,h,E',Q'}&\coloneqq \sum_{j\in Q'}|\{i\in E':\ M_{ij}\leq p-h\}|\wedge |\{i\in E':\ M_{ij}\geq p+h\}|\enspace .
\end{align}
We can prove the following bound.
\begin{lemma}\label{Lemma:Blocks}
	Under Property~\ref{Assum1} for $\tilde Y^{(a)}$, $\tilde Y^{(b)}$, $E$, $Q$ and $G$, with a probability of at least \linebreak$1-\frac{|Q|\delta}{12\lceil \log_2(n\vee d)\rceil d}$, it holds
	\begin{align*}
		\tilde{\mathcal{R}}_{p,h,\overline P,[d]}&= \sum_{j\in [d]}|\{i\in \overline{P}:\ M_{ij}\leq p-h\}|\wedge |\{i\in \overline{P}:\ M_{ij}\geq p+h\}|\\
		&=\sum_{j\in Q}|\{i\in \overline{P}:\ M_{ij}\leq p-h\}|\wedge |\{i\in \overline{P}:\ M_{ij}\geq p+h\}|\leq 3744\rho(|E|\vee |Q|)/\lambda_1h^2\enspace .
	\end{align*}
\end{lemma}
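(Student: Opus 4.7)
I would begin by defining a high-probability event $\mathcal{C}$ on which all partial row averages $\yb{i}{Q'}$ of $\tilde Y^{(b)}$, and all partial column averages $\frac{1}{|E|}\sum_{i\in E}\tilde Y^{(a)}_{ij}$ of $\tilde Y^{(a)}$, lie within $C_0\rho\sqrt{\lambda_1/|Q'|}$ of their expectations $\lambda_1\bar m_i(Q')$, respectively $\lambda_1\bar\mu_j(E)$. Bernstein-type concentration for sums of observations at rate $\lambda_1$, together with a union bound over the polynomially many sets $Q'$ built by \texttt{ScanAndUpdate} (indexed by a pivot $j\in Q$ and a margin $c$), gives $\mathbb{P}(\mathcal{C}^c)\leq |Q|\delta/(12\lceil\log_2(n\vee d)\rceil d)$. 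On $\mathcal{C}$, the defining inequality of $\overline P$ in~\eqref{Eq:DefConservativeTrisection} translates into a bound on the true row averages: for every $Q'\in\mathcal{Q}$ with $|Q'|\geq\gamma$ and every $i\in\overline P$,
\[
\bigl|\bar m_i(Q') - \bar m_{\med{Q'}}(Q')\bigr| \leq \frac{C_1\rho}{\sqrt{\lambda_1\,|Q'|}}.
\]

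\textbf{Reduction to $Q$.} Under Property~\ref{Assum1}, $Q^{*}(E)\subseteq Q$. For $j\notin Q^{*}(E)$, either no $i\in E$ has $M_{ij}\geq p+h$ or no $i\in E$ has $M_{ij}\leq p-h$; since $\overline P\subseteq E$, the $j$-th summand of $\tilde{\mathcal{R}}_{p,h,\overline P,[d]}$ is $0$, giving the first claimed equality in the lemma.

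\textbf{Volumetric core argument.} Assume without loss of generality that $\pi=\mathrm{id}$, so $M_{\cdot j}$ is non-increasing on $[n]$ for each $j$. With $u_j,v_j$ and $r(j):=u_j\wedge v_j$ as in the lemma statement, fix an integer $l\geq 1$ and let $S_{\mathrm{top}},S_{\mathrm{bot}}\subseteq\overline P$ denote the $l$ smallest and largest indices of $\overline P$. Bi-isotonicity yields, for every $j'\in[d]$,
\[
\Delta_{l}(j'):=\frac{1}{l}\sum_{i\in S_{\mathrm{top}}}M_{ij'}-\frac{1}{l}\sum_{i\in S_{\mathrm{bot}}}M_{ij'}\geq 0,
\]
with $\Delta_l(j')\geq 2h$ whenever $r(j')\geq l$. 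Averaging the previous row-average bound over $S_{\mathrm{top}}$ and $S_{\mathrm{bot}}$, I obtain for every admissible $Q'\in\mathcal{Q}$
\[
\frac{2h\,\lvert\{j'\in Q':r(j')\geq l\}\rvert}{|Q'|} \leq \bar m_{S_{\mathrm{top}}}(Q')-\bar m_{S_{\mathrm{bot}}}(Q') \leq \frac{2C_1\rho}{\sqrt{\lambda_1\,|Q'|}},
\]
and hence the fundamental estimate
\[
\lvert\{j'\in Q':r(j')\geq l\}\rvert\leq \frac{C_1\rho\sqrt{|Q'|}}{h\sqrt{\lambda_1}}.
\]

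\textbf{Main obstacle: covering $B_l$ and aggregation.} The principal difficulty is that $\mathcal{Q}$ does not contain arbitrary subsets of $Q$: it only contains the specific column-average bands produced by \texttt{ScanAndUpdate}, indexed by a pivot $j_0\in Q$ and a margin $c\in\{1,2,\ldots,\lceil\sqrt{\lambda_1|E|}/(2\rho)+1\rceil\}$. These bands tile the range of observed column averages into slabs of width $\sim\rho\sqrt{\lambda_1/|E|}$, so that $B_l:=\{j\in Q:r(j)\geq l\}$ can be covered by $\mathcal{Q}$-sets piece-by-piece. Applying the fundamental estimate to each covering piece $Q'$ yields either the self-bounding estimate $|B_l\cap Q'|\lesssim \rho^2/(h^2\lambda_1)$ (in the regime $|Q'|\asymp|B_l\cap Q'|$) or one controlled by the slab size; summing over the at most $O(\sqrt{\lambda_1|E|}/\rho)$ slabs introduces the $|E|$ dependence, while the trivial inclusion $B_l\subseteq Q$ introduces the alternative $|Q|$ bound, jointly producing $|B_l|\lesssim \rho(|E|\vee|Q|)/(h^2\lambda_1)$ uniformly in $l$. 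Finally, the identity $\tilde{\mathcal{R}}_{p,h,\overline P,Q}=\sum_{j\in Q}r(j)=\sum_{l\geq 1}|B_l|$ together with $l\leq|\overline P|\leq|E|$ and careful bookkeeping of all universal constants (the Bernstein constant $C_0$, the $8\rho\sqrt{\lambda_1/|Q'|}$ tolerance in~\eqref{Eq:DefConservativeTrisection}, and the covering overhead) then yields the claimed bound with constant $3744$. Designing this covering scheme so as to exploit the asymmetric band structure of $\mathcal{Q}$ --- rather than any single canonical comparison set, as in classical noisy sorting --- is the main innovation and the main technical obstacle of the proof.
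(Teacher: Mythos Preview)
Your setup, the concentration event, the reduction to $Q$, and your ``fundamental estimate'' $|\{j'\in Q':r(j')\geq l\}|\leq C_1\rho\sqrt{|Q'|}/(h\sqrt{\lambda_1})$ are all correct and cleanly stated. But the route you take from there --- a layer-cake decomposition $\tilde{\mathcal R}_{p,h,\overline P,Q}=\sum_{l\geq 1}|B_l|$ followed by a covering of $B_l$ by sets from $\mathcal Q$ --- is genuinely different from the paper's argument, and the step you flag as the ``main obstacle'' is not actually carried out.

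Concretely: you assert $|B_l|\lesssim \rho(|E|\vee|Q|)/(h^2\lambda_1)$ uniformly in $l$, then sum over $l\leq|\overline P|\leq|E|$. That produces $\sum_l|B_l|\lesssim |E|\cdot\rho(|E|\vee|Q|)/(h^2\lambda_1)$, which is off from the target by a full factor $|E|$; ``careful bookkeeping of constants'' cannot repair this. For the layer-cake route to work you would need the much sharper $|B_l|\lesssim \rho^2/(h^2\lambda_1)$, and that requires exhibiting, for each $l$, a set $Q'\in\mathcal Q$ with $B_l\subseteq Q'$ and $|Q'|\lesssim|B_l|$. You do not show this, and it is not obvious: the sets in $\mathcal Q$ are slabs defined by \emph{column} averages over all of $E$, whereas $B_l$ is an interval of columns determined by the behaviour of $M$ on the subset $\overline P$. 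Nothing guarantees that the column-average bands tightly localise $B_l$.

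The paper avoids the layer-cake entirely. It fixes a single pivot column $\overline j:=\max\{j\in Q:M_{\overline i j}\geq p\}$ (where $\overline i$ is the median of $E$) and builds around it three explicit regions in $Q$: the central band $A=A_{\overline j}$, and on the right a nested triple $\underline R\subsetneq R\subsetneq\overline R$ obtained as $Q_{\overline j}^r(c_r^*),Q_{\overline j}^r(c_r^*+1),Q_{\overline j}^r(c_r^*+2)$ with $c_r^*$ the first margin making $R$ large enough (symmetrically on the left). The decisive step is a \emph{vanishing} argument: for every $j$ outside $\overline L\cup A\cup\overline R$ one shows $M_{ij}<p+h$ for all $i\in\overline P$ (or the symmetric inequality), so such $j$ contribute zero. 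The remaining pieces are handled one at a time: $\underline R,\underline L$ trivially by size; $\Delta_R,\Delta_L,A$ by a product bound in which the \emph{height} $|\{i\in\overline P:M_{ij}\geq p+h\}|$ is controlled via column-average concentration over $E$ (giving a factor $\sqrt{|E|}$, not $|E|$) and the \emph{width} via your row-average argument on $\overline R$ or $A$. That column-average step --- bounding the number of rows rather than the number of columns --- is precisely the ingredient missing from your sketch, and is what brings the bound down to $(|E|\vee|Q|)$ rather than $|E|(|E|\vee|Q|)$.
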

This lemma is one of the key parts of our proof. While the above bound would have been quite simple to prove e.g.in a toy models where $Q=Q^*(E)$, $p=1/2$, $M$ corresponds to a noisy sorting model with values $1/2-h$ and $1/2+h$, this turns out to much more challenging in the general case. 
The proof of Lemma~\ref{Lemma:Blocks} can be found in Section~\ref{Sec:TrisectErrorBound} --see in particular Figure~\ref{Fig:Trisection}.

\subsection{Analysis of the envelope procedure}\label{sec:proof:enveloppe}

As in the analysis of the trisection procedure, we will first analyze the procedure \texttt{Envelope} (Algorithm~\ref{Algo:Envelope}) for some generic input. We consider $\tilde Y$, $\mathcal{E}=(E_1,E_2,\dots,E_r)$ and $v=(v_1,v_2,\dots,v_r)$ that satisfies the following:
 \renewcommand\theassumption{\arabic{assumption} for $\tilde Y$, $\mathcal E$ and $G$}
 \begin{assumption}
 \renewcommand\theassumption{\arabic{assumption}}
 \addtocounter{assumption}{-1}
 \refstepcounter{assumption}
	\phantom{} \vspace{-0.5cm}
	\begin{itemize}
		\item $\tilde Y$ is distributed as described in Equation~\eqref{Eq:Model},
		\item $\mathcal{E}$ is a partition of $[n]$,
		\item $v_s=1 \Rightarrow \lambda_1|E_s|>4\rho^2/h^2$,
		\item $v_s=v_{s'}=1$ for $s<s'$ implies $\pi(i)<\pi(i')$ for $i\in E_s$ and $i'\in E_{s'}$.
	\end{itemize}
    \label{Assum2}
\end{assumption}
The two last points mean that $v$ indicates whether groups of experts are large enough and that these large groups are ordered with respect to $\pi$. If we obtain the set $Q_s$ as an output of \texttt{Envelope}($s$, $\mathcal{E}$, $v$, $\tilde Y$, $p$, $h$, $\lambda_0$, $\delta$) for all $s$ with $v_s=1$, we show the following.
\begin{lemma}\label{Lemma:Envelopes}
	Under Property~\ref{Assum2} for $\tilde Y$, $\mathcal{E}$ and $v$, with probability at least $1-\frac{\delta}{12\lceil \log_2(n\vee d)\rceil (n\vee d)^{1/2}}$, it holds that
	\begin{align*}
\sum_{s=1,\ v_s=1}^r |Q_s|\leq 3d\ , \text{ and }Q^*(E_s)\subseteq Q_s \hspace{.5cm}\forall s=1,2,\dots,r \text{ with } v_s =1\enspace .
	\end{align*}
\end{lemma}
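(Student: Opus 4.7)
The plan is to split the argument into a uniform concentration step for partial column averages followed by two deterministic arguments on the resulting good event: the containment $Q^{\ast}(E_s)\subseteq Q_s$ and the counting estimate $\sum_s |Q_s|\leq 3d$.

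First, I would define the good event $\mathcal{G}$ on which, for every part $E_t$ of $\mathcal{E}$ and every question $j\in[d]$,
\begin{align*}
\Bigl|\tfrac{1}{|E_t|}\sum_{i\in E_t}\tilde Y_{ij}-\tfrac{\lambda_1}{|E_t|}\sum_{i\in E_t}M_{ij}\Bigr|\leq \rho\sqrt{\lambda_1/|E_t|}.
\end{align*}
Since $\tilde Y_{ij}=\mathbbm{1}\{N_{ij}>0\}(M_{ij}+W_{ij})$ with $\mathbb{E}\tilde Y_{ij}=\lambda_1 M_{ij}$, $M_{ij}\in[0,1]$, and $W_{ij}$ centred $\sigma^{2}$-subGaussian, each centred summand has variance proxy of order $\lambda_1(1\vee\sigma^{2})$. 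A Bernstein-type bound on the average, a union bound over the at most $n$ distinct parts $E_t$ appearing in $\mathcal{E}$ and over the $d$ columns---legitimate because $\tilde Y$ is independent of $\mathcal{E}$ by the outer-level subsampling of \ALGO{}---and the calibration of $\rho$ in~\eqref{eq:def:rho} then yield $\mathbb{P}(\mathcal{G}^c)\leq \delta/(12\lceil \log_2(n\vee d)\rceil (n\vee d)^{1/2})$.

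On $\mathcal{G}$, I would verify the containment $Q^{\ast}(E_s)\subseteq Q_s$ for every $s$ with $v_s=1$. The boundary cases $s=\min\{t:v_t=1\}$ and $s=\max\{t:v_t=1\}$ are immediate because $\underline Q$ or $\overline Q$ is $[d]$. Otherwise, the ordering bullet of Property~\ref{Assum2} combined with the bi-isotonicity of $M_{\pi^{-1},\eta^{-1}}$ yields $M_{i'j}\geq M_{ij}$ for every $j$ whenever $i'\in E_{\underline s}$ and $i\in E_s$. For $j\in Q^{\ast}(E_s)$, $\max_{i\in E_s}M_{ij}\geq p+h$ then implies $\tfrac{1}{|E_{\underline s}|}\sum_{i'\in E_{\underline s}}M_{i'j}\geq p+h$. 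Since $\lambda_1|E_{\underline s}|\geq 4\rho^2/h^2$ by the third bullet of Property~\ref{Assum2}, we obtain the slack $2\rho/\sqrt{\lambda_1|E_{\underline s}|}\leq h$, so on $\mathcal{G}$ the partial sum of $\tilde Y$ satisfies $\tfrac{1}{|E_{\underline s}|}\sum_{i'}\tilde Y_{i'j}\geq \lambda_1(p+h)-\rho\sqrt{\lambda_1/|E_{\underline s}|}$, i.e.~$j\in \underline Q$; symmetry produces $j\in\overline Q$, and hence $j\in Q_s$.

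For the global bound $\sum_{s:v_s=1}|Q_s|\leq 3d$, I would enumerate the indices with $v_s=1$ as $s_1<\cdots<s_K$ and swap summation to count, for each $j\in[d]$, the number of $k$ such that $j\in Q_{s_k}$. By the ordering bullet of Property~\ref{Assum2} and the bi-isotonicity of $M$, the sequence $a_k(j):=\tfrac{1}{|E_{s_k}|}\sum_{i\in E_{s_k}}M_{ij}$ is non-increasing in $k$. On $\mathcal{G}$, the same slack estimate as above shows that for $1<k<K$ the condition $j\in Q_{s_k}$ forces $a_{k-1}(j)\geq p$ and $a_{k+1}(j)\leq p$, so $k$ must lie among the at most two consecutive indices straddling the crossing of the level $p$ by $(a_k(j))$; the two boundary indices $k\in\{1,K\}$, where $\underline Q$ or $\overline Q$ is replaced by $[d]$, contribute at most one extra admissible value each, for a total of $|\{k:j\in Q_{s_k}\}|\leq 3$. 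Summing over $j$ gives the claim. The main obstacle is the concentration step---specifically, handling the mixed Bernoulli/subGaussian noise uniformly across all parts $E_t$ while leveraging the independence between $\tilde Y$ and $\mathcal{E}$; the rest is a one-dimensional crossing argument on the monotone column averages $(a_k(j))$.
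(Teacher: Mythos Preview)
The proposal is correct and mirrors the paper's proof: the concentration event, the containment $Q^*(E_s)\subseteq Q_s$ via the ordering in Property~\ref{Assum2} and bi-isotonicity, and the counting bound by showing each $j\in[d]$ lies in at most three of the $Q_s$ all follow the same logic (the paper phrases the last step as a contradiction from four indices, you as a crossing argument on the monotone sequence $a_k(j)$, but these are equivalent). Your boundary bookkeeping is worded loosely---``one extra each'' would naively sum to four---but in fact when $k=1$ (resp.\ $k=K$) and $K\ge 2$ the surviving $\overline Q$ (resp.\ $\underline Q$) constraint still forces $k\in\{k^*,k^*+1\}$, so the count is at most two and your claimed $\leq 3$ certainly holds.
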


This enforces that the sets $Q_s$ obtained through \texttt{Envelope} all contain the relevant sets of questions $Q^*(E_s)$ and that the total size of the estimated questions,  $\sum_{s=1,\ v_s=1}^r |Q_s|$,  is controlled. 

\subsection{Analysis of the sorting tree}\label{sec:proof:tree}

Equipped with the previous results, we establish some properties of the sorting tree that we have built. In order so state these properties, we need to introduce further notation to formalize of the intermediary objects built by the algorithm.  We briefly described \ALGO{} (Algorithm \ref{Algo:Tree}) in Section \ref{Sec:Tree}. Let us now define
\begin{align*}
	\mathcal{E}^0\coloneqq ([n]),\ v^0\coloneqq (\mathbbm{1}\{\lambda_1 n> 4\rho^2/h^2\}),\ r_0=1 \text{ and } G^0=\mathbf{0}_{n\times n}\enspace ,
\end{align*} and assume we are given independent observations $Y^{(1)},Y^{(2)},\dots,Y^{(3\lceil \log_2(n)\rceil)}$ from the Poisson observation model \eqref{Eq:Model}, as described in Section \ref{Sec:Algos}.

\ALGO{} operates in main rounds, indexed by $k$, starting with $k=1$, which correspond to successive refinement of the partition - and sometimes also in sub-rounds indexed by $t$, which correspond to successive trisections. In Algorithm \ref{Algo:Tree}, notations corresponding to the rounds are not indexed by the rounds in order to alleviate notation. We introduce below the sequential notations corresponding to all rounds, in order to be able to analyze the algorithm. For $k\geq 1$, given an ordered partition $\mathcal{E}^{k-1}$, $v^{k-1}\in\{0,1\}^{r_{k-1}}$, $r_{k-1}$ and a graph $G^{k-1}\in\{0,1\}^{n\times n}$, let us define:
\begin{itemize}
	\item $Q_{t}^{k-1}$ for $v_t^{k-1}=1$; output of the routine \texttt{Envelope} (Algorithm \ref{Algo:Envelope}) with parameters $t$, $\mathcal{E}^{k-1}$,  $v^{k-1}$, $ Y^{(3k-2)}$, computed in line \ref{line:envelope} of Algorithm \ref{Algo:Tree}
	\item $w_t^{k-1}\in\{0,1\}^{r_{k-1}}$ defined by 
	\begin{align*}
		w_t^{k-1}=
		\begin{cases}
			\mathbbm{1}\{\lambda_1|Q_t^{k-1}|>4\rho^2/h^2\},& \text{if }v_t^{k-1}=1\enspace , \\
			0 &\text{if }v_t^{k-1}=0\enspace ,
		\end{cases}
	\end{align*}
 corresponds to the indexes of the sets of the partition $\mathcal{E}^{k-1}$ to be further refined. 
	\item $r_k\coloneqq |\{t=1,2,\dots,r_{k-1}:\ w_t^{k-1}=0\}|+3\cdot |\{t=1,2,\dots,r_{k-1}:\ w_t^{k-1}=1\}|$.  Here, $r_k$ will stand for the size of the partition $\mathcal{E}^{k}$. 
	\item $G_0^k\coloneqq G^{k-1}$.
\end{itemize}
Consider for $t=1,2,\dots,r_{k-1}$ the index $s=|\{t'<t:w_{t'}^{k-1}=0\}| +3\cdot|\{t'<t:w_{t'}^{k-1}=1\}|+1$. Then, if $w_t^{k-1}=0$, we do not refine $E_t^{k-1}$ anymore and we define $E_s^k\coloneqq E_t^{k-1}$, $v_s^k\coloneqq 0$, and $G_t^k\coloneqq  G_{t-1}^k$. If $w_{t}^{k-1}=1$, we denote 
\begin{itemize}
	\item $(O(E_t^{k-1}),P(E_t^{k-1}),I(E_t^{k-1}))$; output of \texttt{GraphTrisect} (Algorithm  \ref{Algo:Trisect}) with parameters \linebreak $Y^{(3k-1)}$, $Y^{(3k)}$, $E_t^{k-1}$, $Q_t^{k-1}$ and $G_{t-1}^k$, computed in line \ref{Line:TreeTrisect} of Algorithm \ref{Algo:Tree}
	\item $E_s^k\coloneqq O(E_t^{k-1})$, $v_s^k\coloneqq \mathbbm{1}\{\lambda_1|E_{s}^k|>4\rho^2/h^2\}$,
	\item $E_{s+1}^k\coloneqq P(E_t^{k-1})$, $v_{s+1}^k\coloneqq 0$,
	\item $E_{s+2}^k\coloneqq I(E_t^{k-1})$, $v_{s+2}^k\coloneqq \mathbbm{1}\{\lambda_1|E_{s+2}^k|>4\rho^2/h^2\}$,
	\item $G_t^k$; update of $G_{t-1}^k$; every time we run \texttt{ScanAndUpdate} (Algorithm~\ref{algo:scanupdate}) in line \ref{line:scanandupdate}, the routine \texttt{UpdateGraph} (Algorithm~\ref{Algo:Graph}) with parameters $G_{t-1}^k$, $E_t^{k-1}$, $Q_t^{k-1}$, $Y^{(3k)}$ leads to a new directed graph.
 \item $\mathcal{Q}_t^{k-1}$; defined as in \eqref{Eq:DefLeftRight}, collection of subsets $Q'\subseteq Q_t^{k-1} $ on which we compare experts in $E_t^{k-1}$ during \texttt{ScanAndUpdate}.
\end{itemize}
At the end of the $r_{k-1}$ iterations, one obtains $\mathcal{E}^k=(E_1^k,E_2^k,\dots,E_{r_k}^k)$, $v^k=(v_1^k,v_2^k,\dots, v_{r_k}^k)$ and $G^k=G_{r_{k-1}}^k$.

The algorithm goes through every $E_t^{k-1}\in\mathcal{E}^k$. If $v_t^{k-1}=0$, we say that $E_t^{k-1}$ is a \emph{passive set}. We account this by setting $w_t^{k-1}=0$. Passive sets are carried over to $\mathcal{E}^k$, and we also set the corresponding entry of $v^k$ to $0$.
If $v_t^{k-1}=1$, we first compute $Q_t^{k-1}$. If this set is small, we will change the status of $E_t^{k-1}$ to be a passive set. So again, $w_t^k=0$, $E_t^{k-1}$ is carried over to $\mathcal{E}^k$ and the corresponding entry of $v^k$ is set to $0$.
If $v_t^{k-1}=1$ and $Q_t^{k-1}$ is large enough, we call $E_t^{k-1}$ an \emph{active set}, so $w_t^k=1$. We will apply \texttt{GraphTrisect} and obtain a trisection of $E_t^{k-1}$ of the form $(O,P,I)$. This trisection now replaces $E_t^{k-1}$ in the new $\mathcal{E}^k$. $O$ and $I$ will be considered as passive, if they are too small, which is taken into account in the definition of the corresponding entries of $v^k$. $P$ is always considered a passive set, so the corresponding entry of $v^k$ is set to $0$. 
We refer to Figure \ref{Fig:Tree} for a graphical illustration.

\begin{figure}[H]
	\begin{center}
	\includegraphics{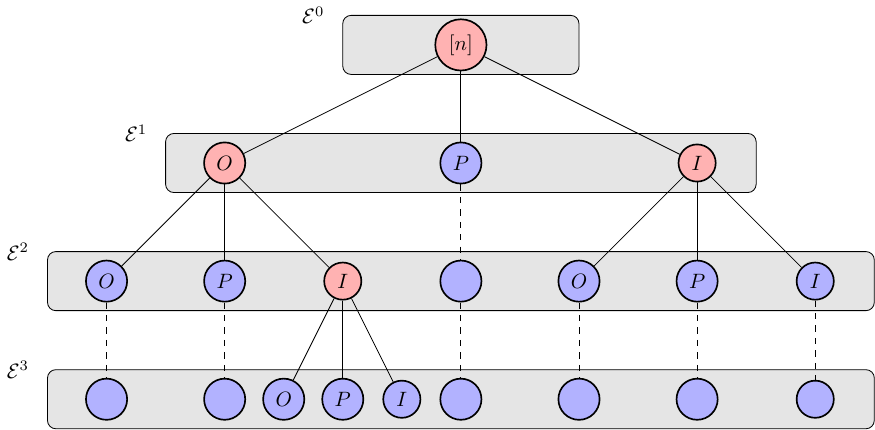}
	\end{center}
	\caption{Illustration of a sorting tree with $K=3$ iterations. The red nodes correspond to active sets, on which we apply Algorithm \texttt{GraphTrisect}, the trisection. The blue nodes correspond to passive sets, which are carried over unchanged into all further levels of the tree.}\label{Fig:Tree}
\end{figure}

In Section~\ref{Sec:AnalysisTree}, we will prove the following statement which is mainly a consequence of the results above for generic input arguments. In the next theorem, the first point states that each of the envelopes contains the relevant set of questions and that their sizes is bounded. The second and the third points state that each of the trisections satisfies the properties outlined in the previous subsections. Finally, the two last points emphasize that so-called active sets are perfectly ordered with each other.  

\begin{theorem}\label{Thm:Properties}
	There exist events $\xi^1\supset \xi^2\supset\dots\supset \xi^K$ with
	\begin{align*}
		\mathbb{P}(\xi^k)\geq 1- \frac{k\delta}{2\lceil \log_2(n\vee d)\rceil}\hspace{.5cm}\text{for }\hspace{.5cm}k=1,2,\dots,K\enspace ,
	\end{align*}
	such that on $\xi^k$, the following holds true after the $k^{\mathrm{th}}$ iteration step of Algorithm \ref{Algo:Tree}:
		\begin{enumerate}[label = \arabic*., ref=\ref{Thm:Properties}.\arabic*]
		\item\label{ThmItem:Envelope} For $t\in\{1,2,\dots,r_{k-1}\}$ with $v_t^{k-1}=1$ we have
		\begin{align*}
			Q_t^{k-1}\supseteq Q^*(E_t^{k-1})
		\end{align*}
		and moreover it holds that
		\begin{align*}
			\sum_{t\geq 1,\ w_t^{k-1}=1}|Q_t^{k-1}|\leq 3d\enspace .
		\end{align*}
		\item\label{ThmItem:Active} Let $t\in\{1,2,\dots,r_{k-1}\}$, $w_t^{k-1}=1$ and $E_s^k=O(E_t^{k-1})$, $E_{s+1}^k=P(E_t^{k-1})$, $E_{s+2}^k=I(E_t^{k-1})$.
		\begin{enumerate}[label=2.\alph*.,ref=\ref{ThmItem:Active}.\alph*]
			\item\label{ThmItem:TrisectIsPartition} $(E_s^k,E_{s+1}^k,E_{s+2}^k)$ is a partition of $E_t^{k-1}$,
			\item\label{ThmItem:MedianSeparates} If $\overline{i}_t^{k-1}$ is the median of $E_t^{k-1}$ in the sense of Equation~\eqref{Eq:DefMedian}, it holds $\overline{i}_t^{k-1}\in E_{s+1}^k$ and 
			\begin{align*}
				\pi(i_O)<\pi(\overline{i}_t^{k-1})<\pi(i_I)\hspace{.5cm}\forall i_O\in E_s^k,\ i_I\in E_{s+2}^k\enspace .
			\end{align*}
			\item\label{ThmItem:SetsShrink} $|E_s^k|\vee |E_{s+2}^k|\leq 2^{-k}n$.
		\end{enumerate}
		\item\label{ThmItem:Graph} $G_{i,i'}^k=1\Rightarrow\pi(i)<\pi(i')\ \forall i,i'\in[n]$.
        \item\label{ThmItem:Order} For $i\in E_s^k$ and $i'\in  E_{s'}^k$ with $v_s^k=v_{s'}^k=1$ holds
		\begin{align*}
			s<s'\ \Rightarrow \ \pi(i)<\pi(i')\enspace  ,
		\end{align*}
		\item\label{ThmItem:Partition} $\mathcal{E}^k$ is a partition of $[n]$.
	\end{enumerate}
Moreover, with a probability of at least $1- \delta/2$, Algorithm \ref{Algo:Tree} terminates after at most $\lceil \log_2(n)\rceil$ iterations.
\end{theorem}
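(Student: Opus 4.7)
The plan is to prove Theorem~\ref{Thm:Properties} by induction on the iteration index $k$, constructing the nested events $\xi^k$ as the intersections of the good events supplied by Lemmas~\ref{Lemma:Envelopes}, \ref{Lemma:Graph}, and \ref{Lemma:PropertiesTrisection} at each iteration. At step $k$, the strategy is to first establish Point~\ref{ThmItem:Envelope} by applying Lemma~\ref{Lemma:Envelopes} to $(Y^{(3k-2)}, \mathcal{E}^{k-1}, v^{k-1})$, and then, for every active set $E_t^{k-1}$, apply Lemmas~\ref{Lemma:Graph} and \ref{Lemma:PropertiesTrisection} in succession to obtain Points~\ref{ThmItem:Active} and \ref{ThmItem:Graph}. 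Points~\ref{ThmItem:Order} and \ref{ThmItem:Partition} are then direct bookkeeping. The base case $k=1$ is immediate: the state $\mathcal{E}^0=([n])$, $v^0$, $G^0=\mathbf{0}$ trivially satisfies Properties~\ref{Assum1}--\ref{Assum2} so the same machinery applies.

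For the inductive step, the core observation is that, on the event $\xi^{k-1}$, the state $(\mathcal{E}^{k-1}, v^{k-1})$ satisfies Property~\ref{Assum2}: the partition and ordering conditions come from Points~\ref{ThmItem:Partition} and~\ref{ThmItem:Order} of the hypothesis, while the size condition on active sets is preserved by the construction of $v^{k-1}$ at the previous step. Lemma~\ref{Lemma:Envelopes} then yields, on a further good event of probability at least $1-\delta/[12\lceil \log_2(n\vee d)\rceil (n\vee d)^{1/2}]$, both the inclusion $Q^*(E_t^{k-1}) \subseteq Q_t^{k-1}$ for every $t$ with $v_t^{k-1}=1$ and the deterministic control $\sum_{t:\, w_t^{k-1}=1} |Q_t^{k-1}| \leq 3d$. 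On this event, for each active set $E_t^{k-1}$, the quintuple $(Y^{(3k-1)}, Y^{(3k)}, E_t^{k-1}, Q_t^{k-1}, G_{t-1}^k)$ satisfies Property~\ref{Assum1}: the envelope inclusion gives the $Q^*$ bullet, the activation criterion $w_t^{k-1}=1$ combined with $v_t^{k-1}=1$ gives the size condition on $|E|\wedge|Q|$, and correctness of $G_{t-1}^k$ is obtained by an inner induction on $t$ starting from $G_0^k=G^{k-1}$ (correct by the outer induction hypothesis Point~\ref{ThmItem:Graph}) and using Lemma~\ref{Lemma:Graph} to propagate correctness across each update $G_{t-1}^k \to G_t^k$. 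Lemmas~\ref{Lemma:Graph} and \ref{Lemma:PropertiesTrisection} then deliver Point~\ref{ThmItem:Graph} and Point~\ref{ThmItem:MedianSeparates} respectively, each on a good event of probability at least $1-|Q_t^{k-1}|\delta/[12\lceil \log_2(n\vee d)\rceil d]$.

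The remaining sub-points follow by elementary arguments: Point~\ref{ThmItem:TrisectIsPartition} is immediate from the definition of \texttt{GraphTrisect}; Point~\ref{ThmItem:SetsShrink} follows from the majority rule defining $O$ and $I$, which forces $|O|, |I| \leq |E_t^{k-1}|/2$, combined with the inductive bound $|E_t^{k-1}|\leq 2^{-(k-1)}n$; Point~\ref{ThmItem:Order} for pairs of active sets in different trisection branches is inherited from Point~\ref{ThmItem:Order} at step $k-1$, while within a single trisection it is a direct consequence of Point~\ref{ThmItem:MedianSeparates}; Point~\ref{ThmItem:Partition} is preserved since each trisection partitions its parent set. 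A union bound then shows that the total failure probability contributed by step $k$ is at most $\delta/[12\lceil \log_2(n\vee d)\rceil (n\vee d)^{1/2}] + \sum_{t:\, w_t^{k-1}=1}|Q_t^{k-1}|\delta/[12\lceil \log_2(n\vee d)\rceil d] \leq \delta/[2\lceil \log_2(n\vee d)\rceil]$, where the bound $\sum_t |Q_t^{k-1}|\leq 3d$ is invoked to convert the random sum into a deterministic bound. Summing over iterations yields $\mathbb{P}(\xi^k) \geq 1 - k\delta/[2\lceil \log_2(n\vee d)\rceil]$.

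Termination is obtained from Point~\ref{ThmItem:SetsShrink}: after $\lceil \log_2 n\rceil$ iterations, every set produced by a trisection has size at most $1$, and the activation test $\lambda_1|E_s^k|>4\rho^2/h^2$ necessarily fails since $\lambda_1\leq \lambda_0\leq \log(nd)$ while $4\rho^2/h^2\geq \log(nd)$ by the definition of $\rho$ in~\eqref{eq:def:rho} and $h\leq 1$. Hence all entries of $v^{\lceil \log_2 n\rceil}$ are zero and the while-loop in Algorithm~\ref{Algo:Tree} exits. The main obstacle is the careful handling of the conditional union bound within each iteration: the failure probabilities of Lemmas~\ref{Lemma:Graph} and \ref{Lemma:PropertiesTrisection} scale with the random quantity $|Q_t^{k-1}|$, so the deterministic envelope bound $\sum_t|Q_t^{k-1}|\leq 3d$ must be established first and exploited precisely, and the correctness of the graph across the sequence of intermediate updates $G_0^k,G_1^k,\ldots$ inside a single iteration must be propagated by an inner induction rather than deduced from a single application of Lemma~\ref{Lemma:Graph}.
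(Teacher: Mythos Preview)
Your proposal is correct and follows essentially the same approach as the paper's proof: induction on $k$, first verifying Property~\ref{Assum2} to apply Lemma~\ref{Lemma:Envelopes} for the envelope control, then an inner loop over active sets verifying Property~\ref{Assum1} and applying Lemmas~\ref{Lemma:Graph} and~\ref{Lemma:PropertiesTrisection}, with the union bound closed via $\sum_t |Q_t^{k-1}|\leq 3d$. One minor imprecision: the halving $|O|,|I|\leq |E_t^{k-1}|/2$ in Point~\ref{ThmItem:SetsShrink} does not follow from the majority rule alone (a graph with all off-diagonal entries equal to $1$ would put every expert in $O$), but rather from Point~\ref{ThmItem:MedianSeparates}, which you have already established; this is exactly how the paper argues it.
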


While the above theorem states that the partition is in some way coherent with the true permutation $\pi$ this will not be sufficient to control the error of the estimated permutation $\hat{\pi}$. For that we need to define auxiliary objects. First, for any $E_t^{k-1}$, we define $\tilde{O}(E_t^{k-1})$, $\tilde{I}(E_t^{k-1})$, and $\tilde{P}(E_t^{k-1})$ as some completions of the sets $O(E_t^{k-1})$, $I(E_t^{k-1})$,  and $P(E_t^{k-1})$.
\begin{align}
	\tilde{O}(E_t^{k-1})\coloneqq \left\{i\in[n]:\ \min_{i'\in O(E_t^{k-1})}\pi(i')\leq \pi(i)\leq \max_{i'\in O(E_t^{k-1})}\pi(i')\right\}\enspace ,\notag\\
	\tilde{P}(E_t^{k-1})\coloneqq \left\{i\in[n]:\ \min_{i'\in P(E_t^{k-1})}\pi(i')\leq \pi(i)\leq \max_{i'\in P(E_t^{k-1})}\pi(i')\right\}\enspace ,\label{Eq:TildeSetsTrisection}\\
	\tilde{I}(E_t^{k-1})\coloneqq \left\{i\in[n]:\ \min_{i'\in I(E_t^{k-1})}\pi(i')\leq \pi(i)\leq \max_{i'\in I(E_t^{k-1})}\pi(i')\right\}\enspace .
\end{align}
In other words $\tilde{O}(E_t^{k-1})\supset O(E_t^{k-1})$ contains all the experts in $[n]$ whose position $\pi(i)$ is sandwiched by two experts in $O(E_t^{k-1})$. 

Recall also that we defined median $\overline{i}_t^{k-1}\in E_t^{k-1}$ in Equation~\eqref{Eq:DefMedian} by the property	     
\[\left|\left\{i'\in E_t^{k-1}:\ \pi(i')\leq \pi(\overline{ i}_t^{k-1})\right\}\right|=\lceil |E_t^{k-1}|/2\rceil \ . \]
Given a subset $Q'\subset [d]$ of question, we now define its empirical counterpart $\mmed{t}{k-1}{Q'}$ by the property 
\begin{align*}
	\left|\left\{i\in E_t^{k-1}:\ \frac{1}{|Q'|}\sum_{j\in Q'}Y^{(3k)}_{ij}> \frac{1}{|Q'|}\sum_{j\in Q'}Y^{(3k)}_{\mmed{t}{k-1}{Q'}j}\right\}\right|< |E_t^{k-1}|/2\enspace ,\\ 
	\left|\left\{i\in E_t^{k-1}:\ \frac{1}{|Q'|}\sum_{j\in Q'}Y^{(3k)}_{ij}< \frac{1}{|Q'|}\sum_{j\in Q'}Y^{(3k)}_{\mmed{t}{k-1}{Q'}j}\right\}\right|\leq |E_t^{k-1}|/2\enspace .
\end{align*}
In other words, $\mmed{t}{k-1}{Q'}$ corresponds to the median of $E_t^{k-1}$ according to the empirical average of the data on the set $Q'$ of questions. 

Equipped with this new notation and recalling the definition of $\mathcal{Q}_t^{k-1}$ above, we define
\begin{equation}
	\overline{P}(E_t^{k-1})=\left\{i\in E_t^{k-1}:\ \frac{1}{|Q'|}\left|\sum_{j\in Q'} Y_{ij}^{(3k)}-Y_{\mmed{t}{k-1}{Q'}j}^{(3k)}\right|\leq 8\rho\sqrt{\lambda_1/|Q'|}\hspace{.5cm }\forall Q'\in \mathcal{Q}_t^{k-1}\right\}\enspace .
\end{equation}
In fact, under an event of large probability, the set $\overline{P}(E_t^{k-1})$ contains $P(E_t^{k-1})$; this set will be important to control the permutation error in the next subsection. We would like to point out a small abuse of notation in the definition of $\overline{P}(E_t^{k-1})$, which not only depends on the set $E_t^{k-1}$, but also on the indices $t$ and $k-1$, since we consider sets $Q'\in\mathcal{Q}_t^{k-1}$.

With this notation, the following result, whose proof is an immediate consequence of that of Theorem~\ref{Thm:Properties}. 
\begin{corollary}\label{Cor:PropertiesTree}
	On the event $\xi^K$, it holds that
	\begin{enumerate}[label = \arabic*., ref=\ref{Cor:PropertiesTree}.\arabic*]
		\item\label{CorItem:InO} $\Big(\exists Q'\in\mathcal{Q}_t^{k-1}$ s.t. $\frac{1}{|Q'|}\sum_{j\in Q'}Y^{(3k)}_{ij}-Y^{(3k)}_{\mmed{t}{k-1}{Q'}j}>2\rho\sqrt{\lambda_1/|Q'|}\Big)$ implies $i\in O(E_t^{k-1})$,
		\item\label{CorItem:InI} $\Big(\exists Q'\in\mathcal{Q}_t^{k-1}$ s.t. $\frac{1}{|Q'|}\sum_{j\in Q'}Y^{(3k)}_{\mmed{t}{k-1}{Q'}j}-Y^{(3k)}_{ij}>2\rho\sqrt{\lambda_1/|Q'|}\Big)$ implies $i\in I(E_t^{k-1})$,
		\item\label{CorItem:InP} $i\in P(E_t^{k-1})$ implies $\frac{1}{|Q'|}\left|\sum_{j\in Q'}Y^{(3k)}_{ij}-Y^{(3k)}_{\mmed{t}{k-1}{Q'}j}\right|\leq2\rho\sqrt{\lambda_1/|Q'|}\ \forall Q'\in\mathcal{Q}_t^{k-1}$,
		\item\label{CorItem:TrisectionSubsets} $O(E_t^{k-1})\subseteq \tilde{O}(E_t^{k-1})$, $P(E_t^{k-1})\subseteq \tilde{P}(E_t^{k-1})$ and $I(E_t^{k-1})\subseteq \tilde{I}(E_t^{k-1})$,
		\item\label{CorItem:PTildeSubset} $E_t^{k-1}\cap \tilde{P}(E_t^{k-1})\subseteq \overline{P}(E_t^{k-1})$ and
		\item\label{CorItem:ErrorBound} $\tilde{\mathcal{R}}_{p,h,\overline{ P}(E_t^{k-1}),[d]}\leq 3744\rho(|E_t^{k-1}|\vee |Q_t^{k-1}|)/\lambda_1h^2.$
	\end{enumerate}
\end{corollary}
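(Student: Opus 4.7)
The plan is to unpack definitions and combine them with the conclusions of Theorem~\ref{Thm:Properties} on $\xi^K$. No new high-probability event is introduced beyond what has already been absorbed into $\xi^K$, so the corollary will follow without further probabilistic work.

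For items~\ref{CorItem:InO}--\ref{CorItem:InP}, I would argue as follows. Let $Q'\in\mathcal{Q}_t^{k-1}$ be given. By construction of $\mathcal{Q}_t^{k-1}$, the routine \texttt{UpdateGraph} is invoked on $(E_t^{k-1},Q',Y^{(3k)})$, so in particular $|Q'|>\gamma$ and the check of line~\ref{Line:GraphThreshold} is executed for every ordered pair in $E_t^{k-1}$. By the defining property of the empirical median $\mmed{t}{k-1}{Q'}$, strictly more than $|E_t^{k-1}|/2$ experts $i'\in E_t^{k-1}$ satisfy $|Q'|^{-1}\sum_{j\in Q'} Y^{(3k)}_{i'j}\leq |Q'|^{-1}\sum_{j\in Q'} Y^{(3k)}_{\mmed{t}{k-1}{Q'}j}$. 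If the hypothesis of item~\ref{CorItem:InO} is satisfied for $i$, then the threshold of line~\ref{Line:GraphThreshold} is triggered for each such $i'$, so the graph records $(G_t^k)_{ii'}=1$ for strictly more than $|E_t^{k-1}|/2$ experts $i'\in E_t^{k-1}$; hence \texttt{GraphTrisect} places $i$ into $O(E_t^{k-1})$. Item~\ref{CorItem:InI} is entirely symmetric, and item~\ref{CorItem:InP} follows by contraposition together with the fact that $O(E_t^{k-1})$, $P(E_t^{k-1})$, $I(E_t^{k-1})$ are pairwise disjoint.

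Item~\ref{CorItem:TrisectionSubsets} is immediate from the definition~\eqref{Eq:TildeSetsTrisection} of $\tilde O(E_t^{k-1})$, $\tilde P(E_t^{k-1})$, and $\tilde I(E_t^{k-1})$ as $\pi$-convex hulls of the respective sets. The one point that demands a short genuine argument is item~\ref{CorItem:PTildeSubset}, which I expect to be the main obstacle. For $i\in E_t^{k-1}\cap \tilde P(E_t^{k-1})$, I would pick $i_-, i_+\in P(E_t^{k-1})$ with $\pi(i_-)\leq \pi(i)\leq \pi(i_+)$; this is possible because $\overline i_t^{k-1}\in P(E_t^{k-1})$ by item~\ref{ThmItem:MedianSeparates}, so $P(E_t^{k-1})$ is non-empty. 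Since $M_{\pi^{-1},\eta^{-1}}$ is bi-isotonic, row-wise monotonicity yields $M_{i_-j}\geq M_{ij}\geq M_{i_+j}$ for every column $j$, and therefore
\[
\lambda_1|Q'|^{-1}\sum_{j\in Q'} M_{i_+j}\ \leq\ \lambda_1|Q'|^{-1}\sum_{j\in Q'} M_{ij}\ \leq\ \lambda_1|Q'|^{-1}\sum_{j\in Q'} M_{i_-j}
\]
for every $Q'\in\mathcal{Q}_t^{k-1}$. On $\xi^K$ one may additionally assume the standard deviation bound $\bigl||Q'|^{-1}\sum_{j\in Q'}Y^{(3k)}_{i'j}-\lambda_1|Q'|^{-1}\sum_{j\in Q'}M_{i'j}\bigr|\leq \rho\sqrt{\lambda_1/|Q'|}$ for all $i'\in[n]$ and all $Q'\in\mathcal{Q}_t^{k-1}$ with $|Q'|>\gamma$, since this is exactly the subGaussian/Poisson concentration estimate underlying Lemma~\ref{Lemma:Graph} and Lemma~\ref{Lemma:Blocks}. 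Applying this deviation bound to $i_\pm$ and to $i$, and using item~\ref{CorItem:InP} for $i_\pm\in P(E_t^{k-1})$, a three-step chaining yields
\[
\bigl||Q'|^{-1}\textstyle\sum_{j\in Q'}(Y^{(3k)}_{ij}-Y^{(3k)}_{\mmed{t}{k-1}{Q'}j})\bigr|\leq 4\rho\sqrt{\lambda_1/|Q'|}\leq 8\rho\sqrt{\lambda_1/|Q'|},
\]
and this holds for every $Q'\in\mathcal{Q}_t^{k-1}$, so $i\in \overline P(E_t^{k-1})$.

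Finally, item~\ref{CorItem:ErrorBound} will be obtained by a direct application of Lemma~\ref{Lemma:Blocks} with inputs $E=E_t^{k-1}$, $Q=Q_t^{k-1}$, $\tilde Y^{(a)}=Y^{(3k-1)}$, $\tilde Y^{(b)}=Y^{(3k)}$, and graph $G=G_{t-1}^k$. Property~\ref{Assum1} is verified on $\xi^K$ by assembling: the independence and marginal distribution of the Poisson samples; the containment $Q^*(E_t^{k-1})\subseteq Q_t^{k-1}$ from item~\ref{ThmItem:Envelope}; the size lower bounds $\lambda_1|E_t^{k-1}|>4\rho^2/h^2$ (from $v_t^{k-1}=1$) and $\lambda_1|Q_t^{k-1}|>4\rho^2/h^2$ (from $w_t^{k-1}=1$); and the correctness of $G_{t-1}^k$ given by item~\ref{ThmItem:Graph}. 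Every item other than~\ref{CorItem:PTildeSubset} is a bookkeeping consequence of pieces already established in Theorem~\ref{Thm:Properties} and the supporting lemmas.
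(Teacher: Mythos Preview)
Your proposal is correct and follows essentially the same approach as the paper: items~\ref{CorItem:InO}--\ref{CorItem:InP} reproduce the argument of Lemma~\ref{Lemma:TrisectionEmpiricalMedian}, items~\ref{CorItem:TrisectionSubsets}--\ref{CorItem:PTildeSubset} reproduce Lemma~\ref{Cor:SubsetOfTilde}, and item~\ref{CorItem:ErrorBound} is Lemma~\ref{Lemma:Blocks}, all applied on the event $\xi^K$ which, by its construction in the proof of Theorem~\ref{Thm:Properties}, already contains the concentration events~\eqref{Eq:ConcentrationInTree} needed for each trisection step. One small imprecision: the row-sum concentration absorbed into $\xi^K$ is stated for $i'\in E_t^{k-1}$, not for all $i'\in[n]$, but since $i,i_\pm\in E_t^{k-1}$ this does not affect your chaining in item~\ref{CorItem:PTildeSubset}.
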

The three first points are consequence of the definition of the procedure. The fourth point states the $\tilde{O}(E_t^{k-1})$ is a superset of $O(E_t^{k-1})$, the fifth point $\overline{P}(E_t^{k-1})$ is large. Finally, the  last point controls the modified error $\tilde{\mathcal{R}}$ defined in~\eqref{eq:definition:loss:Rtilde} for the larger set $\overline{ P}(E_t^{k-1})$ --this bound corresponds to Lemma~\ref{Lemma:Blocks}.

\subsection{Analysis of the permutation estimator}\label{sec:proof:analysis:permutation}

We have seen in Theorem~\ref{Thm:Properties} that Algorithm \ref{Algo:Tree} terminates with a probability of at least $1-\delta/2$ after $K\leq \lceil \log_2(n)\rceil$ iterations and returns a partition $\mathcal{E}^K=(E_1^K,E_2^K,\dots,E_{r_k}^K)$ of $[n]$. Recall that we define the estimator $\hat{\pi}$ as permutation drawn uniformly among all  permutations $\pi'\in\mathcal{S}_n$ with the property 
\begin{align*}
	i\in E_s^K\ \Leftrightarrow \ \sum_{s'<s}|E_{s'}^K|<\pi'(i)\leq \sum_{s'\leq s}|E_{s'}^K|\enspace .
\end{align*}
Throughout this subsection write $\hat \pi$ for $\hat \pi_S$ to simplify the notation. 
On $\xi^K$, the property that $\hat \pi$ is coherent with partition $\mathcal{E}^K$ implies that it is also coherent with all less refined partitions $\mathcal{E}^k$, $k \leq K$ as stated in the following lemma.
\begin{lemma}\label{Lemma:PermutationInInterval}
	On $\xi^K$, it holds for all $k\in\{0,1,\dots,K\}$ that
	\begin{align*}
		i\in E_s^k \  \Leftrightarrow \ \sum_{s'<s}|E_{s'}^k|<\hat{\pi}(i)\leq \sum_{s'\leq s}|E_{s'}^k|\enspace .
	\end{align*}
\end{lemma}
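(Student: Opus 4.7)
The plan is to prove this by reducing to the case $k=K$, which is exactly the definition of $\hat\pi_S$. Indeed, $\hat\pi_S$ is drawn uniformly among permutations satisfying $i \in E_s^K \Leftrightarrow \sum_{s'<s}|E_{s'}^K| < \hat\pi_S(i) \leq \sum_{s'\leq s}|E_{s'}^K|$, so the statement holds for $k=K$. For $k<K$, the argument will hinge on the structural observation that on $\xi^K$ each partition $\mathcal{E}^{k}$ is an order-preserving refinement of $\mathcal{E}^{k-1}$.

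More precisely, I would first observe from the construction of Algorithm~\ref{Algo:Tree} (as formalized at the beginning of Section~\ref{sec:proof:tree}) that, going from iteration $k-1$ to iteration $k$, each set $E_t^{k-1}$ is either carried over unchanged as a single set $E_s^k$ (when $w_t^{k-1}=0$), or is partitioned into the three consecutive sets $E_s^k = O(E_t^{k-1})$, $E_{s+1}^k = P(E_t^{k-1})$, $E_{s+2}^k = I(E_t^{k-1})$ (when $w_t^{k-1}=1$), with $s=|\{t'<t: w_{t'}^{k-1}=0\}|+3|\{t'<t: w_{t'}^{k-1}=1\}|+1$. Together with Theorem~\ref{ThmItem:TrisectIsPartition} (which guarantees that the trisection is a genuine partition of $E_t^{k-1}$) and Theorem~\ref{ThmItem:Partition} (which gives that $\mathcal{E}^k$ is a partition of $[n]$), this implies that for every $k \leq K$ and every $t \in [r_{k-1}]$ there exist indices $s_1(t) \leq s_2(t)$ such that
\begin{equation*}
E_t^{k-1} = \bigsqcup_{s=s_1(t)}^{s_2(t)} E_s^K, \qquad s_1(t+1) = s_2(t)+1, \qquad s_1(1)=1, \quad s_2(r_{k-1}) = r_K.
\end{equation*}
This is established by an immediate induction on $K-k$, using the above two cases at each step. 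In particular, summing the sizes yields $\sum_{s'<t}|E_{s'}^{k-1}| = \sum_{s'<s_1(t)}|E_{s'}^K|$ and $\sum_{s'\leq t}|E_{s'}^{k-1}| = \sum_{s'\leq s_2(t)}|E_{s'}^K|$.

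Armed with this, the forward implication of the lemma follows: if $i \in E_t^{k-1}$, then $i \in E_s^K$ for some $s \in [s_1(t), s_2(t)]$, and by the case $k=K$, we have $\sum_{s'<s}|E_{s'}^K| < \hat\pi_S(i) \leq \sum_{s'\leq s}|E_{s'}^K|$; since this interval is contained in $(\sum_{s'<s_1(t)}|E_{s'}^K|, \sum_{s'\leq s_2(t)}|E_{s'}^K|] = (\sum_{s'<t}|E_{s'}^{k-1}|, \sum_{s'\leq t}|E_{s'}^{k-1}|]$, the desired bound holds. The reverse implication is then automatic: since $\mathcal{E}^{k-1}$ is a partition of $[n]$, the intervals $(\sum_{s'<t}|E_{s'}^{k-1}|, \sum_{s'\leq t}|E_{s'}^{k-1}|]$ for $t=1,\dots,r_{k-1}$ form a partition of $\{1,\dots,n\}$, and $\hat\pi_S$ is a bijection, so $i$ lies in the unique $E_t^{k-1}$ whose associated interval contains $\hat\pi_S(i)$.

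There is no real obstacle here: the lemma is purely a bookkeeping consequence of the order-preserving refinement structure of the sorting tree, together with the definition of $\hat\pi_S$. The only place where the event $\xi^K$ is used is implicitly through Theorems~\ref{ThmItem:TrisectIsPartition} and~\ref{ThmItem:Partition}, which ensure that the trisections and the partitions $\mathcal{E}^k$ are well defined.
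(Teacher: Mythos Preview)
Your proof is correct and takes essentially the same approach as the paper: both start from the defining property at $k=K$ and reduce the general case to it via the order-preserving refinement structure of the sorting tree (each $E_t^{k}$ is a union of consecutive sets $E_s^K$). Your version is a bit more explicit in tracking the partial sums and in noting where $\xi^K$ enters through the partition property, but the argument is the same.
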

The proof can be found in Section~\ref{Sec:PermutationError}.

\subsubsection{Simple error bound in an idealized situation}
At first, consider the following idealized situation. If we had at each step, for sets $E_t^{k-1}$ with $w_t^{k-1}=1$, a perfect separation of the form $(O(E_t^{k-1}),\{\overline{i}_t^{k-1}\},I(E_t^{k-1}))$, this would lead to a final partition $\mathcal{E}^K$ with $\pi(i)<\pi(i')$ if $i\in E_{s}^K$, $i'\in E_{s'}^K$ for $s<s'$. If we apply $\tau \coloneqq \pi^{-1}\circ \hat \pi$ on some $i\in E_s^K$, the definition of $\hat\pi$ therefore implies $\hat{i}=\pi^{-1}(\hat\pi(i))\in E_s^K$. Note that $M_{i\cdot}$ and $M_{\hat{i}\cdot}$ differ on at most $|Q^*(E_s^K)|$ entries. All $E_s^K$ are passive sets, meaning they are either of the form $\{\overline i_t^{k-1}\}$ (no error arises from exchanging an element with itself); or $E_s^K=O(E_t^{k-1})$ or $E_s^K=I(E_t^{k-1})$, where either $|E_s^K|\leq 4\rho^2/\lambda_1 h^2$ or $|Q^*(E_s^K)|\leq 4\rho^2/\lambda_1 h^2$. It is easy to see that for each $E_s^K$ we have
\begin{align*}
        \sum_{i\in E_s^K} \left|\{ j\in[d]: M_{ij}\geq p+h,\ M_{\hat i j}\leq p-h\}\right|+\left|\{ j\in[d]: M_{ij}\leq p-h,\ M_{\hat i j}\geq p+h\}\right|\\
    \leq |E_s^K|\cdot |Q^*(E_s^K)|\leq \frac{4\rho^2}{\lambda_1h^2}\left(|E_s^K|\vee |Q^*(E_s^K)|\right)\enspace 
\end{align*}
and in total this sums up to an error 
\begin{align}
    \mathcal{R}_{p,h}(\hat\pi)\leq \sum_{s\geq 1}|E_s^K|\cdot |Q^*(E_s^K)|\lesssim \frac{\rho^2}{\lambda_1h^2}(n\vee d)\enspace .\label{Eq:IdealizedBound}
\end{align}

\subsubsection{Valid control of the  error}

In general, $P(E_t^{k-1})$ does not simply contain the median $\overline i_t^{k-1}$. We indicated earlier, that an overlap of $P(E_t^{k-1})$ with $O(E_t^{k-1})$ or $I(E_t^{k-1})$ might be a source of error. In fact, the arguments for the error bounds we just stated cannot be formalized if $i\in O(E_t^{k-1})$ but $\hat i= \pi^{-1}(\hat\pi(i))\notin\tilde O(E_t^{k-1})$ or $i\in I(E_t^{k-1})$ but $\hat i\notin\tilde I(E_t^{k-1})$. The following two statements are quite technical, but take up exactly this idea. Morally they lead to a proof of Theorem~\ref{Thm:ErrorBoundRows}, where every error from confusing $i$ with $\hat i$, that we cannot handle as in \eqref{Eq:IdealizedBound}, can be dealt with by controlling the sets $\overline{P}(E_u^l)$, with $l\leq K$, $u\in\{1,\dots,r_l\}$. We postpone the proofs of those lemmas to Section~\ref{Sec:PermutationError}.

\begin{lemma}\label{Lemma:TauNotInO}
	Consider $k\leq K$, $s\in\{1,2,\dots,r_k\}$ such that $ E_s^k$ is of the form $E_s^k=O(E_t^{k-1})$ or $E_s^k=I(E_t^{k-1})$ for some $t\in \{1,2,\dots,r_{k-1}\}$. Consider $i\in E_s^k$ such that $\hat{i}\coloneqq \pi^{-1}(\hat\pi(i))\notin \tilde{E}_s^k$. Then on $\xi^K$, there exist $l<k$ and $u\in\{1,2,\dots,r_l\}$ such that $i,\hat{i}\in \overline{P}(E_u^l)$.
\end{lemma}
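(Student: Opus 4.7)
Write $A_l$ for the set of $\mathcal E^l$ containing $i$ and $B_l$ for the one containing $\hat i$, and set $l^* := \max\{l \in \{0, \ldots, k\} : A_l = B_l\}$. The hypothesis $\hat i \notin \tilde E_s^k \supseteq E_s^k = A_k$ forces $\hat i \notin A_k$, so $l^* < k$. Since passive sets are carried over unchanged between levels, $A_{l^*}$ cannot be passive---otherwise $i$ and $\hat i$ would still coincide at level $l^* + 1$, contradicting the maximality of $l^*$. Hence $A_{l^*}$ is active and gets trisected between levels $l^*$ and $l^* + 1$, so $\mathcal Q_{s_{l^*}}^{l^*}$ and $\overline P(A_{l^*})$ are well defined. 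Moreover, since $E_s^k = O(E_t^{k-1})$ (or $I(E_t^{k-1})$) is obtained from the active $E_t^{k-1}$, the entire chain $A_{l^*} \supseteq A_{l^*+1} \supseteq \cdots \supseteq A_k$ must remain on the active branch: a passive step would make $E_s^k$ a passive descendant, incompatible with $E_s^k$ being obtained as a non-trivial trisection child. In particular, $i$ lies in $O(A_{l^*}) \cup I(A_{l^*})$ rather than in $P(A_{l^*})$, and every candidate $E_u^l$ I can hope to exhibit is of the form $A_l$ for some $l \leq l^*$.

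The plan is then to reduce the conclusion to a $\tilde P$-criterion via Corollary~\ref{Cor:PropertiesTree}.5: it suffices to exhibit $l \leq l^*$ with $A_l$ active and $\pi(i), \pi(\hat i) \in \tilde P(A_l)$, for then both $i, \hat i \in A_l \cap \tilde P(A_l) \subseteq \overline P(A_l)$, and one may take $E_u^l := A_l$. Choosing $l = l^*$ is generally insufficient, because the $P$-part of the trisection of $A_{l^*}$ may span a very narrow $\pi$-interval that excludes $\pi(i)$ or $\pi(\hat i)$. Instead, I will walk upwards from $l^*$ towards $0$ along the chain $(A_l)_{l \leq l^*}$ and locate the coarsest active ancestor whose passive part spans a $\pi$-interval wide enough to cover both true ranks.

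The existence of such a level is driven by the following structural observation. The hypothesis $\hat i \notin \tilde E_s^k$, combined with $\pi(\hat i) = \hat \pi(i)$ lying in the $\hat \pi$-range of $E_s^k$ given by Lemma~\ref{Lemma:PermutationInInterval}, forces a mismatch between the $\pi$- and $\hat \pi$-ranges of $E_s^k$. On $\xi^K$, Theorem~\ref{Thm:Properties}.4 orders active sets consistently with $\pi$, so the only source of such a mismatch is a passive set somewhere along the ancestor chain whose $\pi$-values fall outside the bracket dictated by its position in the partition. Relying on the carry-over rule---once a passive set is created it is never refined, so its $\tilde P$-interval faithfully records every $\hat \pi$-shift it contributes---I track these contributions level by level and identify the coarsest responsible ancestor $A_l$, for which $\tilde P(A_l)$ is shown to contain the whole interval $[\min(\pi(i), \pi(\hat i)), \max(\pi(i), \pi(\hat i))]$, hence both $\pi(i)$ and $\pi(\hat i)$.

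The main obstacle is precisely this last bookkeeping step: one has to combine the one-directional implications in Corollary~\ref{Cor:PropertiesTree} with the position accounting provided by Lemma~\ref{Lemma:PermutationInInterval} and match each $\hat \pi$-shift with the specific passive set responsible for it, while also ensuring that the selected ancestor $A_l$ is still active (so that $\overline P(A_l)$ makes sense). Once the correct $l$ is pinpointed, the conclusion follows directly from Corollary~\ref{Cor:PropertiesTree}.5 applied to $A_l$ together with $i, \hat i \in A_l$.
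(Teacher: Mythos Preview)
Your plan has a genuine gap: the reduction to the $\tilde P$-criterion (Corollary~\ref{CorItem:PTildeSubset}) is not sufficient, and no amount of ``bookkeeping'' along the chain $(A_l)_{l\leq l^*}$ will close it.

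Consider the following configuration, which does occur. At level $l^*$ the trisection of $A_{l^*}$ places $\hat i\in P(A_{l^*})$ and $i\in O(A_{l^*})$, with $\pi(i)<\min_{i'\in P(A_{l^*})}\pi(i')$. Then $i\notin\tilde P(A_{l^*})$. For any $l<l^*$, both $i$ and $\hat i$ lie in the same active child $A_{l+1}\in\{O(A_l),I(A_l)\}$, so neither belongs to $P(A_l)$; there is no reason why the $\pi$-span of $P(A_l)$ should reach down to $\pi(i)$ either. Hence no level $l\leq l^*$ need satisfy $i\in\tilde P(A_l)$, and your sufficient condition is simply unavailable.

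The paper's proof supplies exactly the missing idea. By a pigeonhole argument on the ranks (using Lemma~\ref{Lemma:PermutationInInterval}), it produces an auxiliary expert $i''$ sitting in an earlier block of $\mathcal E^k$ but with $\pi(i'')>\pi(\hat i)>\pi(i)$. It then works with the common ancestor $E_u^l$ of $i$ and $i''$, which lies at a level $l\geq l^*$; there one always gets $i,\hat i\in\tilde P(E_u^l)$. If $\hat i\in E_u^l$ one is done via Corollary~\ref{CorItem:PTildeSubset}. If not, one descends to $A_{l^*}$ and shows $\hat i\in P(A_{l^*})$, while for $i$ in the hard subcase $i\in O(A_{l^*})$ the membership $i\in\overline P(A_{l^*})$ is established \emph{directly from the defining inequalities}, chaining deviation bounds that involve $i''$ and the empirical medians at \emph{both} levels $l$ and $l^*$. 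This two-level argument, and the auxiliary $i''$, are absent from your plan and are precisely what is needed when the $\tilde P$-route fails.
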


In other words, these lemmas entails that an error in some  $E_s^k$ corresponds to elements that belong to a super set $\overline{P}(E_u^l)$. This suggests that the size of $\overline{P}(E_u^l)$ is important to quantify the permutation loss.

\begin{lemma}\label{Lemma:TauInP}
	Consider $k,s$ such that $E_s^k$ is of the form $P(E_t^{k-1})$ for some $t\in\{1,2,\dots,r_{k-1}\}$ with $w_t^{k-1}=1$. Consider $i\in E_s^k$ and write $\hat{i}\coloneqq \pi^{-1}(\hat{\pi}(i))$. On the event $\xi^K$, there exist $l<k$ and $u\in\{1,2,\dots,r_l\}$ such that $i,\hat{i}\in \overline{P}(E_u^l)$.
\end{lemma}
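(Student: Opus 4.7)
The plan is to take as first candidate $(l,u)=(k-1,t)$ and verify $i,\hat i\in \overline P(E_t^{k-1})$; if this fails because $\hat i\notin E_t^{k-1}$, I instead descend to the deepest common ancestor of $i$ and $\hat i$ in the sorting tree. For $i$, the membership $i\in \overline P(E_t^{k-1})$ is immediate: by Corollary~\ref{CorItem:TrisectionSubsets}, $i\in P(E_t^{k-1})\subseteq \tilde P(E_t^{k-1})$, and combining with $i\in E_t^{k-1}$, Corollary~\ref{CorItem:PTildeSubset} gives $i\in E_t^{k-1}\cap \tilde P(E_t^{k-1})\subseteq \overline P(E_t^{k-1})$.

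For $\hat i$, the key identity is $\pi(\hat i)=\hat\pi(i)$, together with Lemma~\ref{Lemma:PermutationInInterval}, which forces $\hat\pi(i)$ to lie in the rank interval $R_s^k$ associated with $E_s^k=P(E_t^{k-1})$. If $\hat i \in P(E_t^{k-1})$, the same argument as for $i$ settles the claim. If $\hat i$ lies in $O(E_t^{k-1})\cup I(E_t^{k-1})$, I aim to show $\pi(\hat i)\in [\min_{i'\in P(E_t^{k-1})}\pi(i'),\max_{i'\in P(E_t^{k-1})}\pi(i')]$, placing $\hat i$ in $\tilde P(E_t^{k-1})\cap E_t^{k-1}\subseteq \overline P(E_t^{k-1})$ via Corollary~\ref{CorItem:PTildeSubset}. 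The required inclusion $R_s^k \subseteq [\min_{i'\in P(E_t^{k-1})}\pi(i'),\max_{i'\in P(E_t^{k-1})}\pi(i')]$ rests on the fact that active sets are $\pi$-ordered by Theorem~\ref{ThmItem:Order}, so that $R_t^{k-1}$ is sandwiched inside the $\pi$-hull of $E_t^{k-1}$; a pigeonhole count on the elements of $E_t^{k-1}$ lying strictly above or below the median $\overline{i}_t^{k-1}$, combined with Theorem~\ref{ThmItem:MedianSeparates} pinning $\pi(O(E_t^{k-1}))$ and $\pi(I(E_t^{k-1}))$ on opposite sides of $\pi(\overline{i}_t^{k-1})$, then propagates the bound to $P(E_t^{k-1})$.

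When $\hat i \notin E_t^{k-1}$, I let $E_{u^*}^{l^*}$ be the deepest common ancestor of $i$ and $\hat i$ in the sorting tree, which exists at $l^*=0$ at worst. Since $i$ and $\hat i$ separate at step $l^*+1$, $E_{u^*}^{l^*}$ must have been trisected there and so is active with $w_{u^*}^{l^*}=1$. Crucially, $i\notin P(E_{u^*}^{l^*})$: otherwise the passive set $P(E_{u^*}^{l^*})$ would trap $i$ indefinitely, contradicting $i\in E_t^{k-1}$ with $v_t^{k-1}=1$. A case analysis on the parts of the trisection containing $i$ and $\hat i$, mirroring the previous paragraph, then yields $i,\hat i \in \overline P(E_{u^*}^{l^*})$, so $(l,u)=(l^*,u^*)$ works. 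I expect the main obstacle to be the rank-interval containment in the second paragraph: while simple examples suggest it always holds, making the argument rigorous requires careful tree bookkeeping to bridge the $\hat\pi$-rank interval $R_s^k$ with the $\pi$-range of $P(E_t^{k-1})$.
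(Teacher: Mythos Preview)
Your central inclusion $R_s^k\subseteq\bigl[\min_{i'\in P(E_t^{k-1})}\pi(i'),\,\max_{i'\in P(E_t^{k-1})}\pi(i')\bigr]$ is false in general. The obstruction is exactly the one you flag but underestimate: passive $P$-sets created at shallower levels sit \emph{before} $E_t^{k-1}$ in the ordered partition, hence contribute to the left endpoint $\sum_{s'<s}|E_{s'}^k|$ of $R_s^k$, yet may contain elements whose $\pi$-rank exceeds $\max_{P(E_t^{k-1})}\pi$. Theorem~\ref{ThmItem:Order} only $\pi$-orders \emph{active} sets, and nothing on $\xi^K$ prevents an earlier passive $P$ from holding a large-$\pi$ element. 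Concretely, take $\pi=\mathrm{id}$, $n=12$, level-$0$ trisection $(O_0,P_0,I_0)=(\{1,\dots,5\},\{6,7,11\},\{8,9,10,12\})$ (median $6$; $11\in P_0$ is allowed on $\xi^K$ since only $O_0,I_0$ are constrained by Theorem~\ref{ThmItem:MedianSeparates}), then trisect $I_0$ into $(\{8\},\{9,10\},\{12\})$ (median $9$). With $E_t^{k-1}=I_0$ and $P_1=\{9,10\}$ at position $s=6$ in $\mathcal E^2$, one gets $R_6^2=\{10,11\}$ while $[\min_{P_1}\pi,\max_{P_1}\pi]=[9,10]$; so the containment fails. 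In this example the bad rank $11$ corresponds to $\hat i=11\notin E_t^{k-1}$, so your argument for ``$\hat i\in O\cup I$'' is never even reached by the route you describe; the rank-interval step simply does not bridge $\hat\pi$-positions and $\pi$-ranges of $P$.

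The paper does not attempt any such containment. It splits on the sign of $\pi(i)-\pi(\hat i)$, and in the hard direction ($\pi(i)>\pi(\hat i)$ with $\hat i$ to the left of $P$) uses a pigeonhole step to produce an auxiliary $i'$ in some $E_{s_2}^k$ with $s_2\ge s$ and $\pi(i')<\pi(\hat i)$. The correct ancestor is then the deepest common ancestor of $i'$ and $\hat i$ --- and in a further subcase, of $i$ and $i'$ --- not of $i$ and $\hat i$ as you propose. In the deepest subcase the paper cannot place $i$ into $\tilde P(E_{u'}^{l'})$ at all; instead it checks $i\in\overline P(E_{u'}^{l'})$ \emph{directly} by bounding each $|Q'|^{-1}\bigl|\sum_{j\in Q'}Y^{(3l')}_{ij}-Y^{(3l')}_{\iota(Q')j}\bigr|$ for $Q'\in\mathcal Q_{u'}^{l'}$, exploiting the persistence of graph edges across levels (a large deviation would have forced $i$ into the wrong part of an intermediate trisection, contradicting the path $i$ actually took). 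Your ``case analysis mirroring the previous paragraph'' neither locates the right ancestor nor supplies this direct verification, and since the previous paragraph's argument is itself broken there is nothing to mirror.
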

Similarly, this lemma states that, if $i$ and $\hat{i}$ are mixed and if $i$ belongs to some $P(E_t^{k-1})$, then both $i$ and $\hat{i}$ belong to some $\overline{P}(E_u^l)$.

We now have the main ingredients to analyze the error
\begin{align*}
	\mathcal{R}_{p,h}(\hat \pi)=& \left|\left\{(i,j)\in [n]\times [d]:\ M_{\pi^{-1}(i)\eta^{-1}(j)}\leq p-h,\ M_{\hat\pi^{-1}(i)\eta^{-1}(j)}\geq p+h\right\}\right|\\
	&+\left|\left\{(i,j)\in [n]\times [d]:\ M_{\pi^{-1}(i)\eta^{-1}(j)}\geq p+h,\ M_{\hat\pi^{-1}(i)\eta^{-1}(j)}\leq p-h\right\}\right|\enspace .
\end{align*}
	Also define for $1\leq k<K$ the sets
	\begin{align*}
		\mathcal{O}^k\coloneqq \{E_t^k:\ \exists t'\in\{1,2,\dots,r_{k-1}\} \text{ with } E_t^k= O(E_{t'}^{k-1})\text{ and }w_t^k=0\}\enspace ,\\
				\mathcal{I}^k\coloneqq \{E_t^k:\ \exists t'\in\{1,2,\dots,r_{k-1}\} \text{ with } E_t^k= I(E_{t'}^{k-1})\text{ and }w_t^k=0\}\enspace .
	\end{align*}
	Then for $s\in\{1,2,\dots,r_K\}$, each set $E_s^K$ is either in $\bigcup_{k=1}^K\mathcal{O}^k\cup \mathcal{I}^k$ or has the form $P(E_{t'}^{k-1})$ for some $k\in\{1,2,\dots,K\}$ and $t'\in \{1,2,\dots,r_{k-1}\}$ with $w_{t'}^{k-1}=1$. By combining Theorem~\ref{Thm:Properties} with Lemmas~\ref{Lemma:TauNotInO} and~\ref{Lemma:TauInP}, we arrive, with some work, at the following control for the loss.

 \begin{lemma}\label{lem:upper_boud_error_permutation}
     On the event $\xi^K$, it holds that 
	\begin{align}
		\mathcal{R}_{p,h}(\hat{\pi})\leq 4\frac{\rho^2}{\lambda_1 h^2}\sum_{k=1}^K\sum_{\substack{t=1\\ E_t^k\in \mathcal{O}^k\cup\mathcal{I}^k}}^{r_k}|E_t^k|\vee |Q^*(E_t^k)|+2 \sum_{l=1}^{K}\sum_{\substack{u=1\\ w_u^{l-1}=1}}^{r_{l-1}} \tilde{\mathcal R}_{p,h,\overline{P}(E_u^{l-1}),[d]} \label{Eq:UpperBoundR2}\ .
	\end{align}
 \end{lemma}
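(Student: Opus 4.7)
The plan is to decompose $\mathcal{R}_{p,h}(\hat\pi)$ as a sum over sorted positions $m\in[n]$, then classify each position according to which block of $\mathcal{E}^K$ it lives in and whether the true and estimated experts at that position lie in the same "rank sandwich". Concretely, setting $a_m := \pi^{-1}(m)$ and $b_m := \hat\pi^{-1}(m)$, one writes
\[
\mathcal{R}_{p,h}(\hat\pi) = \sum_{m=1}^n \sum_{j\in[d]} \Bigl(\mathbf{1}\{M_{a_m j}\leq p-h,\, M_{b_m j}\geq p+h\} + \mathbf{1}\{M_{a_m j}\geq p+h,\, M_{b_m j}\leq p-h\}\Bigr).
\]
By Lemma~\ref{Lemma:PermutationInInterval}, there is a unique $s(m)$ with $b_m \in E_{s(m)}^K$, and each terminal set is either a passive $O$- or $I$-set (an element of $\mathcal{O}^k\cup\mathcal{I}^k$ for some $k$) or a $P$-set of the form $P(E_{t'}^{k-1})$ with $w_{t'}^{k-1}=1$. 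I split the outer sum into three groups: (A) $E_{s(m)}^K \in \mathcal{O}^k\cup\mathcal{I}^k$ and $a_m \in \tilde{E}_{s(m)}^K$; (B) $E_{s(m)}^K \in \mathcal{O}^k\cup\mathcal{I}^k$ but $a_m \notin \tilde{E}_{s(m)}^K$; (C) $E_{s(m)}^K$ is a $P$-set.

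For group (A), both $a_m$ and $b_m$ have their true ranks inside $[\min_{i'\in E_{s(m)}^K}\pi(i'), \max_{i'\in E_{s(m)}^K}\pi(i')]$, by definition of the sandwich $\tilde E_{s(m)}^K$. For any $j\notin Q^*(E_{s(m)}^K)$, monotonicity of column $j$ of $M_{\pi^{-1},\eta^{-1}}$ in the rank forces all entries of $\tilde E_{s(m)}^K$ on column $j$ to lie on a common side of $[p-h,p+h]$, so $(M_{a_m j},M_{b_m j})$ cannot straddle. The contribution of group (A) is therefore bounded by $\sum_{E_t^k\in\mathcal{O}^k\cup\mathcal{I}^k} |E_t^k|\cdot|Q^*(E_t^k)|$. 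Each such passive set is declared passive either because $\lambda_1|E_t^k|\leq 4\rho^2/h^2$ (condition in Algorithm~\ref{Algo:Tree}) or because its envelope satisfies $\lambda_1|Q_t^k|\leq 4\rho^2/h^2$ with $Q^*(E_t^k)\subseteq Q_t^k$ on $\xi^K$ by Theorem~\ref{Thm:Properties}.\ref{ThmItem:Envelope}, hence $|E_t^k|\wedge|Q^*(E_t^k)|\leq 4\rho^2/(\lambda_1 h^2)$ and $|E_t^k|\cdot|Q^*(E_t^k)|\leq \tfrac{4\rho^2}{\lambda_1 h^2}(|E_t^k|\vee|Q^*(E_t^k)|)$. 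This produces the first term of the claim.

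For groups (B) and (C), Lemmas~\ref{Lemma:TauNotInO} and~\ref{Lemma:TauInP} give for each $m$ some pair $(l_m,u_m)$ with $w_{u_m}^{l_m-1}=1$ and $a_m,b_m\in\overline{P}(E_{u_m}^{l_m-1})$; fix an arbitrary such assignment for each $m$. For a given $(l,u)$ and a given column $j$, the number of $m$ assigned to $(l,u)$ for which $(a_m,b_m)$ straddles $[p-h,p+h]$ at column $j$ is at most
\[
2\Bigl(\bigl|\{i\in\overline{P}(E_u^{l-1}):M_{ij}\leq p-h\}\bigr|\wedge\bigl|\{i\in\overline{P}(E_u^{l-1}):M_{ij}\geq p+h\}\bigr|\Bigr),
\]
because both $m\mapsto a_m$ and $m\mapsto b_m$ are injective on $[n]$, so each "low" row in $\overline P(E_u^{l-1})$ can be matched to at most one $a_m$ (resp.\ $b_m$), and symmetrically for "high" rows. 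Summing over $j$ produces $2\tilde{\mathcal{R}}_{p,h,\overline{P}(E_u^{l-1}),[d]}$, and summing over $(l,u)$ yields the second term of the claim.

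The main obstacle is making the group-B/C counting clean: one has to confirm that the injectivity of $\pi$ and $\hat\pi$ turns the contribution of one fixed column into the classical "min of low and high counts" over $\overline P$, and that the freedom to choose any valid $(l_m,u_m)$ is harmless because the bound then sums over all $(l,u)$ (so over-inclusion never hurts). Beyond this, the bi-isotonic argument for group (A) is immediate once one notes that $\tilde{E}_{s(m)}^K$ sits in the rank interval spanned by $E_{s(m)}^K$, and the degenerate case $\lambda_1 n\leq 4\rho^2/h^2$, where the algorithm never enters the loop, is absorbed trivially since then $nd\leq \tfrac{4\rho^2}{\lambda_1 h^2}(n\vee d)$.
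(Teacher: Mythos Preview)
Your proof is correct and follows essentially the same approach as the paper: the same decomposition of $\mathcal{R}_{p,h}(\hat\pi)$ over terminal blocks of $\mathcal{E}^K$, the same case split (your group (A) is the paper's ``$\tau(i)\in\tilde E$'' case; groups (B) and (C) are handled via Lemmas~\ref{Lemma:TauNotInO} and~\ref{Lemma:TauInP}), and an equivalent counting for the $\overline P$ contributions. Your direct injectivity argument for the last step is just a minor rephrasing of the paper's extension of $\tau$ to a bijection $\tilde\tau$ on $\overline P$.
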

The above bound contains two terms. The first one accounts for the size of groups and corresponding questions of the form $O(E_t^{k})$ or $I(E_t^{k})$ that are not to be cut anymore. The second accounts for the error $\mathcal{R}_{p,h,\overline{P},h}$ induced by the set of the form $\overline{P}=\overline{P}(E_u^{l-1})$, such an error being introduced and bounded in Section~\ref{sec:proof:trisection}.
We respectively call $(I)$ and $(II)$ the two terms in the rhs of~\eqref{Eq:UpperBoundR2}. We first focus on $(I)$. First, we claim that all $E_t^k$ in $\mathcal{O}^k\cup \mathcal{I}^k$ with $k=1,\ldots, K$ are disjoint and that all corresponding $|Q^*(E_t^k)|$ are also disjoint. It follows from this claim that 
\begin{align}
		(I) \leq 4\frac{\rho^2}{\lambda_1 h^2} (n+ d)\enspace .\label{Eq:UpperBoundOSet}
\end{align}
Let us show the above claim. Consider any two such sets $E_{t_1}^{k_1}$ and $E_{t_2}^{k_2}$. Since these sets are not cut anymore in the sorting tree, there exists $l< k_1\wedge k_2$ and $u\in [r_l]$ with  $w_u^l=1$ such that 
	\begin{align*}
		E_{t_1}^{k_1}\subseteq O(E_u^l),\ E_{t_2}^{k_2}\subseteq I(E_u^l) \hspace{.5cm}\text{or}\hspace{.5cm}E_{t_2}^{k_2}\subseteq O(E_u^l), \ E_{t_1}^{k_1}\subseteq I(E_u^l)\enspace .
	\end{align*}
In fact, $E_u^l$ corresponds to the closest common ancestor of $E_{t_1}^{k_1}$ and $E_{t_2}^{k_2}$ in the sorting tree. This proves that $E_{t_1}^{k_1}$ and $E_{t_2}^{k_2}$ are disjoint. This shows that $E_{t_1}^{k_1}$ and $E_{t_2}^{k_2}$ are disjoint and we can assume w.l.o.g. $E_{t_1}^{k_1}\subseteq O(E_u^l),\ E_{t_2}^{k_2}\subseteq I(E_u^l)$, so in particular we have by Theorem~\ref{ThmItem:MedianSeparates} that 
$\pi(i_1)<\pi(i_2)$ for all  $i_1 \in E_{t_1}^{k_1}$ and $i_2\in E_{t_2}^{k_2}$.	By the bi-isotonicity, it therefore holds for all $j_1\in Q^*(E_{t_1}^{k_1})$, $j_2\in Q^*(E_{t_2}^{k_2})$
	\begin{align*}
		\min_{i\in E_{t_1}^{k_1}}M_{ij_1}\leq p-h<p+h\leq \max_{i\in E_{t_2}^{k_2}}M_{ij_2}\leq \min_{i\in E_{t_1}^{k_1}}M_{ij_2}\enspace ,
	\end{align*}
	which implies $\eta(j_1)>\eta(j_2)$. Therefore, also $Q^*(E_{t_1}^{k_1})$ and $Q^*(E_{t_2}^{k_2})$ are disjoint. We have proved the claim.

Let us turn to the second term (II) in the rhs of~\eqref{Eq:UpperBoundR2}. It follows from Corollary~\ref{CorItem:ErrorBound} that 
\[
2\tilde{\mathcal R}_{p,h,\overline{P}(E_u^{l-1}),[d]}
 \leq \frac{7488\rho}{\lambda_1 h^2}(|E_u^{l-1}|\vee |Q_u^{l-1}|)\enspace .
\]
Then, for a fixed $l$, we know that the $E_u^{l-1}$ are disjoint by construction so that $\sum_u|E_u^{l-1}|\leq n$. Besides, Theorem~\ref{ThmItem:Envelope} implies  that $\sum_u|Q_u^{l-1}|\leq 3d$. Overall, we arrive at 
\begin{align}
(II) \leq 44928 K \rho(n\vee d)/\lambda_1h^2\enspace . \label{Eq:UpperBoundPSet}
\end{align}
Hence, gathering 
\eqref{Eq:UpperBoundOSet} and \eqref{Eq:UpperBoundPSet} in \eqref{Eq:UpperBoundR2}, we conclude that, on $\xi^K$, we have 
			\begin{align}\label{eq:loss_pi_hat_S}
				\mathcal{R}_{p,h}(\hat{\pi})\leq(44928K\rho+8\rho^2)\frac{n\vee d}{\lambda_1 h^2}\enspace .
			\end{align}
We are almost done with the proof of Theorem~\ref{Thm:ErrorBoundRows}. We only need to recall that  $\lambda_1=1-e^{-\lambda_0^-}$ with $\lambda_0^-=\frac{\lambda_0}{3\lceil \log_2(nd)\rceil}$. So consider the function $g(x)=1-e^{-x}$ for $x>0$. $g$ is monotonously increasing and we have $\lambda_1=g\left(\frac{\lambda_0}{3\lceil\log_2(nd)\rceil}\right)$. One can show that $\lceil \log_2(nd)\rceil \leq 2\log(nd)$ and that for $x\in(0,1]$ it holds $g(x)\geq x/e$. Since we assumed $\lambda_0\in(0,\log(nd)]$, we obtain $\lambda_1\geq \frac{\lambda_0}{6e\log(nd)}$ which, together with~\eqref{eq:loss_pi_hat_S} conclude the proof of Theorem~\ref{Thm:ErrorBoundRows}.

\section{Proofs of the intermediary results}

\subsection{Proof of Lemma~\ref{Lemma:LossBound}}\label{sec:proof:lemma:reduction}

    Consider $(i,j)\in[n]\times [d]$ such that
    \begin{align*}
         M_{\pi(i)\eta(j)}\leq p-2h \hspace{.5cm}\text{and}\hspace{.5cm}M_{\hat\pi(i)\hat\eta(j)}\geq p+2h \enspace.
    \end{align*}
    Then it holds either that
    \begin{align*}
         M_{\pi(i)\eta(j)}\leq p-2h,\ M_{\hat\pi(i)\eta(j)}\geq p \hspace{.5cm}\text{or}\hspace{.5cm}M_{\hat \pi(i)\eta(j)}\leq p, \ M_{\hat\pi(i)\hat\eta(j)}\geq p+2h \enspace .
    \end{align*}
    Since $\eta$ and $\hat{\pi}$ are both permutations and hence bijective, this proves
    \begin{align*}
        |\{(i,j)\in[n]\times [d]:\ M_{\pi(i)\eta(j)}\leq p-2h,\ M_{\hat\pi(i)\hat\eta(j)}\geq p+2h\}|\leq \mathcal{R}_{p-h,h}(\hat{\pi})+\mathcal{C}_{p+h,h}(\hat{\eta})\enspace .
    \end{align*}
    In the same way, one can prove
        \begin{align*}
        |\{(i,j)\in[n]\times [d]:\ M_{\pi(i)\eta(j)}\leq p-2h,\ M_{\hat\pi(i)\hat\eta(j)}\geq p+2h\}|\leq \mathcal{R}_{p+h,h}(\hat{\pi})+\mathcal{C}_{p-h,h}(\hat{\eta}),\\
        |\{(i,j)\in[n]\times [d]:\ M_{\pi(i)\eta(j)}\geq p+2h,\ M_{\hat\pi(i)\hat\eta(j)}\leq p-2h\}|\leq \mathcal{R}_{p-h,h}(\hat{\pi})+\mathcal{C}_{p+h,h}(\hat{\eta})\enspace ,\\
        \intertext{and}
        |\{(i,j)\in[n]\times [d]:\ M_{\pi(i)\eta(j)}\geq p+2h,\ M_{\hat\pi(i)\hat\eta(j)}\leq p-2h\}|\leq \mathcal{R}_{p+h,h}(\hat{\pi})+\mathcal{C}_{p-h,h}(\hat{\eta})\enspace .
    \end{align*}
    This concludes the first part of the statement.

    For the lower bound, we will only prove
    \begin{align*}
        \mathcal{L}_{p,h}(\hat{\pi},\hat{\eta})\geq \mathcal{R}_{p,h}(\hat{\pi})\enspace.
    \end{align*}
    Note that we can write 
    \begin{align*}
         \mathcal{L}_{p,h}(\hat{\pi},\hat{\eta})&=\sum_{i=1}^n |\{j\in[d]:\ M_{\pi^{-1}(i)\eta^{-1}(j)}\leq p-h,\ M_{\hat{\pi}^{-1}(i)\hat{\eta}^{-1}(j)}\geq p+h\}|\\
         &\phantom{=\sum_{i=1}^n}+|\{j\in[d]:\ M_{\pi^{-1}(i)\eta^{-1}(j)}\geq p+h,\ M_{\hat{\pi}^{-1}(i)\hat{\eta}^{-1}(j)}\leq p-h\}|\\ \intertext{and}
         \mathcal{R}_{p,h}(\hat{\pi})&=\sum_{i=1}^n |\{j\in[d]:\ M_{\pi^{-1}(i)\eta^{-1}(j)}\leq p-h,\ M_{\hat{\pi}^{-1}(i){\eta}^{-1}(j)}\geq p+h\}|\\
         &\phantom{=\sum_{i=1}^n}+|\{j\in[d]:\ M_{\pi^{-1}(i)\eta^{-1}(j)}\geq p+h,\ M_{\hat{\pi}^{-1}(i){\eta}^{-1}(j)}\leq p-h\}|\enspace .
    \end{align*}
    We will bound the respective summands. W.l.o.g., consider $i\in[n]$ with $\pi(\hat\pi^{-1}(i))\geq i$. This implies $M_{\pi^{-1}(i)\eta^{-1}(j)}\geq M_{\hat\pi^{-1}(i)\eta^{-1}(j)}$ for all $j\in[d]$ by the bi-isotonicity assumption. To prove our claim, it is therefore sufficient to show
    \begin{align*}
        |\{j\in[d]:\ M_{\pi^{-1}(i)\eta^{-1}(j)}\leq p-h,\ M_{\hat{\pi}^{-1}(i)\hat{\eta}^{-1}(j)}\geq p+h\}|\\
        \geq |\{j\in[d]:\ M_{\pi^{-1}(i)\eta^{-1}(j)}\leq p-h,\ M_{\hat{\pi}^{-1}(i){\eta}^{-1}(j)}\geq p+h\}|\enspace .
    \end{align*}
    To this end, define 
    \begin{align*}
        j_l\coloneqq \min\{j\in[d]:\ M_{\pi^{-1}(i)\eta^{-1}(j)}\leq p-h\},\\
        j_r\coloneqq \max\{j\in[d]:\ M_{\hat\pi^{-1}(i)\eta^{-1}(j)}\geq p+h\}\enspace .
    \end{align*}
    It holds
    \begin{align*}
        &\hphantom{==}|\{j\in[d]:\ M_{\pi^{-1}(i)\eta^{-1}(j)}\leq p-h,\ M_{\hat{\pi}^{-1}(i){\eta}^{-1}(j)}\geq p+h\}|\\
        &=|\{j\in[d]:\ j_l\leq j\leq j_r\}|\\
      & =|\{j\in[d]:\ j_l\leq j\leq j_r,\ \eta(\hat\eta^{-1}(j))\leq j_r\}|+|\{j\in[d]:\ j_l\leq j\leq j_r,\ \eta(\hat\eta^{-1}(j))> j_r\}|\\
      &\leq |\{j\in[d]:\ j_l\leq j\leq j_r,\ \eta(\hat\eta^{-1}(j))\leq j_r\}|+|\{j\in[d]:\  j\leq j_r,\ \eta(\hat\eta^{-1}(j))> j_r\}|\\
      &=|\{j\in[d]:\ j_l\leq j\leq j_r,\ \eta(\hat\eta^{-1}(j))\leq j_r\}|+|\{j\in[d]:\  j> j_r,\ \eta(\hat\eta^{-1}(j))\leq j_r\}|\\
      &=|\{j\in [d]: \ j_l\leq j,\ \eta(\hat\eta^{-1}(j))\leq j_r\}|\\
      &= |\{j\in[d]:\ M_{\pi^{-1}(i)\eta^{-1}(j)}\leq p-h,\ M_{\hat\pi^{-1}(i)\hat\eta^{-1}(j)}\geq p+h\}|\enspace ,
    \end{align*}
    where we used, that $\eta\circ\hat{\eta}^{-1}$ is a permutation and that $ \eta(\hat\eta^{-1}(j))\leq j_r$ implies
    \begin{align*}
        M_{\hat\pi^{-1}\hat\eta^{-1}(j)}=M_{\hat\pi^{-1}(i)\eta^{-1}(\eta(\hat\eta^{-1}(j)))}\geq M_{\hat\pi^{-1}(i)\eta^{-1}(j_r)}\geq p+h\enspace .
    \end{align*}
    This concludes the proof.

\subsection{Proofs of Lemmas~\ref{Lemma:Graph} and~\ref{Lemma:PropertiesTrisection}}\label{Sec:AnalysisTrisection}

We will derive properties of the trisection under Property~\ref{Assum1} for two observations $\tilde{Y}^{(a)}$ and $\tilde{Y}^{(b)}$, some set $E\subset [n]$ of experts, some set $Q\subset [d]$ of questions and a directed graph $G\in\{0,1\}^{n\times n}$. First,  the average quantities 
\begin{align}\label{eq:def:average_over_experts}
	\ya{E}{j}\coloneqq\frac{1}{|E|}\sum_{i\in E} \tilde{Y}^{(a)}_{ij} \text{ and } \ma{E}{j}\coloneqq\frac{1}{|E|}\sum_{i\in E}M_{ij} \text{ for } j\in Q\enspace .
\end{align}
We will prove that with high probability
\begin{align}
		|\ya{E}{j}-\lambda_1\ma{E}{j}|\leq\rho\sqrt{\lambda_1/ |E|}\hspace{.5cm} \forall j\in Q\enspace. \label{Eq:ConcentrationCols}
\end{align}
The algorithm now uses the $\ya{E}{j}$ to construct sets of questions of the following form: for $j\in Q$ and $c\in \{2,3,\dots,\lceil \sqrt{\lambda_1|E|}/2\rho+1\rceil\}$, let
\begin{align}
	Q_j^l(c)\coloneqq \{j^{\prime} \in Q: 2\rho \sqrt{\lambda_1 /|E|}<\ya{E}{j'}-\ya{E}{j} \leq 2c\rho \sqrt{\lambda_1 /|E|}\}\enspace ,\notag\\
	Q_j^r(c)\coloneqq \{j^{\prime} \in Q: 2\rho \sqrt{\lambda_1/ |E|}< \ya{E}{j}-\ya{E}{j'} \leq 2c\rho \sqrt{\lambda_1/ |E|}\}\enspace ,\label{Eq:DefLeftRight} \\ \intertext{and}
	A_j\coloneqq \{j^{\prime}\in Q: |\ya{E}{j'}-\ya{E}{j}| \leq 2\rho \sqrt{\lambda_1/ |E|}\}\enspace .\notag
\end{align} 
The collection of potential sets of interest is defined as 
\begin{align*}
	\mathcal{Q}\coloneqq \Bigg( &\left\{Q_j^l(c): j\in Q,\ c=2,3,\dots,\lceil \sqrt{\lambda_1|E|}/2\rho+1\rceil\right\}\\
&\cup \left\{Q_j^r(c): j\in Q,\ c=2,3,\dots,\lceil \sqrt{\lambda_1|E|}/2\rho+1\rceil\right\}
     \cup \{A_j: j\in Q\}\cup \{Q\}\Bigg)\cap\{Q'\subseteq Q:\ |Q'|\geq \gamma\}\enspace .
\end{align*}
By Property~\ref{Assum1}, we have $\lambda_1 |Q|>4\rho^2/h^2$ by  implies $Q\in \mathcal{Q}$. Since the collection $\mathcal{Q}$ depends on the realization of $\tilde{Y}^{(a)}$, we use the independent observation $\tilde{Y}^{(b)}$ to define
\begin{align*}
	\yb{i}{Q'}\coloneqq \frac{1}{|Q'|}\sum_{j\in Q'}\tilde{Y}^{(b)}_{ij} \text{ and } \mb{i}{Q'}\coloneqq \frac{1}{|Q'|}\sum_{j\in Q'}M_{ij}\hspace{.5cm} \forall Q' \in \mathcal{Q},\ \forall i\in E\enspace .
\end{align*}
For our analysis, we will assume that we are on the event $\xi$ where the concentration inequality in Equation~\eqref{Eq:ConcentrationCols} holds and where also
\begin{align}
	|\yb{i}{Q'}-\lambda_1\mb{i}{Q'}|\leq \rho\sqrt{\lambda_1/|Q'|}\hspace{.5cm} \forall Q' \in \mathcal{Q},\ \forall i\in E\enspace , \label{Eq:ConcentrationRows}
\end{align}
namely
$$\xi =\left\{\mathrm{Equation~\eqref{Eq:ConcentrationCols}~ holds} \right\} \cap \left\{ \mathrm{Equation~\eqref{Eq:ConcentrationRows}~holds}\right\}.$$

$\xi$ is an event of high probability, as proven below.
\begin{lemma}\label{Lemma:GoodEvent}
	Under Property~\ref{Assum1} for $\tilde Y^{(a)}$, $\tilde Y^{(b)}$, $E$, $Q$ and $G$, it holds that $\mathbb{P}(\xi)\geq 1-\frac{|Q|\delta}{12\lceil\log_2(n\vee d)\rceil d}$.
\end{lemma}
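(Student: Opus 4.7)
\textbf{Proof plan for Lemma~\ref{Lemma:GoodEvent}.}
The event $\xi$ is the intersection of two concentration events: \eqref{Eq:ConcentrationCols} on the column averages of $\tilde Y^{(a)}$ across $j\in Q$, and \eqref{Eq:ConcentrationRows} on the row averages of $\tilde Y^{(b)}$ across $i\in E$ and $Q'\in\mathcal{Q}$. I would proceed by a two-step union bound, exploiting the independence of $\tilde Y^{(a)}$ and $\tilde Y^{(b)}$ so that, after conditioning on $\tilde Y^{(a)}$, the random family $\mathcal{Q}$ becomes a deterministic collection of subsets.

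First, decompose the centred observations. Writing $\epsilon_{ij}:=\mathbbm{1}\{N_{ij}^{(a)}>0\}\sim\mathrm{Ber}(\lambda_1)$, one has
\[
\tilde Y^{(a)}_{ij}-\lambda_1 M_{ij}=(\epsilon_{ij}-\lambda_1)M_{ij}+\epsilon_{ij}W_{ij}^{(a)},
\]
which is mean zero, independent across $(i,j)$, with variance at most $\lambda_1(1+\sigma^2)$ and sub-exponential/Bernstein scale of order $1\vee\sigma$. Applying Bernstein's inequality to $\sum_{i\in E}\bigl(\tilde Y^{(a)}_{ij}-\lambda_1 M_{ij}\bigr)$ at the level $\rho\sqrt{\lambda_1|E|}$ yields a tail bound of the form $2\exp\!\bigl(-\min\{\rho^2/(c_1(1\vee\sigma^2)),\,\rho\sqrt{\lambda_1|E|}/(c_2(1\vee\sigma))\}\bigr)$. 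The third item of Property~\ref{Assum1}, $\lambda_1|E|\geq 4\rho^2/h^2\geq 4\rho^2$, ensures $\sqrt{\lambda_1|E|}\geq 2\rho$, which places us in the sub-Gaussian regime where the quadratic exponent dominates. By the definition of $\rho$ in~\eqref{eq:def:rho}, the exponent is at least a sufficiently large constant times $\log\bigl(24nd(n\vee d)^{1/2}\lceil\log_2(n\vee d)\rceil/\delta\bigr)$, so each individual failure probability is bounded by $\bigl(24nd(n\vee d)^{1/2}\lceil\log_2(n\vee d)\rceil/\delta\bigr)^{-2}$. A union bound over $j\in Q$ controls the failure probability of \eqref{Eq:ConcentrationCols} by $|Q|\delta/\bigl(24\lceil\log_2(n\vee d)\rceil d\bigr)$.

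For \eqref{Eq:ConcentrationRows}, condition on $\tilde Y^{(a)}$, so that $\mathcal{Q}$ becomes a deterministic family indexed by the pairs $(j,c)$ with $j\in Q$ and $c\in\{2,\dots,\lceil\sqrt{\lambda_1|E|}/(2\rho)+1\rceil\}$, together with the $A_j$ and $Q$ itself; in particular $|\mathcal{Q}|\leq 2|Q|\lceil\sqrt{\lambda_1|E|}/(2\rho)+1\rceil+|Q|+1$. Applying the same Bernstein argument to $\sum_{j\in Q'}\bigl(\tilde Y^{(b)}_{ij}-\lambda_1 M_{ij}\bigr)$ at level $\rho\sqrt{\lambda_1|Q'|}$, the requirement $|Q'|\geq\gamma$ imposed in the definition of $\mathcal{Q}$, combined with~\eqref{eq:def:gamma}, gives $\lambda_1|Q'|\gtrsim \rho^2/(1\vee\sigma)^2$, which again suffices to keep us in the sub-Gaussian regime. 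The individual failure probability is therefore again at most $\bigl(24nd(n\vee d)^{1/2}\lceil\log_2(n\vee d)\rceil/\delta\bigr)^{-2}$, and the number of events to union bound over is $|E|\cdot|\mathcal{Q}|\lesssim |Q|n^{3/2}$. Since $|Q|n^{3/2}\lesssim nd(n\vee d)^{1/2}$ in both regimes $n\leq d$ and $n\geq d$, the union bound for \eqref{Eq:ConcentrationRows} also yields failure probability at most $|Q|\delta/\bigl(24\lceil\log_2(n\vee d)\rceil d\bigr)$. Summing the two bounds gives $\mathbb{P}(\xi^c)\leq|Q|\delta/\bigl(12\lceil\log_2(n\vee d)\rceil d\bigr)$.

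The main technical obstacle, and the reason the statement holds at all, is verifying that the linear (sub-exponential) tail term in Bernstein's inequality is dominated by the quadratic term in \emph{both} applications. This domination is precisely what is enforced by the bullet $\lambda_1|E|\geq 4\rho^2/h^2$ in Property~\ref{Assum1} and by the size threshold $|Q'|\geq\gamma$ built into the definition of $\mathcal{Q}$; absent these, the sub-exponential regime would take over and the mild polylogarithmic factor baked into $\rho$ and $\gamma$ would no longer absorb the union bound over $\mathcal{Q}$.
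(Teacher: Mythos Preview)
Your approach is correct and essentially mirrors the paper's own proof: the paper packages your inline Bernstein-type argument as a separate concentration lemma (Lemma~\ref{Lemma:ConcentrationSE}) and then applies it at the individual level $\delta'=\delta/(12\lceil\log_2(n\vee d)\rceil nd(n\vee d)^{1/2})$, performing the same cardinality count $|E|\cdot|\mathcal{Q}|+|Q|\leq |Q||E|^{3/2}\leq |Q|n(n\vee d)^{1/2}$ and the same union bound. The role you correctly identify for the constraints $\lambda_1|E|\geq 4\rho^2/h^2$ and $|Q'|\geq\gamma$ (forcing the sub-Gaussian regime in Bernstein) is exactly what the paper uses, via the observation that both imply the relevant set has size at least $\gamma$, which is precisely the threshold at which the linear term in Lemma~\ref{Lemma:ConcentrationSE} is absorbed by the square-root term.
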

\begin{proof}[Proof of Lemma~\ref{Lemma:GoodEvent}]
	Consider $Q'\in \mathcal{Q}$. Since $|Q'|\geq \gamma=2\log\left(24\lceil \log_2(n\vee d)\rceil nd(n\vee d)^{1/2}/\delta\right)/{\lambda_1e^2}$, Lemma~\ref{Lemma:ConcentrationSE} gives us for any $i\in E$, with probability of at least $1-\frac{\delta}{12\lceil \log_2(n\vee d)\rceil nd(n\vee d)^{1/2}}$
	\begin{align*}
		|\yb{i}{Q'}-\lambda_1\mb{i}{Q'}|&\leq  \sqrt{2(1\vee \sigma^2) e^2\log\left(24\lceil \log_2(n\vee d)\rceil nd(n\vee d)^{1/2}/\delta\right)\lambda_1/|Q'|}\\&\hphantom{<}+2(1\vee \sigma)\log\left(24\lceil \log_2(n\vee d)\rceil nd(n\vee d)^{1/2}/\delta\right)/|Q'|\\
		&\leq \rho\sqrt{\lambda_1 /|Q'|}\enspace ,
	\end{align*}
	since $\rho = (1\vee \sigma)e\sqrt{8\log\left(24\lceil \log_2(n \vee d)\rceil nd(n\vee d)^{1/2}/\delta\right)}$.

    Note that by Property~\ref{Assum1} we also know $\lambda_1|E|>4\rho^2/h^2$ which again implies that $|E|\geq \gamma$. For any $j\in Q$, this implies by Lemma~\ref{Lemma:ConcentrationSE} that with probability of at least $1-\frac{\delta}{12\lceil\log_2(n\vee d)\rceil nd(n\vee d)^{1/2}}$
    \begin{align*}
    		|\ya{E}{j}-\lambda_1\ma{E}{j}|&\leq  \sqrt{2(1\vee \sigma^2) e^2\log\left(24\lceil \log_2(n\vee d)\rceil nd(n\vee d)^{1/2}/\delta\right)\lambda_1/|E|}\\
      &\hphantom{<}+2(1\vee \sigma)\log\left(24\lceil \log_2(n\vee d)\rceil nd(n\vee d)^{1/2}/\delta\right)/|E|\\
		&\leq \rho\sqrt{\lambda_1 /|E|}\enspace .
	\end{align*}
	Recall that the collection    $\mathcal{Q}$ contains the set $Q$ and sets of the form $A_j$ and $Q_j^l(c)$, $Q_j^r(c)$ with $j\in Q$ and $c\in\{2,3,\dots,\lceil \sqrt{\lambda_1|E|}/2\rho+1\rceil\}$. Therefore, the cardinality of $\mathcal{Q}$ is bounded by 
    \begin{align*}
        |Q|\cdot(2\lceil\sqrt{\lambda_1|E|}/2\rho\rceil+1)+1 &\leq |Q|\left( \sqrt{\lambda_1|E|}/\rho+3\right)+1\\
        &\leq 3|Q|\sqrt{\lambda_1|E|}/\rho \\
        &\leq |Q||E|^{1/2}/2 \enspace ,
    \end{align*}
    where we used $1\leq 1/h<\sqrt{\lambda_1 |E|}/2 \rho$, $\lambda_1<1$ and $\rho\geq e\sqrt{8\log(24)}>6$. At the same time, we have again $|E|>4\rho^2/h^2\lambda_1>4$, so $|Q|\leq |Q||E|^{3/2}/2$. In the end, the total number of inequalities in \eqref{Eq:ConcentrationCols} and \eqref{Eq:ConcentrationRows} is bounded by 
    \begin{align*}
        |E|\cdot|\mathcal{Q}|+|Q|\leq |Q||E|^{3/2}\leq |Q|n(n\vee d)^{1/2}\enspace .
    \end{align*}
Consequently, using the union bound, we have
	\begin{align*}
		\mathbb{P}(\xi)\geq1-\frac{|Q|\delta}{12\lceil \log_2(n\vee d)\rceil d}\enspace ,
	\end{align*}
 which concludes the proof.
\end{proof}

\begin{proof}[Proof of Lemma~\ref{Lemma:Graph}]
	Consider $i,i'\in E$ with $\pi(i)\geq \pi(i')$. Under Property~\ref{Assum1}, we have $G_{ii'}\neq 1$. For every $Q'\in\mathcal{Q}$, the bi-isotonicity assumption gives us $\mb{i}{Q'}\leq \mb{i'}{Q'}$ and on $\xi$, in particular by \eqref{Eq:ConcentrationRows}, it holds that
	\begin{align*}
		\yb{i}{Q'}-\yb{i'}{Q'}\leq \yb{i}{Q'}-\lambda_1\mb{i}{Q'}+\lambda_1\mb{i'}{Q'}-\yb{i'}{Q'}\leq 2\rho\sqrt{\lambda_1/|Q'|}\enspace .
	\end{align*}
	So it remains $\tilde G_{ii'}=G_{ii'}\neq 1 $ also after applying Algorithm \ref{Algo:Trisect}.
\end{proof}

\begin{proof}[Proof of Lemma~\ref{Lemma:PropertiesTrisection}]
	Consider $i\in E$ with $\pi(\overline i)\leq \pi(i)$. Under Property~\ref{Assum1} we can apply Lemma~\ref{Lemma:Graph} and obtain on $\xi$ that
	\begin{align*}
		\lfloor|E|/2\rfloor =|\{i'\in E:\ \pi(\overline i)<\pi(i')\}|\geq |\{i'\in E:\ \pi( i)<\pi(i')\}| \geq |\{i'\in E: \ \tilde G_{ii'}=1\}|\enspace .
	\end{align*}
	By the definition in \eqref{Eq:DefTrisection}, this implies $i\notin O$. Conversely, this means that if $i_O\in O$, we have $\pi(i_O)< \pi(\overline i)$. The second part of \eqref{Eq:RelationTrisectionMedian} follows in the same manner.
\end{proof}
\subsection{Proof of Lemma~\ref{Lemma:Blocks}}\label{Sec:TrisectErrorBound}

We start with a few lemmas and notation.
Recall that we defined for each $Q'\in\mathcal{Q}$ the empirical median $\med{Q'}$ via the property
	\begin{align*}
	|\{i\in E:\ \yb{i}{Q'}> \yb{\med{Q'}}{Q'}\}|< |E|/2 \text{ and } |\{i\in E:\ \yb{i}{Q'}< \yb{\med{Q'}}{Q'}\}|\leq |E|/2
\end{align*}
as an empirical counterpart to $\overline i$ defined in \eqref{Eq:DefMedian}.

The following lemma is a direct consequence of the definition of our trisection scheme in Equation~\eqref{Eq:DefTrisection}.
\begin{lemma}\label{Lemma:TrisectionEmpiricalMedian}
	Under Property~\ref{Assum1} for $\tilde Y^{(a)}$, $\tilde Y^{(b)}$, $E$, $Q$ and $G$, on the event $\xi$, it holds for any $i\in E$:
	\begin{itemize}
		\item If there exists $Q'\in\mathcal{Q}$ s.t. $\yb{i}{Q'}-\yb{\med{Q'}}{Q'}>2\rho\sqrt{\lambda_1/|Q'|}$, then $i\in O$.
		\item If there exists $Q'\in\mathcal{Q}$ s.t. $\yb{\med{Q'}}{Q'}-\yb{i}{Q'}>2\rho\sqrt{\lambda_1/|Q'|}$, then $i\in I$.
	\end{itemize}
	So that:
	\begin{itemize}
		\item If $i\in P$, then $|\yb{i}{Q'}-\yb{\med{Q'}}{Q'}|\leq2\rho\sqrt{\lambda_1/|Q'|}\ \forall Q'\in\mathcal{Q}$.
	\end{itemize}
\end{lemma}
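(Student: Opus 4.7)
The plan is to directly unpack the trisection rule~\eqref{Eq:DefTrisection} together with the graph-update criterion in Algorithm~\ref{Algo:Graph}. Recall that $\mathcal Q$ was defined precisely so that each $Q'\in\mathcal Q$ (i) has size $\geq\gamma$, matching the gating test on line~\ref{Line:GraphSize}, and (ii) is among the subsets on which \texttt{ScanAndUpdate} invokes \texttt{UpdateGraph} with the independent observation $\tilde Y^{(b)}$. Consequently, for any $Q'\in\mathcal Q$ and any $i,i'\in E$ with $\yb{i}{Q'}-\yb{i'}{Q'}>2\rho\sqrt{\lambda_1/|Q'|}$, the update fires $\tilde G_{ii'}=1$; everything else is then a counting argument for the fan-out of such comparisons around the empirical median $\med{Q'}$.

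For the first bullet I would take $Q'\in\mathcal Q$ with $\yb{i}{Q'}-\yb{\med{Q'}}{Q'}>2\rho\sqrt{\lambda_1/|Q'|}$. The first defining property of $\med{Q'}$ yields
\[
\big|\{i'\in E:\ \yb{i'}{Q'}\leq\yb{\med{Q'}}{Q'}\}\big|>|E|/2,
\]
and for every such $i'$ the chain $\yb{i}{Q'}-\yb{i'}{Q'}\geq\yb{i}{Q'}-\yb{\med{Q'}}{Q'}>2\rho\sqrt{\lambda_1/|Q'|}$ triggers $\tilde G_{ii'}=1$. Summing gives $\sum_{i'\in E}\tilde G_{ii'}>|E|/2$, which is exactly the definition of $i\in O$ in~\eqref{Eq:DefTrisection}. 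The second bullet is the mirror image: the second defining property of $\med{Q'}$ supplies a strict majority of experts with partial sum $\geq\yb{\med{Q'}}{Q'}$ (using the tie-breaking flexibility in the choice of $\med{Q'}$ when $|E|$ is even), and reverse transitivity places $\tilde G_{i'i}=1$ for each of them, yielding $i\in I$. The final ``so that'' item is the contrapositive applied to $i\in P=E\setminus(O\cup I)$.

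The proof is essentially mechanical and I do not anticipate a real obstacle. The only point requiring care is that the trisection uses strict majorities whereas the median definition is asymmetric; this is handled by the latitude in choosing $\med{Q'}$ among valid candidates, noted right below the definition. The event $\xi$ does not actually enter the argument here, since the graph-update and trisection rules are deterministic functions of the observations; $\xi$ is merely the standing backdrop carried in from Property~\ref{Assum1} and used in the neighbouring lemmas for concentration.
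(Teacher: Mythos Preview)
Your argument tracks the paper's proof essentially line for line: count the experts on one side of the empirical median via its defining inequalities, observe that each such expert triggers a graph edge in \texttt{UpdateGraph} (since every $Q'\in\mathcal Q$ passes the size test on line~\ref{Line:GraphSize}), and read off membership in $O$ or $I$ from the majority rule~\eqref{Eq:DefTrisection}; the $P$ statement is the contrapositive. Your observation that $\xi$ is not actually invoked here is also correct.

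One point where your fix does not land: you rightly flag the asymmetry between the strict majority in~\eqref{Eq:DefTrisection} and the non-strict inequality in the second defining property of $\med{Q'}$, but the proposed remedy---exploit the latitude in choosing $\med{Q'}$---does not work in general. When $|E|$ is even and the values $\yb{i'}{Q'}$ are pairwise distinct, the empirical median is \emph{forced} to be the element of rank $|E|/2$ from the top, and then exactly $|E|/2$ experts lie weakly above it, one short of the strict majority needed for $i\in I$. (With four distinct values, for instance, only the second-largest satisfies both defining inequalities, and $|\{i':\yb{i'}{Q'}\geq\yb{\med{Q'}}{Q'}\}|=2$.) The tie-breaking remark below the median definition concerns ties among the \emph{values} $\yb{i'}{Q'}$, not a freedom that repairs this parity issue. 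The paper's own proof buries the same wrinkle under ``analogously'', so you are in good company, and the discrepancy is harmless for the downstream rate bounds; but it is worth knowing that the clean resolution is to symmetrise the median convention or relax the trisection threshold, not tie-breaking.
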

\begin{proof}[Proof of Lemma~\ref{Lemma:TrisectionEmpiricalMedian}]
	We will first prove the first claim. Assume that there exists $Q'\in \mathcal{Q}$ such that
	\begin{align*}
		\yb{i}{Q'}-\yb{\med{Q'}}{Q'}>2\rho\sqrt{\lambda_1/|Q'|}\enspace.
	\end{align*} 
	Note that by definition, the number of all $i'$ such that $\yb{i'}{Q'}\leq \yb{i}{\med{Q'}}$ is at least $|E|/2$. For each such $i'$ it holds
	\begin{align*}
		\yb{i}{Q'}-\yb{i'}{Q'}>2\rho\sqrt{\lambda_1/|Q'|}\enspace ,
	\end{align*}
	and consequently $\tilde G_{ii'}=1$. This proves $i\in O$. The second claim follows analogously and the third claim is just the contraposition of the first two statements.
\end{proof}

Let us introduce the sets $\tilde{O}$, $\tilde{P}$, and $\tilde{I}$ which are to be interpreted as completions of $O$, $P$, and $I$. 
\begin{align}
	\tilde{O}\coloneqq \left\{i\in [n]:\ \min_{i'\in O}\pi(i')\leq \pi(i)\leq \max_{i'\in O}\pi(i')\right\}\enspace ,\notag\quad 
	\tilde{P}\coloneqq \left\{i\in [n]:\ \min_{i'\in P}\pi(i')\leq \pi(i)\leq \max_{i'\in P}\pi(i')\right\}\enspace ,\notag\\
		\tilde{I}\coloneqq \left\{i\in [n]:\ \min_{i'\in I}\pi(i')\leq \pi(i)\leq \max_{i'\in I}\pi(i')\right\}\notag\enspace .
\end{align}
We note the following relationships between the defined sets.
\begin{lemma}\label{Cor:SubsetOfTilde}
	Under Property~\ref{Assum1} for $\tilde Y^{(a)}$, $\tilde Y^{(b)}$, $E$, $Q$ and $G$, on the event $\xi$, it holds that $O\subseteq \tilde{O}$, $P\subseteq \tilde{P}$ and $I\subseteq \tilde{I}$. Besides, we have $\tilde{P}\cap E\subseteq \overline{P}$.
\end{lemma}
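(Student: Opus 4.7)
The plan is to dispatch the three inclusions $O\subseteq \tilde O$, $P\subseteq \tilde P$, $I\subseteq \tilde I$ by pure definition-chasing, and then attack the nontrivial statement $\tilde P\cap E\subseteq \overline P$ via a bi-isotonicity sandwich argument coupled with the concentration inequality~\eqref{Eq:ConcentrationRows} that holds on $\xi$ (and which was used in Lemma~\ref{Lemma:TrisectionEmpiricalMedian}). The event $\xi$ is assumed throughout.

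For the trivial inclusions, any $i\in O$ has $\pi(i)$ bounded below by $\min_{i'\in O}\pi(i')$ and above by $\max_{i'\in O}\pi(i')$, so $i\in \tilde O$ by definition; the same argument handles $P\subseteq \tilde P$ and $I\subseteq \tilde I$. (If $P=\emptyset$, the second inclusion is vacuous, and the main claim below is also vacuous.)

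For the main claim, fix $i\in \tilde P\cap E$ and assume $P\neq \emptyset$, so there exist $i_1,i_2\in P$ with $\pi(i_1)\leq \pi(i)\leq \pi(i_2)$. Because $M\in \mathbb{C}_{\mathrm{Biso}}(\pi,\eta)$, row-wise monotonicity in $\pi$-order gives $M_{i_1,j}\geq M_{i,j}\geq M_{i_2,j}$ for every $j$, hence $\mb{i_1}{Q'}\geq \mb{i}{Q'}\geq \mb{i_2}{Q'}$ for any $Q'\in\mathcal{Q}$. Now fix $Q'\in\mathcal{Q}$. Since $i,i_1,i_2\in E$, the concentration bound~\eqref{Eq:ConcentrationRows} holds simultaneously for each of them on $\xi$, which yields
\begin{align*}
\yb{i_1}{Q'}\geq \lambda_1\mb{i_1}{Q'}-\rho\sqrt{\lambda_1/|Q'|}\geq \lambda_1\mb{i}{Q'}-\rho\sqrt{\lambda_1/|Q'|}\geq \yb{i}{Q'}-2\rho\sqrt{\lambda_1/|Q'|},
\end{align*}
and symmetrically $\yb{i_2}{Q'}\leq \yb{i}{Q'}+2\rho\sqrt{\lambda_1/|Q'|}$.

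It remains to pin $\yb{i_1}{Q'}$ and $\yb{i_2}{Q'}$ close to $\yb{\med{Q'}}{Q'}$: since $i_1,i_2\in P$, Lemma~\ref{Lemma:TrisectionEmpiricalMedian} gives $|\yb{i_\alpha}{Q'}-\yb{\med{Q'}}{Q'}|\leq 2\rho\sqrt{\lambda_1/|Q'|}$ for $\alpha\in\{1,2\}$. Chaining the four inequalities produces $|\yb{i}{Q'}-\yb{\med{Q'}}{Q'}|\leq 4\rho\sqrt{\lambda_1/|Q'|}\leq 8\rho\sqrt{\lambda_1/|Q'|}$, and since $Q'\in\mathcal{Q}$ was arbitrary, we conclude $i\in \overline P$ by definition~\eqref{Eq:DefConservativeTrisection}. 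There is no real obstacle here; the only delicate point is to make sure that the concentration hypothesis~\eqref{Eq:ConcentrationRows} (valid only for $i\in E$ and $Q'\in \mathcal{Q}$) is applied to objects that legitimately lie in those index sets, which is precisely why we restrict to $\tilde P\cap E$ rather than all of $\tilde P$.
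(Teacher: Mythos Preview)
Your proof is correct and follows essentially the same approach as the paper: the trivial inclusions are dispatched by definition, and the main inclusion $\tilde P\cap E\subseteq \overline P$ is obtained by sandwiching $\mb{i}{Q'}$ between the averages of two extremal elements of $P$ via bi-isotonicity, applying the concentration~\eqref{Eq:ConcentrationRows}, and then Lemma~\ref{Lemma:TrisectionEmpiricalMedian} to reach the $4\rho\sqrt{\lambda_1/|Q'|}$ bound. The paper writes $i_{\min},i_{\max}$ where you write $i_1,i_2$, but the argument is otherwise the same.
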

\begin{proof}[Proof of Lemma~\ref{Cor:SubsetOfTilde}]
	The first statement (three first inclusions) follows trivially from the respective definitions. We therefore focus on the second statement. Consider $i\in \tilde{P}\cap E$ and let $i_{\max}\coloneqq \mathrm{argmax}_{i'\in P}\pi(i')$ and $i_{\min}\coloneqq \mathrm{argmin}_{i'\in P}\pi(i')$. By definition it holds that
	\begin{align*}
		\pi(i_{\min})\leq \pi(i)\leq \pi(i_{\max})\enspace .
	\end{align*}
	From Lemma~\ref{Lemma:TrisectionEmpiricalMedian} we know that
	\begin{align}
		\left|\yb{i_{\max}}{Q'}-\yb{\med{Q'}}{Q'}\right|\leq 2\rho\sqrt{\lambda_1/|Q'|}\hspace{.5cm}\forall Q'\in \mathcal{Q} \enspace , \label{Eq:DistimaxMedian}\\ \intertext{and}
		\left|\yb{i_{\min}}{Q'}-\yb{\med{Q'}}{Q'}\right|\leq 2\rho\sqrt{\lambda_1/|Q'|}\hspace{.5cm}\forall Q'\in \mathcal{Q}\enspace . \notag
	\end{align}
	Because of the bi-isotonicity assumption, we have
	\begin{align*}
		\mb{i_{\max}}{Q'}\leq \mb{i}{Q'}\leq \mb{i_{\min}}{Q'} \hspace{.5cm} \forall Q'\in \mathcal{Q}\enspace .
	\end{align*}
	Consequently, by \eqref{Eq:ConcentrationRows}, it holds that
	\begin{align*}
		\yb{i_{\max}}{Q'}-\yb{i}{Q'}\leq \yb{i_{\max}}{Q'}-\lambda_1\mb{i_{\max}}{Q'}+\lambda_1\mb{i}{Q'}-\yb{i}{Q'}\leq 2\rho\sqrt{\lambda_1/|Q'|}\hspace{.5cm}\forall Q'\in \mathcal{Q}\enspace , \\ \intertext{and}
		\yb{i}{Q'}-\yb{i_{\min}}{Q'}\leq \yb{i}{Q'}-\lambda_1\mb{i}{Q'}+\lambda_1\mb{i_{\min}}{Q'}-\yb{i_{\min}}{Q'}\leq 2\rho\sqrt{\lambda_1/|Q'|}\hspace{.5cm}\forall Q'\in \mathcal{Q}\enspace .
	\end{align*}
	Combined with \eqref{Eq:DistimaxMedian}, this yields
	\begin{align*}
		\left|\yb{i}{Q'}-\yb{\med{Q'}}{Q'}\right|\leq 4\rho\sqrt{\lambda_1/|Q'|}\hspace{.5cm}\forall Q'\in\mathcal{Q}\enspace ,
	\end{align*}
 which concludes the proof by definition of $\bar P$.
\end{proof}

Next, we prove a lemma, that relates the empirical medians $\med{Q'}$ to the actual median $\overline{i}$.

\begin{lemma}\label{Lemma:Median}
	Let $N\in \mathbb{N}^*$, $a_1\geq a_2 \geq \dots \geq a_N$ and $b_1,b_2,\dots,b_N$ such that $|a_i-b_i|\leq R $ for $i=1,\dots,N$. Consider $\iota$ such that $|\{i:\ b_i> b_{\iota}\}|< N/2$ and $|\{i:\ b_i< b_{\iota}\}|\leq N/2$. Then $|a_{\lceil N/2\rceil}-a_{\iota}|\leq 2R$.
\end{lemma}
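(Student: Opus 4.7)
The plan is to prove the lemma by contradiction, splitting into two symmetric cases corresponding to $a_\iota$ being too small or too large compared to $a_{\lceil N/2\rceil}$. The key observation is that the perturbation bound $|a_i-b_i|\leq R$ means that whenever two true values $a_i,a_j$ differ by more than $2R$, their noisy counterparts $b_i,b_j$ are still strictly ordered in the same direction. This monotonicity preservation will let me translate the defining median property of $\iota$ (in the $b$-world) into a statement about cardinalities in the $a$-world, which will clash with the simple counting coming from the sorted structure of $a$.

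First I would suppose $a_\iota < a_{\lceil N/2\rceil}-2R$. Since the $a_i$ are non-increasing, the set $S:=\{i\leq \lceil N/2\rceil\}$ has exactly $\lceil N/2\rceil$ elements and each of them satisfies $a_i\geq a_{\lceil N/2\rceil}$. Using $b_i\geq a_i-R$ and $b_\iota\leq a_\iota+R$, each $i\in S$ yields $b_i > b_\iota$. This gives $|\{i:b_i>b_\iota\}|\geq \lceil N/2\rceil\geq N/2$, contradicting the assumption on $\iota$.

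For the other direction, suppose $a_\iota > a_{\lceil N/2\rceil}+2R$. Now the set $T:=\{i\geq \lceil N/2\rceil\}$ has $N-\lceil N/2\rceil+1=\lfloor N/2\rfloor+1$ elements, all with $a_i\leq a_{\lceil N/2\rceil}$. Symmetrically, $b_i\leq a_i+R < a_\iota-R\leq b_\iota$ for each $i\in T$, so $|\{i:b_i<b_\iota\}|\geq \lfloor N/2\rfloor+1 > N/2$, again contradicting the assumption on $\iota$. Combining both cases yields $|a_\iota-a_{\lceil N/2\rceil}|\leq 2R$.

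I don't expect any real obstacle here; the whole content is the elementary monotonicity-preservation argument and a careful parity check showing that $\lfloor N/2\rfloor+1 > N/2$ holds for both even and odd $N$ (which gives the strict inequality needed to contradict the non-strict bound $\leq N/2$ in the second case).
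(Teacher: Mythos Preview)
Your proof is correct and follows essentially the same approach as the paper's: a contradiction argument split into the two cases $a_{\lceil N/2\rceil}-a_\iota>2R$ and $a_\iota-a_{\lceil N/2\rceil}>2R$, in each case using the perturbation bound to show that too many $b_i$ lie strictly on one side of $b_\iota$. The paper's version is terser (it only spells out one direction and appeals to symmetry), while you additionally verify the parity arithmetic $\lfloor N/2\rfloor+1>N/2$ needed in the second case; otherwise the arguments are identical.
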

\begin{proof}[Proof of Lemma~\ref{Lemma:Median}]
	We prove the statement by contradiction. Assume first that $a_{\lceil N/2\rceil}-a_{\iota}>2R$. Then, for $i\leq \lceil N/2 \rceil$, we have
	\begin{align*}
		b_i - b_{\iota}\geq a_i-a_{\iota}-2R \geq a_{\lceil N/2\rceil}-a_{\iota}-2R>0\enspace ,
	\end{align*}
	so $b_i > b_{\iota}$, which is a contradiction to $|\{i:\ b_i> b_{\iota}\}|< N/2$. So that $a_{\lceil N/2\rceil}-a_{\iota} \leq 2R$. Similarly, we prove that $a_{\iota} - a_{\lceil N/2\rceil} \leq 2R$ and conclude the proof.
\end{proof}

\begin{proof}[Proof of Lemma~\ref{Lemma:Blocks}]
For the entire proof, assume that Property~\ref{Assum1} is satisfied, and that we are on the event $\xi$. To simplify notation, we assume in what follows w.l.o.g. that $\pi=\mathrm{id}_{[n]}$ and $\eta =\mathrm{id}_{[d]}$.

Property~\ref{Assum1} states that 
\begin{align*}
Q^*(E)=\{j\in[d]:\ \max_{i\in E}M_{ij}\geq p+h,\ \min_{i\in E}M_{ij}\leq p-h\}\subseteq Q\enspace .
\end{align*}
Hence, for any $j\in[d]\setminus Q$, it holds that 
\begin{align*}
	\max_{i\in \overline P}M_{ij}\leq \max_{i\in E}M_{ij}<p+h \hspace{.5cm}\text{or}\hspace{.5cm}
	\min_{i\in \overline P}M_{ij}\geq \min_{i\in E}M_{ij}>p+h\enspace , \\ \intertext{and consequently}
	|\{i\in \overline{P}:\ M_{ij}\leq p-h\}|\wedge |\{i\in \overline{P}:\ M_{ij}\geq p+h\}|=0\enspace.
\end{align*}
This proves the equality of the sums in the statement of the lemma.

Throughout this proof, we will consider the quantities $\overline{j}\coloneqq \max\{j \in Q:\ M_{\overline{i}j}\geq p\}$ and 
\begin{align}
    \overline{O}\coloneqq \left\{i\in E:\ \exists Q'\in\mathcal{Q}\text{ s.t. } \yb{i}{Q'}-\yb{\med{Q'}}{Q'}> 8\rho\sqrt{\lambda_1/|Q'|}\ \right\}\enspace . \label{Def:Conservative_Trisection}
\end{align}

 Recall that we have defined $\ya{E}{j}$ in~\eqref{eq:def:average_over_experts}, which we have used to define in \eqref{Eq:DefLeftRight} the set $A_{\overline j}=\{j^{\prime}\in Q: |\ya{E}{j'}-\ya{E}{\overline j}| \leq 2\rho \sqrt{\lambda_1/ |E|}\}$ and for $c\in \{2,3,\dots,\lceil \sqrt{\lambda_1|E|}/2\rho+1\rceil\}$ the sets $Q_{\overline j}^l(c)= \{j^{\prime} \in Q: 2\rho \sqrt{\lambda_1 /|E|}<\ya{E}{j'}-\ya{E}{\overline j} \leq 2c\rho \sqrt{\lambda_1 /|E|}\}$ and $Q_{\overline j}^r(c)=\{j^{\prime} \in Q: 2\rho \sqrt{\lambda_1/ |E|}< \ya{E}{\overline j}-\ya{E}{j'} \leq 2c\rho \sqrt{\lambda_1/ |E|}\}$. Let us define further
\begin{align*}
c_r^*\coloneqq \inf\{c\in\{1,2,\dots,\lceil \sqrt{\lambda_1|E|}/2\rho\rceil\}: \lambda_1| Q_{\overline{j}}^r(c+1)|>576\rho^2/h^2\}	\enspace , \\
c_l^*\coloneqq \inf\{c\in\{1,2,\dots,\lceil \sqrt{\lambda_1|E|}/2\rho\rceil\}: \lambda_1| Q_{\overline{j}}^l(c+1)|>576\rho^2/h^2\}\enspace ,
\end{align*}
with $\inf \emptyset =\infty$, $\infty +1 =\infty$ and the conventions $Q_{\overline j}^l(\infty)= \{j^{\prime} \in Q: 2\rho \sqrt{\lambda_1 /|E|}<\ya{E}{j'}-\ya{E}{\overline j} \}$, $Q_{\overline j}^r(\infty)=\{j^{\prime} \in Q: 2\rho \sqrt{\lambda_1/ |E|}< \ya{E}{\overline j}-\ya{E}{j'} \}$ and $Q_{\overline{j}}^r(1)=Q_{\overline{j}}^l(1)=\emptyset$.

 We will provide an error decomposition on the sets of questions
 	\begin{align}
		&\underline{R}\coloneqq Q_{\overline{j}}^r(c_r^*)\enspace , \quad R\coloneqq Q_{\overline{j}}^r(c_r^*+1)\enspace ,  \quad \overline{R}\coloneqq Q_{\overline{j}}^r(c_r^*+2)\enspace ,  \quad  \Delta_R\coloneqq \overline{R}\setminus \underline{R}\enspace , \label{Eq:DefSubsets} \\
  &\underline{L}\coloneqq Q_{\overline{j}}^l(c_l^*)\enspace ,  \quad L \coloneqq Q_{\overline{j}}^l(c_l^*+1)\enspace ,  \quad  \overline{L} \coloneqq Q_{\overline{j}}^l(c_l^*+2)\enspace ,  \quad  \Delta_L\coloneqq \overline{L}\setminus \underline{L}\enspace ,\notag \\
  &A\coloneqq A_{\overline{j}}\enspace, \notag
	\end{align} 
	\begin{figure}[h]
	\includegraphics{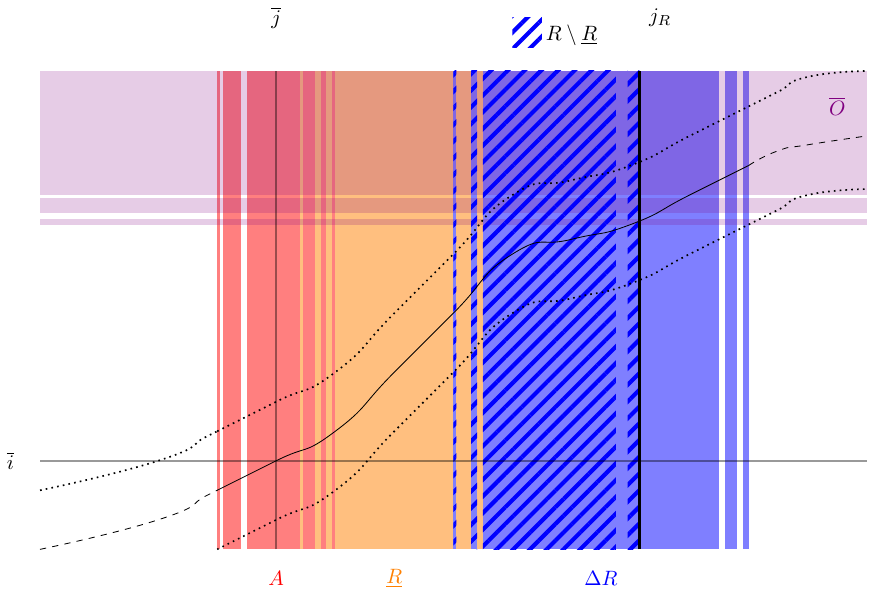}
\caption{Illustration of a part of $M_{E,Q}$. The dotted curves separate two areas of the matrix: That where the entries are at least $p+h$ (top left) and that where they are at most $p-h$ (bottom right). The curve in between separates values that are at least $p$ from values that are smaller than $p$. 
Question $\overline j$   is the ``last'' question for which the median expert $\overline{i}$ has value at least $p$. Our algorithm relies on two detection steps: First, we use $\tilde Y^{(a)}$ and the corresponding column averages $\yb{E}{j}$, to detect areas of interest left and right of $\overline j$ (see \eqref{Eq:DefSubsets} for a definition of these sets). Then, we detect from $\tilde Y^{(b)}$, whether an expert is above or below $\overline i$ and for that purpose, we focus on the following sets of questions: First, $A$ corresponds to the questions for which we cannot detect from our observation, whether they are left or right of $\overline j$. Second, $\underline{R}$ contains questions that are provably right $\overline{j}$, but the size of $\underline{R}$ is too small for reliably detecting experts above $\overline{i}$ from the given observation $\tilde Y^{(b)}$. Though, we can detect those relying on averages on the larger sets of questions $R$ and $\overline R$. As a consequence, experts that differ from $\overline i$ on this whole areas (in particular have all values at least $p+h$) are assigned to $\overline{O}$ by \eqref{Def:Conservative_Trisection}. Following this, experts that remained in $\overline{P}$ cannot ``perform better'' than $p+h$ on every question in $R\setminus \underline{R}$ and consequently there exists $j_R\in R$ with $M_{ij_R}<p+h$ for all $i\in \overline P$. By the bi-isotonicity, only questions $j< j_R$ can contribute to the error we want to bound, which is why we extend our analysis to $\Delta_R=\overline R\setminus R$.}
\label{Fig:Trisection}
\end{figure}

We will show following upper bounds:
	\begin{itemize}
		\item $	\tilde{\mathcal{R}}_{p,h,\overline P,\Delta_R}\leq 144\rho^2(|E|\vee|Q|)/\lambda_1h^2$, \quad $	\tilde{\mathcal{R}}_{p,h,\overline P,\Delta_L}\leq 144\rho^2(|E|\vee|Q|)/\lambda_1h^2$,
		\item $ \tilde{\mathcal{R}}_{p,h,\overline P,\underline R} \leq 576\rho^2(|E|\vee |Q|)/\lambda_1h^2$,  \quad  $ \tilde{\mathcal{R}}_{p,h,\overline P,\underline L} \leq 576\rho^2(|E|\vee |Q|)/\lambda_1h^2$,
		\item $\tilde{\mathcal{R}}_{p,h,\overline P,A} \leq 2304\rho^2(|E|\vee |Q|)/\lambda_1h^2$ and
		\item $\tilde{\mathcal{R}}_{p,h,\overline P,Q\setminus(\overline L\cup A\cup\overline R)} =0$.
	\end{itemize}
	Summing them up concludes the proof. We want to point out, that in the case $c_r^*=1$ or $c_l^*=1$, parts of the analysis become trivial, this is even more true if $c_r^*=\infty$ or $c_l^*=\infty$. In the case $c_r^*=1$ for example, we have $\underline R =\emptyset$ and therefore $\tilde{\mathcal{R}}_{p,h,\overline P,\underline R}=0$. In the case $c_r^*=\infty$, we obtain $\Delta_R=\emptyset$ which implies $\tilde{\mathcal{R}}_{p,h,\overline P,\Delta_R}=0$. Further, note that by \eqref{Eq:ConcentrationCols} and $M$ taking values in $[0,1]$, it holds
 \begin{align*}
     \ya{E}{\overline j}-\ya{E}{j}&\leq 2\rho\sqrt{\lambda_1  /|E|}+\lambda_1\ma{E}{\overline j}-\lambda_1\ma{E}{j}\\
     &\leq (2+\sqrt{\lambda_1|E|}/\rho)\rho\sqrt{\lambda_1/|E|}\\
     &\leq 2\lceil \sqrt{\lambda_1|E|}/2\rho+1\rceil \rho\sqrt{\lambda_1/|E|}\enspace,
 \end{align*}
 For $c_r^*=\infty$, this implies $\underline R=Q_{\overline j}^r(\infty)=Q_{\overline j}^r(\lceil \sqrt{\lambda_1|E|}/2\rho+1\rceil)$ and therefore $\lambda_1|\underline R|\leq 576\rho^2/h^2$ (otherwise, this would mean $c_r^*\leq \lceil \sqrt{\lambda_1|E|}/2\rho\rceil$), which is sufficient for the analysis on $\underline R$ we provide below.

Before providing the proofs, let us give some intuition for the analysis. During \texttt{ScandAndUpdate}, we construct sets $Q'\subseteq Q$ in order to have meaningful comparisons of experts $i\in E$, in particular with $\overline i$. Assume we want to detect $i<\overline i$. We will see, that given a set $Q'\subseteq Q$ with $|Q'|\gtrsim \frac{\rho^2}{\lambda_1 h^2}$ and $M_{\overline ij}\leq p$ for all $j\in Q'$, we are able to detect all $i\in E$ with $M_{ij}\geq p+h$ for all $j\in Q'$ which by the bi-isotonicity of $M$ would be sufficient for $\pi(i)<\pi(\overline i)$ and will lead to $i\in \overline O$. We will see that the constructed sets $R$ and $\overline R$ have such properties, see Figure~\ref{Fig:Trisection} for an illustration. The bi-isotonicity assumption will then yield that there exist $j_R\in R$ with $M_{ij}< p+h$ for all $i\in \overline{P}$. Informally, this means that questions $j\geq j_R$ don't contribute to the error in the sense that
\begin{align*}
    |\{i\in \overline{P}:\ M_{ij}\leq p-h\}|\wedge |\{i\in \overline{P}:\ M_{ij}\geq p+h\}|=0.
\end{align*}
In the same way, we can look at $L$ and construct $j_L\in L$ with $j\leq j_L$ not contributing to the error. By looking at $\overline R\cup A\cup \overline L$, we capture all $j$ with $j_L< j< j_R$, so all $j$ that might contribute to the error $\tilde {\mathcal R}_{p,h,\overline P,Q}$.

We further decompose $\overline R=\underline R\cup \Delta_R$. The set $\underline R$ was considered too small for a meaningful comparison of experts $i\in E$, but this will turn out to lead also to a simple bound of $\tilde{R}_{p,h,\overline P,\underline R}$. 

By construction, the questions in $\Delta_R$ cannot be much easier than $j_R$, this means that we have a good control of the number of $i\in \overline P$ with $M_{ij}\geq p+h$ for $j\in \Delta_R$. At the same times, $\overline P$ is by definition a set of experts $i\in \overline P$ for which we have a control over the deviation from $\overline i$ on $\overline R$. Since $M_{\overline ij}\leq p$ for all $j\in \Delta_R$, we can also bound the number of questions $j$ such that $M_{ij}\geq p+h$. We will see that combining these bounds will allow us to bound $\tilde{\mathcal{R}}_{p,h,\overline P,\Delta_R}$, and we can bound $\tilde{\mathcal{R}}_{p,h,\overline P,\Delta_L}$ in the same manner.

$A$ contains all questions $j$ for which we could not detect whether $j\geq \overline j$ or $j<\overline j$. Still, we can decompose $A$ into questions $j\in A$ with $j\geq \overline j$ and questions $j<\overline j$. We will see that depending on the respective sizes of the sets, either the ideas of the analysis on $\underline R$ and $\underline L$ or concepts of the analysis on $\Delta_R$ and $\Delta_L$ carry over.
 
	\paragraph{Analysis on $\Delta_R$ and $\Delta_L$.}
	We will only prove the upper bound for $\Delta_R$, since it can be easily adapted for $\Delta_L$. More precisely, we will use 
	\begin{align*}
		\tilde{\mathcal{R}}_{p,h,\overline P, \Delta_R}\leq  |\{(i,j)\in \overline P\times \Delta_R:\ M_{ij}\geq p+h\}|\enspace .
	\end{align*}
	If $i_{\min}\coloneqq \min \overline P$ and $j_{\min}\coloneqq \min \Delta_R$, the bi-isotonicity of $M$ yields
	\begin{align*}
		\{(i,j)\in \overline P\times \Delta_R:\ M_{ij}\geq p+h\}\subseteq \{i\in \overline P:\ M_{ij_{\min}}\geq p+h\}\times\{j\in  \Delta_R:\ M_{i_{\min}j}\geq p+h\}\enspace ,
	\end{align*}
	and we will bound
	\begin{align*}
		\tilde{\mathcal{R}}_{p,h,\overline P, \Delta_R}\leq|\{i\in \overline P:\ M_{ij_{\min}}\geq p+h\}|\cdot|\{j\in  \Delta_R:\ M_{i_{\min}j}\geq p+h\}|\enspace .
	\end{align*}
	This will be done in four steps:
	\begin{enumerate}[label=a\arabic*), ref=a\arabic*)]
		\item \label{Item:DeltaRa} $M_{\overline i j}<p $ for all $j\in \overline R$,
		\item \label{Item:DeltaRb} $M_{i j_{\max}}\leq p+h/2$ for $j_{\max}\coloneqq \max \overline R$ and all $i\in \overline P$,
		\item \label{Item:DeltaRc} $|\{i\in \overline P:\ M_{ij_{\min}}\geq p+h\}|\leq 12 \rho\sqrt{|E|}/\sqrt{\lambda_1}h$,
		\item \label{Item:DeltaRd}$|\{j\in \Delta_R:\ M_{i_{\min} j}\geq p+h\}|\leq 12\rho\sqrt{|\overline R|}/\sqrt{\lambda_1}h$.
	\end{enumerate}
	From \ref{Item:DeltaRc} and \ref{Item:DeltaRd}, it  follows directly the desired bound
	\begin{align*}
		\tilde{\mathcal{R}}_{p,h,\overline P, \Delta_R}\leq 144\rho^2 \sqrt{|E|\cdot|Q|}/\lambda_1h^2\leq 144\rho^2 (|E|\vee|Q|)/\lambda_1h^2\enspace .
	\end{align*}

	\subparagraph{Proof of \ref{Item:DeltaRa}:}
	By definition of $\overline R$ in Equation~\eqref{Eq:DefLeftRight} with Equation~\eqref{Eq:DefSubsets} and the concentration inequality in Equation~\eqref{Eq:ConcentrationCols}, we know for every $j\in \overline R$, that
	\begin{align*}
		\lambda_1 \left(\ma{E}{\overline j} -\ma{E}{j} \right)\geq \ya{E}{\overline j} -\ya{E}{j}-2\rho\sqrt{\lambda_1/|E|}>0\enspace .
	\end{align*}
	The bi-isotonicity of $M$ implies $j>\overline j$ and also $M_{\overline i j}<p$ by definition of $\overline j$.

	\subparagraph{Proof of \ref{Item:DeltaRb}:} Let $i\in \overline P$. From the concentration equality in Equation~\eqref{Eq:ConcentrationRows}, Lemma \ref{Lemma:Median} and the definition of $\overline P$ in Equation~\eqref{Eq:DefConservativeTrisection} we know that
	\begin{align}
		\lambda_1(\mb{i}{\overline R}-\mb{\overline i}{\overline R})&\leq \lambda_1(\mb{i}{\overline R}-\mb{\med{\overline R}}{\overline R})+2\rho\sqrt{\lambda_1/|\overline R|} \notag\\
		&\leq\yb{i}{\overline R}-\yb{\med{\overline R}}{\overline R}+4\rho\sqrt{\lambda_1/|\overline R|}\leq 12\rho\sqrt{\lambda_1/|\overline R|}\enspace . \label{Eq:DevR}
	\end{align}
	This together with \ref{Item:DeltaRa} implies $\mb{i}{\overline R}\leq p+12\rho/\sqrt{\lambda_1 /|\overline R|}\leq p+h/2$, since $\lambda_1|\overline R|\geq 576\rho^2/h^2$ by definition. Finally, by bi-isotonicity of $M$, we conclude that  $M_{ij_{\max}}\leq \mb{i}{\overline R}\leq p+h/2$.

	\subparagraph{Proof of \ref{Item:DeltaRc}:} Recall that $j_{\min} =\min \Delta_R$ while $j_{\max}=\max \overline R$, so $j_{\min}< j_{\max}$. Note that because of \ref{Item:DeltaRb}
	\begin{align}
		|E|\cdot(\ma{E}{j_{\min}}-\ma{E}{j_{\max}})&=\sum_{i\in E}M_{ij_{\min}}-M_{ij_{\max}} \notag\\
		&\geq \sum_{i\in \overline P,\ M_{ij_{\min}}\geq p+h}M_{ij_{\min}}-M_{ij_{\max}}\notag\\
		&\geq (h/2)\cdot |\{i\in \overline P:\ M_{ij_{\min}}\geq p+h\}|\enspace . \label{Eq:CountSum}
	\end{align}
	If $\ya{E}{j_{\max}} \geq \ya{E}{j_{\min}}$, concentration inequality \eqref{Eq:ConcentrationCols} yields
	\begin{align*}
		\lambda_1(\ma{E}{j_{\min}}-\ma{E}{j_{\max}})\leq \ya{E}{j_{\min}}-\ya{E}{j_{\max}}+2\rho\sqrt{\lambda_1/|E|}\leq 2\rho\sqrt{\lambda_1/|E|}\enspace .
	\end{align*} 
	Otherwise, by definition of $\Delta_R$ and $\overline R$ in \eqref{Eq:DefLeftRight} and \eqref{Eq:DefSubsets}, 
	\begin{align*}
		\ya{E}{\overline j}- 2(c_r^*+2)\rho\sqrt{\lambda_1/|E|}\leq \ya{E}{j_{\max}}<\ya{E}{j_{\min}}<\ya{E}{\overline j}-2c_r^*\rho\sqrt{\lambda_1/|E|}\enspace ,
	\end{align*}
	which yields 
	\begin{align*}
		\lambda_1(\ma{E}{j_{\min}}-\ma{E}{j_{\max}})\leq \ya{E}{j_{\min}}-\ya{E}{j_{\max}}+2\rho\sqrt{\lambda_1/|E|}\leq 6\rho\sqrt{\lambda_1/|E|}\enspace .
	\end{align*} 
	This together with \eqref{Eq:CountSum} gives us $|\{i\in \overline P:\ M_{ij_{\min}}\geq p+h\}|\leq 12 \rho\sqrt{|E|}/\sqrt{\lambda_1}h$.

	\subparagraph{Proof of \ref{Item:DeltaRd}:} In \ref{Item:DeltaRa} we have seen $M_{\overline i j}<p $ for all $j\in \overline R$. So we can argue like in \eqref{Eq:CountSum} and obtain 
	\begin{align*}
		|\{j\in \Delta_R:\ M_{i_{\max} j}\geq p+h\}\leq |\overline R|\cdot (\mb{i_{\min}}{\overline R}-\mb{\overline i}{\overline R})/h\enspace .
	\end{align*}
	From \eqref{Eq:DevR} we know
	\begin{align*}
		\lambda_1(\mb{i_{\min}}{R}-\mb{\overline i}{R})\leq 12\rho\sqrt{\lambda_1/|\overline R|}\enspace , 
	\end{align*}
	and end up with $|\{j\in \Delta_R:\ M_{i_{\min} j}\geq p+h\}|\leq 12\rho\sqrt{|R|}/\sqrt{\lambda_1}h$\enspace .

	\paragraph{Analysis on $\underline R$ and $\underline L$.}
	
	We will only consider $\underline R$ as the proof is similar for $\underline L$. From the construction of $\underline R$ in Equation~\eqref{Eq:DefSubsets}, it follows that $\lambda_1|\underline R|\leq 576\rho^2/h^2$. We obtain the trivial upper bound 
	\begin{align}
		\tilde{\mathcal{R}}_{p,h,\overline P,\underline R}\leq|\overline P\times\underline R|  \leq 576\rho^2 |\overline P|/\lambda_1h^2 \leq 576\rho^2 (|E|\vee |Q|)/\lambda_1h^2\enspace ,\label{Eq:UBBySize}
	\end{align}
	where for the last step we just used the rough inequality $|\overline P|\leq |E|\leq |E|\vee |Q|$.

	\paragraph{Analysis on $A$.} Recall that $\overline j\in A$ was the last question $j$ such that $M_{\overline ij}\geq p$ holds. We will split up $A$ into $A^l\coloneqq \{j\in A:\ j\leq \overline j\}=\{j\in A:\ M_{\overline i j}\geq p\}$ and $A^r\coloneqq\{j\in A:\ j>\overline j\}=\{j\in A:\ M_{\overline i j}<p\}$. If $\lambda_1|A|\leq 2304\rho^2/h^2$, we can conclude just like in \eqref{Eq:UBBySize} that
	\begin{align*}
		\tilde{\mathcal{R}}_{p,h,\overline P,A}\leq 2304 \rho^2(|E|\vee |Q|)/\lambda_1 h^2\enspace .
	\end{align*}
	So assume w.l.o.g. that $\lambda_1|A|>2304\rho^2/h^2$ and $|A^r|\geq |A^l|$, so in particular $\lambda_1|A^r|\geq\lambda_1 |A|/2>1152 \rho^2/h^2$. Again, we will derive the upper bound in several steps:
	\begin{enumerate}[label=b\arabic*),ref=b\arabic*)]
		\item \label{Item:Aa}$|\{j\in A^r:\ M_{i_{\min}j}\geq p+h\}|\leq 12 \rho\sqrt{|A|}/\sqrt{\lambda_1}h$ for $i_{\min}=\min \overline P$,
		\item \label{Item:Ab}$M_{ij_{\max}}< p+h/2$ for $j_{\max}=\max A$ and all $i\in \overline P$,
		\item \label{Item:Ac}$|\{i\in \overline P:\ M_{i\overline{j}}\geq p+h\}|\leq8\rho\sqrt{|E|/\lambda_1h^2}$ and $|\{i\in \overline P:\ M_{ij_{\min}}\geq p+h\}|\leq12\rho\sqrt{|E|/\lambda_1h^2}$ for $j_{\min}=\min A$,
		\item \label{Item:Ad} $|\{j\in A^l:\ M_{i_{\max}j}\leq p-h\}|\leq12 \rho\sqrt{|A|/\lambda_1 h^2}$ for $i_{\max}=\max \overline P$.
	\end{enumerate}
	Once this is proven, one can bound $\tilde{\mathcal{R}}_{p,h,\overline{P},A}=\tilde{\mathcal{R}}_{p,h,\overline{P},A^r}+\tilde{\mathcal{R}}_{p,h,\overline{P},A^l}$. It holds
	\begin{align*}
		\tilde{\mathcal{R}}_{p,h,\overline{P},A^r}&\leq |\{(i,j)\in \overline P\times A^r:\ M_{ij}\geq p+h\}|\\
		&\leq |\{i\in\overline P:\ M_{i\overline j}\geq p+h\}|\cdot|\{j\in A^r:\ M_{i_{\min}j}\geq p+h\}|\leq 96\rho^2(|E|\vee |Q|)/\lambda_1h^2\enspace ,
	\end{align*}
	by \ref{Item:Aa} and \ref{Item:Ac} and 
	\begin{align*}
		\tilde{\mathcal{R}}_{p,h,\overline{P},A^l}&=\sum_{j\in A^l} |\{i\in \overline P:\ M_{ij}\leq p-h\}| \wedge |\{i\in \overline P:\ M_{ij}\geq p+h\}|\\
		&\leq \sum_{j\in A^l,\ M_{i_{\max}j}\leq p-h} |\{i\in \overline P:\ M_{ij}\geq p+h\}|\\
		&\leq |\{j\in A^l:\ M_{i_{\max}j}\leq p-h\}|\cdot |\{i\in \overline P:\ M_{ij_{\min}}\geq p+h\}|\leq 144\rho^2(|E|\vee |Q|)/\lambda_1h^2\enspace ,
	\end{align*}
	by \ref{Item:Ad} and \ref{Item:Ac}, so in total, we would end up with $\tilde{\mathcal{R}}_{p,h,\overline P,A}\leq 240\rho^2(|E|\vee |Q|)/\lambda_1h^2$.

	\subparagraph{Proof of \ref{Item:Aa} and \ref{Item:Ab}:} Let $i_{\min}=\min \overline P$. Like in \eqref{Eq:DevR} one can prove
	\begin{align*}
		\lambda_1(\mb{i_{\min}}{A}-\mb{\overline i}{A})\leq 12\rho\sqrt{\lambda_1/|A|}\enspace .
	\end{align*}
	This implies
	\begin{align*}
		|A^r|(\mb{i_{\min}}{A^r}-\mb{\overline i}{A^r})&=\sum_{j\in A^r}M_{i_{\min}j}-M_{\overline i j}\\ & \leq \sum_{j\in A}M_{i_{\min}j}-M_{\overline i j}\\
		&\quad =|A|(\mb{i_{\min}}{A}-\mb{\overline i}{A})\leq 12\rho \sqrt{|A|/\lambda_1}\leq \rho\sqrt{288|A^r|/ \lambda_1}\enspace .
	\end{align*}
	By definition of $A^r$, we know that $M_{\overline i j}<p$ for all $j \in A^r$, so we can show \ref{Item:Aa} by arguing like in \eqref{Eq:CountSum} that
	\begin{align*}
		|\{j\in A^r:\ M_{i_{\min}j}\geq p+h\}|\leq |A^r|(\mb{i_{\min}}{A^r}-\mb{\overline i}{A^r})/h\leq 12 \rho\sqrt{|A|}/(\sqrt{\lambda_1}h)\enspace .
	\end{align*}
	For showing \ref{Item:Ab}, note that it holds
	\begin{align*}
		\sum_{j\in A^r}M_{i_{\min}j}-M_{\overline i j}> |A^r|(\mb{i_{\min}}{A^r}-p)\enspace ,
	\end{align*}
	which implies 
	\begin{align*}
		\mb{i_{\min}}{A^r}< p+\rho\sqrt{288/(\lambda_1|A^r|)}< p+h/2
	\end{align*}
	by the assumption $\lambda_1|A^r|>1152\rho^2/h^2$. For any $i\in \overline P$ we conclude by the bi-isotonicity of $M$
	\begin{align*}
		M_{ij_{\max}}\leq M_{i_{\min}j_{\max}}\leq \mb{i_{\min}}{A^r}<p+h/2\enspace .
	\end{align*}

	\subparagraph{Proof of \ref{Item:Ac}:} By the definition of $A$ in \eqref{Eq:DefLeftRight} and \eqref{Eq:DefSubsets} we know that $|\ya{E}{\overline j}-\ya{E}{j_{\max}}|\leq 2\rho\sqrt{\lambda_1/|E|}$ and $|\ya{E}{j_{\min}}-\ya{E}{j_{\max}}|\leq 4\rho\sqrt{\lambda_1/|E|}$. With concentration inequality \eqref{Eq:ConcentrationCols} this yields
	\begin{align*}
		\lambda_1(\ma{E}{\overline j}-\ma{E}{j_{\max}})\leq 4\rho\sqrt{\lambda_1/|E|}\quad \text{ and }
		\lambda_1(\ma{E}{j_{\min}}-\ma{E}{j_{\max}})\leq 6\rho\sqrt{\lambda_1/|E|}\enspace .
	\end{align*} 
	Like in \eqref{Eq:CountSum}, using $M_{ij_{\max}}\leq p+h/2$ for $i\in \overline P$ from \ref{Item:Ab}, we obtain
	\begin{align*}
		|\{i\in \overline P:\ M_{i\overline{j}}\geq p+h\}|\leq 2|E|(\ma{E}{\overline j}-\ma{E}{j_{\max}})/h\leq 8\rho\sqrt{|E|/\lambda_1h^2}\\\intertext{and}
		|\{i\in \overline P:\ M_{ij_{\min}}\geq p+h\}|\leq 2|E|(\ma{E}{j_{\min}}-\ma{E}{j_{\max}})/h\leq 12\rho\sqrt{|E|/\lambda_1h^2}\enspace .	
	\end{align*}

	\subparagraph{Proof of \ref{Item:Ad}:} This follows by adapting the proof of \ref{Item:Aa}.

	\paragraph{Analysis on $Q\setminus (\overline L\cup A\cup\overline R)$.} If $j\in Q$ but $j\notin \overline L\cup A\cup\overline R$, then we have either 
	\begin{align*}
		\ya{E}{\overline j}-\ya{E}{ j}>2(c_r^*+2)\sqrt{\lambda_1/|E|}\quad \text{ or }\quad 
		\ya{E}{j}-\ya{E}{\overline j}>2(c_l^*+2)\sqrt{\lambda_1/|E|}\enspace .
	\end{align*}
	Assume w.l.o.g. that the first inequality holds. For $j_{\max}=\max R$, we know 
	\begin{align*}
	\ya{E}{\overline j}-\ya{E}{ j_{\max}}\leq 2(c_r^*+1)\sqrt{\lambda_1/|E|}\\ \intertext{ and therefore} 
	\ya{E}{ j_{\max}}-\ya{E}{j}\leq 2\sqrt{\lambda_1/|E|}\enspace .
	\end{align*}
	 The concentration inequality \eqref{Eq:ConcentrationCols} implies $\lambda_1(\ma{E}{ j_{\max}}-\ma{E}{j})\leq 0$ and the bi-isotonicity of $M$ yields $j>j_{\max}$. Like in \ref{Item:DeltaRb}, we deduce $M_{ij}\leq p+h/2$ for all $i\in \overline P$, so $|\{i\in \overline P:\ M_{ij}\geq p+h\}|=0$.
\end{proof}

\subsection{Proof of Lemma~\ref{Lemma:Envelopes}}\label{Sec:AnalysisQuestions}

Consider $\tilde Y$ sampled accordingly to observation model in Equation~\eqref{Eq:Model}, $\mathcal{E}=(E_1,E_2,\dots,E_r)$ and a vector $v\in\{0,1\}^r$. Let us redefine
\begin{align*}
	\ya{E_s}{j}=\frac{1}{|E_s|}\sum_{i\in E_s}Y_{ij} \text{ and } \ma{E_s}{j}=\frac{1}{|E_s|}\sum_{i\in E_s}M_{ij} \hspace{0.5cm} \forall s=1,2,\dots,r,\ \forall j\in[d]\enspace .
\end{align*}
Consider the event $\xi_{\mathrm{env}}$ where the inequalities 
\begin{align}
	|\ya{E_s}{j}-\lambda_1 \ma{E_s}{j}|\leq \rho\sqrt{\lambda_1/|E_s|} \hspace{.5cm} \forall s=1,2,\dots,r \text{ with }v_s=1,\ \forall j\in[d]\enspace , \label{Eq:ConcentrationEnvelopes}
\end{align}
hold true.
\begin{lemma}\label{Lemma:GoodEventEnvelopes}
	Under Property~\ref{Assum2} for $\tilde Y$, $\mathcal{E}$ and $v$, it holds that 
	\begin{align*}
		\mathbb{P}(\xi_{\mathrm{env}})\geq 1-\frac{\delta}{12\lceil \log_2(n\vee d)\rceil (n\vee d)^{1/2}}\enspace .
	\end{align*}
\end{lemma}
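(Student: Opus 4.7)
The plan is to mimic the per-inequality analysis already carried out in the proof of Lemma~\ref{Lemma:GoodEvent} and then finish with a union bound. The event $\xi_{\mathrm{env}}$ is a conjunction of at most $rd$ concentration statements, one for each pair $(s,j)$ with $v_s=1$ and $j\in[d]$; for each such pair we need to show that $\ya{E_s}{j}$ concentrates around its mean $\lambda_1\ma{E_s}{j}$ at scale $\rho\sqrt{\lambda_1/|E_s|}$ with a per-pair failure probability small enough to survive the union bound.

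First I would fix $s$ with $v_s=1$ and $j\in[d]$. By the third bullet of Property~\ref{Assum2}, $\lambda_1|E_s|>4\rho^2/h^2\geq 4\rho^2$ since $h\leq 1$, and expanding the definitions of $\rho$ and $\gamma$ shows that $|E_s|\geq 4\rho^2/\lambda_1 \geq \gamma$ (the constants in $\rho^2$ dominate those in $\gamma$ by a factor of order $e^2(1\vee\sigma^2)$). Under the model in~\eqref{Eq:Model}, the random variables $(Y_{ij})_{i\in E_s}$ are independent, each with mean $\lambda_1 M_{ij}$ and with the Bernoulli-times-subGaussian structure for which Lemma~\ref{Lemma:ConcentrationSE} is designed. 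Applying this lemma exactly as in the proof of Lemma~\ref{Lemma:GoodEvent} gives, with probability at least $1-\frac{\delta}{12\lceil\log_2(n\vee d)\rceil nd(n\vee d)^{1/2}}$,
\begin{align*}
|\ya{E_s}{j}-\lambda_1\ma{E_s}{j}|
&\leq \sqrt{2(1\vee\sigma^2)e^2\log\!\left(24\lceil\log_2(n\vee d)\rceil nd(n\vee d)^{1/2}/\delta\right)\lambda_1/|E_s|}\\
&\quad +2(1\vee\sigma)\log\!\left(24\lceil\log_2(n\vee d)\rceil nd(n\vee d)^{1/2}/\delta\right)/|E_s|,
\end{align*}
which, using $|E_s|\geq\gamma$ and the definition of $\rho$, is at most $\rho\sqrt{\lambda_1/|E_s|}$.

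To finish, I would union-bound over all indices $(s,j)$ concerned. Since $\mathcal{E}$ is a partition of $[n]$ into $r$ non-empty sets we have $r\leq n$, so the total number of inequalities appearing in~\eqref{Eq:ConcentrationEnvelopes} is at most $nd\leq nd(n\vee d)^{1/2}$. Hence
\[
\mathbb{P}(\xi_{\mathrm{env}}^c)\leq nd\cdot\frac{\delta}{12\lceil\log_2(n\vee d)\rceil nd(n\vee d)^{1/2}}=\frac{\delta}{12\lceil\log_2(n\vee d)\rceil (n\vee d)^{1/2}},
\]
as claimed. There is no real obstacle here; the statement is a direct transplant of the column-sum bound from Lemma~\ref{Lemma:GoodEvent} to all columns simultaneously, the only thing to keep track of being (i) the verification that the large-$|E_s|$ hypothesis in Property~\ref{Assum2} still implies $|E_s|\geq\gamma$ (needed for Lemma~\ref{Lemma:ConcentrationSE}), and (ii) the correct counting of the number of inequalities, which is what produces the $(n\vee d)^{1/2}$ in the denominator of the probability.
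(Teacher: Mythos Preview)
Your proposal is correct and takes essentially the same approach as the paper: apply Lemma~\ref{Lemma:ConcentrationSE} to each pair $(s,j)$ with $v_s=1$, verify via $\lambda_1|E_s|>4\rho^2/h^2$ that $|E_s|\geq\gamma$ so the bound simplifies to $\rho\sqrt{\lambda_1/|E_s|}$, then union-bound over at most $nd$ inequalities. The paper's own proof is slightly terser (it simply says ``just like in the proof of Lemma~\ref{Lemma:GoodEvent}'') but the structure and the final counting argument are identical.
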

\begin{proof}[Proof of Lemma~\ref{Lemma:GoodEventEnvelopes}]
    Recall Property~\ref{Assum2} for $\tilde Y$, $\mathcal{E}$ and $v$ from Section \ref{Sec:Questions}. Then for each $s=1,2,\dots,r$ with $v_s=1$, we know that $\lambda_1|E_s|>4\rho^2/h^2$. Lemma~\ref{Lemma:ConcentrationSE} yields that each inequality 
	\begin{align*}
		|\ya{E}{j}-\lambda_1 \ma{E}{j}|\leq \rho\sqrt{\lambda_1/|E_s|}\enspace ,
	\end{align*}
	holds with a probability of at least $1-\frac{\delta }{12\lceil \log_2(n\vee d)\rceil nd(n\vee d)^{(1/2)}}$, just like in the proof of Lemma~\ref{Lemma:GoodEvent}. For each $s$ we are considering $d$ such inequalities. Because $\mathcal{E}$ is a partition of $[n]$ by Property~\ref{Assum2}, we can roughly upper bound $|\{s=1,2,\dots,r:\ v_s=1\}|\leq n$, so the total number of inequalities considered is bounded by $nd$. A union bound argument yields $\mathbb{P}(\xi_{\mathrm{env}})\geq 1-\frac{\delta}{12\lceil \log_2(n\vee d)\rceil (n\vee d)^{1/2}}$.
\end{proof} 
For $s=1,2,\dots,r$ we can define
\begin{align*}
	\underline{Q}(E_s)\coloneqq \{j\in [d]:\ \ya{E_s}{j}\geq \lambda_1(p+h)-\rho\sqrt{\lambda_1/|E_s|}\}\enspace ,\\
	\overline{Q}(E_s)\coloneqq \{j\in [d]:\ \ya{E_s}{j}\leq \lambda_1(p-h)+\rho\sqrt{\lambda_1/|E_s|}\}\enspace .
\end{align*}
If $v_s=1$, Algorithm \ref{Algo:Envelope} checks whether $\underline s=\max\{t<s:\ v_t=1\}$ exists. If this is the case, it sets $\underline {Q_s}=\underline{Q}(E_{\underline s})$, otherwise $\underline {Q_s}=[d]$. In the same way $\overline{Q_s}=\overline{Q}(E_{\overline s})$ or $\overline {Q_s}=[d]$, depending on the existence of $\overline s =\min\{t>s:\ v_t=1\}$. The algorithm then returns $Q_s=\underline{Q_s}\cap \overline{Q_s}$. See Figure \ref{Fig:Envelopes} for an illustration of the construction of $Q_s$.

\begin{figure}[h]
	\centering
\includegraphics{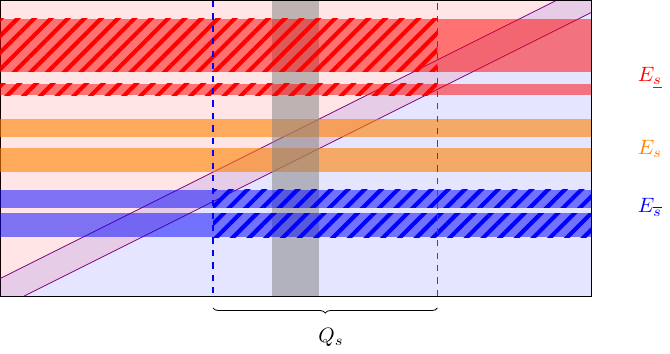}
	\caption{Illustration of some bi-isotonic matrix $M\in \mathbb{C}_{\mathrm{Biso}}(\mathrm{id}_{\lbrack n\rbrack},\mathrm{id}_{\lbrack d\rbrack})$. The part in the top left (light red background) corresponds to matrix values $\geq p+h$, the part on the bottom right (light blue background) to matrix values $\leq p-h$ and the area in between (purple background) to values in $(p-h,p+h)$. Assume we have $E_{\underline s}$, $E_s$, $E_{\overline{s}}\in \mathcal{E}$ such that each of the sets is larger than $4\rho^2/\lambda_1 h^2$ and $\underline i < i<\overline i$ for $\underline i\in E_{\underline s}$, $i\in E_s$ and $\overline i \in E_{\overline s}$. We are interested in estimating the questions $Q^*(E_s)$, which correspond to the gray area. To do so, we detect sets that contain all questions on which the success-probabilities are at least $p+h$ for all experts in $E_{\underline s}$ (red dashed area) and at most $p-h$ for all experts in $E_{\overline s}$ (blue dashed area). Intersecting these sets yields $Q_s$.} \label{Fig:Envelopes}
\end{figure}

\begin{proof}[Proof of Lemma~\ref{Lemma:Envelopes}]
	Consider $s\in\{1,2,\dots, r\}$ with $v_s=1$ and $j\in Q^*(E_s)$. Then 
	\begin{align*}
		\max_{i\in E_s}M_{ij}\geq p+h \hspace{.5cm}\text{and}\hspace{.5cm} \min_{i\in E_s}M_{ij}\leq p-h\enspace .
	\end{align*}
	Assume w.l.o.g. that $\underline{s}=\max\{t<s:\ v_t=1 \}$ exists; otherwise $j\in [d]=\underline{Q_s}$ would be trivial. By Property~\ref{Assum2}, $\pi(i)<\pi(i')$ for all $i\in E_{\underline s}$ and $i'\in E_s$. The bi-isotonicity assumption therefore yields $M_{ij}\geq p+h$ for all $i\in E_{\underline{s}}$, which implies $\ma{E_{\underline s}}{j}\geq p+h$. Since we assume that we are in the regime of event $\xi_{\mathrm{env}}$, it follows from \eqref{Eq:ConcentrationEnvelopes} 
	\begin{align*}
		\ya{E_{\underline s}}{j}\geq \lambda_1\ma{E_{\underline s}}{j}-\rho\sqrt{\lambda_1/|E_{\underline s}|}\geq\lambda_1(p+h)-\rho\sqrt{\lambda_1/|E_{\underline s}|}
	\end{align*} 
	 and consequently $j\in \underline{Q_s}$. In the same manner, one can show $j\in \overline{Q_s}$. This proves
	\begin{align*}
		Q^*(E_s)\subseteq \underline{Q_s}\cap \overline{Q_s}=Q_s
	\end{align*}

	For the second claim it suffices to show that for each $j\in [d]$ there are at most three $s\in\{1,2,\dots,r\}$ with $v_s=1$ such that $j\in Q_s$. We show this by contradiction. So assume $1\leq s<t<u<w\leq r$ exist with $v_{s}=v_{t}=v_u=v_w=1$ and $j\in Q_s\cap Q_t\cap Q_u\cap Q_w$.

	We obtain $j\in \overline{Q_s} \cap \underline{Q_w}$, so 
	\begin{align*}
	\ya{E_{\overline s}}{j} \leq \lambda_1(p-h)+\rho\sqrt{\lambda_1/|E_{\overline s}|}\hspace{.5cm}\text{and}\hspace{.5cm} \ya{E_{\underline w}}{j} \geq \lambda_1(p+h)-\rho\sqrt{\lambda_1/|E_{\underline w}|}\enspace .
	\end{align*}
	 It holds $\overline{s}\leq t<u\leq \underline w$ and $v_{\overline s}=v_{\underline w}=1$, and Property~\ref{Assum2} implies 
	\begin{align*}
		\lambda_1|E_{\overline s}|>4\rho^2/h^2 \hspace{.5cm}\text{and}\hspace{.5cm} \lambda_1|E_{\underline w}|>4\rho^2/h^2\enspace ,
	\end{align*}
	which is equivalent to 
		\begin{align*}
		\lambda_1 h>2\rho\sqrt{\lambda_1/ |E_{\overline s}|} \hspace{.5cm}\text{and}\hspace{.5cm} \lambda_1 h>2\rho\sqrt{\lambda_1/| E_{\underline w}|} \enspace .
	\end{align*}
	All this together with \eqref{Eq:ConcentrationEnvelopes} yields
	\begin{align*}
		\lambda_1\ma{E_{\overline s}}{j}\leq \ya{E_{\overline s}}{j}+\rho\sqrt{\lambda_1/|E_{\overline{s}}|}\leq \lambda_1(p-h)+2\rho\sqrt{\lambda_1/|E_{\overline{s}}|}<\lambda_1 p\enspace , \\ \intertext{and}
			\lambda_1\ma{E_{\underline w}}{j}\geq \ya{E_{\underline w}}{j}-\rho\sqrt{\lambda_1/|E_{\overline{s}}|}\geq \lambda_1(p+h)-2\rho\sqrt{\lambda_1/|E_{\overline{s}}|}>\lambda_1 p\enspace .
	\end{align*}
	So in particular $\ma{E_{\overline s}}{j}<\ma{E_{\underline w}}{j}$. This means that there exist $\overline i\in E_{\overline s}$ and $\underline i\in E_{\underline w}$ with $M_{\overline i j}<M_{\underline i j}$. The bi-isotonicity assumption implies $\pi(\overline i)>\pi(\underline i)$, and together with $\overline s < \underline w$, this yields a contradiction to Property~\ref{Assum2}.
\end{proof}

\subsection{Proofs of Theorem~\ref{Thm:Properties} and Corollary~\ref{Cor:PropertiesTree}}\label{Sec:AnalysisTree}

We will see, that for each active set $E_t^{k-1}$, i.e.~$w_t^{k-1}=1$, our algorithm relies on inequalities of the form
\begin{align}
	\frac{1}{|E_t^{k-1}|}\left|\sum_{i\in E_t^{k-1}}Y_{ij}^{(3k-1)}-\lambda_1M_{ij}\right|\leq \rho\sqrt{\lambda_1/|E_t^{k-1}|}\hspace{.5cm}\forall j\in Q_t^{k-1}\enspace ,\notag \\ \intertext{and}
	\frac{1}{|Q'|}\left|\sum_{j\in Q'}Y_{ij}^{(3k)}-\lambda_1M_{ij}\right|\leq \rho\sqrt{\lambda_1/|Q'|}\hspace{.5cm}\forall Q'\in \mathcal{Q}_t^{k-1},\ i\in E_t^{k-1}\enspace ,\label{Eq:ConcentrationInTree}
\end{align}
where $\mathcal{Q}_t^{k-1}$ is defined by sets of the form given in Equation~\eqref{Eq:DefLeftRight}.
\begin{proof}[Proof of Theorem~\ref{Thm:Properties}]
	We will prove the claims via induction over $k$.
	
	\paragraph{The case $k=1$:} If $\lambda_1n\leq 4\rho^2/h^2$, there is nothing to prove, so assume $\lambda_1 n > 4\rho^2/h^2$.	By definition, Algorithm \ref{Algo:Envelope} returns $Q_1^0=[d]$ which shows \ref{ThmItem:Envelope}. If $\lambda_1 d \leq 4\rho^2/h^2$ it sets $w^0=(0)$, $\mathcal{E}^1=([n])$, $v^1=(0)$ and $G^1=G^0$. One can easily check that in this case, all claims are fulfilled.
	If $\lambda_1d> 4\rho^2/h^2$, let us consider the concentration inequalities
	\begin{align*}
		\frac{1}{n}\left|\sum_{i=1}^n Y_{ij}^{(2)}-\lambda_1 M_{ij}\right|\leq \rho\sqrt{\lambda_1/n} \hspace{.5cm} \forall j\in[d]\enspace ,\\
		\intertext{and}
		\frac{1}{|Q'|}\left|\sum_{j\in Q'} Y_{ij}^{(3)}-\lambda_1 M_{ij}\right|\leq \rho\sqrt{\lambda_1/|Q'|} \hspace{.5cm} \forall Q'\in\mathcal{Q}_1^0\enspace ,\ i\in[n] \enspace.
	\end{align*}
	Here, $\mathcal{Q}_1^0$ is defined as $\mathcal{Q}$ in Section \ref{Sec:AnalysisTrisection} by sets of the form \eqref{Eq:DefLeftRight} with respect to $E=[n]$. We call $\xi^1$ the event, on which these inequalities hold.	Since Property~\ref{Assum1} for $Y^{(2)}$, $Y^{(3)}$, $[n]$, $[d]$ and $G$ is obviously fulfilled, Lemma~\ref{Lemma:GoodEvent} yields
	\begin{align*}
		\mathbb{P}(\xi^1)\geq 1-\frac{\delta}{12\lceil \log_2(n\vee d)\rceil}\enspace .
	\end{align*}

	The claims \ref{ThmItem:TrisectIsPartition} (and therefore \ref{ThmItem:Partition}) and \ref{ThmItem:MedianSeparates} follow directly from Lemma~\ref{Lemma:PropertiesTrisection}. By \ref{ThmItem:MedianSeparates} and the definition of $\overline{i}_1^0$ in \eqref{Eq:DefMedian} we obtain $|O([n])|\leq n/2$ and $|I([n])|\leq n/2$, which yields \ref{ThmItem:SetsShrink}. Property \ref{ThmItem:Graph} for the graph $G^k$ is consequence of Lemma~\ref{Lemma:Graph}.
 In order to prove part \ref{ThmItem:Order}, note that $v_2^1=0$. If $v_1^1=v_3^1=1$, the claim follows from \ref{ThmItem:MedianSeparates}. Otherwise, there is nothing to prove.

	\paragraph{From $k-1$ to $k$:} Assume (induction assumption) we have constructed events $\xi^1\subset \xi^2\subset \dots\subset \xi^{k-1}$ that satisfy the theorem (up to step $k-1$). From now on we work on the event $\xi^{k-1}$. Algorithm \ref{Algo:Envelope} is applied on
 \vspace{-0.5cm}
	\begin{itemize}
		\item $Y^{(3k-2)}$, an unused sample from observation model \eqref{Eq:Model},
		\item $\mathcal{E}^{k-1}$, which by induction hypothesis (i.e.~\ref{ThmItem:Partition}) is a partition of $[n]$,
		\item $v^{k-1}$, for which it holds that
		\begin{itemize}
			\item $v_t^{k-1}=1 \ \Rightarrow \ \lambda_1|E_t^{k-1}|>4\rho^2/h^2$ by definition,
			\item $v_t^{k-1}=v_{t'}^{k-1}=1$ for $s<s'$ implies $\pi(i)<\pi(i')$ for $i\in E_t^{k-1}$, $i'\in E_{t'}^{k-1}$ by induction hypothesis (i.e.~\ref{ThmItem:Order}).
		\end{itemize}
	\end{itemize}
	This implies that, conditionally to $\xi^{k-1}$, the input $Y^{(3k-2)}$, $\mathcal{E}^{k-1}$ and $v^{k-1}$ satisfies Property~\ref{Assum2}. So, as in Section \ref{Sec:Questions}, we can construct an event $\Xi^{k-1}_{0}$ with
	\begin{align*}
		\mathbb{P}(\Xi_0^{k-1}\ |\ \xi^{k-1})\geq 1-\frac{\delta}{12\lceil\log_2(n\vee d)\rceil (n\vee d)^{1/2}} 
	\end{align*}
	by Lemma~\ref{Lemma:GoodEventEnvelopes} and such that, thanks to Lemma~\ref{Lemma:Envelopes}, it holds  on $\xi_0^{k-1}\coloneqq\xi^{k-1}\cap\Xi_0^{k-1}$ that
		\begin{align}
		Q^*(E_t^{k-1})\subseteq Q_t^{k-1}, \hspace{.5cm}\forall t\in\{1,2,\dots,r_{k-1}\} \text{ with } v_t^{k-1} =1,\label{Eq:EnvelopeInThm} \text{ and } \sum_{t=1,\ v_t^{k-1}=1}^r |Q_s|<3d\enspace .
	\end{align}
	This proves part \ref{ThmItem:Envelope} on $\xi_0^{k-1}$\enspace .
	In order to prove part \ref{ThmItem:Active}, consider $t_1$ to be the smallest $t\in \{1,2,\dots, r_{k-1}\}$ with $w_{t}^{k-1}=1$. Conditionally to $\xi^{k-1}_0$, we apply Algorithm \ref{Algo:Trisect} to
	\begin{itemize}\vspace{-0.5cm}
		\item $Y^{(3k-1)}$ and $Y^{(3k)}$, which are independent samples from observation model \eqref{Eq:Model},
		\item $Q^*(E_{t_1}^{k-1})\subseteq Q_{t_1}^{k-1}$ by \eqref{Eq:EnvelopeInThm},
		\item $\lambda_1(|E_{t_1}^{k-1}|\wedge |Q_{t_1}^{k-1}|)> 4\rho^2/h^2$ by the induction assumption (i.e.~\ref{ThmItem:Envelope}),
		\item $G_{t_1}^k=G^{k-1}$ with $G_{i,i'}=1\ \Rightarrow \pi(i)<\pi(i')$ for all $i,i'\in [n]$ by the induction assumption (i.e.~\ref{ThmItem:Graph}).
	\end{itemize} 
 \vspace{-0.5cm}
 Then, conditionally to $\xi^{k-1}_0$, Property~\ref{Assum1} is again satisfied by $Y^{(3k-1)}$, $Y^{(3k)}, E_{t_1}^{k-1}$, $Q_{t_1}^{k-1}$ and $G_{t_1-1}^k$. Consider the concentration inequalities of the form
	\begin{align*}
		\frac{1}{|E_{t_1}^{k-1}|}\left|\sum_{i\in E_{t_1}^{k-1}} Y_{ij}^{(3k-1)}-\lambda_1 M_{ij}\right|\leq \rho\sqrt{\lambda_1/|E_{t_1}^{k-1}|} \hspace{.5cm} \forall j\in Q_{t_1}^{k-1}\enspace ,\notag\\
		\intertext{and}
		\frac{1}{|Q'|}\left|\sum_{j\in Q'} Y_{ij}^{(3k)}-\lambda_1 M_{ij}\right|\leq \rho\sqrt{\lambda_1/|Q'|} \hspace{.5cm} \forall Q'\in\mathcal{Q}_{t_1}^{k-1},\ i\in E_{t_1}^{k-1}\enspace .
	\end{align*}
	Again, $\mathcal{Q}_{t_1}^{k-1}$ is defined as $\mathcal{Q}$ in Section \ref{Sec:AnalysisTrisection} by sets as in Equation~\eqref{Eq:DefLeftRight}, this time with respect to $E=E_{t_1}^{k-1}$. Lemma~\ref{Lemma:GoodEvent} again tells us that given $\xi_0^k$ the event $\Xi_{t_1}^{k-1}$ on which these inequalities hold satisfies
	\begin{align*}
		\mathbb{P}(\Xi_{t_1}^{k-1}\ | \ \xi_0^{k-1})\geq 1-\frac{|Q_{t_1}^{k-1}|\delta}{12\lceil \log_2(n)\rceil d}\enspace .
	\end{align*}
	We define $\xi_{t_1}^{k-1}\coloneqq \xi_0^{k-1}\cap \Xi_{t_1}^{k-1}$ and obtain that conditional to $\xi_{t_1}^{k-1}$, Lemma~\ref{Lemma:PropertiesTrisection} implies \ref{ThmItem:TrisectIsPartition} and \ref{ThmItem:MedianSeparates}. Since $w_{t_1}=1$, we know that on $\xi^{k-1}$, the set $E_{t_1}^{k-1}$ itself results from a trisection such that by induction hypothesis (i.e.~\ref{ThmItem:SetsShrink}) $|E_{t_1}^{k-1}|\leq 2^{-(k-1)}n$. With \ref{ThmItem:MedianSeparates} we obtain again that 
	\begin{align*}
		|O(E_{t_1}^{k-1})|\vee |I(E_{t_1}^{k-1})|\leq \frac{1}{2}|E_{t_1}^{k-1}|\leq 2^{-k}n\enspace , 
	\end{align*}
	which yields \ref{ThmItem:SetsShrink}. The properties \ref{ThmItem:Graph} hold for $G_{t_1}^{k-1}$ by Lemma~\ref{Lemma:Graph}.

	We then move on to the smallest $t_2>t_1$ with $w_{t_2}=1$. It is worth mentioning, that although we use again $Y^{(3k-1)}$ and $Y^{(k)}$ for the trisection, since $E_{t_2}^{k-1}$ and $E_{t_1}^{k-1}$ are disjoint, these observations can still be considered as unused. Then, together with Lemma~\ref{Lemma:Graph}, conditional to $\xi_{t_1}^{k-1}$, Property~\ref{Assum1} again holds for $Y^{(3k-1)}$, $Y^{(3k)}$, $E_{t_2}^{k-1}$, $Q_{t_2}^{k-1}$ and $G_{t_2-1}^k$. We then define $\Xi_{t_2}^{k-1}$ and $\xi_{t_2}^{k-1}\coloneqq \xi_{t_1}^{k-1} \cap \Xi_{t_2}^{k-1}$ as in the step before and check that all properties of \ref{ThmItem:Active} hold when we trisect $E_{t_2}^{k-1}$, and also \ref{ThmItem:Graph} for the updated graph $G_{t_2}^{k-1}$ follows in the same manner.

	By iterating over all $t$ with $w_t^{k-1}=1$ until the largest such index $t_{\alpha}$, we obtain an event $\xi^k=\xi_{t_{\alpha}}^{k-1}$ such that \ref{ThmItem:Active} holds uniformly and the graph $G^k\coloneqq G_{t_{\alpha}}^k$ that satisfies by iteration \ref{ThmItem:Graph}. Note that indeed, with \ref{ThmItem:Envelope}, it holds
	\begin{align*}
	\mathbb{P}(\xi^k)&= \mathbb{P}(\xi_{t_{\alpha-1}}^{k-1}\cap\Xi_{t_{\alpha}}^{k-1} )=\mathbb{P}(\xi_{t_{\alpha-1}}^{k-1})\cdot\mathbb{P}(\Xi_{t_{\alpha}}^{k-1}\ |\ \xi_{t_{\alpha-1}}^{k-1} )\geq \mathbb{P}(\xi_{t_{\alpha-1}}^{k-1})-\mathbb{P}((\Xi_{t_{\alpha}}^{k-1})^C\ |\ \xi_{t_{\alpha-1}}^{k-1} )\\
	&=\cdots\geq \mathbb{P}(\xi^{k-1})-\mathbb{P}((\Xi_0^{k-1})^C\ |\ \xi^{k-1}) -\mathbb{P}((\Xi_{t_1}^{k-1})^C\ | \ \xi_0^{k-1}) -\sum_{a=2}^{\alpha}\mathbb{P}((\Xi_{t_a}^{k-1})^C\ | \ {\xi}_{t_{a-1}}^{k-1}) \\
	&\geq 1 - \frac{(k-1)\delta}{2\lceil \log_2(n\vee d)\rceil}-\frac{\delta}{12\lceil \log_2(n\vee d)\rceil (n\vee d)^{1/2}}-\frac{\sum_{a=1}^{\alpha}|Q_{t_a}^{k-1}|\delta}{12\lceil \log_2(n\vee d)\rceil d} \\
	& \geq 1 - \frac{(k-1)\delta}{2\lceil \log_2(n\vee d)\rceil}-\frac{\delta}{4\lceil \log_2(n\vee d)\rceil }-\frac{3d\delta}{12\lceil \log_2(n\vee d)\rceil d} \geq 1-\frac{k\delta}{2\lceil \log_2(n\vee d)\rceil}
	\end{align*}
	and that, on $\xi^k$, the graph $G^k$ satisfies \ref{ThmItem:Graph}.

In order to prove claim \ref{ThmItem:Order}, consider $t$, $t'\in \{1,2,\dots,r_{k-1}\}$ such that $i\in E_t^{k-1}$ and $i'\in E_{t'}^{k-1}$. By definition, $i\in E_s^k$ and $i'\in E_{s'}^k$ with $s<s'$ and $v_s^k=v_{s'}^k=1$ imply $t\leq t'$ and $v_t^{k-1}=v_{t'}^{k-1}=1$. If $t<t'$, the claim $\pi(i)<\pi(i')$ follows directly by assumption. Otherwise, we know $i\in O(E_t^{k-1})$ and $i'\in I(E_t^{k-1})$ and \ref{ThmItem:MedianSeparates} proves the claim.

Claim \ref{ThmItem:Partition} follows from the fact that we assumed that $\mathcal{E}^{k-1}$ is a partition of $[n]$. By \ref{ThmItem:TrisectIsPartition}, $\mathcal{E}^k$ is obtained by replacing all active sets in $\mathcal{E}^{k-1}$ by a trisection, which in total yields again a partition of $[n]$.

\paragraph{Number of iterations:} Assume that for some $k$ the vector $v^k$ is non-zero. By \ref{ThmItem:SetsShrink} we know that for $t$ with $v_t^k=1$ it holds $|E_t^k|\leq 2^{-k}n\leq 1$. By definition, $|E_t^k|>4\rho^2/\lambda_1 h^2>1$. Therefore, $k\leq \log_2(n)$. This implies, that $K\leq \lceil \log_2(n)\rceil$ with $v_{K-1}\neq \mathbf{0}$ and $v_K=\mathbf{0}$ exists. So the algorithm terminates after round $K$ on the event $\xi^K$, which holds with probability at least $1-\delta/2$.
\end{proof}

\begin{proof}[Proof of Corollary~\ref{Cor:PropertiesTree}] 
On $\xi^K$, we assume that concentration inequalities of the form \eqref{Eq:ConcentrationInTree} hold. Therefore, Corollary~\ref{CorItem:InO}--\ref{CorItem:InP} follow like in Lemma~\ref{Lemma:TrisectionEmpiricalMedian}. Corollary~\ref{CorItem:TrisectionSubsets} and \ref{CorItem:PTildeSubset} are an adaptation of Lemma~\ref{Cor:SubsetOfTilde}. The last point Corollary~\ref{CorItem:ErrorBound} follows like Lemma~\ref{Lemma:Blocks}.
\end{proof}
\subsection{Proofs of Lemmas~\ref{Lemma:PermutationInInterval}--\ref{lem:upper_boud_error_permutation}}\label{Sec:PermutationError}

\begin{proof}[Proof of Lemma~\ref{Lemma:PermutationInInterval}]
	For $K$, the statement follows immediately from the definition of $\hat{\pi}$. Now consider any $k <K$, and $t < s$. By the hierarchical construction of the sorting tree (see Figure~\ref{Fig:Tree}), it follows that $E_t^{k}$ (resp. $E_s^k$) is the disjoint union of $E_{l}^{K}$ for indices $l\in \mathcal{L}_t$ (resp. $l\in \mathcal{L}_t$) with $\max\{l: l\in \mathcal{L}_t\}< \min\{l: l\in \mathcal{L}_s\}$. For any $i\in E_l^{k}$ and $j\in E_s^{k}$, we have $\hat{\pi}(i)< \hat{\pi}(j)$. It follows that $\hat{\pi}$ is coherence with the partition $\mathcal{E}^k$ and the result follows. 
\end{proof}

Recall that, for $E\subseteq [n]$, we denote $\tilde{E}$ as its completion with respect to $\pi$. 
\begin{align}
\tilde{E}\coloneqq\left\{i\in[n]:\ \min_{i'\in E}\pi(i')\leq \pi(i)\leq \max_{i'\in E}\pi(i')\right\}\enspace . \label{Eq:TildeSet}
\end{align}

\begin{proof}[Proof of Lemma~\ref{Lemma:TauNotInO}]
We consider in the entire proof that we are on the event  $\xi^K$.	Note that $\hat{i}\notin \tilde{E}_s^k$ implies by definition \eqref{Eq:TildeSet} either 
that $\pi(\hat{i})>\pi(i')\hspace{.5cm} \forall i'\in E_s^k $ or that $\pi(\hat{i})<\pi(i')\quad  \forall i'\in E_s^k\enspace$. Assume w.l.o.g.~that
\begin{align}
\pi(\hat{i})>\pi(i')\quad \forall i'\in E_s^k \label{Eq:iHatLarger}\enspace . 
\end{align}
holds. Lemma~\ref{Lemma:PermutationInInterval} implies that 
		$\pi(\hat{i})=\hat{\pi}(i)\leq \sum_{s'\leq s}|E_{s'}^k|$.
	Since $\pi$ is bijective on $[n]$ (as it is a permutation), it follows that there exists $i''\in E_{s'}^k$ with $s'<s$ such that $\pi(i'')>\pi(\hat{i})$ holds. Moreover, \eqref{Eq:iHatLarger} implies
	\begin{align}
		\pi(i)<\pi(\hat{i})<\pi(i'')\enspace . \label{Eq:iSmalleri''}
	\end{align}
	By definition of the algorithm, there must be a maximal number $l<k$ such that $u\in \{1,2,\dots,r_l\}$ exists with $i,i''\in E_u^l$. By this maximality assumption, $E_u^l$ must be an active set and $i$ and $i''$ must end up in different sets of the partition after step $l$ (in terms of the sets $O(E_u^l),P(E_u^l),I(E_u^l)$) of the corresponding trisection.

	Since by assumption $E_s^k$ has the form $O(E_t^{k-1})$ or $I(E_t^{k-1})$ for some $t\in\{1,2,\dots,r_{k-1}\}$, the definition of the algorithm does not allow $i\in P(E_u^l)$. Therefore, because $s'<s$, only $i\in I(E_u^l)$ is possible. Since we stated in Equation~\eqref{Eq:iSmalleri''} that $\pi(i)<\pi(i'')$, and since by definition of $E_u^l$, $i$ and $i''$ must end up in different sets after step $l$, by Theorem~\ref{ThmItem:MedianSeparates}, it must then hold that 
 \begin{align}
 i''\in P(E_u^l)\enspace .\label{Eq:i''inP}    
 \end{align}
	Again with Theorem~\ref{ThmItem:MedianSeparates}, we can extend the inequalities~\eqref{Eq:iSmalleri''} to 
	\begin{align*}
			\pi(\overline{i}_u^l)<\pi(i)<\pi(\hat{i})<\pi(i'')\enspace ,
	\end{align*}
	which implies that $i,\hat{i}\in \tilde{P}(E_u^l)$ since both $\overline{i}_u^l$ and $i''$ belong to $P(E_u^l)$ (Theorem~\ref{ThmItem:MedianSeparates}) and by definition of $\tilde{P}(E_u^l)$ in \eqref{Eq:TildeSetsTrisection}.	If $\hat{i}\in E_u^l$, then Corollary~\ref{CorItem:PTildeSubset} implies that $i,\hat{i}\in \overline{P}(E_u^l)$, which completes the proof. 
 
 It remains to consider the case where $\hat{i}\notin E_u^l$.  Consider the maximal $l'<l$ such that $u\in \{1,2,\dots,r_{l'}\}$ exists with $i,\hat{i}\in E_{u'}^{l'}$. Again $E_{u'}^{l'}$ must be an active set and after the trisection $i$ (and therefore $i''$) must be contained in an active set. So in particular $i,i''\in O(E_{u'}^{l'})$ or $i,i''\in I(E_{u'}^{l'})$. Note that in both cases, the maximality assumption on $l'$ and Theorem~\ref{ThmItem:MedianSeparates}, together with \eqref{Eq:iSmalleri''} imply $\hat{i}\in P(E_{u'}^{l'})$.

	The proof is completed if we can show $i\in \overline{P}(E_{u'}^{l'})$. In the case $i\in I(E_{u'}^{l'})$, note that, by Theorem~\ref{ThmItem:MedianSeparates} together with Equation~\eqref{Eq:iSmalleri''}, it follows that 
	\begin{align*}
		\pi(\overline{i}_{u'}^{l'})<\pi(i)<\pi(\hat{i})\enspace ,
	\end{align*}
	and therefore $i\in \tilde{P}(E_{u'}^{l'})$. Since $i\in E_{u'}^{l'}$, the statement follows from Corollary~\ref{CorItem:PTildeSubset}.

 Hence, it only remains  to consider the case where $i\in O(E_{u'}^{l'})$. To show that $i\in \overline{P}(E_{u'}^{l'})$, by definition of $\overline{P}(E_{u'}^{l'})$,  we only need to establish that 
	\begin{align}\label{eq:objective:final}
		\frac{1}{|Q'|}\left|\sum_{j\in Q'}\Big[Y_{ij}^{(3l')}-Y_{\mmed{u'}{l'}{Q'}j}^{(3l')}\Big]\right|\leq 8\rho\sqrt{\lambda_1/|Q'|}\hspace{0.5cm}\forall Q'\in \mathcal{Q}_{u'}^{l'}\enspace .
	\end{align}
Consider such a set $Q'\in \mathcal{Q}(E_{u'}^{l'})$. Note that 
	\begin{align*}
		\frac{1}{|Q'|}\sum_{j\in Q'}\Big[Y_{\mmed{u'}{l'}{Q'}j}^{(3l')}-Y_{ij}^{(3l')} \Big]>2\rho\sqrt{\lambda_1/|Q'|}\enspace ,
	\end{align*}
	leads to $i\in I(E_{u'}^{l'})$ by Corollary~\ref{CorItem:InI}, a contradiction. So we know that
	\begin{align*}
		\frac{1}{|Q'|}\sum_{j\in Q'}\Big[Y_{\mmed{u'}{l'}{Q'}j}^{(3l')}-Y_{ij}^{(3l')}\Big]\leq2\rho\sqrt{\lambda_1/|Q'|}\enspace ,
	\end{align*}
	so it remains to be shown that
	\begin{align*}
		\frac{1}{|Q'|}\sum_{j\in Q'}\Big[Y_{ij}^{(3l')}-Y_{\mmed{u'}{l'}{Q'}j}^{(3l')}\Big]\leq8\rho\sqrt{\lambda_1/|Q'|}\enspace .
	\end{align*}

	We have seen earlier in the proof that also $i\in I(E_u^l)$ holds. We can adapt the proof of Lemma~\ref{Lemma:TrisectionEmpiricalMedian}: if for some $Q'\in \mathcal{Q}(E_{u'}^{l'})$ we have
		\begin{align*}
		\frac{1}{|Q'|}\sum_{j\in Q'}\Big[Y_{ij}^{(3l')}-Y_{\mmed{u}{l}{Q'}j}^{(3l')}\Big]>2\rho\sqrt{\lambda_1/|Q'|}\enspace ,	
	\end{align*}
	this would mean $G^{l'}_{ii'}=1$ for at least half of the $i'\in E_u^l\subset E_{u'}^{l'}$. Since the algorithm does not change entries of the graph that are already set to $1$, this would imply $i\in O(E_u^l)$, a contradiction. So we also have
	\begin{align}\label{eq:upper_1}
		\frac{1}{|Q'|}\sum_{j\in Q'}\Big[Y_{ij}^{(3l')}-Y_{\mmed{u}{l}{Q'}j}^{(3l')}\Big]\leq2\rho\sqrt{\lambda_1/|Q'|}\enspace .	
	\end{align}
	In the same spirit, we can show that
		\begin{align}\label{eq:upper_2}
		\frac{1}{|Q'|}\left|\sum_{j\in Q'}Y_{\mmed{u}{l}{Q'}j}^{(3l')}-Y_{i''j}^{(3l')}\right|\leq2\rho\sqrt{\lambda_1/|Q'|}\enspace ,	
	\end{align}
	because otherwise we would end up with a contradiction to $i''\in P(E_u^l)$ from \eqref{Eq:i''inP} by Corollary~\ref{CorItem:InP}.

	Recall from Equation~\eqref{Eq:iSmalleri''} that $\pi(\hat{i})<\pi(i'')$. The bi-isotonicity assumption and \eqref{Eq:ConcentrationInTree} yield
	\begin{align}\label{eq:upper_3}
		\frac{1}{|Q'|}\sum_{j\in Q'}\Big[Y_{i''j}^{(3l')}-Y_{\hat{i}j}^{(3l')}\Big]\leq\frac{1}{|Q'|}\sum_{j\in Q'}\Big[Y_{i''j}^{(3l')}-\lambda_1M_{i''j}^{(3l')}\Big]+\frac{1}{|Q'|}\sum_{j\in Q'}\Big[\lambda_1M_{\hat{i}j}^{(3l')}-Y_{\hat{i}j}^{(3l')}\Big]\leq 2\rho \sqrt{\lambda_1/|Q'|}\enspace .
	\end{align}

	Finally, from Lemma~\ref{Lemma:TrisectionEmpiricalMedian}, since we have $\hat{i}\in P(E_{u'}^{l'})$, it follows
	\begin{align}\label{eq:upper_4}
		\frac{1}{|Q'|}\left|\sum_{j\in Q'}\Big[Y_{\hat{i}j}^{(3l')}-Y_{\mmed{u'}{l'}{Q'}j}^{(3l')}\Big]\right|\leq 2\rho\sqrt{\lambda_1/|Q'|}\enspace .
	\end{align}
 Hence, we derive from~(\ref{eq:upper_1}--\ref{eq:upper_4}) that 
 \[
		\frac{1}{|Q'|}\left|\sum_{j\in Q'}\Big[Y_{ij}^{(3l')}-Y_{\mmed{u'}{l'}{Q'}j}^{(3l')}\Big]\right|\leq 8\rho\sqrt{\lambda_1/|Q'|}\enspace .
 \]
Then consider all $Q'\in \mathcal{Q}_{u'}^{l'}$, we obtain~\eqref{eq:objective:final}, which concludes the proof.

\end{proof}
\begin{proof}[Proof of Lemma~\ref{Lemma:TauInP}]
	If $\hat{i}\in E_s^k=P(E_t^{k-1})$, by Corollary~\ref{CorItem:TrisectionSubsets} there is nothing to prove. So assume w.l.o.g. that $\hat{i}\in E_{s_1}^k$ with $s_1<s$. We will consider the cases $\pi(i)<\pi(\hat{i})$ and $\pi(i)>\pi(\hat{i})$ separately.
	
	\paragraph{Case 1: $\pi(i)<\pi(\hat{i})$.} 	Consider $l$ maximal such that $u\in\{1,2,\dots,r_u^l\}$ exists with $i,\hat{i}\in E_u^l$. 
	Assume first, that $l=k-1$. Since $i\in P(E_t^{k-1})$ and we just assumed $\hat{i}\in E_t^{l}\cap \bigcup_{s'<s}E_{s'}^k$, only $\hat{i}\in O(E_t^{k-1})$ is possible. But then, by Theorem~\ref{ThmItem:MedianSeparates}, we have 
		$\pi(i)<\pi(\hat{i})<\pi(\overline{i}_t^{k-1})$,
	so $\hat{i}\in \tilde{P}(E_t^{k-1})\cap E_t^{k-1}\subseteq \overline{P}(E_t^{k-1})$ by Corollary~\ref{CorItem:PTildeSubset}.

	If $l<k-1$, then Theorem~\ref{ThmItem:MedianSeparates} does not allow that $\hat{i}\in O(E_u^l)$ and $i\in I(E_u^l)$. Furthermore, $i\in P(E_u^l)$ would lead to a contradiction to $w_t^{k-1}=1$. So from $s_1<s$, only $\hat{i}\in P(E_u^l)$ and $i\in I(E_u^l)$ is possible. From Theorem~\ref{ThmItem:MedianSeparates} it follows that $\pi(\overline{i}_u^l)<\pi(i)<\pi(\hat{i})$. Hence, we conclude that $i\in \tilde{P}(E_u^l)\cap E_u^l\subseteq \overline{P}(E_u^l)$ by Corollary~\ref{CorItem:PTildeSubset}.

	\paragraph{Case 2: $\pi(i)>\pi(\hat{i})$.} Note that we assumed $\hat{i}\in E_{s_1}^k$ with $s_1<s$, but that from Lemma~\ref{Lemma:PermutationInInterval} it follows
	\begin{align*}
		\pi(\hat{i})=\hat{\pi}(i)> \sum_{s'<s}|E_{s'}^k|\enspace . 
	\end{align*}
	By the bijectivity of $\pi$ and the Pigeonhole principle, this means that there must be $s_2\geq s$ and $i'\in E_{s_2}^k$ such that 
	\begin{align*}
		\pi(i')\leq \sum_{s'<s}|E_{s'}^k|<\pi(\hat{i})\enspace .
	\end{align*}
	Consider  now $l$ maximal such that $u\in\{1,2,\dots,r_u^l\}$ exists with $i',\hat{i}\in E_u^l$.

	If $l=k-1$, $s_1<s\leq s_2$ implies $i',\hat{i}\in E_t^{k-1}$ and $\hat{i}\in O(E_t^{k-1})$. By Theorem~\ref{ThmItem:MedianSeparates} this means $i'\in P(E_t^{k-1})$. Recall that
	\begin{align*}
		\pi(i')<\pi(\hat{i})<\pi(i)\enspace ,
	\end{align*}
	so $\hat{i}\in \tilde{P}(E_t^{k-1})\cap E_t^{k-1}\subseteq \overline{P}(E_t^{k-1})$ by Corollary~\ref{CorItem:PTildeSubset}.

	If $l<k-1$, note that again, by Theorem~\ref{ThmItem:MedianSeparates}, $\hat{i}\in O(E_u^l)$ and $i'\in I(E_u^l)$ is impossible, since we have $\pi(i')<\pi(\hat i)$. This leaves us with the possible cases $\hat{i}\in O(E_u^l)$,  $i'\in P(E_u^l)$ or $\hat{i}\in P(E_u^l)$, $i'\in I(E_u^l)$. Note that $s_1<s\leq s_2$ implies $i\in E_u^l$.

	If $\hat{i}\in O(E_u^l)$ and $i'\in P(E_u^l)$, then by our assumptions and Theorem~\ref{ThmItem:MedianSeparates} it follows
	\begin{align*}
		\pi(i')<\pi(\hat{i})<\pi(i)<\pi(\overline{i}_u^l)
	\end{align*}
	and therefore $i\in \tilde{P}(E_u^l)\cap E_u^l\subseteq \overline{P}(E_u^l)$ with Corollary~\ref{CorItem:PTildeSubset}. 
	
	Hence, it remains to consider the subcase where $\hat{i}\in P(E_u^l)$ and $i'\in I(E_u^l)$. Since $s_1<s$, we know that $i\in I(E_u^l)$. Consider $Q'\in \mathcal{Q}_u^l$. We will prove that 
	\begin{align*}
		\frac{1}{|Q'|}\left|\sum_{j\in Q'}Y^{(3l)}_{\mmed{u}{l}{Q'}j}-Y^{(3l)}_{ij}\right|\leq 8\rho\sqrt{\lambda_1/|Q'|}\enspace .
	\end{align*}
	Indeed, this property will imply that $i\in \overline{P}(E_u^l)$ and conclude the proof.

	We have seen that $i\notin O(E_u^l)$, so Corollary~\ref{CorItem:InO} implies
	 	\begin{align*}
	 	\frac{1}{|Q'|}\sum_{j\in Q'}\Big[Y^{(3l)}_{ij}-Y^{(3l)}_{\mmed{u}{l}{Q'}j}\Big]\leq 2\rho\sqrt{\lambda_1/|Q'|}\enspace ,
	 \end{align*}
	 and we are left with the proof of
	 	\begin{align}\label{eq:objective:sub}
	 	\frac{1}{|Q'|}\sum_{j\in Q'}\Big[Y^{(3l)}_{\mmed{u}{l}{Q'}j}-Y^{(3l)}_{ij}\Big]\leq 8\rho\sqrt{\lambda_1/|Q'|}\enspace .
	 \end{align}
	 Since $\hat{i}\in P(E_u^l)$, Corollary~\ref{CorItem:InI} yields 
	 \begin{align*}
	 	\frac{1}{|Q'|}\left|\sum_{j\in Q'}Y^{(3l)}_{\mmed{u}{l}{Q'}j}-Y^{(3l)}_{\hat{i}j}\right|\leq 2\rho\sqrt{\lambda_1/|Q'|}\enspace .
	 \end{align*}
Since  $\pi(i')<\pi(\hat{i})$, the bi-isotonicity assumption and Equation~\eqref{Eq:ConcentrationInTree} imply that 
	 \begin{align*}
	 	\frac{1}{|Q'|}\sum_{j\in Q'}\Big[Y^{(3l)}_{\hat{i}j}-Y^{(3l)}_{i'j}\Big]\leq 2\rho\sqrt{\lambda_1/|Q'|}\enspace .
	 \end{align*}
	Hence, we have shown~\eqref{eq:objective:sub} if we can prove that  
	 	 \begin{align}
	 	\frac{1}{|Q'|}\sum_{j\in Q'}\Big[Y^{(3l)}_{i'j}-Y^{(3l)}_{ij}\Big]\leq4\rho\sqrt{\lambda_1/|Q'|}\enspace . \label{Eq:DistanceiAndi'}
	 \end{align}

	 Observe that 
	 	 \begin{align*}
	 	\frac{1}{|Q'|}\sum_{j\in Q'}\Big[Y^{(3l)}_{\mmed{t}{k-1}{Q'}j}-Y^{(3l)}_{ij}\Big]\leq 2\rho\sqrt{\lambda_1/|Q'|}
	 \end{align*}
	 must hold. Otherwise, one could argue like in the proof of Lemma~\ref{Lemma:TrisectionEmpiricalMedian} and would obtain $i\in I(E_t^{k-1})$.

	 In the case $i'\in E_t^{k-1}$, note that $s_2\geq s$ implies $i'\notin O(E_t^{k-1})$. By an adaptation of the proof of Lemma~\ref{Lemma:TrisectionEmpiricalMedian}, this implies
	 	 	 \begin{align*}
	 	\frac{1}{|Q|'}\sum_{j\in Q'}\Big[Y^{(3l)}_{i'j}-Y^{(3l)}_{\mmed{t}{k-1}{Q'}j}\Big]\leq 2\rho\sqrt{\lambda_1/|Q'|}
	 \end{align*}
	 and we obtain Equation~\eqref{Eq:DistanceiAndi'}.

	 In the case $i'\notin E_t^{k-1}$, consider $l'$ maximal such that $u'\in\{1,2,\dots, r_{l'}\}$ exists with $i,i'\in E_{u'}^{l'}$. Again, by Theorem~\ref{ThmItem:MedianSeparates} and 
	 \begin{align*}
	 	\pi(i')<\pi(\hat{i})<\pi(i)\enspace ,
	 \end{align*}
	 it cannot hold $i\in O(E_{u'}^{l'})$ and $i'\in I(E_{u'})$. Since $w_t^{k-1}=1$ we can also exclude $i\in P(E_{u'}^{l'})$. So $s<s_2$ only permits $i\in O(E_{u'}^{k'})$ and $i'\in P(E_{u'}^{k'})$.

	 Like before, one can show 
	 	 	 \begin{align*}
	 	\frac{1}{|Q'|}\sum_{j\in Q'}\Big[Y^{(3l)}_{\mmed{u'}{l'}{Q'}j}-Y^{(3l)}_{ij}\Big]\leq 2\rho\sqrt{\lambda_1/|Q'|}\ , 
	 \end{align*}
	 otherwise $i$ would be in $I(E_{u'}^{l'})$. Similarly, we have 
	 \begin{align*}
	 	\frac{1}{|Q'|}\sum_{j\in Q'}\Big[Y^{(3l)}_{\mmed{u'}{l'}{Q'}j}-Y^{(3l)}_{ij}\Big]\leq 2\rho\sqrt{\lambda_1/|Q'|}
	 \end{align*}
	 otherwise, $i'$ would be in $O(E_{u'}^{l'})$). This shows Equation~\eqref{Eq:DistanceiAndi'} and completes the proof.
\end{proof}

 \begin{proof}[Proof of Lemma~\ref{lem:upper_boud_error_permutation}]
First, for $i_1,i_2\in [n]$, let us define
	\begin{align*}
		R_{p,h}(i_1,i_2)\coloneqq \left|\{j\in[d]:\ M_{i_1j}\geq p+h,\ M_{i_2j}\leq p-h\}\right|+ \left|\{j\in[d]:\ M_{i_1j}\leq p-h,\ M_{i_2j}\geq p+h\}\right|\enspace .
	\end{align*}

	Also, consider $\tau\coloneqq \pi^{-1}\circ \hat{\pi}$ and note that $\tau$ as permutation is bijective, as well as $\eta$. Therefore, it holds that 
    \begin{align}
        \mathcal R_{p,h}(\hat{\pi})&=\sum_{i=1}^n\left[\left|\{j\in[d]:\  M_{\pi^{-1}(i)\eta^{-1}(j)}\leq p-h,\ M_{\hat\pi^{-1}(i)\eta^{-1}(j)}\geq p+h\}\right|\right. \notag\\
        &\hphantom{=\sum}+\left.\left|\{j\in[d]:\  M_{\pi^{-1}(i)\eta^{-1}(j)}\geq p+h,\ M_{\hat\pi^{-1}(i)\eta^{-1}(j)}\leq p-h\}\right|\right]\notag\\
        &=\sum_{i=1}^n\left[\left|\{j\in[d]:\  M_{ij}\leq p-h,\ M_{\tau(i)j}\geq p+h\}\right|\right. +\left.\left|\{j\in[d]:\  M_{ij}\geq p+h,\ M_{i\tau(j)}\leq p-h\}\right|\right]\notag\\
        &=\sum_{s=1}^{r_K}\sum_{i\in E_s^K}R_{p,h}(i,\tau(i))\notag\\
		&=\sum_{k=1}^K\sum_{\substack{t=1\\ E_t^k\in \mathcal{O}^k\cup\mathcal{I}^k}}^{r_k}\sum_{i\in E_t^k}R_{p,h}(i,\tau(i))+ \sum_{k=1}^K\sum_{\substack{t'=1\\ w_{t'}^{k-1}=1}}^{r_{k-1}}\sum_{i\in P(E_{t'}^{k-1})}R_{p,h}(i,\tau(i))\enspace , \label{Eq:ErrorDecomposition}
	\end{align}
 where, in the last line, we used the property that any set $E_s^{K}$ is either belongs to some $\mathcal{O}^k\cup\mathcal{I}^k$ or is of the form $P(E_{t'}^{k-1})$ for some $t$ and $k\leq K$. 
    We will first provide  upper bounds for 
    \begin{align}
        \sum_{i\in E_t^k}R_{p,h}(i,\tau(i)) \hspace{.5cm}\text{with}\hspace{.5cm}E_t^k\in \mathcal{O}^k\cup\mathcal{I}^k \label{Eq:CaseIandO}\\ \intertext{and}
        \sum_{i\in P(E_{t'}^{k-1})}R_{p,h}(i,\tau(i)) \hspace{.5cm}\text{with}\hspace{.5cm}w_{t'}^{k-1}=1\enspace .\label{Eq:CaseP}
    \end{align}

	\paragraph{Upper Bound for \eqref{Eq:CaseIandO}.} W.l.o.g., let us consider $E_t^k=O(E_{t'}^{k-1})$ with $w_t^k=0$. Then, 
	\begin{align*}
		\sum_{i\in E_t^k}R_{p,h}(i,\tau(i))=\sum_{i\in O(E_{t'}^{k-1}),\ \tau(i)\in \tilde{O}(E_{t'}^{k-1})}R_{p,h}(i,\tau(i))+\sum_{i\in O(E_{t'}^{k-1}),\ \tau(i)\notin \tilde{O}(E_{t'}^{k-1})}R_{p,h}(i,\tau(i))\enspace . 
	\end{align*}
	Recall that we defined 
	\begin{align*}
		Q^*(E_t^k)=\{j\in[d]:\ \max_{i\in E_t^k}M_{ij}\geq p+h,\ \min_{i\in E_t^k}M_{ij}\leq p-h\}\enspace ,
	\end{align*}
	and consider $j\in [d]$, $i_1\in O(E_{t'}^{k-1})=E_t^k$ and $i_2\in \tilde{O}(E_{t'}^{k-1})$. By the bi-isotonicity assumption,
	\begin{align*}
		M_{i_2j}\leq p-h \Rightarrow \min_{i\in E_t^k}M_{ij}\leq p-h \quad \text{ and }\quad 
		M_{i_2j}\geq p+h \Rightarrow \max_{i\in E_t^k}M_{ij}\geq p+h\enspace .
	\end{align*}
	So if 
	\begin{align*}
		M_{i_1j}\geq p+h,\ M_{i_2j}\leq p-h \hspace{.5cm}\text{or}\hspace{.5cm}M_{i_1j}\leq p-h,\ M_{i_2j}\geq p+h\enspace ,
	\end{align*}
	we deduce that $j\in Q^*(E_t^k)$. This implies that  $R_{p,h}(i_1,i_2)\leq |Q^*(E_t^k)|$ and consequently
	\begin{align*}
		\sum_{i\in O(E_{t'}^{k-1}),\ \tau(i)\in \tilde{O}(E_{t'}^{k-1})}R_{p,h}(i,\tau(i))\leq |E_t^k|\cdot |Q^*(E_t^k)|\enspace ,
	\end{align*}
	Recall that we assumed $w_t^k=0$ because $E_t^k$ is not to be refined anymore. If $|E_t^k|>4\rho^2/\lambda_1 h^2$, the algorithm computes $Q_t^k$ but it must hold $|Q_t^k|\leq 4\rho^2/\lambda_1 h^2$. By Theorem~\ref{ThmItem:Envelope}, it holds $Q_t^k\subseteq Q^*(E_t^k)$. We conclude that 
		\begin{align*}
		\sum_{i\in O(E_{t'}^{k-1}),\ \tau(i)\in \tilde{O}(E_{t'}^{k-1})}R_{p,h}(i,\tau(i))\leq 4\frac{\rho^2}{\lambda_1 h^2}\cdot \left(|E_t^k|\vee |Q^*(E_t^k)|\right)\enspace .
	\end{align*}

	Next, note that $i\in O(E_{t'}^{k-1})$ but $\tau(i)\notin \tilde{O}(E_{t'}^{k-1})$ implies, by Lemma~\ref{Lemma:TauNotInO}, that it exists $l\leq k$ and $u\in\{1,2,\dots,r_{l-1}\}$ such that $i,\tau(i)\in \overline{P}(E_u^{l-1})$. We can therefore upper bound 
	\begin{align*}
		\sum_{i\in O(E_{t'}^{k-1}),\ \tau(i)\notin \tilde{O}(E_{t'}^{k-1})}R_{p,h}(i,\tau(i))\leq \sum_{l=1}^{K}\sum_{\substack{u=1\\ w_u^{l-1}=1}}^{r_{l-1}}\sum_{\substack{i\in E_t^k\cap\overline{P}(E_u^{l-1})\\ \tau(i)\in \overline{P}(E_u^{l-1})}}R_{p,h}(i,\tau(i))\enspace .
	\end{align*}
	Gathering the two previous inequalities, we conclude that 
	\begin{align}
		\sum_{i\in E_t^k}R_{p,h}(i,\tau(i))\leq 4\frac{\rho^2}{\lambda_1 h^2}\cdot \left(|E_t^k|\vee |Q^*(E_t^k)|\right)+\sum_{l=1}^{K}\sum_{\substack{u=1\\ w_u^{l-1}=1}}^{r_{l-1}}\sum_{\substack{i\in E_t^k\cap\overline{P}(E_u^{l-1})\\ \tau(i)\in \overline{P}(E_u^{l-1})}}R_{p,h}(i,\tau(i))\enspace . \label{Eq:UpperBoundOandI}
	\end{align}

	\paragraph{Upper Bound for \eqref{Eq:CaseP}.} In the case $E_t^k=P(E_{t'}^{k-1})$, we can apply Lemma~\ref{Lemma:TauInP} and obtain again some $l\leq k$ and $u\in\{1,2,\dots,r_{l-1}\}$ such that $i,\tau(i)\in \overline{P}(E_u^{l-1})$. This yields 
	\begin{align}
		\sum_{i\in E_t^k}R_{p,h}(i,\tau(i))\leq \sum_{l=1}^{K}\sum_{\substack{u=1\\ w_u^{l-1}=1}}^{r_{l-1}}\sum_{\substack{i\in E_t^k\cap\overline{P}(E_u^{l-1})\\ \tau(i)\in \overline{P}(E_u^{l-1})}}R_{p,h}(i,\tau(i))\enspace .  \label{Eq:UpperBoundP}
	\end{align}
Consider $E_t^{l-1}$ with $w_t^{l-1}=1$ and an arbitrary bijection $\tilde{\tau}:\overline{P}(E_u^{l-1})\to \overline{P}(E_u^{l-1})$ with $\tilde{\tau}(i)=\tau(i)$ if $\tau(i)\in \overline{P}(E_u^{l-1})$. Then
	\begin{align*}
		\sum_{\substack{i\in \overline{P}(E_u^{l-1})\\ \tau(i)\in \overline{P}(E_u^{l-1})}}R_{p,h}(i,\tau(i))&\leq \sum_{i\in \overline{P}(E_u^{l-1})}R_{p,h}(i,\tilde{\tau}(i))\\
		&= \sum_{j\in [d]} \sum_{i\in \overline{P}(E_t^{k-1})}\mathbbm{1}\{M_{ij}\geq p+h,\ M_{\tilde{\tau}(i)j}\leq p-h\}\\
		&\hphantom{= \sum_{i\in \overline{P}(E_t^{k-1})}\sum_{j\in [d]}}
		+\mathbbm{1}\{M_{ij}\leq p-h,\ M_{\tilde{\tau}(i)j}\geq p+h\}\enspace .
	\end{align*}
	Note that 
	\begin{align*}
	\sum_{i\in \overline{P}(E_t^{k-1})} \mathbbm{1}\{M_{ij}\geq p+h,\ M_{\tilde{\tau}(i)j}\leq p-h\}
	 &\leq |\{i\in \overline{P}(E_t^{k-1}):\ M_{ij}\geq p+h\}|\wedge|\{i\in \overline{P}(E_t^{k-1}):\ M_{\tilde{\tau}(i)j}\leq p+h\}|\\
	&=|\{i\in \overline{P}(E_t^{k-1}):\ M_{ij}\geq p+h\}|\wedge|\{i\in \overline{P}(E_t^{k-1}):\ M_{ij}\leq p+h\}|\enspace .
	\end{align*}
 Recall the definition~\eqref{eq:definition:loss:Rtilde} of $\tilde{\mathcal R}_{p,h,\overline{P}(E_u^{l-1})[d]}$. We obtain
\begin{align}\label{Eq:UpperBoundP2}
 \sum_{\substack{i\in \overline{P}(E_u^{l-1})\\ \tau(i)\in \overline{P}(E_u^{l-1})}}R_{p,h}(i,\tau(i))\leq 2 \tilde{\mathcal R}_{p,h,\overline{P}(E_u^{l-1})[d]}\enspace . 
\end{align}
	\paragraph{Combining the upper bounds.} Combining \eqref{Eq:UpperBoundOandI}, \eqref{Eq:UpperBoundP}, and \eqref{Eq:UpperBoundP2} with \eqref{Eq:ErrorDecomposition} gives us
	\begin{align*}
		\mathcal{R}_{p,h}(\hat{\pi})\leq 4\frac{\rho^2}{\lambda_1 h^2}\sum_{k=1}^K\sum_{\substack{t=1\\ E_t^k\in \mathcal{O}^k\cup\mathcal{I}^k}}^{r_k}|E_t^k|\vee |Q^*(E_t^k)|+2\sum_{l=1}^{K}\sum_{\substack{u=1\\ w_u^{l-1}=1}} \tilde{\mathcal R}_{p,h,\overline{P}(E_u^{l-1})[d]}\enspace .
	\end{align*}
\end{proof}

\section{Proof of Corollary~\ref{cor:error_bound:permutation:multiple}} 

In this section, we explain how the proof of Theorem~\ref{Thm:ErrorBoundRows} easily extends to the case of multiples thresholds and tolerances up to a minor change. 
Consider some $l\in\{1,\dots,m\}$. Note that the only difference for Algorithm~\ALGO{} with multiple thresholds instead of one threshold is that we consider multiple envelope sets, constructed in line~\ref{line:envelope}. To this end, define
\begin{align*}
    Q^*_{p_l,h_l}(E) \coloneqq \{j\in[d]:\ \max_{i\in E}M_{ij}\geq p_l+h_l,\ \min_{i\in E}M_{ij}\leq p_l-h_l\}\enspace ,
\end{align*}
and consider the envelope with respect to $p_l,h_l$ for $E_t^{k-1}$ with $v_t^{k-1}$ as $Q_{t,l}^{k-1}$.
Again, we only use this set for the \texttt{ScanAndUpdate} procedure if it is the case that Property~\ref{Assum1} is fulfilled, and construct each time a new $\mathcal{Q}_{t,l}^{k-1}$ as in \eqref{Eq:DefLeftRight}. Therefore, Lemma~\ref{Lemma:GoodEvent}, Lemma~\ref{Lemma:Graph} and Lemma~\ref{Lemma:PropertiesTrisection} are not effected. It is not hard to see that Lemma~\ref{Lemma:TrisectionEmpiricalMedian} also holds, if our trisection is based on running \texttt{ScanAndUpdate} for multiple envelope sets. One way of thinking is, that the resulting graph is more informative than a graph being updated with respect to a single envelope set only. The first three points of Lemma~\ref{Cor:SubsetOfTilde} are again an immediate consequence, and Corollary~\ref{CorItem:PTildeSubset} can be adapted. The proof of Lemma~\ref{Lemma:Blocks} can be adapted, such that we obtain the error bound stated in \eqref{Eq:JointBound}.
For each $l$, we can construct a good event exactly as in the case where only one threshold $p$ and tolerance $h$ is considered, so Lemma~\ref{Lemma:GoodEventEnvelopes} and Lemma~\ref{Lemma:Envelopes} can be applied to all sets $Q^*_{p_l,h_l}(E_t^{k-1})$ and $Q_{t,l}^{k-1}$ with $v_t^{k-1}$ as well.

So in the proof of Theorem~\ref{Thm:Properties}, what changes is the number of concentration inequalities considered, leading to a final event of probability at least $1-m\delta/2$ instead of $1-\delta/2$ due to a union bound, and a slightly changed statement of Theorem~\ref{ThmItem:Envelope}, such that
\begin{align*}
    Q_{t,l}^{k-1}\subseteq Q^*_{p_l,h_l}(E_t^{k-1}).
\end{align*}
and again
\begin{align*}
    \sum_{t\geq 1, \ w_t^{k-1}=1}|Q_{t,l}^{k-1}|\leq 3d
\end{align*}
for $l=1,\dots,m$. On this newly constructed event, Corollary~\ref{Cor:PropertiesTree} changes slightly. For the first three points, our results can be reformulated with respect to sets $Q'\in\bigcup_{l=1}^m\mathcal{Q}_{t,l}^{k-1}$. Corollary~\ref{CorItem:PTildeSubset} then holds if we consider 
\begin{align*}
    \overline P _l(E_t^{k-1})\coloneqq \left\{i\in E_t^{k-1}:\ \frac{1}{|Q'|}\left|\sum_{j\in Q'} Y_{ij}^{(3k)}-Y_{\mmed{t}{k-1}{Q'}j}^{(3k)}\right|\leq 8\rho\sqrt{\lambda_1/|Q'|}\hspace{.5cm }\forall Q'\in \mathcal{Q}_{t,l}^{k-1}\right\}
\end{align*}
as replacement for $\overline P$, and for \ref{CorItem:ErrorBound} we have
\begin{align}
    \tilde{R}_{p_l,h_l,\overline P_l(E_t^{k-1})[d]}\leq 3744\rho(|E_t^{k-1}|\vee|Q_{t,l}^{k-1}|)/\lambda_1h_l^2\enspace ,\label{Eq:JointBound}
\end{align}
both jointly for all $l=1,\dots,m$.

Lemma~\ref{Lemma:TauNotInO} and Lemma~\ref{Lemma:TauInP} can also be stated with respect to sets of the form $\overline P_l(E_u^l)$, and with the adaptations in Corollary~\ref{Cor:PropertiesTree} just stated we can also adapt the proofs of the lemmas.

The proof of Corollary~\ref{cor:error_bound:permutation:multiple} can be concluded just like the proof of Theorem~\ref{Thm:ErrorBoundRows} and Theorem~\ref{Thm:ErrorBound} for each $l=1,\dots,m$, and substituting $\delta$ by $\delta/m$ finally yields the claimed bound with probability $\geq 1-\delta$.

    \section{Proof for the classification matrix}\label{Sec:proof:reconstruction}

\begin{proof}[Proof of Theorem~\ref{Thm:ErrorBound_classificatoin}]
This theorem is a straightforward corollary of \Cref{cor:error_bound:permutation:multiple} and \Cref{prp:reconstruction}.
\end{proof}

\begin{proof}[Proof of \Cref{prp:reconstruction}]
We consider three cases depending on the values of  $(k_h,l_h)$. First, we assume that $k_h\vee l_h >1$ and that $k_h\leq n$ and $l_h\leq d$. The simple cases where $k_h=l_h=1$ or $k_h\geq n$ or $l_h\geq d$ are postponed to the end of the proof. 

\medskip

 For the sake of simplicity we simply write $\mathcal{L}_{p,h}$ for $\mathcal{L}_{p,h}(\hat{\pi},\hat{\eta})$
Let us upper bound the reconstruction error of $\hat{R}_{p,h}$.
\begin{equation}\label{eq:upper_l_0_1}
L_{0,1,\texttt{NA}}[\hat R_{p,h}]= \sum_{(i,j): M_{ij}\leq p-h} \mathbbm{1}\{(\hat{R}_{p,h})_{ij}=1\} + \sum_{(i,j): M_{ij}\geq p+h} \mathbbm{1}\{(\hat{R}_{p,h})_{ij}=0\}= (I) + (II)\enspace . 
\end{equation}
By symmetry, we focus on the second term $(II)$ of the rhs. Define $U_0\coloneqq \{(i,j): M_{\hat{\pi}^{-1}(i)\hat{\eta}^{-1}(j)}\geq p+h\}$ as the level set of the matrix $M_{\hat{\pi}^{-1},\hat{\eta}^{-1}}$ ordered according to the estimated rankings. We also define $V_1\coloneqq\{(i,j): M_{\pi^{-1}(i)\eta^{-1}(j)}\geq p+h/2\}$ as the level set of the oracle ordered matrix $M_{\pi^{-1},\eta^{-1}}$ at $p+h/2$. 

By definition of the loss function, we know that 
$|U_0\setminus V_1|\leq \mathcal{L}_{p+3h/4,h/4}$. Since $M_{\pi^{-1},\eta^{-1}}$ is a bi-isotonic matrix, as long as $V_1$ is non-empty, $V_1$ is a connected subset of $[n]\times [d]$ that contains $(1,1)$. We call $\mathcal{B}_0$ the collection of $k_h\times l_h$ blocks that are fully included in $V_1$ and $V'_1$ the subset of $V_1$ which does not belong to any of the block in $\mathcal{B}_0$. In fact, $V'_1$ corresponds to the boundary of the level set $V_1$ so that  
\[
|V'_1|\leq k_hl_h (\lceil n/k_h\rceil+ \lceil d/l_h\rceil )\leq c (\sigma^2\vee 1) \frac{n\vee d}{\lambda_0 h^2}\log(nd)\ , 
\]
for some universal constant $c$ since we assume that $k_h\vee l_h >1$. Since $(II)$ corresponds to the number of entries in $U_0$ of the block constant matrix $\overline{Y}^{\hat{\pi}, \hat{\eta}}_B$ that are below $p$, we arrive for the error $(II)$ at the following bound in~\eqref{eq:upper_l_0_1}
\begin{align} \nonumber 
 (II)&\leq \sum_{B\in \mathcal{B}_0} |B|\mathbbm{1}\{\overline{Y}^{\hat{\pi}, \hat{\eta}}_B\leq p\}  + |V'_1| + |U_0\setminus V_1|\\
 &\leq  \sum_{B\in \mathcal{B}_0}|B| \mathbbm{1}\{\overline{Y}^{\hat{\pi}, \hat{\eta}}_B\leq p\} + \mathcal{L}_{p+3h/4,h/4} + c (\sigma^2\vee 1)\frac{n\vee d}{\lambda_0 h^2}\log(nd) \label{eq:upper_(I)}\enspace . 
\end{align}
Let us now define the level sets  $W_{0}\coloneqq  \{(i,j)\in V_1: M_{\hat{\pi}^{-1}(i)\hat{\eta}^{-1}(j)}\in (p-h, p+h/4]\}$ and, for $s=1,\ldots , s_{\max}$ where $s_{\max}:= \lceil \log_2(1/h)\rceil$, we define 
$W_{s}\coloneqq \{(i,j)\in V_1: (M_{\hat{\pi}^{-1}(i)\hat{\eta}^{-1}(j)}\in [p-h2^{s}, p-h2^{s-1})\}$. Again by definition of the loss functions and of $V_1$, we have 
\begin{align}\label{eq:control:W_0}
|W_0\cap V_1|\leq \mathcal{L}_{p+3h/8,h/8}; \quad \quad  |W_s\cap V_1|\leq \mathcal{L}_{p-h2^{s-2},h2^{s-2}}\ , 
\end{align}
for $s=1,\ldots, s_{\max}$. Recall that the observations are $\sigma^2$-subGaussians. Conditionally to $\hat{\pi}$ and $\hat{\eta}$, we have, with probability higher than $1/(nd)^2$, for any $B\in \mathcal{B}_0$, 
\[
\overline{Y}^{\hat{\pi}, \hat{\eta}}_{B} -p \geq h/4 - \sigma \sqrt{\frac{4\log(nd)}{N^{\hat{\pi}, \hat{\eta}}_B}} - \frac{3}{2N^{\hat{\pi}, \hat{\eta}}_B}\sum_{s=0}^{s_{\max}}N^{\hat{\pi}, \hat{\eta}}_{W_s\cap B}h 2^s \ , 
\]
where $N^{\hat{\pi}, \hat{\eta}}_{W_s\cap B}= \sum_{t=1}^{N'}\sum_{(k,l)\in  W_s\cap B}\mathbbm{1}\{(I_t,J_t)=(\hat{\pi}^{-1}(k),\hat{\eta}^{-1}(l))\}$ is the number of observations in $W_s\cap B$.
The random variables $N_B$ follow independent Poisson distribution with parameter $\lambda_0k_h l_h$. By Bennett's inequality, $\mathbb{P}[N_B\leq \lambda_0/2 k_h l_h]\leq \exp[-0.15 \lambda_0 k_h l_h]$.  Hence, with probability higher than $1/(nd)^2$, we have $N_B\geq \lambda_0/2 k_h l_h$ for all $B\in \mathcal{B}_0$. Hence, it follows that 
\begin{align*}
\overline{Y}^{\hat{\pi}, \hat{\eta}}_{B} -p &\geq h/4 -\sigma \sqrt{\frac{8\log(nd)}{\lambda_0k_hl_h}} - \frac{3h}{\lambda_0k_hl_h}\sum_{s=1}N^{\hat{\pi}, \hat{\eta}}_{W_s\cap B} 2^s \\ 
&\geq    h/8 +  \frac{3h}{\lambda_0k_hl_h}\sum_{s=1}N^{\hat{\pi}, \hat{\eta}}_{W_s\cap B} 2^s \ , 
\end{align*}
by definition of $k_h$ and $l_h$. Coming back to~\eqref{eq:upper_l_0_1} and applying Markov inequality, we deduce that 
\[
(II)\leq \frac{24}{\lambda_0 }\sum_{s=0}^{s_{\max}} N^{\hat{\pi}, \hat{\eta}}_{W_s} 2^s + c(\sigma^2\vee 1)\frac{n\vee d}{\lambda_0 h^2}\log(nd)+ \mathcal{L}_{p+3h/4,h/4}\ . 
\]
The random variables $N^{\hat{\pi}, \hat{\eta}}_{W_s}$ follow a Poisson distribution with parameters $\lambda_0|W_s|$.  By Bennett inequality, with probability higher than $1-1/(nd)^2$, we have, for any $s=0,\ldots, s_{max}$,  $N^{\hat{\pi}, \hat{\eta}}_{W_s}\leq c'\lambda_0 |W_s| + c''\log(nd)$, where $c'$ and $c''$ are absolute constants. Gathering this bound with~\eqref{eq:control:W_0}, we conclude that, with probability higher than $1-3/(nd)^2$, we have 
\[
(II)\leq c \left[ \mathcal{L}_{p+3h/4,h/4} +  \mathcal{L}_{p+3h/8,h/8}+  \sum_{s=1}^{s_{\max}} 2^{s} \mathcal{L}_{p-h2^{s-2},h2^{s-2}} + (\sigma^2\vee 1) \frac{n\vee d}{\lambda_0 h^2}\log^2(nd)\right] \enspace . 
\] 
By handling analogously $(I)$, we conclude that, with probability higher than $1-6/(nd)^2$, we have 
\begin{align*}
L_{0,1,\texttt{NA}}[ \hat{R}_{p,h}]& \leq  c\Big[\sum_{s=1}^{s_{\max}}2^s\left(\mathcal{L}_{p+h2^{s-2},h2^{s-2}} + \mathcal{L}_{p-h2^{s-2},h2^{s-2}}\right)\\ & + \mathcal{L}_{p+3h/4,h/4} +  \mathcal{L}_{p+3h/8,h/8}+ \mathcal{L}_{p-3h/4,h/4} +  \mathcal{L}_{p-3h/8,h/8}+  (\sigma^2\vee 1) \frac{n\vee d}{\lambda_0 h^2}\log^2(nd) \Big]\enspace . 
\end{align*}
The result follows since, on the remaining event, the loss is smaller than $nd$.

Finally, we consider the extreme cases.  First, assume $k_h=l_h=1$. In this case, each of the block has size $1$. Hence, for any entry $(i,j)$, $(\hat{R}_{p,h})_{ij}$ is simply $\mathbbm{1}\{Y_{ij}\geq p\}$. By Bennett's inequality, with probability higher than $1-1/(nd)^2$, we have $N'_{ij}\geq \lambda_0/2$. Then, by standard deviation inequality for subGaussian variables, we deduce, that with probability higher than $1-3/(nd)^2$, we have 
\[
|Y_{ij}-M_{ij}|\leq 4\frac{\sigma}{\lambda_0} \sqrt{\log(nd)} < 1/ h\ . 
\]
Hence, under this event, we have $L_{0,1,\texttt{NA}}[ \hat{R}_{p,h}]=0$. This concludes the proof for this case. Finally, we assume that either $k_h\geq n$ or $l_h\geq d$. Then, we use the trivial bound $L_{0,1,\texttt{NA}}[ \hat{R}_{p,h}]\leq nd$, since we have here $nd\lesssim (\sigma^2\vee 1)\frac{n\vee d}{\lambda_0 h^2}\log^2(nd)$.

\end{proof}

\section{Proof of the lower bound}\label{Sec:LowerBound}

\begin{proof}[Proof of Theorem~\ref{Thm:LowerBound}]
This proof is based on  Fano's method, see e.g. \cite{wainwright_high-dimensional_2019}, that we recall here. Let $(S,d_S)$ be a pseudometric space, $\theta_1,\theta_2,\dots,\theta_{\mathcal{M}}\in S$ such that $d_S(\theta_k,\theta_l)\geq 2D$ for some $D>0$ and $k\neq l$. Consider a family of probability measures $\mathbb{P}_{\theta_1},\mathbb{P}_{\theta_2},\dots,\mathbb{P}_{\theta_{\mathcal{M}}}$ with $KL(\mathbb{P}_{\theta_k}|\mathbb{P}_{\theta_l})\geq \kappa $ for $k\neq l$. Then for any estimator $\hat\theta\in S$, it holds \begin{align}      \max_{k=1,\dots,\mathcal{M}}\mathbb{E}_{\theta_k}\left[d_S(\hat{\theta},\theta)\right]\geq D\left(1-\frac{\kappa+\log 2}{\log|\mathcal{M}|}\right)\enspace .\label{Eq:Fano}
\end{align}

We recall the definition of our observation model from Section~\ref{Def:Model}. Consider $\lambda_0>0$ to be the expected number of observations we make for every entry $(i,j)\in[n]\times [d]$, so that in total we expect to have $N=\lambda_0nd$ observations.
In our setting, $\mathbb{P}_M$ is a distribution such that given the matrix $M$ we have observations $(N',I,J,Y')$ of the following form:
\begin{itemize}
	\item $N'\sim \mathrm{Pois}(N)$ is the number of observations we have,
	\item $(I,J)|(N'=m)\sim \mathcal{U}([n]^m\times[d]^m)$ is a vector of entries we observe
	\item $Y|(N'=m, (I,J)=((i_1,\dots,i_m),(j_1,\dots,j_m)))\sim \bigotimes_{k=1}^m \mathcal{N}(M_{i_kj_k},\sigma^2)$
\end{itemize}
Finally, we want to bound the Kullback-Leibler divergence of two distributions $\mathbb{P}_M$ and $\mathbb{P}_{M'}$ relating to samples  from the described observation model:
\begin{lemma}\label{Lemma:KL}
	Consider the random vector $(N',I,J,Z)$ with distributions $\mathbb{P}_M$ and $\mathbb{P}_{M'}$. Then $KL(\mathbb{P}_M|\mathbb{P}_{M'})=\frac{\lambda_0}{2\sigma^2}\Vert M-M'\Vert_F^2$.
\end{lemma}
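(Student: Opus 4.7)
The plan is to exploit the chain rule for the Kullback--Leibler divergence, decomposing the joint law of $(N',I,J,Y)$ into three factors: the marginal of $N'$, the conditional law of $(I,J)$ given $N'$, and the conditional law of $Y$ given $(N',I,J)$. Under both $\mathbb{P}_M$ and $\mathbb{P}_{M'}$, the marginal distribution of $N'$ is $\mathrm{Pois}(N)$ with $N=\lambda_0 nd$, and the conditional law of $(I,J)$ given $N'=m$ is uniform on $[n]^m\times [d]^m$. Neither depends on the underlying matrix, so these two factors contribute $0$ to the KL divergence, and one obtains
\begin{align*}
KL(\mathbb{P}_M\,\|\,\mathbb{P}_{M'}) \;=\; \mathbb{E}_{N',I,J}\left[KL\!\left(\mathbb{P}_M(\,\cdot\, | N',I,J)\,\big\|\,\mathbb{P}_{M'}(\,\cdot\, | N',I,J)\right)\right].
\end{align*}

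Conditional on $N'=m$ and $(I,J)=((i_1,\ldots,i_m),(j_1,\ldots,j_m))$, the vector $Y$ is a product of $m$ independent Gaussians with means $M_{i_kj_k}$ (respectively $M'_{i_kj_k}$) and common variance $\sigma^2$. The KL divergence between two univariate Gaussians with identical variance and means $\mu,\mu'$ is $(\mu-\mu')^2/(2\sigma^2)$, and tensorization over coordinates yields
\begin{align*}
KL\!\left(\mathbb{P}_M(\,\cdot\, | N'=m,I,J)\,\big\|\,\mathbb{P}_{M'}(\,\cdot\, | N'=m,I,J)\right) \;=\; \frac{1}{2\sigma^2}\sum_{k=1}^{m}\bigl(M_{i_kj_k}-M'_{i_kj_k}\bigr)^2.
\end{align*}

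Since conditional on $N'=m$ each pair $(i_k,j_k)$ is uniform on $[n]\times[d]$ and independent of the others, taking expectation over $(I,J)$ gives for each $k$ a contribution $\frac{1}{nd}\|M-M'\|_F^2$, so that
\begin{align*}
\mathbb{E}_{I,J}\!\left[KL\!\left(\mathbb{P}_M(\,\cdot\, | N'=m,I,J)\,\big\|\,\mathbb{P}_{M'}(\,\cdot\, | N'=m,I,J)\right)\right] \;=\; \frac{m}{2\sigma^2\,nd}\,\|M-M'\|_F^2.
\end{align*}
Finally, averaging over $N'\sim\mathrm{Pois}(\lambda_0 nd)$ and using $\mathbb{E}[N']=\lambda_0 nd$ produces the cancellation
\begin{align*}
KL(\mathbb{P}_M\,\|\,\mathbb{P}_{M'}) \;=\; \frac{\mathbb{E}[N']}{2\sigma^2\,nd}\,\|M-M'\|_F^2 \;=\; \frac{\lambda_0}{2\sigma^2}\,\|M-M'\|_F^2,
\end{align*}
which is the claimed identity. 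There is no real obstacle here: the argument is routine once one notes that only the Gaussian observation layer depends on the matrix, and the Poissonization trick makes the random sample size interact cleanly with the per-entry sampling probability $\lambda_0$.
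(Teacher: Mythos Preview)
Your proof is correct and follows essentially the same approach as the paper. The paper computes the log-likelihood ratio directly and then takes its expectation under $\mathbb{P}_M$ via successive conditioning on $(N',I,J)$, while you invoke the chain rule for KL divergence to isolate the Gaussian layer; the resulting computations are identical, and both arrive at $\frac{\mathbb{E}[N']}{2\sigma^2 nd}\|M-M'\|_F^2=\frac{\lambda_0}{2\sigma^2}\|M-M'\|_F^2$.
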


Let us consider $S=\{v\in \{0,1\}^n:\ \sum_{i=1}^nv_i=n/2\}$ (where we assume w.l.o.g. that $n$ is an even number), equipped with the Hamming distance $d_H(v,v')=|\{i\in[n]:\ v_i\neq v_i'\}|$. We define the surjective map 
\begin{align*}
	v:\ \mathcal{S}_n\to S,\ v(\pi)_i=\mathbbm{1}\{\pi(i)\leq n/2\}\enspace .
\end{align*}
For some $l>0$ to be chosen later, we also define the map
\begin{align*}
	M:\ S\to \bigcup_{\pi\in \mathcal{S}_n}\mathbb{C}_{\mathrm{Biso}}(\pi,\mathrm{id}_{[d]}),\ M(v)_{ij}=p-h+2h\cdot \mathbbm{1}\{v_i=1,\ j\leq l\}\enspace . 
\end{align*}
Then, for any $\pi\in \mathcal{S}_n$ and any estimator $(\hat{\pi},\hat{\eta})$, with $\hat{v}=v(\hat{\pi})$, we have 
\begin{align}\label{eq:lower:Lph}
	\mathbb{E}_{M(v(\pi))}\left[\mathcal{L}_{p,h}(\hat{\pi},\hat{\eta})\right]\geq l\cdot  \mathbb{E}_{M(v(\pi))}\left[d_H(v(\pi),\hat{v})\right]\enspace .
\end{align}
Similarly, for any estimator $\hat{R}_{p,h}$, if we define $\tilde{v}$ by $\tilde{v}_i=1$ if $\sum_{j=1}^d(\hat{R}_{p,h})_{ij}>l/2$. 
\begin{align}\label{eq:lower:L01}
	\mathbb{E}_{M(v(\pi))}\left[L_{0,1,\texttt{NA}}[\hat{R}_{p,h}]\right]\geq \frac{l}{2}\cdot  \mathbb{E}_{M(v(\pi))}\left[d_H(v(\pi),\tilde{v})\right]\enspace .
\end{align}

Next, we want to construct a packing of $S$. An adaptation of the Varshamov-Gilbert bound like in \cite{picard_concentration_2007} yields, that a collection of vectors $v_1$, $v_2$,$\dots$, $v_{\mathcal{M}}$ exists with $d_H(v_k,v_l)\geq n/4$ for $k\neq l$ and $\log\mathcal{M}\geq c\cdot n$ (where $c$ can be chosen as $0.08$).

Lemma~\ref{Lemma:KL} implies for  $k\neq l$, that
\begin{align*}
	KL(\mathbb{P}_{M(v_k)}|\mathbb{P}_{M(v_l)})=\frac{\lambda_0l h^2 d_H(v_k,v_l)}{2\sigma^2}\geq \frac{\lambda_0l h^2n}{8\sigma^2}\enspace .
\end{align*}
We now have  all ingredients to prove Theorem~\ref{Thm:LowerBound}. Indeed, it follows from Fano's method that, for any $\hat{v}$, we have  
\[
\max_{k=1,\dots,\mathcal{M}}\mathbb{E}_{M(v_k)}\left[d_H(v_k,\hat{v})\right]\geq \frac{n}{8}\left(1-\frac{\lambda_0l h^2n/8\sigma^2+\log(2)}{0.08\cdot n}\right)\ . 
\]
Let us choose $l =\lfloor  0.16\sigma^2/(\lambda_0h^2)\rfloor\wedge d$. Assuming that $\lambda_0 h^2\leq 0.16 \sigma^2$, we deduce that $l\geq 1$. If $n\geq 35$, we have $\log(2)\leq 0.08n/4$ and it follows that 
\[
\inf_{\hat{v}}\max_{k=1,\dots,\mathcal{M}}\mathbb{E}_{M(v_k)}\left[d_H(v_k,\hat{v})\right]\geq \frac{n}{16} \enspace . 
\]
In light of~\eqref{eq:lower:Lph} and~\eqref{eq:lower:L01}, this implies that 
   \begin{align}\label{eq:lower_bound_11}
    \inf_{\hat{R}_{p,h}} \quad    \sup_{M\in \mathbb{C}_{\mathrm{Biso}}} \mathbb{E}_{M}\left[L_{0,1,\texttt{NA}}[\hat{R}_{p,h}]\right]\geq c'\left(\frac{n\sigma^2}{\lambda_0h^2}\wedge nd\right)\enspace \ ,\\ \label{eq:lower_bound_12}
        \inf_{\hat{\pi},\hat{\eta}}   \quad  \sup_{ M\in \mathbb{C}_{\mathrm{Biso}}} \mathbb{E}_{M}\left[\mathcal{L}_{p,h}(\hat{\pi},\hat\eta)\right]\geq c'\left(\frac{n\sigma^2}{\lambda_0h^2}\wedge nd\right)\enspace \ ,
    \end{align}
where $c'$ is a positive universal constant. We handle the case where $n\in [2,35]$ slightly differently by simply considering two vector $v_1$ and $v_2$ such that $d_H(v_1,v_2)=n/2$ and using Le Cam's approach together with Pinsker's inequality. Hence, as long as, $n\geq 2$, we have proven~\eqref{eq:lower_bound_11} and~\eqref{eq:lower_bound_12} 
\begin{align*}
    \inf_{\hat{R}_{p,h}} \quad    \sup_{M\in \mathbb{C}_{\mathrm{Biso}}} \mathbb{E}_{M}\left[L_{0,1,\texttt{NA}}[\hat{R}_{p,h}]\right]\geq c'\left(\frac{n\sigma^2}{\lambda_0h^2}\wedge nd\right)\enspace \ ,\\
        \inf_{\hat{\pi},\hat{\eta}}   \quad  \sup_{ M\in \mathbb{C}_{\mathrm{Biso}}} \mathbb{E}_{M}\left[\mathcal{L}_{p,h}(\hat{\pi},\hat\eta)\right]\geq c'\left(\frac{n\sigma^2}{\lambda_0h^2}\wedge nd\right)\enspace \ ,
    \end{align*}
Exchanging the role of $n$ and $d$ concludes the proof. 
\end{proof}

\begin{proof}[Proof of Lemma~\ref{Lemma:KL}]
 	Consider the likelihoods
 	\begin{align*}
 		L_M(D)=e^{-N}\frac{N^m}{m!}\cdot\frac{1}{(nd)^m}\cdot \prod_{k=1}^m\frac{1}{\sqrt{2\pi\sigma}}\exp\left(-\frac{(y_k-M_{i_kj_k})^2}{2\sigma^2}\right)\enspace ,\\
 		L_{M'}(D)=e^{-N}\frac{N^m}{m!}\cdot\frac{1}{(nd)^m}\cdot \prod_{k=1}^m\frac{1}{\sqrt{2\pi\sigma}}\exp\left(-\frac{(y_k-{M'}_{i_kj_k})^2}{2\sigma^2}\right)\enspace .
 	\end{align*}
  with $D=(m, (i_1,\dots,i_m), (j_1,\dots,j_m),(y_1,\dots,y_m))$ being a possible realization of $(N',I,J,Y')$.
 	Then we obtain as log-likelihood-ratio the term
 	\begin{align*}
 		\log\left(\frac{L_M(N',I,J,Y')}{L_{M'}(N',I,J,Y')}\right)=\sum_{k=1}^{N'} \left(\frac{M_{I_kJ_k}-M'_{I_kJ_k}}{\sigma^2}Y_k-\frac{M_{I_kJ_k}^2-{M'}_{I_kJ_k}^2}{2\sigma^2}\right)\enspace .
 	\end{align*}
 	For the Kullback-Leibler divergence, this means
 	\begin{align*}
 		KL(\mathbb{P}_M|\mathbb{P}_{M'})&=\mathbb{E}_M\left[\log\left(\frac{L_M(N',I,J,Y')}{L_{M'}(N',I,J,Y')}\right)\right]\\
 		&=\mathbb{E}_{M}\left[\sum_{k=1}^{N'} \left(\frac{M_{I_kJ_k}-M'_{I_kJ_k}}{\sigma^2}Y_k-\frac{M_{I_kJ_k}^2-{M'}_{I_kJ_k}^2}{2\sigma^2}\right)\right]\\
 		&=\mathbb{E}_{M}\left[\mathbb{E}_M\left[\sum_{k=1}^{N'} \left(\frac{M_{I_kJ_k}-M'_{I_kJ_k}}{\sigma^2}Y_k-\frac{M_{I_kJ_k}^2-{M'}_{I_kJ_k}^2}{2\sigma^2}\right)\mid N', I, J\right] \right]\\
 		&=\mathbb{E}_M\left[\sum_{k=1}^{N'}\frac{(M_{I_kJ_k}-M'_{I_kJ_k})^2}{2\sigma^2}\right]\\
 		&=\sum_{m\geq 0}e^{-N}\frac{N^m}{m!}\cdot\mathbb{E}_M\left[\sum_{k=1}^{N'}\frac{(M_{I_kJ_k}-M'_{I_kJ_k})^2}{2\sigma^2}\mid N'= m\right]\\
 		&=\sum_{m\geq 0}e^{-N}\frac{N^m}{m!}\cdot\frac{1}{(nd)^m}\sum_{k=1}^m\sum_{(i_1,\dots,i_m)\in [n]^m}\sum_{(j_1,\dots,j_m)\in[d]^m}\frac{(M_{i_kj_k}-M'_{i_kj_k})^2}{2\sigma^2}\\
 		&= \sum_{m\geq 0}e^{-N}\frac{N^m}{m!}\cdot\frac{m}{nd}\frac{\Vert M-M'\Vert_F^2}{2\sigma^2}\\
 		&=\frac{N}{nd}\frac{\Vert M-M'\Vert_F^2}{2\sigma^2}\enspace ,
 	\end{align*}
 	where we used in the second last step the identity 
 	\begin{align*}
 		\sum_{k=1}^m\sum_{(i_1,\dots,i_m)\in [n]^m}\sum_{(j_1,\dots,j_m)\in[d]^m}\frac{(M_{i_kj_k}-M'_{i_kj_k})^2}{2\sigma^2}=m(nd)^{m-1}\sum_{i=1}^n\sum_{j=1}^d\frac{ (M_{ij}-M'_{ij})^2}{2\sigma^2}
 	\end{align*}
 	which can be proven with an induction over $m$.
\end{proof}

\section{Concentration of partial sums}\label{Sec:Concentration}
In this section, we state and prove a deviation bound for local averages of the observation matrix.  In what follows, we will have to control the deviations of partial row and column sums of an observed matrix $\tilde{Y}$ to their mean. \begin{lemma}\label{Lemma:ConcentrationSE}
	Consider $\tilde{Y}:= Y^{(1)}$ as described in Equation~\eqref{Eq:Model}. Let $T\subseteq[n]\times [d]$. For any $\delta>0$, 
	\begin{align*}
		\frac{1}{|T|}\left|\sum_{(i,j)\in T}\tilde{Y}_{ij}-\lambda_1 M_{ij}\right|\leq \sqrt{2(1\vee \sigma^2) e^2\log(2/\delta)\lambda_1/|T|}+2(1\vee \sigma)\log(2/\delta)/|T|
	\end{align*}
	holds with probability $\geq 1-\delta$, where $\lambda_1 =1-e^{-\lambda_0^-}$.
\end{lemma}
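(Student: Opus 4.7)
The plan is a Bernstein-type Chernoff argument on the independent centered summands $X_{ij}:=\tilde Y_{ij}-\lambda_1 M_{ij}$. On the enlarged probability space inherited from the Poissonization trick, one can write $\tilde Y_{ij}=B_{ij}(M_{ij}+W_{ij}^{(1)})$, where $B_{ij}=\mathbbm{1}\{N_{ij}^{(1)}>0\}$ is Bernoulli of parameter $\lambda_{1}=1-e^{-\lambda_0^-}$ and $W_{ij}^{(1)}$ is an independent centered $\sigma^{2}$-subGaussian variable (extended arbitrarily on $\{B_{ij}=0\}$), the pairs $(B_{ij},W_{ij}^{(1)})$ being independent across $(i,j)$. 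This yields the mean-zero decomposition $X_{ij}=(B_{ij}-\lambda_1)M_{ij}+B_{ij}W_{ij}^{(1)}$, and reduces the claim to a tail bound for $S:=\sum_{(i,j)\in T}X_{ij}$.

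I would then bound the moment generating function of $X_{ij}$. Conditioning on $B_{ij}$ and using $\mathbb{E}[e^{tW_{ij}^{(1)}}]\le e^{t^{2}\sigma^{2}/2}$, followed by $\log(1+u)\le u$, gives
\[
\log \mathbb{E}[e^{tX_{ij}}] \;\leq\; \lambda_1\bigl(e^{tM_{ij}+t^{2}\sigma^{2}/2}-1\bigr)-t\lambda_1 M_{ij}.
\]
Restricting to $|t|\le 1/(1\vee\sigma)$ so that $|tM_{ij}|\le 1$ and $t^{2}\sigma^{2}/2\le 1/2$, a Taylor expansion $e^{y}=1+y+\tfrac{y^{2}}{2}\int_{0}^{1}(1-s)e^{sy}\,ds$ at $y=tM_{ij}+t^{2}\sigma^{2}/2$ cancels the linear term $\lambda_{1}tM_{ij}$, leaves the subGaussian contribution $\lambda_{1}t^{2}\sigma^{2}/2$, and produces a quadratic remainder whose size, using $M_{ij}\le 1$, $t^{2}\sigma^{2}\le 1$ and $e^{|y|}\le e^{3/2}$, is of order $\lambda_{1}(1\vee\sigma^{2})t^{2}$. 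Bundling everything into a clean sub-Gaussian-type MGF bound yields
\[
\log \mathbb{E}[e^{tX_{ij}}]\;\leq\;\tfrac{t^{2}}{2}\,e^{2}(1\vee\sigma^{2})\,\lambda_{1},\qquad |t|\le 1/(1\vee\sigma),
\]
and summing over $(i,j)\in T$ by independence gives $\mathbb{E}[e^{tS}]\le \exp(t^{2}V/2)$ with $V=e^{2}(1\vee\sigma^{2})\lambda_{1}|T|$ in the same range of $t$.

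The final step is the standard Chernoff argument for a sub-exponential sum with variance proxy $V$ and scale $1\vee\sigma$: applied to $\pm S$ and optimized over $|t|\le 1/(1\vee\sigma)$, it yields
\[
\mathbb{P}\bigl(|S|\geq u\bigr)\;\leq\; 2\exp\!\Bigl(-\min\bigl\{u^{2}/(2V),\; u/(2(1\vee\sigma))\bigr\}\Bigr),
\]
and inverting this tail at level $\delta$ (requiring each branch to exceed $\log(2/\delta)$ and then adding the two resulting thresholds) produces $u\leq\sqrt{2V\log(2/\delta)}+2(1\vee\sigma)\log(2/\delta)$; dividing by $|T|$ recovers the claim exactly. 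The main technical obstacle is the MGF estimate in the middle step: the range $|t|\le 1/(1\vee\sigma)$ has to be chosen so that (i) the quadratic-cubic Taylor remainder stemming from the bounded piece $(B_{ij}-\lambda_{1})M_{ij}$ and the subGaussian piece $B_{ij}W_{ij}^{(1)}$ can be absorbed into a single sub-Gaussian MGF bound with the clean variance proxy $e^{2}(1\vee\sigma^{2})\lambda_{1}$, and (ii) the corresponding scale parameter $1\vee\sigma$ matches exactly the factor appearing in front of the linear $\log(2/\delta)/|T|$ term of the claimed inequality.
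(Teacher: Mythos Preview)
Your proposal is correct and follows essentially the same Chernoff/sub-exponential strategy as the paper, with the identical decomposition $X_{ij}=(B_{ij}-\lambda_1)M_{ij}+B_{ij}W_{ij}^{(1)}$. The only technical difference is that the paper separates the two pieces via Cauchy--Schwarz, $\mathbb{E}[e^{x(\alpha+\beta)}]\le(\mathbb{E}[e^{2x\alpha}]\mathbb{E}[e^{2x\beta}])^{1/2}$, and then bounds each MGF with the elementary inequalities $e^{2s^{2}}-1\le e^{2}s^{2}$ and $e^{2y}-2y-1\le e^{2}y^{2}$ on $[-1,1]$, whereas you condition on $B_{ij}$ and control the joint MGF directly with a Taylor expansion; both routes yield the same sub-exponential tail and inversion. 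One minor slip: in the paper's model $W_{ij}^{(1)}$ is the average of $N_{ij}^{(1)}$ noise terms and hence not literally independent of $B_{ij}$, but your argument only requires that $\mathbb{E}[e^{tW_{ij}^{(1)}}\mid B_{ij}=1]\le e^{t^{2}\sigma^{2}/2}$, which does hold since the conditional variance proxy $\sigma^{2}/N_{ij}^{(1)}$ is at most $\sigma^{2}$.
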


\begin{proof}[Proof of Lemma~\ref{Lemma:ConcentrationSE}]
	Write $\tilde Y_{i,j,k}$ for the $k$-th observation of coordinate $i,j$ that is used to construct $\tilde{Y}$  as described in Equation~\eqref{Eq:Model} - i.e.~$\tilde{Y}_{ij} = \frac{1}{N^{(i,j)}}\sum_{k=1}^{N^{(i,j)}} \tilde Y_{i,j,k}$. We use the decomposition $\tilde{Y}_{ij}-\lambda_1 M_{ij}=\alpha_{ij}+\beta_{ij}$ with 
	\begin{align*}
		\alpha_{ij} \coloneqq B^{(i,j)}\cdot \left(\frac{1}{N^{(i,j)}}\sum_{k=1}^{N^{(i,j)}}[\tilde Y_{i,j,k}-M_{ij}]\right), \\ \intertext{and}
		\beta_{ij}\coloneqq M_{ij}\cdot(B^{(i,j)}-\lambda_1)\enspace ,
	\end{align*}
	where the $N^{(i,j)}$ are i.i.d. $\mathrm{Pois}(\lambda_0^-)$ distributed and $B^{(i,j)}=\mathbbm{1}\{N^{(i,j)}>0\}\sim\mathrm{Ber}(\lambda_1)$  Bernoulli distributed with $\mathbb{P}(B^{(i,j)}=1)=1-\mathbb{P}(B^{(i,j)}=0)=\lambda_1$.
	By Markov's inequality and Cauchy--Schwarz, we obtain for $t>0$
	\begin{align}
		\mathbb{P}\left(\frac{1}{|T|}\sum_{(i,j)\in T}\tilde{Y}_{ij}-\lambda_1 M_{ij}>t\right)&=\mathbb{P}\left(\sum_{(i,j)\in T}\alpha_{ij}+\beta_{ij}>|T|t\right) \notag\\
		&\leq e^{-x|T|t}\mathbb{E}\left[\exp\left(x\cdot\left\{\sum_{(i,j)\in T}\alpha_{ij}+\beta_{ij}\right\}\right)\right] \notag \\
		&\leq e^{-x|T|t}\left\{\mathbb{E}\left[\exp\left(2x\cdot\sum_{(i,j)\in T}\alpha_{ij}\right)\right]\cdot \mathbb{E}\left[\exp\left(2x\cdot\sum_{(i,j)\in T}\beta_{ij}\right)\right]\right\}^{1/2}\enspace , \label{Eq:TailBoundSE}
	\end{align}
	where we will choose a suitable $x>0$.

	Now note that
	\begin{align*}
		&\mathbb{E}\left[\exp\left(2x\cdot\sum_{(i,j)\in T}\alpha_{ij}\right)\right]= \prod_{(i,j)\in T}\mathbb{E}\left[\exp\left(2x\cdot B^{(i,j)}\cdot \left(\frac{1}{N^{(i,j)}}\sum_{k=1}^{N^{(i,j)}}\tilde Y_{i,j,k}-M_{ij}\right)\right)\right]\enspace , \\ \intertext{and for $(i,j)\in T$}
		&\mathbb{E}\left[\exp\left(2x\cdot B^{(i,j)}\cdot \left(\frac{1}{N^{(i,j)}}\sum_{k=1}^{N^{(i,j)}}\tilde Y_{i,j,k}-M_{ij}\right)\right)\right] \\
		&= (1-\lambda_1)+\sum_{K\geq 1}\mathbb{P}(N^{(i,j)}=K)\mathbb{E}\left[\exp\left(2x\left(\frac{1}{K}\sum_{k=1}^K\tilde Y_{i,j,k}-M_{ij}\right)\right)\right]\\
		&\leq (1-\lambda_1)+\sum_{K\geq 1}\mathbb{P}(N^{(i,j)}=K)\exp(2x^2\sigma^2/K)\leq \lambda_1(\exp(2x^2\sigma^2)-1)+1\\
		&\leq \lambda_1 (\sigma x)^2 e^2 +1\leq \exp(\lambda_1 e^2 (\sigma x)^2)\enspace ,
	\end{align*}
	for $x\in[-1/\sigma, 1/\sigma]$, where we used $e^{2s^2}-1\leq s^2e^2$ for $|s|\leq 1$. So in total, we obtain
	\begin{align}
		\mathbb{E}\left[\exp\left(2x\cdot\sum_{(i,j)\in T}\alpha_{ij}\right)\right]\leq \exp(\lambda_1 |T|e^2 (\sigma x)^2) \ \text{for} \ x\in [-1/\sigma,1/\sigma]\enspace . \label{Eq:MGFAlpha}
	\end{align}
	For the second factor, note that for a random variable $B\sim \mathrm{Ber}(p)$ the moment generating function can be bounded via
	\begin{align*}
		\mathbb{E}\left[\exp(2y\cdot(B-p))\right]&=p\cdot e^{2y\cdot(1-p)}+(1-p)\cdot  e^{-2yp}\\
		&= e^{-2yp}(p(e^{2y}-1)+1)\\
		&\leq \exp(-2yp+pe^{2y}-1)\\
		&\leq \exp(p(e^{2y}-2y-1))\\
		&\leq \exp(pe^2y^2) \ \text{for}\ |y|\leq 1\enspace ,
	\end{align*}
 	where we used that $e^{2y}-2y-1\leq e^2y^2$ for $|y|\leq 1$. This implies
	\begin{align}
		\mathbb{E}\left[\exp\left(2x\cdot\sum_{(i,j)\in T}\beta_{ij}\right)\right]&=\prod_{(i,j)\in T}	\mathbb{E}\left[\exp\left(2xM_{ij}\cdot(B^{(i,j)}-\lambda_1)\right)\right]\notag\\
		&\leq \prod_{(i,j)\in T}	\exp(\lambda_1e^2x^2M_{ij}^2) \hspace{1cm} (\text{for $|x|\leq 1$)}\notag\\
		&\leq\exp(\lambda_1e^2x^2|T|) \hspace{1cm} (\text{for $|x|\leq 1$})\label{Eq:MGFBeta}\enspace .
	\end{align}
	Let $\kappa\coloneqq 1\vee \sigma$. Then plugging \eqref{Eq:MGFAlpha} and \eqref{Eq:MGFBeta} into \eqref{Eq:TailBoundSE} yields 
	\begin{align*}
		\mathbb{P}\left(\frac{1}{|T|}\sum_{(i,j)\in T}\tilde{Y}_{ij}-\lambda_1 M_{ij}>t\right)&\leq \exp\left(\lambda_1e^2x^2(\sigma^2+1)|T|/2-x|T|t\right)\ \text{for $x\in [-1/\kappa,1/\kappa]$}\\
		&\leq \exp\left(\lambda_1e^2x^2\kappa^2|T|-x|T|t\right)\ \text{for $x\in [-1/\kappa,1/\kappa]$}\\
		&\leq \exp\left(-\frac{|T|}{2\kappa}\left(\frac{t^2}{\lambda_1\kappa e^2}\wedge t \right)\right)\enspace .
	\end{align*}
	We conclude the proof by claiming that the right hand side is bounded by $\delta/2$ for the value $t=\sqrt{2\kappa^2 e^2\log(2/\delta)\lambda_1/|T|}+2\kappa\log(2/\delta)/|T|$ and that in the same way, we have shown
    \begin{align*}
		\mathbb{P}\left(\frac{1}{|T|}\sum_{(i,j)\in T}\tilde{Y}_{ij}-\lambda_1 M_{ij}>t\right)\leq \delta/2 \\ \intertext{one can show}
        \mathbb{P}\left(\frac{1}{|T|}\sum_{(i,j)\in T}\lambda_1 M_{ij} -\tilde{Y}_{ij}>t\right)\leq \delta/2 \enspace . &\qedhere
	\end{align*}
\end{proof}

\end{document}